\def\eqref#1{equation~\ref{#1}}
\def\1{\bm{1}}
\DeclareMathAlphabet{\mathsfit}{\encodingdefault}{\sfdefault}{m}{sl}
\SetMathAlphabet{\mathsfit}{bold}{\encodingdefault}{\sfdefault}{bx}{n}
\newcommand{\R}{\mathbb{R}}
\DeclareMathOperator*{\argmin}{arg\,min}
\newtheorem{theorem}{Theorem}[section]
\newtheorem{proposition}[theorem]{Proposition}
\newtheorem{lemma}[theorem]{Lemma}
\newtheorem{corollary}[theorem]{Corollary}
\newtheorem{definition}[theorem]{Definition}
\newtheorem{assumption}[theorem]{Assumption}
\title{Exact Computation of Any-Order Shapley Interactions for Graph Neural Networks}
\author{Maximilian Muschalik$^{1,*,\text{\Letter}}$, Fabian Fumagalli$^{2,*}$, Paolo Frazzetto$^{3}$, Janine Strotherm$^{2}$,
\\
\textbf{Luca Hermes$^{2}$,  Alessandro Sperduti$^{3,4,5}$, Eyke Hüllermeier$^{1}$, Barbara Hammer$^{2}$}
\medskip \\
$^1$ LMU Munich, MCML \quad $^2$ Bielefeld University, CITEC \quad $^3$ University of Padova
\\
$^4$ University of Trento \quad $^5$ Augmented Intelligence Center, FBK\\ 
$^*$ Equal Contribution \quad  $^\text{\Letter}$ Corresponding Author: \texttt{maximilian.muschalik@lmu.de} \\
}
\definecolor{tablegray}{gray}{0.94} 
\definecolor{graphshapiq}{rgb}{0.4902,0.8078,0.5098} 
\newglossaryentry{formula}{name=formula,
                           description={A mathematical expression}}
\newacronym{EU}{EU}{European Union}
\newacronym{iid}{iid}{independent and identically distributed}
\newacronym{iff}{iff}{if and only if}
\newacronym{wlog}{w.l.o.g.}{without loss of generality}
\newacronym{PDE}{PDE}{partial differential equation}
\newacronym{CDF}{CDF}{cumulative distribution function}
\newacronym{OP}{OP}{optimization problem}
\newacronym{AI}{AI}{artificial intelligence}
\newacronym{XAI}{XAI}{explainable artificial intelligence}
\newacronym{ML}{ML}{machine learning}
\newacronym{DL}{DL}{deep learning}
\newacronym{TP}{TP}{true positives}
\newacronym{FP}{FP}{false positives}
\newacronym{FN}{FN}{false negatives}
\newacronym{TN}{TN}{true negatives}
\newacronym{ACC}{ACC}{accuracy}                     
\newacronym{TPR}{TPR}{true positive rate}           
\newacronym{FNR}{FNR}{false negative rate}          
\newacronym{FPR}{FPR}{false positive rate}          
\newacronym{TNR}{TNR}{true negative rate}           
\newacronym{PPV}{PPV}{positive predictive value}    
\newacronym{FDR}{FDR}{false discorvery rate}        
\newacronym{FOR}{FOR}{false omission rate}          
\newacronym{NPV}{NPV}{negative predictive value}    
\newacronym{ROC}{ROC-curve}{receiver operating characteristic curve}
\newacronym{AUC}{AUC}{area under the (ROC) curve}
\newacronym{IG}{IG}{information gain}
\newacronym{MSE}{MSE}{mean squared error}
\newacronym{DI}{DI}{disparate impact}
\newacronym{DP}{DP}{demographic parity}
\newacronym{EOs}{EOs}{equalized odds}
\newacronym{EO}{EO}{equal opportunity}
\newacronym{SVM}{SVM}{support vector machine}
\newacronym{MLP}{MLP}{multi layer perceptron}
\newacronym{NN}{NN}{neural network}
\newacronym{GNN}{GNN}{Graph Neural Network}
\newacronym{GCN}{GCN}{Graph Convolutional Network}
\newacronym{GIN}{GIN}{Graph Isomorphism Network}
\newacronym{GAT}{GAT}{Graph Attention Network}
\newacronym{RF}{RF}{receptive field}
\newacronym{WDN}{WDN}{water distribution network}
\newacronym{WDS}{WDS}{water distribution system}
\newacronym{ERC}{ERC}{European Research Council}
\newacronym{SV}{SV}{Shapley Value}
\newacronym{SII}{SII}{Shapley Interaction Index}
\newacronym{k-SII}{$k$-SII}{$k$-Shapley Value}
\newacronym{SI}{SI}{Shapley Interaction}
\newacronym{MI}{MI}{Möbius Interaction}
\newacronym{STII}{STII}{Shapley Taylor Interaction Index}
\newacronym{FSII}{FSII}{Faithful Shapley Interaction Index}
\newacronym{GT}{GT}{Ground Truth}
\newacronym{MTG}{\emph{MTG}}{Mutagenicity}
\newacronym{BNZ}{\emph{BNZ}}{Benzene}
\newacronym{FLC}{\emph{FLC}}{FluorideCarbonyl}
\newacronym{ALC}{\emph{ALC}}{AlkaneCarbonyl}
\newacronym{CX2}{\emph{CX2}}{COX2}
\newacronym{PRT}{\emph{PRT}}{PROTEINS}
\newacronym{ENZ}{\emph{ENZ}}{ENZYMES}
\newacronym{BZR}{\emph{BZR}}{BZR}
\newacronym{WAQ}{\emph{WAQ}}{WaterQuality}
\newacronym{BSHAP}{BShap}{Baseline Shapley}
\newcommand{\Pow}{\mathcal{P}}  
\newcommand{\Xm}{\mathbf{X}}    
\newcommand{\Hm}{\mathbf{H}}    
\newcommand{\xm}{\mathbf{x}}    
\newcommand{\Nbh}{\mathcal{N}}    
\begin{document}

\maketitle

\begin{abstract}
Albeit the ubiquitous use of \glspl*{GNN} in \gls*{ML} prediction tasks involving graph-structured data, their interpretability remains challenging.
In \gls*{XAI}, the \gls*{SV} is the predominant method to quantify contributions of individual features to a \gls*{ML} model's output.
Addressing the limitations of \glspl*{SV} in complex prediction models, \glspl*{SI} extend the \gls*{SV} to groups of features.
In this work, we explain single graph predictions of \glspl*{GNN} with \glspl*{SI} that quantify node contributions and interactions among multiple nodes.
By exploiting the \gls*{GNN} architecture, we show that the structure of interactions in node embeddings are preserved for graph prediction.
As a result, the exponential complexity of \glspl*{SI} depends only on the receptive fields, i.e. the message-passing ranges determined by the connectivity of the graph and the number of convolutional layers. 
Based on our theoretical results, we introduce GraphSHAP-IQ, an efficient approach to compute any-order \glspl*{SI} exactly.
GraphSHAP-IQ is applicable to popular message-passing techniques in conjunction with a linear global pooling and output layer.
We showcase that GraphSHAP-IQ substantially reduces the exponential complexity of computing exact \glspl*{SI} on multiple benchmark datasets.
Beyond exact computation, we evaluate GraphSHAP-IQ's approximation of \glspl*{SI} on popular \gls*{GNN} architectures and compare with existing baselines.
Lastly, we visualize \glspl*{SI} of real-world water distribution networks and molecule structures using a \gls*{SI}-Graph.
\end{abstract}

\glsresetall

\section{Introduction}\label{sec_introduction}

Graph-structured data appears in many domains and real-world applications \citep{newman2018networks}, such as molecular chemistry \citep{Gilmer2017NeuralMP}, \glspl*{WDN} \citep{Ashraf2023PressurePrediction}, sociology \citep{borgatti2009network}, physics \citep{SanchezGonzalez2020LearningTS}, or human resources \citep{Frazzetto2023EnhancingHR}.
To leverage such structure in \gls*{ML} models, \glspl*{GNN} emerged as the leading family of architectures that specifically exploit the graph topology \citep{DBLP:journals/tnn/ScarselliGTHM09}.
A major drawback of \glspl*{GNN} is the opacity of their predictive mechanism, which they share with most deep-learning based architectures \citep{GraphFramEx}. 
Reliable explanations for their predictions are crucial when model decisions have significant consequences \citep{zhang2024trustworthy} or lead to new discoveries \citep{mccloskey2019using}.
\\
In \gls*{XAI}, the \gls*{SV} \citep{Shapley.1953} is a prominent concept to assign contributions to entities of black box \gls*{ML} models \citep{DBLP:conf/nips/LundbergL17,DBLP:journals/jmlr/CovertLL21,Chen2023Overview_ExplainabilityWithShapley}.
Entities typically represent features, data points \citep{Ghorbani.2019} or graph structures \citep{DBLP:conf/icml/YuanYWLJ21,DBLP:conf/nips/YeHWL23}.
Although \glspl*{SV} yield an axiomatic attribution scheme, they do not give any insights into joint contributions of entities, known as \emph{interactions}.
Yet, interactions are crucial to understanding decisions of complex black box \gls*{ML} models \citep{Wright.2016,Sundararajan.2020,Kumar.2021}.
\glspl*{SI} \citep{Grabisch.1999,Bord.2023} extend the \gls*{SV} to include joint contributions of multiple entities.
\glspl*{SI} satisfy similar axioms while providing interactions up to a maximum number of entities, referred to as the \emph{explanation order}. 
In this context, \glspl*{SV} are the least complex \glspl*{SI}, whereas \glspl*{MI} (or Möbius transform) \citep{harsanyi1963simplified,rota1964foundations} are the most complex \glspl*{SI} by assigning contributions to every group of entities.
Thus, \glspl*{SI} convey an adjustable explanation with an \emph{accuracy-complexity trade-off} for interpretability \citep{Bord.2023}.
\glspl*{SV}, \glspl*{SI} and \glspl*{MI} are limited by exponential complexity, e.g. with $20$ features already $2^{20} \approx 10^{6}$ model calls per explained instance are required.
Consequently, practitioners rely on model-agnostic approximation methods \citep{DBLP:conf/nips/LundbergL17,Fumagalli.2023} or model-specific methods \citep{Lundberg.2020,DBLP:conf/aaai/MuschalikFHH24} that exploit knowledge about the model's structure to reduce complexity.
As a remedy for \glspl*{GNN}, the \gls*{SV} was applied as a heuristic on subgraphs \citep{DBLP:conf/nips/YingBYZL19,DBLP:conf/nips/YeHWL23}, or approximated \citep{DBLP:conf/pkdd/DuvalM21,DBLP:conf/icml/BuiNNY24}.
\\
In this work, we address limitations of the \gls*{SV} for \gls*{GNN} explainability by computing the \glspl*{SI} visualized as the SI-Graph in \cref{fig_intro_illustration}.
Our method yields exact \glspl*{SI} by including \gls*{GNN}-specific knowledge and exploiting properties of the \glspl*{SI}.
In contrast to existing methods \citep{DBLP:conf/icml/YuanYWLJ21,DBLP:conf/nips/YeHWL23}, we evaluate the \gls*{GNN} on node level without the need to cluster nodes into subgraphs.
Instead of model-agnostic approximation \citep{DBLP:conf/pkdd/DuvalM21,DBLP:conf/icml/BuiNNY24}, we provide structure-aware computation for graph prediction tasks, and prove that \glspl*{MI} of node embeddings indeed transfer to graph prediction for linear readouts.
In summary, our approach is a model-specific computation of \glspl*{SI} for \glspl*{GNN}, akin to TreeSHAP \citep{Lundberg.2020} for tree-based models. 

\begin{figure}[t]
    \centering
    \begin{minipage}[c]{0.09\textwidth}
        \includegraphics[width=\textwidth]{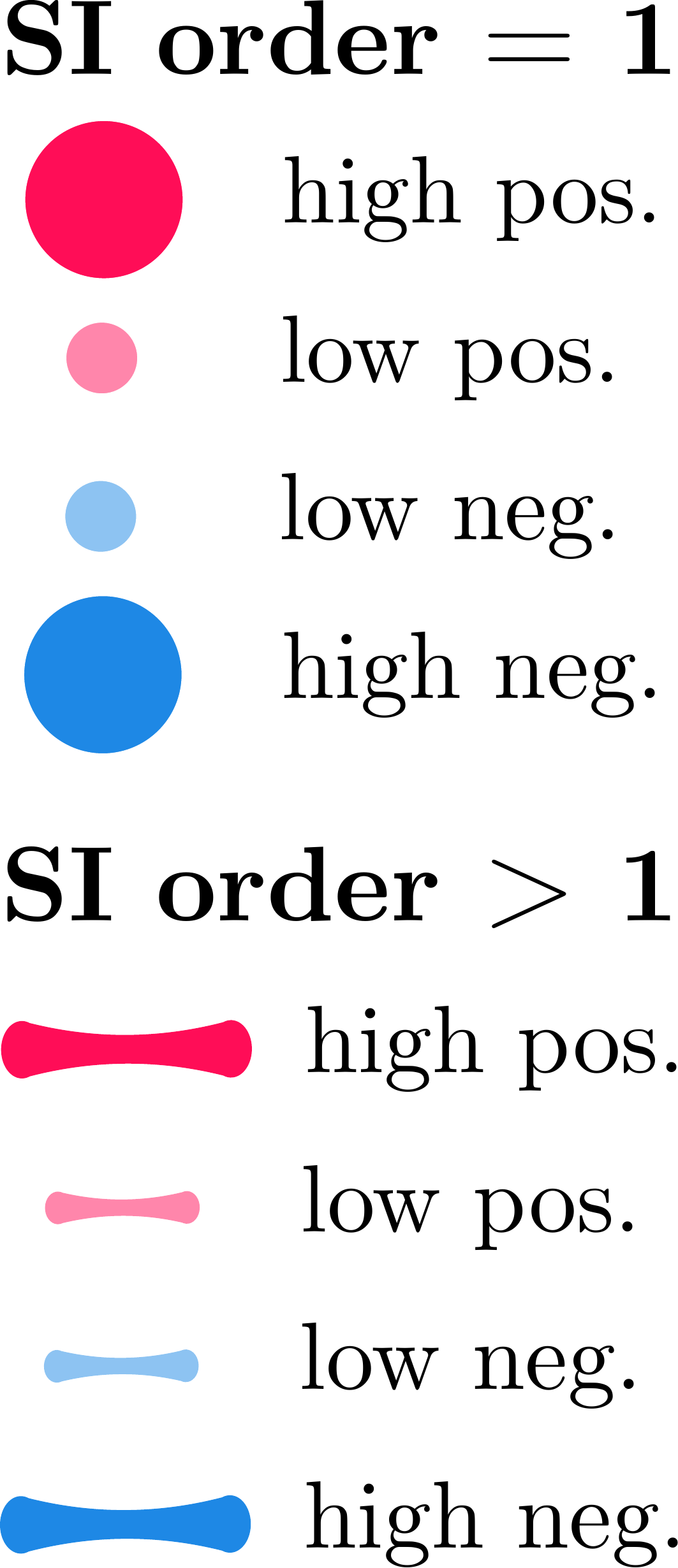}
    \end{minipage}
    \hfill
    \begin{minipage}[c]{0.29\textwidth}
    \centering
        \includegraphics[width=\textwidth]{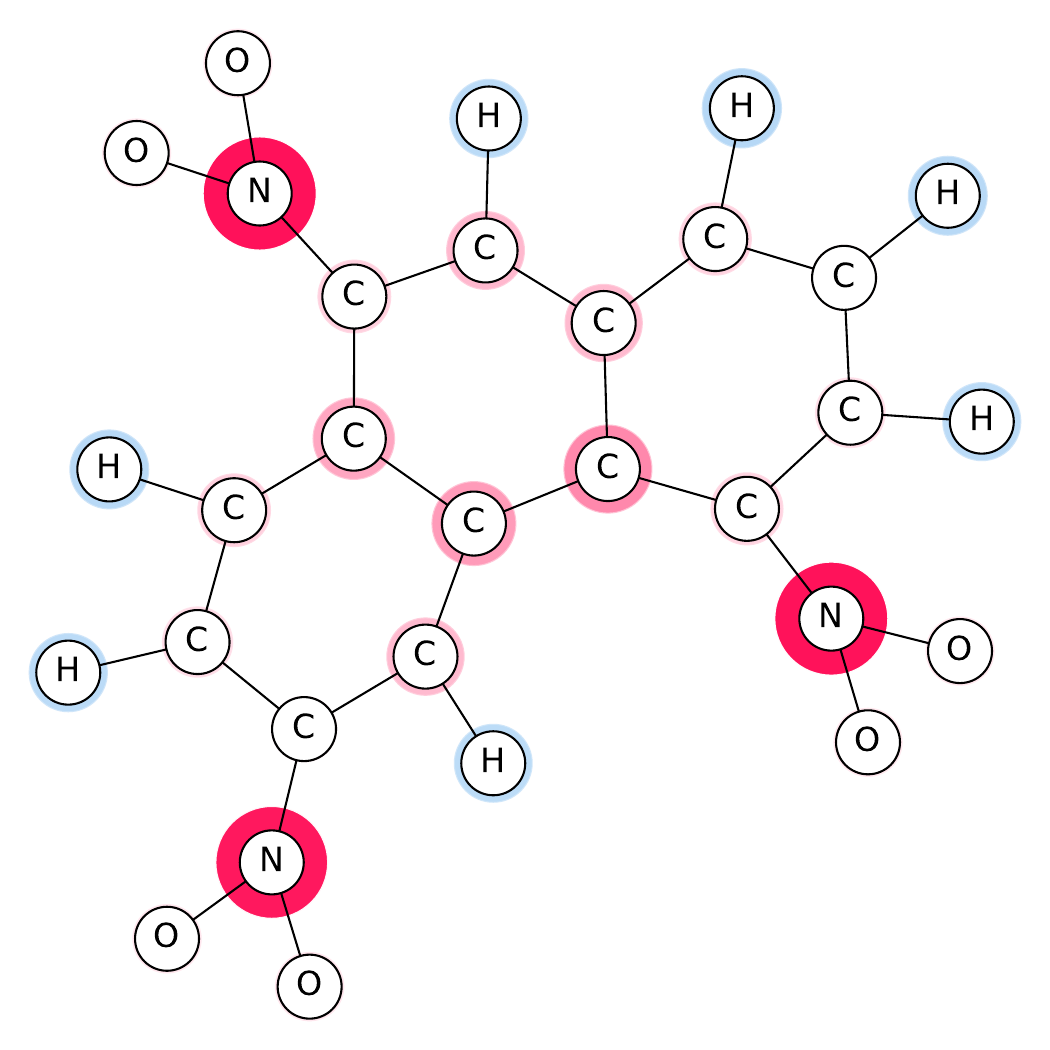}
        \textbf{Exact SVs}
    \end{minipage}
    \hfill
    \begin{minipage}[c]{0.29\textwidth}
    \centering
        \includegraphics[width=\textwidth]{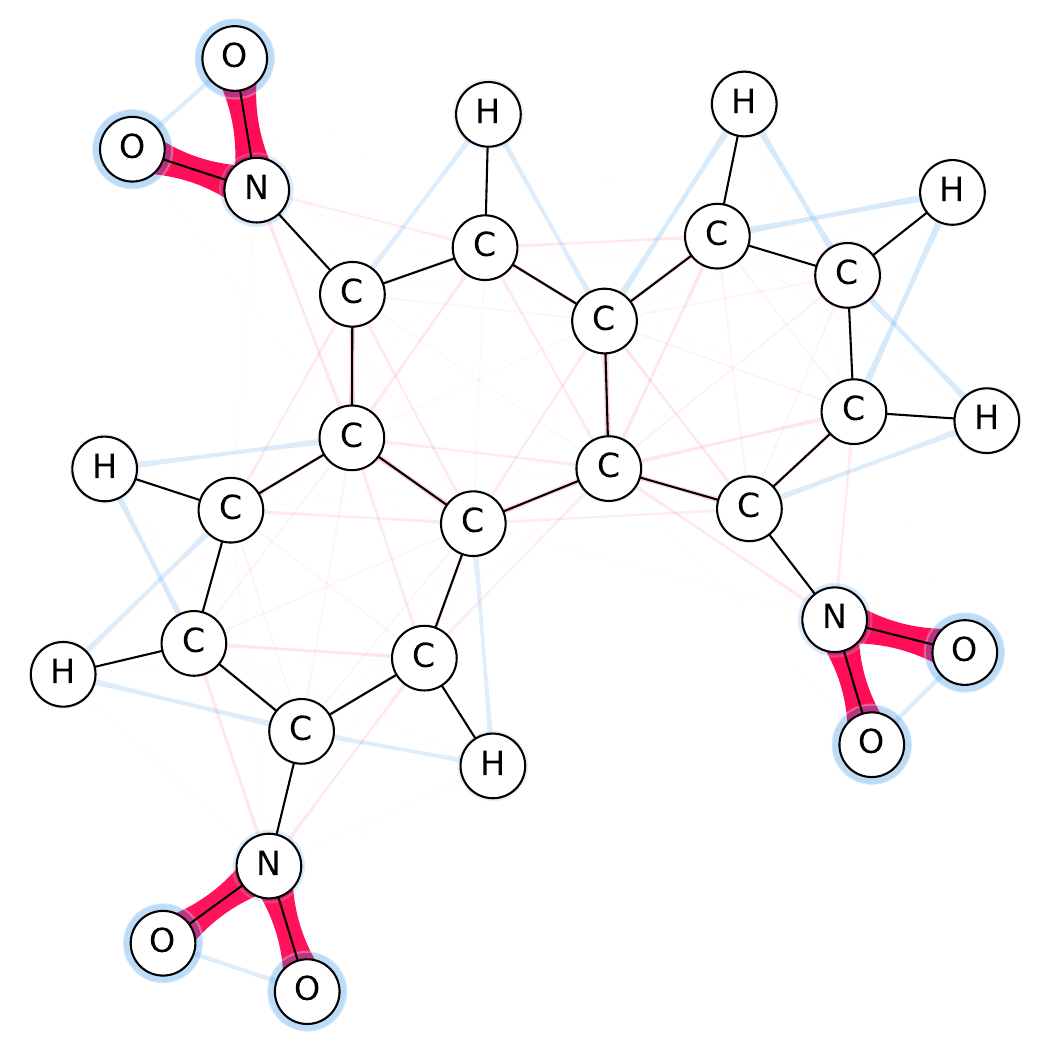}
        \textbf{Exact 2-SIIs}
    \end{minipage}
    \hfill
    \begin{minipage}[c]{0.29\textwidth}
    \centering
        \includegraphics[width=\textwidth]{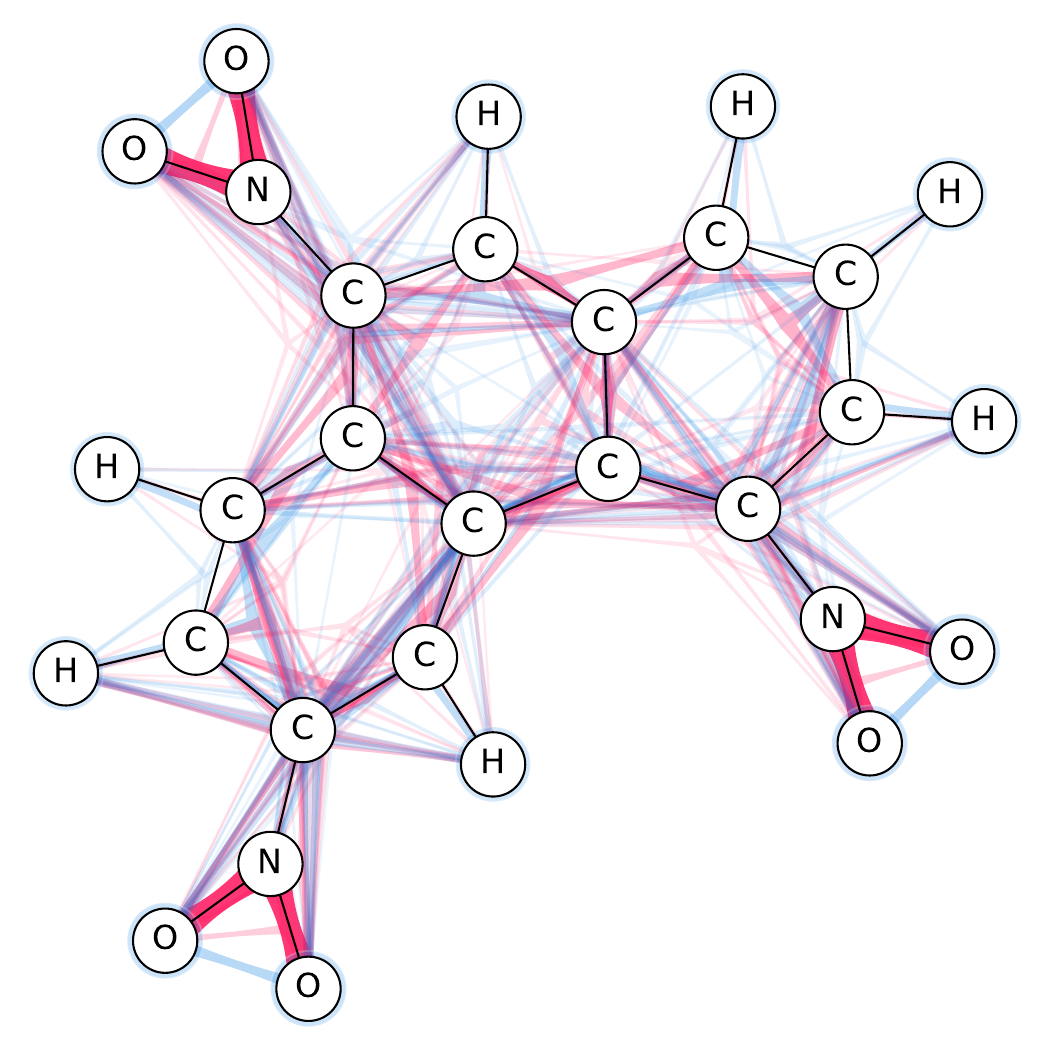}
        \textbf{Exact MIs}
    \end{minipage}
    \caption{\gls*{SI}-Graphs overlayed on a molecule graph showing exact \glspl*{SI} for a molecule with $30$ atoms from \emph{MTG}. A \gls*{GNN} correctly identifies it as mutagenic. The \glspl*{SI}, in line with ground-truth knowledge, highlight the $\text{NO}_{2}$ groups. Computing exact \glspl*{SI} requires $2^{30} \approx 10^9$ model calls. \mbox{GraphSHAP-IQ} needs $7\,693$.}
    \label{fig_intro_illustration}
\end{figure}

\paragraph{Contribution.} Our main contributions include:
\begin{enumerate}
    \item[\textbf{(1)}] We introduce \glspl*{SI} among nodes and the \gls*{SI}-Graph for graph predictions of \glspl*{GNN} that address limitations of the \gls*{SV} and exploit graph and \gls*{GNN} structure with our theoretical results. 
    \item[\textbf{(2)}] We present GraphSHAP Interaction Quantification (GraphSHAP-IQ), an efficient method to compute exact any-order \glspl*{SI} in \glspl*{GNN}.
    For restricted settings requiring approximation, we extend GraphSHAP-IQ and propose several interaction-informed baseline methods.
    \item[\textbf{(3)}] We show substantially reduced complexity when applying GraphSHAP-IQ on real-world benchmark datasets, and analyze \gls*{SI}-Graphs of a \gls*{WDN} and molecule structures. 
    \item[\textbf{(4)}] We find that interactions in deep readout \glspl*{GNN} are not restricted to the receptive fields.
\end{enumerate}

\paragraph{Related Work.}
\glspl*{SI}, enriching the \gls*{SV} \citep{Shapley.1953} with higher-order interactions, were introduced in game theory \citep{Grabisch.1999}, and modified for local interpretability in \gls*{ML} \citep{Lundberg.2020,Bord.2023}.
The exponential complexity of the \gls*{SV} and \glspl*{SI} requires approximation by model-agnostic Monte Carlo sampling \citep{Chen2023Overview_ExplainabilityWithShapley,Fumagalli.2023} or by exploiting the data structure \citep{DBLP:conf/iclr/ChenSWJ19}. Exact computation is only feasible with knowledge about the structure of the model and model-specific methods, such as TreeSHAP \citep{Lundberg.2020,DBLP:conf/aaai/MuschalikFHH24}, which is applicable to tree ensembles.
Here, we present a model-specific method applicable to \glspl*{GNN} and graph prediction tasks, which computes exact \glspl*{SI} by exploiting the graph and \gls*{GNN} structure.
\\
For local explanations of \glspl*{GNN} and graph prediction tasks, a variety of perturbation-based methods \citep{DBLP:conf/nips/YingBYZL19,PGExplainer,DBLP:conf/icml/YuanYWLJ21} have been proposed, which output isolated subgraphs, whereas \citet{DBLP:conf/cvpr/PopeKRMH19} introduce node attributions.
In this work, we use perturbations via node maskings and output attributions for all nodes, and all subset of nodes up to the given explanation order, which additively decomposes the graph prediction of the \gls*{GNN}.
\\
In context of \gls*{GNN} interpretability, the \gls*{SV} was applied in GraphSVX \citep{DBLP:conf/pkdd/DuvalM21} on single nodes with a structure-aware approximation for node predictions. For graph prediction tasks, however, GraphSVX proposes a model-agnostic approximation.
SubgraphX \citep{DBLP:conf/icml/YuanYWLJ21} and SAME \citep{DBLP:conf/nips/YeHWL23} use the \gls*{SV} to assess the quality of isolated subgraphs.
Recently, model-agnostic approximation of pairwise \glspl*{SI} have shown to improve isolated subgraph detections \citep{DBLP:conf/icml/BuiNNY24}.
In contrast to existing work, we propose a structure-aware method that efficiently computes exact \glspl*{SV} for single nodes and exact any-order \glspl*{SI} for subset of nodes for graph prediction tasks of \glspl*{GNN}. Our explanation is based on all possible interactions (\glspl*{MI}) and additively decomposes the prediction.
For a more detailed discussion of related work, we refer to \cref{appx_sec_rw_and_l_shapley}.

\section{Background}
In \cref{section_background_sv}, we introduce \glspl*{SI} that provide an adjustable \emph{accuracy-complexity} trade-off for explanations \citep{Bord.2023}. 
In this context, \glspl*{SV} are the simplest and \glspl*{MI} the most complex \glspl*{SI}.
In \cref{subsection_GraphNeuralNetworks}, we introduce \glspl*{GNN}, whose structure we exploit in \cref{section_graphshapiq} to efficiently compute any-order \glspl*{SI}. 
A summary of notations can be found in Appendix~\ref{appx_sec_notation}.

\subsection{Explanation Complexity: From Shapley Values to Möbius Interactions}\label{section_background_sv}

Concepts from cooperative game theory, such as the \gls*{SV} \citep{Shapley.1953}, are prominent in \gls*{XAI} to interpret predictions of a black box \gls*{ML} model via feature attributions \citep{Strumbelj.2014,DBLP:conf/nips/LundbergL17}.
Formally, a cooperative game $\nu:\Pow(N) \rightarrow \mathbb{R}$ is defined, where individual features $N=\{1,\dots,n\}$ act as players and achieve a payout for every group of players in the power set $\Pow(N)$.
To obtain feature attributions for the prediction of a single instance, $\nu$ typically refers to the model's prediction given only a subset of feature values.
Since classical \gls*{ML} models cannot handle missing feature values, different methods have been proposed, such as model retraining \citep{DBLP:journals/dke/StrumbeljKR09}, conditional expectations \citep{DBLP:conf/nips/LundbergL17,Aas.2021,Frye.2021}, marginal expectations \citep{Janzing.2020} and baseline imputations \citep{DBLP:conf/nips/LundbergL17,Sundararajan.2020}.
In high-dimensional feature spaces, retraining models or approximating feature distributions is infeasible, imputing absent features with a baseline, known as \gls*{BSHAP} \citep{DBLP:conf/icml/SundararajanN20}, is the prevalent method \citep{DBLP:conf/nips/LundbergL17,Sundararajan_Taly_Yan_2017,Jethani.2022}.
We now first introduce the \glspl*{MI} as a backbone of additive contribution measures.
Later in \cref{section_graphshapiq}, we exploit sparsity of \glspl*{MI} for \glspl*{GNN} to compute the \gls*{SV} and any-order \glspl*{SI}.


\textbf{Möbius Interactions (MIs)} $m: \Pow(N) \rightarrow \mathbb{R}$ are a fundamental concept of cooperative game theory and provide the basis for different summary measures. The \gls*{MI} is
\begin{align}\label{align_möbius_recovery}
    m(S) := \sum_{T \subseteq S} (-1)^{\vert S \vert -\vert T \vert} \nu(T) \text{ and they recover } \ \nu(T) = \sum_{S \subseteq T} m(S) \text{ for all } S,T\subseteq N.
\end{align}
From the \glspl*{MI}, every game value can be additively recovered, and \glspl*{MI} are the unique measure with this property \citep{harsanyi1963simplified, rota1964foundations}.
The \gls*{MI} of a subset $S \subseteq N$ can thus be interpreted as the \emph{pure additive contribution} that is exclusively achieved by a coalition of all players in $S$, and cannot be attributed to any subgroup of $S$.
The \glspl*{MI} are further a basis of the vector space of games \citep{Grabisch.2016}, and therefore every measure of contribution, such as the \gls*{SV} or the \glspl*{SI}, can be directly recovered from the \glspl*{MI}, cf. \cref{appx_sec_conversion}.

\textbf{Shapley Values (SVs)} for players $i \in N$ of a cooperative game $\nu$ are the weighted average 
\begin{align*}
    \phi^{\text{SV}}(i) := \sum_{T \subseteq N \setminus i} \frac{1}{n \cdot \binom{n-1}{\vert T \vert }} \Delta_i(T)  \text{ with } \ \Delta_i(T) := \nu(T \cup i) - \nu(T)
\end{align*}
over marginal contributions $\Delta_i(T)$.
It was shown \citep{Shapley.1953} that the \gls*{SV} is the unique attribution method that satisfies desirable axioms: \textit{linearity} (the \gls*{SV} of linear combinations of games, e.g., model ensembles, coincides with the linear combinations of the individual \glspl*{SV}), \textit{dummy} (features that do not change the model's prediction receive zero \gls*{SV}), \textit{symmetry} (if a model does not change its prediction when switching two features, then both receive the same \gls*{SV}), and lastly \textit{efficiency} (the sum of all \glspl*{SV} equals the difference between the model's prediction $\nu(N)$ and the featureless prediction $\nu(\emptyset)$).
We may normalize $\nu$, such that $\nu(\emptyset) = 0$, which does not affect the \glspl*{SV}.
The \gls*{SV} assigns attributions to individual features, which distribute the \glspl*{MI} that contain feature $i$, cf. \cref{appx_sec_conversion}.
However, the \gls*{SV} does not provide any insights about \emph{feature interactions}, i.e. the joint contribution of multiple features to the prediction.
Yet, in practice, understanding complex models requires investigating interactions \citep{Slack.2020,Sundararajan.2020,Kumar.2021}.
While the \glspl*{SV} are limited in their expressivity, the \glspl*{MI} are difficult to interpret due to the exponential number of components.
The \glspl*{SI} provide a framework to bridge both concepts.

\textbf{Shapley Interactions (SIs)} explore model predictions beyond individual feature attributions, and provide additive contribution for all subsets up to \emph{explanation order} $k=1,\dots,n$.
More formally, the \glspl*{SI} $\Phi_k: \Pow_k(N) \rightarrow \mathbb{R}$ assign interactions to subsets of $N$ up to size $k$, summarized in $\Pow_k(N)$.
The \glspl*{SI} decompose the model's prediction with $\nu(N) = \sum_{S \subseteq N, \vert S \vert \leq k} \Phi_k(S)$.
The least complex \glspl*{SI} are the \glspl*{SV}, which are obtained with $k=1$.
For $k=n$, the \glspl*{SI} are the \glspl*{MI} with $2^n$ components, which provide the most faithful explanation of the game but entail the highest complexity. 
\glspl*{SI} are constructed based on extensions of the marginal contributions $\Delta_i(T)$, known as discrete derivatives \citep{Grabisch.1999}.
For two players $i,j \in N$, the discrete derivative $\Delta_{ij}(T)$ for a subset $T \subseteq N \setminus ij$ is defined as
$\Delta_{ij}(T) := \nu(T \cup ij) -\nu(T) - \Delta_i(T) - \Delta_j(T)$, i.e., the joint contribution of adding both players together minus their individual contributions in the presence of $T$.
This recursion is extended to any subset $S \subseteq N$ and $T \subseteq N \setminus S$.
A positive value of the discrete derivative $\Delta_S(T)$ indicates synergistic effects, a negative value indicates redundancy, and a value close to zero indicates no joint information of all players in~$S$ given $T$.
The \gls*{SII} \citep{Grabisch.1999} provides an axiomatic extension of the \gls*{SV} and summarizes the discrete derivatives in the presence of all possible subsets $T$ as
    \begin{align*}
        &\phi^{\text{SII}}(S) = \sum_{T \subseteq N \setminus S } \frac{1}{(n - \vert S \vert +1) \cdot \binom{n - \vert S \vert}{\vert T \vert }} \Delta_S(T) &&\text{ with }   &&\Delta_S(T) := \sum_{L \subseteq S}(-1)^{\vert S \vert - \vert L \vert} \nu(T \cup L).
    \end{align*}
Given an explanation order $k$, the \glspl*{k-SII} \citep{Lundberg.2020, Bord.2023} construct \glspl*{SI} recursively based on the \gls*{SII}, such that the interactions of \gls*{SII} and \gls*{k-SII} for the highest order coincide.
Alternatively, the \gls*{STII} \citep{DBLP:conf/icml/SundararajanN20} and the \gls*{FSII} \citep{Tsai.2022} have been proposed, cf. \cref{appx_sec_otherinteractions}.
In summary, \glspl*{SI} provide a flexible framework of increasingly complex and faithful contributions ranging from the \gls*{SV} ($k=1$) to the \glspl*{MI} ($k=n$).
Given the \glspl*{MI}, it is possible to reconstruct \glspl*{SI} of arbitrary order, cf. \cref{appx_sec_conversion}.
In \cref{section_graphshapiq}, we will exploit the sparse structure of \glspl*{MI} of \glspl*{GNN} to efficiently compute any-order \glspl*{SI}.

\subsection{Graph Neural Networks}
\label{subsection_GraphNeuralNetworks}

\glspl*{GNN} are neural networks specifically designed to process graph input \citep{DBLP:journals/tnn/ScarselliGTHM09}. 
A graph $g = (V,E,\Xm)$ consists of sets of nodes $V = \{v_1,...,v_n\}$, edges $E \subset V \times V$ and $d_0$-dimensional node features $\Xm = [\xm_1, ..., \xm_n]^t \in \R^{n \times d_0}$, where $\xm_i$ are the node features of node $v_i \in V$. 
A \textit{message passing} \gls*{GNN} leverages the structural information of the graph $g$ to iteratively aggregate node feature information of a given node $v \in V$ within its \textit{neighborhood} $\Nbh(v) := \{ u \in V \mid e_{uv} \in E \}$. 
More precisely, in each iteration $k \in \{1,...,\ell\}$, the $d_k$-dimensional $k$-th hidden node features $\Hm^{(k)} = [\mathbf{h}_1^{(k)}, \ldots , \mathbf{h}_n^{(k)}]^t \in \R^{n \times d_k}$ are computed node-wise by
\begin{align}
\label{align_messagePassing}
    \mathbf{h}_{i}^{(0)} := \mathbf{x}_i, \quad 
    \mathbf{h}_{i}^{(k)} 
    := 
    \rho^{(k)} (
    \mathbf{h}_{i}^{(k-1)}, 
    \psi (
    \{ \{ \varphi^{(k)}(\mathbf{h}_j^{(k-1)}) \mid u_j \in\Nbh(v_i) \} \}
    )
    ),
\end{align}
where $\{\{\cdot\}\}$ indicates a multiset and $\rho^{(k)}$ and $\varphi^{(k)}$ are arbitrary (aggregation) functions acting on the corresponding spaces. Moreover, $\psi$ is implemented as a permutation-invariant function, ensuring independence of both the order and number of neighboring nodes, and allows for an embedding of multisets as vectors. The node embedding function is thus $f_i(\Xm) := \mathbf{h}_i^{(\ell)}$. 
For \textit{graph prediction} tasks, the representations of the last nodes $\Hm^{(\ell)}$ must be aggregated in a fixed-size graph embedding for the downstream task. More formally, this is achieved with an additional permutation-invariant pooling function $\Psi$ and a parametrized output layer $\sigma: \R^{d_{\ell}} \rightarrow \R^{d_{\text{out}}}$, where $d_{\text{out}}$ corresponds to the output dimension.
The output of a \gls*{GNN} for graph-level inference is defined as  
\begin{align}
\label{align_gnn}
    f_g(\Xm) := \sigma \big( \Psi( \{ \{ f_i(\Xm) \mid v_i \in V \} \}) \big).
\end{align}
For \emph{graph classification}, class probabilities are obtained from $f_g(\Xm)$ through \emph{softmax} activation.

\section{Any-Order Shapley Interactions for Graph Neural Networks}\label{section_graphshapiq}
In the following, we are interested in explaining the prediction of a \gls*{GNN} $f_g$ for a graph $g$ with respect to nodes. 
We aim to decompose a model's prediction into \glspl*{SI} $\Phi_k$ visualized by a \gls*{SI}-Graph.

\begin{definition}[\gls*{SI}-Graph]\label{definition_si_graph}
The \gls*{SI}-Graph is an undirected hypergraph $g^{\text{SI}}_k := (N,\Pow_k(N),\Phi_k)$ with node attributes $\Phi_k(i)$ for $i \in N$ and hyperedge attributes $\Phi_k(S)$ for $2 \leq \vert S \vert \leq k$.
\end{definition}

The simplest \gls*{SI}-Graph displays the \glspl*{SV} ($k=1$) as node attributes, whereas the most complex \gls*{SI}-Graph displays the \glspl*{MI} ($k=n$) as node and hyperedge attributes, illustrated in \cref{fig_intro_illustration}.
The complexity of the \gls*{SI}-Graph is adjustable by the explanation order $k$, which determines the maximum hyperedge order.
The sum of all contributions in the \gls*{SI}-Graph yields the model's prediction (for regression) or the model's logits for the predicted class (for classification).
This choice is natural for an additive contribution measure due to additivity in the logit-space.
To compute \glspl*{SI}, we introduce the \gls*{GNN}-induced graph game $\nu_g$ with a node masking strategy in \Cref{subsection_GraphGames}.
The graph game is defined on all nodes and describes the output given a subset of nodes, where the remaining are masked.
Computing \glspl*{SI} on the graph game defines a perturbation-based and a decomposition-based \gls*{GNN} explanation \citep{DBLP:journals/pami/YuanYGJ23}, which is an extension of node attributions \citep{agarwal2023evaluating}.
In \Cref{subsection_graphshapiq_theory}, we show that \glspl*{GNN} with linear global pooling and output layer satisfy an invariance property for the node game associated with the node embeddings (\cref{theorem_nodegame_invariance}).
This invariance implies sparse \glspl*{MI} for the graph game (Proposition~\ref{proposition_MöbiusTransform_GraphLevel}), which determines the complexity of \glspl*{MI} by the corresponding receptive fields (\cref{theorem_complexity}), which substantially reduces the complexity of \glspl*{SI} in our experiments.
In \cref{subsection_graphshapiq_algorithm}, we introduce GraphSHAP-IQ, an efficient algorithm to exactly compute and 
estimate \glspl*{SI} on GNNs. All proofs are deferred to \cref{appx_sec_proofs}.

\subsection{A Cooperative Game for Shapley Interactions on Graph Neural Networks}
\label{subsection_GraphGames}
Given a \gls*{GNN} $f_g$, we propose the graph game for which we compute \emph{axiomatic and fair} \glspl*{SI}.

\begin{definition}[GNN-induced Graph and Node Game]\label{definition_graphgame}
    For a graph $g = (V,E,\Xm)$ and a \gls*{GNN} $f_g$, we let $N:=\{i: v_i \in V\}$ be the node indices and define the graph game $\nu_g: \Pow(N) \rightarrow \mathbb{R}$ as
    \begin{align*}
        \nu_g(T) := f_{g,\hat y}(\mathbf{X}^{(T)}) \text{ with }
        \mathbf{X}^{(T)} := (\mathbf{x}_1^{(T)}, ..., \mathbf{x}_n^{(T)})^{t} \in \mathbb{R}^{n \times d_0} 
        \text{ and }
        \mathbf{x}_i^{(T)} :=
        \begin{cases}
            \mathbf{x}_i &\text{ if } i \in T, \\
            \mathbf{b} &\text{ if } i \not \in T,
        \end{cases}
    \end{align*}
    with $i\in N$ and baseline $\mathbf{b} \in \mathbb{R}^{d_0}$.
    In graph regression $f_{g,\hat y}\equiv f_g$ and for graph classification $f_{g,\hat y}$ is the component of the predicted class $\hat y$ of $f_g$.
    We further introduce the (multi-dimensional) node game $\nu_i: \Pow(N) \rightarrow \mathbb{R}^{d_\ell}$ as $\nu_i(T) := f_i(\Xm^{(T)})$ for $i \in N$ and each node $v_i \in V$.
\end{definition}

The graph game outputs the prediction of the GNN for a subset of nodes $T \subseteq N$ by masking all node features of nodes $v_i$ with $i \in N \setminus T$ using a suitable baseline $\mathbf{b}$, illustrated in \cref{fig_illustration_graphgame}, left.
Computing such \glspl*{SV} is known as \gls*{BSHAP} \citep{DBLP:conf/icml/SundararajanN20} and a prominent approach for feature attributions \citep{DBLP:conf/nips/LundbergL17,DBLP:journals/jmlr/CovertLL21,Chen2023Overview_ExplainabilityWithShapley}.
As a baseline $\mathbf{b}$, we propose the average of each node feature over the whole graph.
By definition, the prediction of the GNN is given by $\nu_g(N)= f_g(\Xm)$, and due to the efficiency axiom, the sum of contributions in the \gls*{SI}-Graph yields the model's prediction, and thus a decomposition-based \gls*{GNN} explanation \citep{DBLP:journals/pami/YuanYGJ23}.
The graph and the node game are directly linked by \Cref{align_gnn} as
\begin{align}
\label{align_graphnodegame_link}
    \nu_g(T) 
    = f_{g,\hat y}(\Xm^{(T)})
    = \sigma_{\hat y}(\Psi(\{ \{f_i(\Xm^{(T)})\} \mid v \in V \}))
    = \sigma_{\hat y}(\Psi(\{\{\nu_i (T) \} \mid i \in N \})),
\end{align}
where $\sigma_{\hat y}$ outputs the component of $\sigma$ for the predicted class $\hat y$.
The number of convolutional layers $\ell$ determines the \emph{receptive field}, i.e. the message-passing range defined by its $\ell$-hop neighborhood
\begin{align*}
    \Nbh_i^{(\ell)} := \{j \in N \mid d_g(i,j) \leq \ell\} \text{ with } d_g(i,j) := \text{length of shortest path from $v_j$ to $v_i$ in $g$}.
\end{align*}

Consequently, the node game $\nu_i$ is unaffected by maskings outside its $\ell$-hop neighborhood.

\begin{theorem}[Node Game Invariance]
\label{theorem_nodegame_invariance}
    For a graph $g$ and an $\ell$-Layer GNN $f_g$, let $\nu_i$ be the GNN-induced node game with $i \in N$.
    Then, $\nu_i$ satisfies the invariance $\nu_i(T) = \nu_i(T \cap \Nbh_i^{(\ell)})$ for $T \subseteq N$.
\end{theorem}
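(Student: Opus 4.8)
The plan is to prove a stronger, layer-wise statement by induction on the message-passing depth $k$, and then read off the theorem at $k=\ell$. Concretely, I would show that for every $k \in \{0,1,\dots,\ell\}$ and every node $v_i$, the hidden embedding $\mathbf{h}_i^{(k)}$ computed from a masked input depends only on the node features indexed by $\Nbh_i^{(k)}$; made precise, this means that whenever two masked inputs $\Xm^{(T)}$ and $\Xm^{(T')}$ satisfy $\mathbf{x}_j^{(T)} = \mathbf{x}_j^{(T')}$ for all $j \in \Nbh_i^{(k)}$, the resulting embeddings $\mathbf{h}_i^{(k)}$ coincide. Granting this at $k=\ell$, the theorem follows immediately: for any $T \subseteq N$ the inputs $\Xm^{(T)}$ and $\Xm^{(T \cap \Nbh_i^{(\ell)})}$ agree on every node in $\Nbh_i^{(\ell)}$, since for such a node membership in $T$ is equivalent to membership in $T \cap \Nbh_i^{(\ell)}$; hence $\nu_i(T) = f_i(\Xm^{(T)}) = f_i(\Xm^{(T \cap \Nbh_i^{(\ell)})}) = \nu_i(T \cap \Nbh_i^{(\ell)})$.

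For the base case $k=0$ we have $\mathbf{h}_i^{(0)} = \mathbf{x}_i^{(T)}$ and $\Nbh_i^{(0)} = \{i\}$, so the dependence is exactly on the single feature at node $i$, trivially inside $\Nbh_i^{(0)}$. For the inductive step I would unfold the update in \cref{align_messagePassing}: $\mathbf{h}_i^{(k)}$ is a function of $\mathbf{h}_i^{(k-1)}$ together with the multiset $\{\{\varphi^{(k)}(\mathbf{h}_j^{(k-1)}) \mid v_j \in \Nbh(v_i)\}\}$. By the inductive hypothesis $\mathbf{h}_i^{(k-1)}$ is determined by the features in $\Nbh_i^{(k-1)}$, and each neighbor embedding $\mathbf{h}_j^{(k-1)}$ is determined by the features in $\Nbh_j^{(k-1)}$. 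It therefore suffices to establish the neighborhood containment $\Nbh_i^{(k-1)} \cup \bigcup_{v_j \in \Nbh(v_i)} \Nbh_j^{(k-1)} \subseteq \Nbh_i^{(k)}$.

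This containment is the genuine crux of the argument, and it follows from the triangle inequality for the shortest-path distance $d_g$: for a neighbor $v_j \in \Nbh(v_i)$ we have $d_g(i,j) \leq 1$, so any $m$ with $d_g(j,m) \leq k-1$ satisfies $d_g(i,m) \leq d_g(i,j) + d_g(j,m) \leq k$, i.e. $m \in \Nbh_i^{(k)}$, while $\Nbh_i^{(k-1)} \subseteq \Nbh_i^{(k)}$ is immediate. Because $\rho^{(k)}$, $\varphi^{(k)}$ and the permutation-invariant $\psi$ are deterministic functions, equality of all their inputs forces equality of $\mathbf{h}_i^{(k)}$, closing the induction. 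I do not anticipate a serious obstacle here; the only points requiring care are phrasing the inductive hypothesis as a genuine functional-determination (``depends only on'') statement rather than a vague dependency, and remembering to include the self-term $\mathbf{h}_i^{(k-1)}$ alongside the neighbor terms. Otherwise the proof is a routine unwinding of the recursion that uses nothing about the aggregation maps beyond their being well-defined functions.
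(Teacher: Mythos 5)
Your proposal is correct and follows essentially the same route as the paper's proof: an induction over message-passing depth whose inductive step rests on exactly the containments $\Nbh_i^{(k-1)} \subseteq \Nbh_i^{(k)}$ and $\Nbh_j^{(k-1)} \subseteq \Nbh_i^{(k)}$ for $v_j \in \Nbh(v_i)$, combined with the updates being deterministic functions. Your only (harmless) variation is phrasing the inductive hypothesis as a functional-determination statement for any two inputs agreeing on $\Nbh_i^{(k)}$, whereas the paper proves the specific identity $f_i^{(k)}(\Xm^{(T)}) = f_i^{(k)}(\Xm^{(T \cap \Nbh_i^{(k)})})$ directly and invokes it a second time to align all neighbor terms at the common masking set $T \cap \Nbh_i^{(k)}$.
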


\begin{wrapfigure}{r}{0.465\textwidth}
\begin{center}
\includegraphics[width=\linewidth]{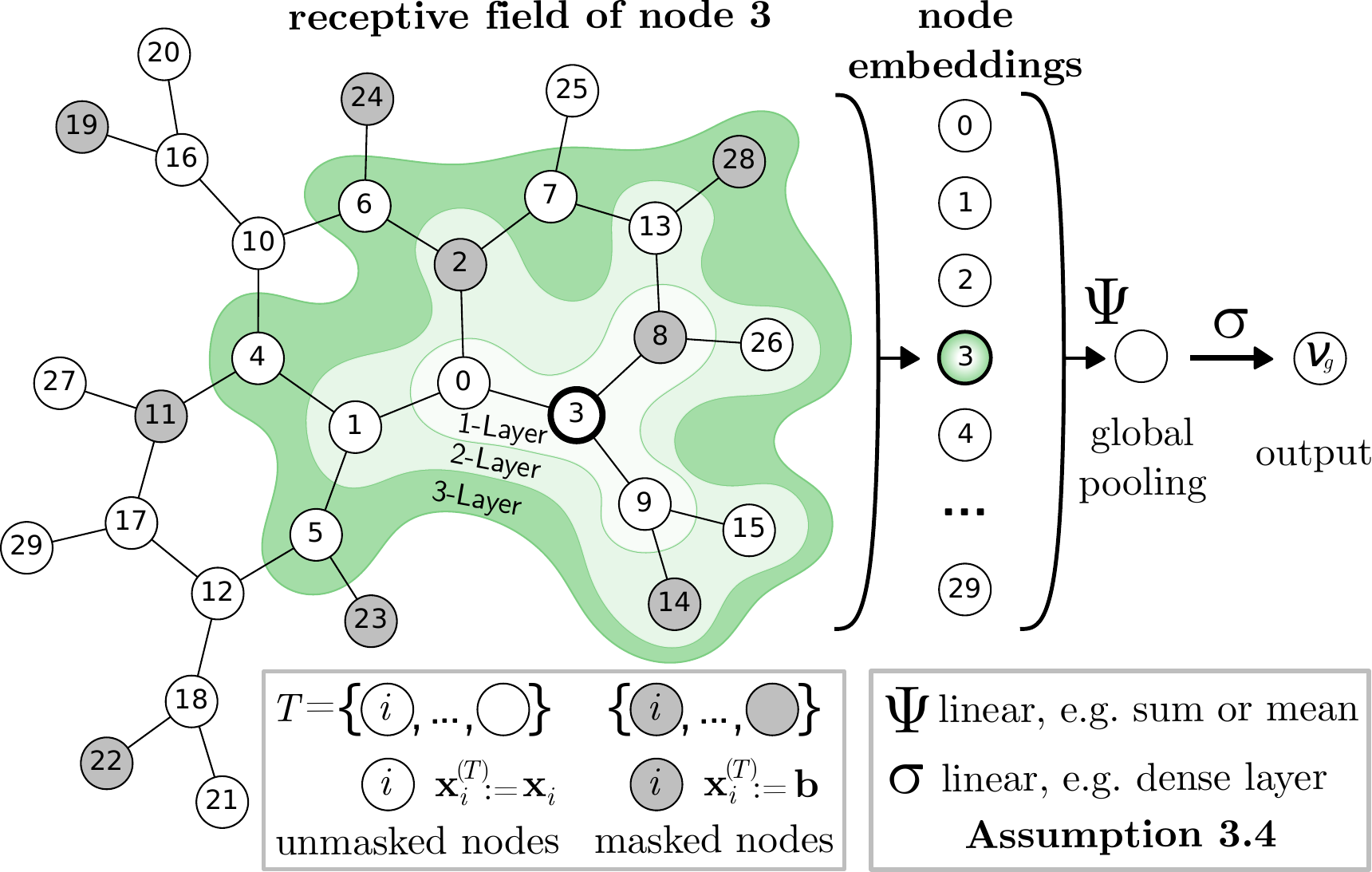}
\end{center}
\caption{Illustration of the graph game $\nu_g$. Masked nodes (grey) are imputed by baseline $\mathbf{b}$, and embeddings are determined by the receptive field (left). Subsequently, a linear pooling ($\Psi$) and output layer ($\sigma$) yield the \gls*{GNN}-induced graph game output.}
\label{fig_illustration_graphgame}
\end{wrapfigure}

\textbf{Node Masking:} Computing \glspl*{SI} on the graph game is a perturbation-based explanation \citep{DBLP:journals/pami/YuanYGJ23}, where also other masking strategies were proposed \citep{agarwal2023evaluating}; for example, node masks \citep{DBLP:conf/nips/YingBYZL19,DBLP:conf/icml/YuanYWLJ21}, edge masks \citep{PGExplainer,schlichtkrull2021interpreting} or node feature masks \citep{agarwal2023evaluating}.
Our method is not limited to a specific masking strategy as long as it defines an invariant game (\cref{theorem_nodegame_invariance}).
We implement our method with the well-established and theoretically understood \gls*{BSHAP} \citep{DBLP:conf/icml/SundararajanN20}.
Alternatively, the $T$-induced subgraph could be used, but \glspl*{GNN} are fit to specific graph topologies, such as molecules, and perform poorly on isolated subgraphs \citep{DBLP:conf/nips/AlsentzerFLZ20}.
Note that, different masking strategies may emphasize different aspects of \glspl*{GNN}, which is important future work.
Due to the invariance, we show that \glspl*{MI} and \glspl*{SI} of the graph game are sparse.
To obtain our theoretical results, we require a structural assumption.

\begin{assumption}[GNN Architecture]
\label{assumption_GNN}
    We require the global pooling $\Psi$ and the output layer $\sigma$ to be linear functions, e.g. $\Psi$ is a mean or sum pooling operation and $\sigma$ is a dense layer.
\end{assumption}
\textbf{Linearity Assumption:} In our experiments, we show that popular \gls*{GNN} architectures yield competitive performances under Assumption~\ref{assumption_GNN} on multiple benchmark datasets. 
In fact, such an assumption should not be seen as a hindrance, as it is the norm in GNN benchmark evaluations \citep{Errica2019AFC}. Furthermore, simple global pooling functions, such as \textit{sum} or \textit{mean}, are adopted in many GNN architectures \citep{Xu2017GNNs_GIN, GNNBook2022}, while more sophisticated pooling layers do not always translate into empirical benefits \citep{Mesquita2020RethinkingPI, Grattarola2021UnderstandingPI}.
Likewise, a linear output layer is a common design choice, and the advantage of deeper output layers must be validated for each task \citep{You2020DesignSF}.

\subsection{Computing Exact Shapley and Möbius Interactions for the Graph Game}
\label{subsection_graphshapiq_theory}
Given a GNN-induced graph game $\nu_g$ from Definition~\ref{definition_graphgame} with Assumption~\ref{assumption_GNN}, i.e. $\Psi$ and $\sigma$ are linear, then the \glspl*{MI} of each node game are restricted to the $\ell$-hop neighborhood.
Intuitively, maskings outside the receptive field do not affect the node embedding.
Consequently, we show that the \glspl*{MI} of the graph game are restricted by all existing $\ell$-hop neighborhoods.
More formally, due to the invariance of the node games (\Cref{theorem_nodegame_invariance}), the \glspl*{MI} for subsets that are not fully contained in the $\ell$-hop neighborhood $\Nbh_i^{(\ell)}$ are zero.

\begin{lemma}[Trivial Node Game Interactions]
\label{lemma_MöbiusTransform_NodeLevel}
    Let $m_i: \Pow(N) \rightarrow \mathbb{R}^{d_\ell}$ be the \glspl*{MI} of the GNN-induced node game $\nu_i$ for $i \in N$ under Assumption~\ref{assumption_GNN}. Then, $m_i(S) = \mathbf{0}$ for all $S \not \subseteq \Nbh_i^{(\ell)}$.
\end{lemma}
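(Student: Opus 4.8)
The plan is to work directly from the defining Möbius sum $m_i(S) = \sum_{T \subseteq S} (-1)^{\vert S \vert - \vert T \vert} \nu_i(T)$ and to leverage the invariance $\nu_i(T) = \nu_i(T \cap \Nbh_i^{(\ell)})$ established in \cref{theorem_nodegame_invariance}. The underlying principle is standard: whenever a set function depends on its argument only through the intersection with a fixed set $R$, its Möbius coefficients vanish on every subset not contained in $R$. Here $R = \Nbh_i^{(\ell)}$, and since the Möbius transform is applied componentwise to the vector-valued game $\nu_i$, the argument is unaffected by $\nu_i$ taking values in $\mathbb{R}^{d_\ell}$.

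Concretely, I would fix a subset $S \not\subseteq \Nbh_i^{(\ell)}$ and pick a witness element $j \in S \setminus \Nbh_i^{(\ell)}$, which exists precisely because $S$ is not contained in the receptive field. I would then reorganize the defining sum by pairing each $T \subseteq S \setminus \{j\}$ with $T \cup \{j\}$, rewriting
\begin{align*}
    m_i(S) = \sum_{T \subseteq S \setminus \{j\}} (-1)^{\vert S \vert - \vert T \vert}\big( \nu_i(T) - \nu_i(T \cup \{j\}) \big).
\end{align*}
Because $j \notin \Nbh_i^{(\ell)}$, adding or removing $j$ does not change the intersection with the receptive field, i.e. $T \cap \Nbh_i^{(\ell)} = (T \cup \{j\}) \cap \Nbh_i^{(\ell)}$, so the invariance of \cref{theorem_nodegame_invariance} gives $\nu_i(T) = \nu_i(T \cup \{j\})$ for every such $T$. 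Each paired term then cancels and the whole sum collapses to $\mathbf{0}$, which is exactly the claim.

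The proof is essentially routine once the invariance is in hand, so there is no serious obstacle. The only point requiring a little care is the sign bookkeeping in the pairing: since $\vert T \cup \{j\} \vert = \vert T \vert + 1$, the two coalitions $T$ and $T \cup \{j\}$ carry opposite signs in the Möbius sum, which is what converts the identity $\nu_i(T) = \nu_i(T \cup \{j\})$ into an exact cancellation rather than a mere coincidence. I would also verify that $T \leftrightarrow T \cup \{j\}$ is a genuine bijection between the subsets of $S$ avoiding $j$ and those containing $j$, so that every term of the original sum is accounted for exactly once.
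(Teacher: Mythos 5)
Your proof is correct, but it cancels the Möbius sum by a genuinely different mechanism than the paper. You fix a single witness $j \in S \setminus \Nbh_i^{(\ell)}$ and run a sign-reversing pairing $T \leftrightarrow T \cup \{j\}$ over $\Pow(S)$; since adjoining $j$ does not change $T \cap \Nbh_i^{(\ell)}$, the invariance of \cref{theorem_nodegame_invariance} makes each bracket $\nu_i(T) - \nu_i(T \cup \{j\})$ vanish, and your two points of care — the opposite signs from $\vert T \cup \{j\}\vert = \vert T \vert + 1$ and the bijection between subsets of $S$ avoiding $j$ and those containing $j$ — are exactly the right ones and are handled correctly. The paper instead decomposes \emph{every} $T \subseteq S$ as $T = T_i^+ \sqcup T_i^-$ with $T_i^+ = T \cap \Nbh_i^{(\ell)}$ and $T_i^- = T \cap (N \setminus \Nbh_i^{(\ell)})$, replaces $\nu_i(T)$ by $\nu_i(T_i^+)$ via the same invariance, and factors the sum into an outer sum over $T_i^+ \subseteq S_i^+$ times the inner alternating sum $\sum_{t_i^-=0}^{s_i^-} \binom{s_i^-}{t_i^-}(-1)^{t_i^-} = (1-1)^{s_i^-}$, which is zero because $s_i^- > 0$. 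Your involution is the more elementary argument — it needs no binomial identity and makes explicit that a \emph{single} element outside the receptive field already forces full cancellation — whereas the paper's factorization buys uniform bookkeeping: the identical $T_i^+ \sqcup T_i^-$ decomposition is reused in its proofs of \cref{proposition_MöbiusTransform_GraphLevel} (via the constant-game lemma, proved with the same binomial step) and of the L-Shapley result in \cref{theorem_lshapley}. A side remark on which both proofs agree: only \cref{theorem_nodegame_invariance} is actually invoked, so the linearity of Assumption~\ref{assumption_GNN} plays no role at node level (it is needed only to pass to the graph game), and the $\mathbb{R}^{d_\ell}$-valued game is handled componentwise in both arguments, as you note.
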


Lemma~\ref{lemma_MöbiusTransform_NodeLevel} yields that the node game interactions outside of the $\ell$-hop neighborhood do not have to be computed.
Due to Assumption~\ref{assumption_GNN}, the interactions of the GNN-induced graph game are equally zero for subsets that are not fully contained in any $\ell$-hop neighborhood.

\begin{proposition}[Trivial Graph Game Interactions]
\label{proposition_MöbiusTransform_GraphLevel}
    Let $m_g: \Pow(N) \rightarrow \R$ be the \glspl*{MI} of the GNN-induced graph game $\nu_g$ under Assumption~\ref{assumption_GNN} and let ${\mathcal I} := \bigcup_{i \in N} \Pow(\Nbh_i^{(\ell)}) 
    $ be the set of non-trivial interactions.
    Then, $m_g(S) = 0$ for all $S\subseteq N$ with $S \notin {\mathcal I}$.
\end{proposition}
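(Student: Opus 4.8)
The plan is to transport the node-level sparsity established in \cref{lemma_MöbiusTransform_NodeLevel} up to the graph game, using that both the readout of \cref{assumption_GNN} and the Möbius transform itself are linear operations.

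First I would express $\nu_g$ as an affine combination of the node games. By \cref{align_graphnodegame_link} we have $\nu_g(T) = \sigma_{\hat y}(\Psi(\{\{\nu_i(T) \mid i \in N\}\}))$, and under \cref{assumption_GNN} the pooling $\Psi$ is linear (sum or mean) and $\sigma$ is a dense layer. Composing these, the scalar readout $\sigma_{\hat y} \circ \Psi$ acts affinely on the tuple of node-game outputs, so there exist a weight vector $\mathbf{w} \in \R^{d_\ell}$ and a scalar bias $c$ with
\[
\nu_g(T) = \sum_{i \in N} \mathbf{w}^t \nu_i(T) + c \qquad \text{for all } T \subseteq N,
\]
where mean pooling merely rescales $\mathbf{w}$ by $1/n$ and $c$ originates from the bias of the dense layer $\sigma$.

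Next I would push the Möbius transform through this decomposition. Since $m(S) = \sum_{T \subseteq S}(-1)^{|S|-|T|}\nu(T)$ is linear in the game $\nu$, and the linear functional $\mathbf{w}^t(\cdot)$ commutes with it coordinate-wise (so that $\mathbf{w}^t m_i(S)$ is exactly the transform of the scalar game $T \mapsto \mathbf{w}^t \nu_i(T)$), applying it to the displayed identity gives $m_g(S) = \sum_{i \in N} \mathbf{w}^t m_i(S)$ for every nonempty $S$. The only term needing care is the constant game $T \mapsto c$, whose Möbius transform is supported solely on $S = \emptyset$, because $\sum_{T \subseteq S}(-1)^{|S|-|T|} = 0$ whenever $S \neq \emptyset$; hence it contributes nothing to $m_g(S)$ for $S \neq \emptyset$.

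Finally I would unwind the definition of $\mathcal{I}$. By construction $S \in \mathcal{I}$ holds exactly when $S \subseteq \Nbh_i^{(\ell)}$ for some $i$, so $S \notin \mathcal{I}$ means $S \not\subseteq \Nbh_i^{(\ell)}$ for all $i \in N$; in particular $S \neq \emptyset$, since $\emptyset$ belongs to every $\Pow(\Nbh_i^{(\ell)})$. Thus the bias is irrelevant, and \cref{lemma_MöbiusTransform_NodeLevel} yields $m_i(S) = \mathbf{0}$ for every $i$, whence $m_g(S) = \sum_{i \in N} \mathbf{w}^t m_i(S) = 0$. I expect the only genuine obstacle to be the bookkeeping around the affine bias, namely ensuring the constant term from the dense output layer cannot leak into interactions of nonempty subsets; this is settled by the observation that constant games are Möbius-supported on $\emptyset$ alone and that $\emptyset \in \mathcal{I}$.
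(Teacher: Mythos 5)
Your proof is correct and follows essentially the same route as the paper's: node-level sparsity from \cref{lemma_MöbiusTransform_NodeLevel} is transported to the graph game via linearity of the Möbius transform, with the constant-game observation absorbing the readout bias. The only cosmetic difference is that you collapse $\sigma_{\hat y} \circ \Psi$ into a single affine functional $\mathbf{w}^t(\cdot) + c$ and track the bias explicitly (noting $\emptyset \in \mathcal{I}$, so only nonempty $S$ matter), whereas the paper factors through the intermediate game $\nu_\Psi$ and normalizes the constant shifts away by assuming $\Psi$ and $\sigma_{\hat y}$ map zero to zero.
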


${\mathcal I}$ is the set of non-trivial \glspl*{MI}, whose size depends on the receptive field of the \gls*{GNN}.
The size of $\mathcal I$ also directly determines the required model calls to compute \glspl*{SI}.

\begin{theorem}[Complexity]
\label{theorem_complexity}
 For a graph $g$ and an $\ell$-Layer GNN $f_g$, computing \glspl*{MI} and \glspl*{SI} on the GNN-induced graph game $\nu_g$ requires $\vert {\mathcal I} \vert $ model calls.
 The complexity is thus bounded by
  \begin{align*}
     \vert \mathcal I \vert \leq \sum_{i \in N} 2^{\vert \Nbh_i^{(\ell)}\vert} \leq n \cdot 2^{n^{(\ell)}_{\max}} \leq n \cdot  2^{\frac{d_{\max}^{\ell+1}-1}{d_{\max}-1}},
  \end{align*}
  where $n_{\max}^{(\ell)} := \max_{i \in N} \vert \Nbh_i^{(\ell)} \vert$ is the size of the largest $\ell$-hop neighborhood and $d_{\max}$ is the maximum degree of the graph instance.
\end{theorem}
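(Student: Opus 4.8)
The plan is to first pin down the exact number of model calls and then verify the chain of three inequalities, only the last of which needs a genuine combinatorial count. Throughout I would treat the game evaluations $\nu_g(T)$ as the atomic \emph{model calls}, since by Definition~\ref{definition_graphgame} each evaluation of $\nu_g$ on a coalition $T$ is one forward pass of $f_g$ on the masked input $\Xm^{(T)}$.

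For the number of model calls, I would start from Proposition~\ref{proposition_MöbiusTransform_GraphLevel}, which guarantees that every non-trivial Möbius coefficient is indexed by some $S \in {\mathcal I}$. To recover these coefficients I would use the Möbius formula $m_g(S) = \sum_{T \subseteq S} (-1)^{\vert S \vert - \vert T \vert}\nu_g(T)$ from \cref{align_möbius_recovery}, so the set of evaluations actually needed is $\bigcup_{S \in {\mathcal I}} \Pow(S)$. The key observation is that ${\mathcal I} = \bigcup_{i \in N}\Pow(\Nbh_i^{(\ell)})$ is \emph{downward closed}: it is a union of power sets, each of which is downward closed, and any subset of a member of ${\mathcal I}$ is therefore again a member. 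Hence $\bigcup_{S \in {\mathcal I}}\Pow(S) = {\mathcal I}$, so the set of coalitions on which $\nu_g$ must be evaluated is exactly ${\mathcal I}$, and $\vert {\mathcal I}\vert$ distinct model calls suffice. Since any \gls*{SI} of any order is a fixed linear transform of the \glspl*{MI} (cf. \cref{appx_sec_conversion}), the \glspl*{SI} are obtained from the same $\vert {\mathcal I}\vert$ evaluations with no further model calls, settling the claim for both \glspl*{MI} and \glspl*{SI}.

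For the inequalities, the first is just subadditivity of cardinality applied to the union defining ${\mathcal I}$, giving $\vert {\mathcal I}\vert \le \sum_{i \in N}\vert \Pow(\Nbh_i^{(\ell)})\vert = \sum_{i\in N} 2^{\vert \Nbh_i^{(\ell)}\vert}$. The second follows by bounding each exponent with the maximum neighborhood size $n_{\max}^{(\ell)}$ and counting the $n = \vert N\vert$ summands, so $\sum_{i\in N} 2^{\vert \Nbh_i^{(\ell)}\vert} \le n\cdot 2^{n_{\max}^{(\ell)}}$. The only step needing real work is the third inequality, for which I would bound $n_{\max}^{(\ell)}$ by counting the nodes in an $\ell$-hop ball according to their distance to the center: distance $0$ contributes the center itself, and since a node has at most $d_{\max}$ neighbors, an induction on $r$ gives at most $d_{\max}^{\,r}$ nodes at distance exactly $r$. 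Summing the geometric series $\sum_{r=0}^{\ell} d_{\max}^{\,r} = \frac{d_{\max}^{\ell+1}-1}{d_{\max}-1}$ bounds every $\vert \Nbh_i^{(\ell)}\vert$, hence $n_{\max}^{(\ell)}$, and yields the final estimate.

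The argument has no hard obstacle; the one subtlety I would be careful about is the claim that $\vert {\mathcal I}\vert$ calls \emph{suffice}, which relies entirely on the downward closure of ${\mathcal I}$ --- without it, recovering the Möbius coefficients indexed by ${\mathcal I}$ could in principle demand evaluating $\nu_g$ on supersets lying outside ${\mathcal I}$. The geometric count is the standard $\ell$-hop ball bound and is deliberately loose (it ignores both revisited nodes and the reduced branching factor $d_{\max}-1$ after the first step), but it delivers the stated worst-case expression.
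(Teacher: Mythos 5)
Your proposal is correct and follows essentially the same route as the paper's proof: the paper likewise establishes that $\vert \mathcal I \vert$ calls are both necessary (each $m_g(S)$ requires $\nu_g(S)$) and sufficient by showing inline exactly your downward-closure property --- for $S \in \Pow(\Nbh_i^{(\ell)})$ every $T \subseteq S$ again lies in $\Pow(\Nbh_i^{(\ell)}) \subseteq \mathcal I$ --- and then derives the three bounds via subadditivity of cardinality, the trivial bound by $n_{\max}^{(\ell)}$, and the same geometric-series count $1 + d_{\max} + \dots + d_{\max}^{\ell}$ of the $\ell$-hop ball. Your remark that SIs come for free as linear transforms of the MIs also matches the paper's use of the conversion formulas.
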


In other words, \Cref{theorem_complexity} shows that the complexity of \glspl*{MI} (originally $2^n$) for \glspl*{GNN} depends at most  \emph{linearly on the size} of the graph $n$.
Moreover, the complexity depends exponentially on the connectivity $d_{\max}$ of the graph instance and the number of convolutional layers $\ell$ of the \gls*{GNN}.
Note that this is a very rough theoretical bound.
In our experiments, we empirically demonstrate that in practice for many instances  exact \glspl*{SI} can be computed, even for large graphs ($n>100)$.
Besides this upper bound, we empirically show that the \emph{graph density}, which is the ratio of edges compared to the number of edges in a fully connected graph, is an efficient proxy for the complexity.

\subsection{GraphSHAP-IQ: An Efficient Algorithm for Shapley Interactions}
\label{subsection_graphshapiq_algorithm}

\begin{wrapfigure}{r}{0.33\textwidth}
\begin{minipage}[b]{\linewidth}
\vspace{-2.2em}
\begin{algorithm}[H]
    \caption{GraphSHAP-IQ}
    \label{alg_exact_graphshapiq}
    \begin{algorithmic}[1]
    \REQUIRE Graph $g=(V,E,\Xm)$, $\ell$-Layer GNN $f_g$, \gls*{SI} order $k$.
    \STATE $\mathcal I \gets \bigcup_{i \in N} \Pow(\Nbh_i^{(\ell)})$ \label{alg_exact_nonzero_mis}
    \STATE $\nu_g \gets [f_g(\Xm^{(T)})]_{T \in {\mathcal I}}$\label{alg_exact_line_model_calls}
    \STATE $m_g \gets [{\textsc{\texttt{MI}}}(\nu_g, S)]_{S \in {\mathcal I}}$ \label{alg_exact_line_moebius}
    \STATE $\Phi_k \gets {\textsc{\texttt{MItoSI}}}(m_g,k)$ \label{alg_exact_line_conversion}
    \STATE \textbf{return} \glspl*{MI} $m_g$, \glspl*{SI} $\Phi_k$ \label{alg_exact_line_output}
    \end{algorithmic}
\end{algorithm}  
\end{minipage}
\end{wrapfigure}

Building on \cref{theorem_complexity}, we propose GraphSHAP-IQ, an efficient algorithm to compute \glspl*{SI} for \glspl*{GNN}.
At the core is the exact computation of \glspl*{SI}, outlined in \Cref{alg_exact_graphshapiq}, which we then extend for approximation in restricted settings.
Moreover, we propose interaction-informed baseline methods that directly exclude zero-valued \glspl*{SI}, which, however, still require all model calls for exact computation.
To compute exact \glspl*{SI}, GraphSHAP-IQ identifies the set of non-zero \glspl*{MI} $\mathcal I$ based on the given graph instance (line~\ref{alg_exact_nonzero_mis}).
The \gls*{GNN} is then evaluated for all maskings contained in $\mathcal{I}$ (line~\ref{alg_exact_line_model_calls}).
Given these \gls*{GNN} predictions, the \glspl*{MI} for all interactions in $\mathcal I$ are computed (line~\ref{alg_exact_line_moebius}).
Based on the computed \glspl*{MI}, the \glspl*{SI} are computed using the conversion formulas (line~\ref{alg_exact_line_conversion}).
Lastly, GraphSHAP-IQ outputs the exact \glspl*{MI} and \glspl*{SI}.
In restricted settings, computing exact \glspl*{SI} could still remain infeasible.
We thus propose an approximation variant of GraphSHAP-IQ by introducing a hyperparameter $\lambda$, which limits the highest order of computed \glspl*{MI} in line~\ref{alg_exact_nonzero_mis}.
Hence, GraphSHAP-IQ outputs exact \glspl*{SI}, if $\lambda = n_{\max}^{(\ell)}$, thereby requiring the optimal budget.
For a detailed description of GraphSHAP-IQ and the interaction-informed variants, we refer to \cref{appx_sec_graphshapiq_alg}.

\begin{table}[b]
\centering
\caption{Summary of datasets and model accuracy.}
\label{tab_setup}
\resizebox{0.99\columnwidth}{!}{\begin{tabular}{lcccc|ccc|ccc|ccc|ccc}
\toprule
\multicolumn{5}{c|}{\textit{Dataset Description}} & \multicolumn{12}{c}{\textit{Model Accuracy by Layer (\%)}} \\
\multirow{2}{*}{\textbf{Dataset}} & \multirow{2}{*}{\textbf{Graphs}} & \multirow{2}{*}{\textbf{$d_{\text{out}}$}} & \multirow{2}{*}{\textbf{\begin{tabular}[c]{@{}c@{}}Nodes\\ (avg)\end{tabular}}} & \multirow{2}{*}{\textbf{\begin{tabular}[c]{@{}c@{}}Density\\ (avg)\end{tabular}}} & \multicolumn{3}{c}{\textbf{GCN}} & \multicolumn{3}{c}{\textbf{GAT}} & \multicolumn{3}{c}{\textbf{GIN}} & \multicolumn{3}{c}{\textbf{Speed-Up}} \\
 &  &  &  &  & \textbf{1} & \textbf{2} & \textbf{3} & \textbf{1} & \textbf{2} & \textbf{3} & \textbf{1} & \textbf{2} & \textbf{3} & \textbf{1} & \textbf{2} & \textbf{3} \\ \midrule
\begin{tabular}[c]{@{}l@{}}\gls{BNZ} \\ \citep{DBLP:conf/nips/Sanchez-Lengeling20}\end{tabular} & $12000$ & $2$ & $20.6$ & $22.8$ & $84.2$ & $88.6$ & $90.4$ & $83.1$ & $85.1$ & $85.7$ & $84.9$ & $90.5$ & $90.8$ & $10^4$ & $10^3$ & $10^2$ \\
\rowcolor{tablegray}\begin{tabular}[c]{@{}l@{}}\gls{FLC} \\ \citep{DBLP:conf/nips/Sanchez-Lengeling20}\end{tabular} & $8671$ & $2$ & $21.4$ & $21.6$ & $82.4$ & $83.9$ & $83.9$ & $82.4$ & $82.2$ & $82.4$ & $84.6$ & $87.2$ & $87.1$ & $10^4$ & $10^3$ & $10^2$ \\
\begin{tabular}[c]{@{}l@{}}\gls{MTG} \\\citep{Kazius_McGuire_Bursi_2005}\end{tabular} & $4337$ & $2$ & $30.3$ & $18.3$ & $77.8$ & $80.7$ & $80.3$ & $72.6$ & $73.6$ & $74.8$ & $77.8$ & $77.4$ & $77.5$ & $10^5$ & $10^4$ & $10^2$ \\
\rowcolor{tablegray}\begin{tabular}[c]{@{}l@{}}\gls{ALC} \\ \citep{DBLP:conf/nips/Sanchez-Lengeling20}\end{tabular} & $1125$ & $2$ & $21.4$ & $21.5$ & $98.7$ & $97.8$ & $99.1$ & $98.2$ & $96.3$ & $97.3$ & $96.9$ & $97.3$ & $97.8$ & $10^4$ & $10^3$ & $10^2$ \\
\begin{tabular}[c]{@{}l@{}}\gls{PRT} \\ \citep{DBLP:conf/ismb/BorgwardtOSVSK05}\end{tabular} & $1113$ & $2$ & $39.1$ & $42.4$ & $75.2$ & $71.1$ & $74.0$ & $75.3$ & $60.5$ & $79.8$ & $79.3$ & $74.9$ & $67.7$ & $10^5$ & $10^3$ & $10^2$ \\
\rowcolor{tablegray}\begin{tabular}[c]{@{}l@{}}\gls{ENZ} \\ \citep{DBLP:conf/ismb/BorgwardtOSVSK05}\end{tabular} & $600$ & $6$ & $32.6$ & $32.0$ & $34.2$ & $37.5$ & $35.8$ & $32.5$ & $35.0$ & $35.8$ & $36.7$ & $35.0$ & $39.2$ & $10^6$ & $10^4$ & $10^3$  \\
\begin{tabular}[c]{@{}l@{}}\gls{CX2} \\ \citep{DBLP:journals/jcisd/SutherlandOW03}\end{tabular} & $467$ & $2$ & $41.2$ & $10.6$ & $87.2$ & $86.1$ & $87.2$ & $81.9$ & $87.2$ & $85.1$ & $84.0$ & $85.1$ & $85.1$ & $10^9$ & $10^8$ & $10^6$ \\
\rowcolor{tablegray}\begin{tabular}[c]{@{}l@{}}\gls{BZR} \\ \citep{DBLP:journals/jcisd/SutherlandOW03}\end{tabular} & $405$ & $2$ & $35.8$ & $13.0$ & $90.1$ & $87.7$ & $90.2$ & $88.9$ & $86.4$ & $87.7$ & $88.9$ & $88.9$ & $88.9$ & $10^8$ & $10^6$ & $10^4$ \\ \bottomrule
\end{tabular}}
\end{table}

\begin{figure}[t]
    \centering
    \hfill
    \begin{minipage}[c]{0.26\textwidth}
    \centering
        \includegraphics[width=\textwidth]{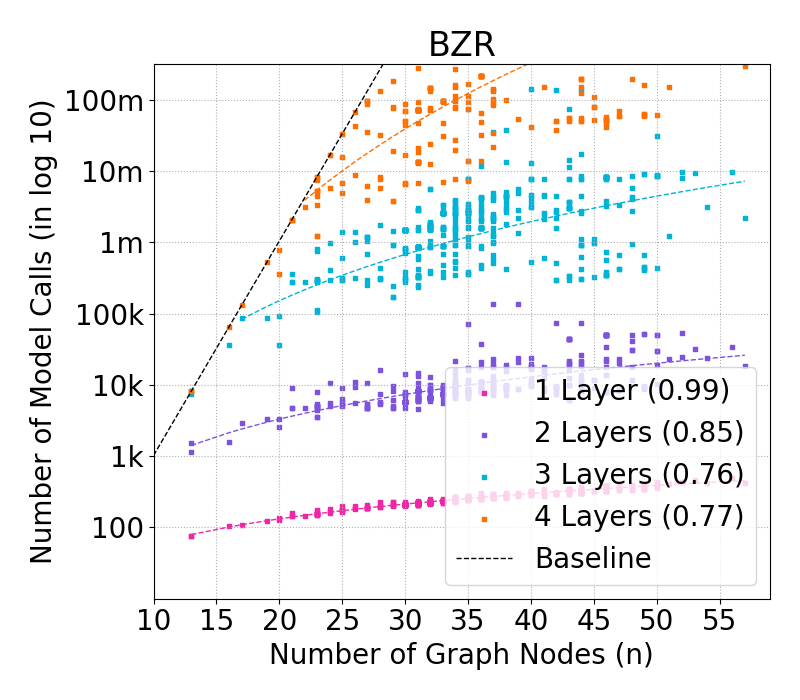}
    \end{minipage}
    \hspace{1em}
    \begin{minipage}[c]{0.26\textwidth}
    \centering
        \includegraphics[width=\textwidth]{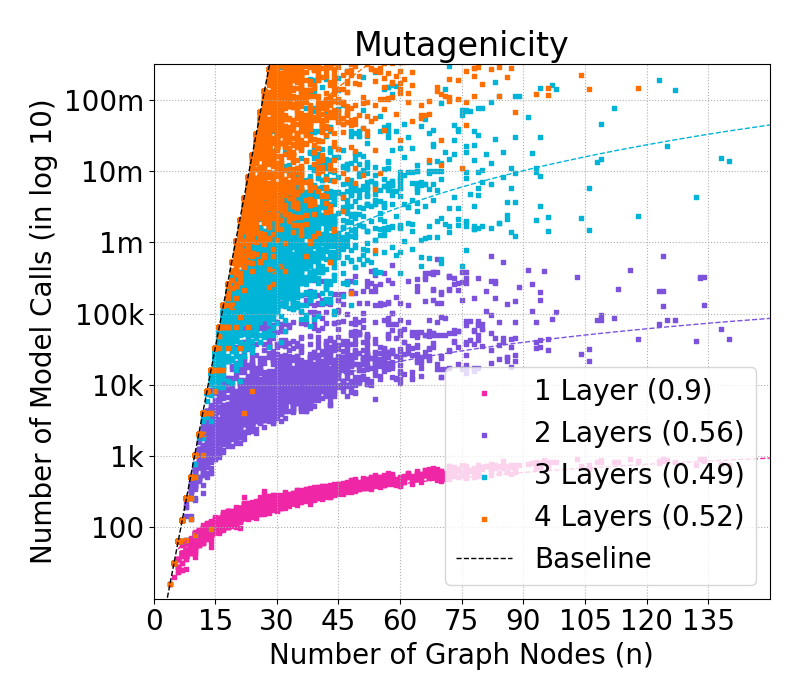}
    \end{minipage}
    \hspace{1em}
    \begin{minipage}[c]{0.26\textwidth}
    \centering
        \includegraphics[width=\textwidth]{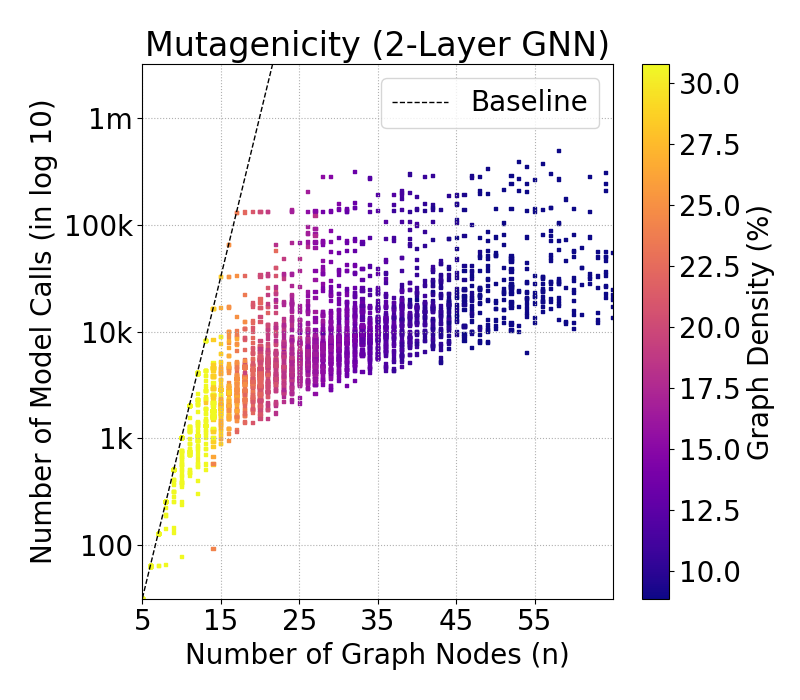}
    \end{minipage}
    \hfill
    \caption{
    Complexity of GraphSHAP-IQ in model calls (in $\log10$) by number of nodes for all graphs of \gls*{BZR} (left) and \gls*{MTG} (middle, right) visualized by number of convolutional layers (left, middle) and graph density \gls*{GNN} (right). While model-agnostic baselines scale exponentially (dashed lines), GrahpSHAP-IQ scales approximately linearly with graph sizes ($R^2$ of log-curves in braces).
    }
    \label{fig_complexity}
\end{figure}

\section{Experiments}
In this section, we empirically evaluate GraphSHAP-IQ for \gls*{GNN} explainability, and showcase a substantial reduction in complexity for exact \glspl*{SI} (\cref{sec_exp_complexity}), benefits of approximation (\cref{sec_approximation}), and explore the \gls*{SI}-Graph for \glspl*{WDN} and molecule structures (\cref{sec_exp_application}). 
Following~\citet{GraphFramEx}, we trained
a \gls*{GCN} \citep{Kipf2017GNNs_GCN}, \gls*{GIN} \citep{Xu2017GNNs_GIN}, and \gls*{GAT} \citep{Velickovic2017GNNs_GAT} on eight real-world chemical datasets for graph classification and a \gls*{WDN} for graph regression, cf. \cref{tab_setup}.
All models adhere to Assumption~\ref{assumption_GNN} and report comparable test accuracies~\citep{Errica2019AFC,You2020DesignSF}.
All experiments are based on \texttt{shapiq} \citep{muschalik2024shapiq} and details can be found in \cref{appx_exp} or at {\footnotesize\url{https://github.com/FFmgll/GraphSHAP-IQ}}.

\subsection{Complexity Analysis of GraphSHAP-IQ for Exact Shapley Interactions}\label{sec_exp_complexity}
In this experiment, we empirically validate the benefit of exploiting graph and \gls*{GNN} structures to compute \emph{exact \glspl*{SI}} with GraphSHAP-IQ.
The complexity is measured by the number of evaluations of the GNN-induced graph game, i.e. the number of model calls of the \gls*{GNN}, which is the limiting factor of \glspl*{SI}, cf. runtime analysis in Appendix~\ref{appx_sec_runtime_analysis}.
For every graph in the benchmark datasets, described in \cref{tab_setup}, we compute the complexity of GraphSHAP-IQ, where the first upper bound from \cref{theorem_complexity} is used if $\max_{i\in N} \vert \Nbh_i^{(\ell)} \vert > 23$, i.e. the complexity exceeds $2^{23} \approx 8.3\times 10^6$.
\cref{fig_complexity} displays the log-scale complexity (y-axis) by the number of nodes $n$ (x-axis) for \gls*{BZR} (left) and \gls*{MTG} (middle, right) for varying number of convolutional layers $\ell$ (left, middle) and by graph density for a 2-Layer \gls*{GNN} (right).
The model-agnostic baseline is represented by a dashed line.
For results on all datasets, see \cref{appx_sec_complexity}.
\cref{fig_complexity} shows that the computation of \glspl*{SI} is substantially reduced by GraphSHAP-IQ.
Even for large graphs with more than $100$ nodes, where the baseline requires over $10^{30}$ model calls, many instances can be exactly computed for $1$-Layer and $2$-Layer \glspl*{GNN} with fewer than $10^5$ evaluations.
In fact, the complexity grows \emph{linearly} with graph size across the dataset, as shown by high $R^2$ scores of fitted logarithmic curves.
\Cref{fig_complexity} (right) shows that the graph density
is an efficient proxy of complexity, with higher values for instances near the baseline.

\subsection{Interaction-Informed Approximation of Shapley Interactions}\label{sec_approximation}

For densely connected graphs and \glspl*{GNN} with many layers, exact computation of \glspl*{SI} might still be infeasible.
We, thus, evaluate the \emph{approximation of \glspl*{SI}} with GraphSHAP-IQ, current state-of-the-art model-agnostic baselines (implemented in \texttt{shapiq}), and our proposed interaction-informed variants.
For the \gls*{SV} (order 1), we apply \textit{KernelSHAP} \citep{DBLP:conf/nips/LundbergL17}, \textit{Unbiased KernelSHAP} \citep{Covert.2021}, \textit{k-additive SHAP} \citep{Pelegrina.2023}, \textit{Permutation Sampling} \citep{Castro.2009}, \textit{SVARM} \citep{DBLP:conf/aaai/KolpaczkiBMH24}, and \textit{L-Shapley} \citep{DBLP:conf/iclr/ChenSWJ19}. 
We estimate \gls*{k-SII} (order 2 and 3) with \textit{KernelSHAP-IQ} \citep{fumagalli2024kernelshapiq}, \textit{Inconsistent KernelSHAP-IQ} \citep{fumagalli2024kernelshapiq}, \textit{Permutation Sampling} \citep{Tsai.2022}, \textit{SHAP-IQ} \citep{Fumagalli.2023}, and \textit{SVARM-IQ} \citep{Kolpaczki.2024}.
For each baseline, we use the interaction-informed variant, cf. \cref{appx_sec_interaction_informed}.
We select graphs containing $20 \leq n \leq 40$ nodes for the \textit{MTG}, \textit{PRT}, and \gls*{BZR} benchmark datasets. 
For each graph instance, we compute ground-truth \glspl*{SI} via GraphSHAP-IQ and evaluate all methods using the same number of model calls, which is the main driver of runtime, cf. Appendix~\ref{appx_sec_runtime_analysis}.
\cref{fig_approximation} (left) displays the average MSE (lower is better) for varying  the model calls.
GraphSHAP-IQ outperforms the baselines in settings with a majority of lower-order \glspl*{MI}.
\cref{fig_approximation} (middle) compares average runtime and MSE for varying explanation orders at GraphSHAP-IQ's ground-truth budget.
Notably, GraphSHAP-IQ is among the fastest methods, and remains unaffected by increasing explanation order.
Moreover, for all baselines (except permutation sampling), the interaction-informed variants substantially improve the approximation quality and runtime.
Consequently, noisy estimates of \glspl*{SI} are substantially improved (\cref{fig_approximation}, right).

\begin{figure}[t]
    \centering
    \begin{minipage}[c]{0.69\textwidth}
    \includegraphics[width=\textwidth]{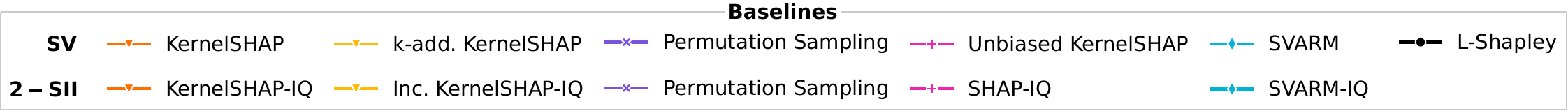}
    \begin{minipage}[c]{0.49\textwidth}
    \includegraphics[width=\textwidth]{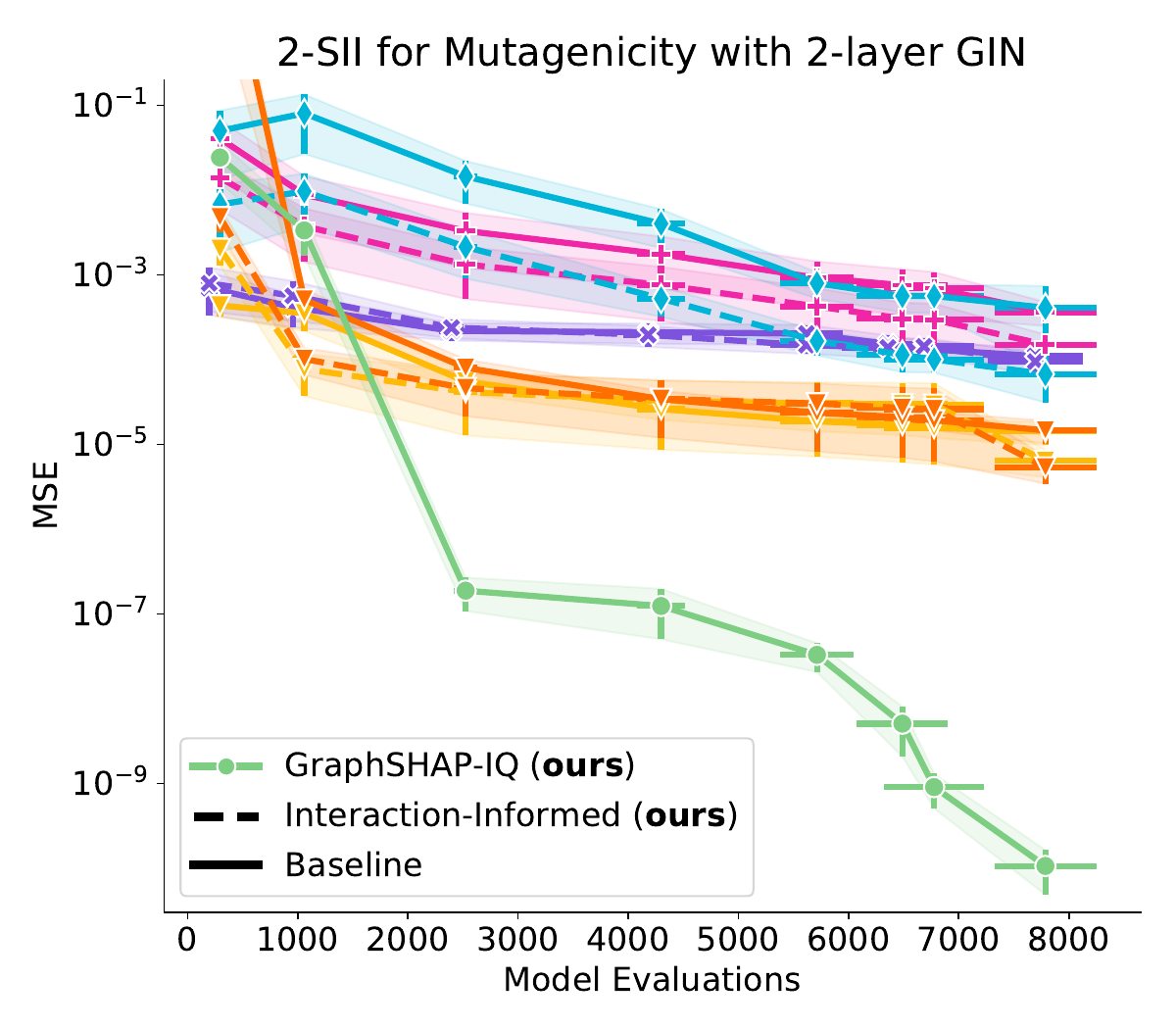}
    \end{minipage}
    \hfill
    \begin{minipage}[c]{0.49\textwidth} 
    \includegraphics[width=\textwidth]{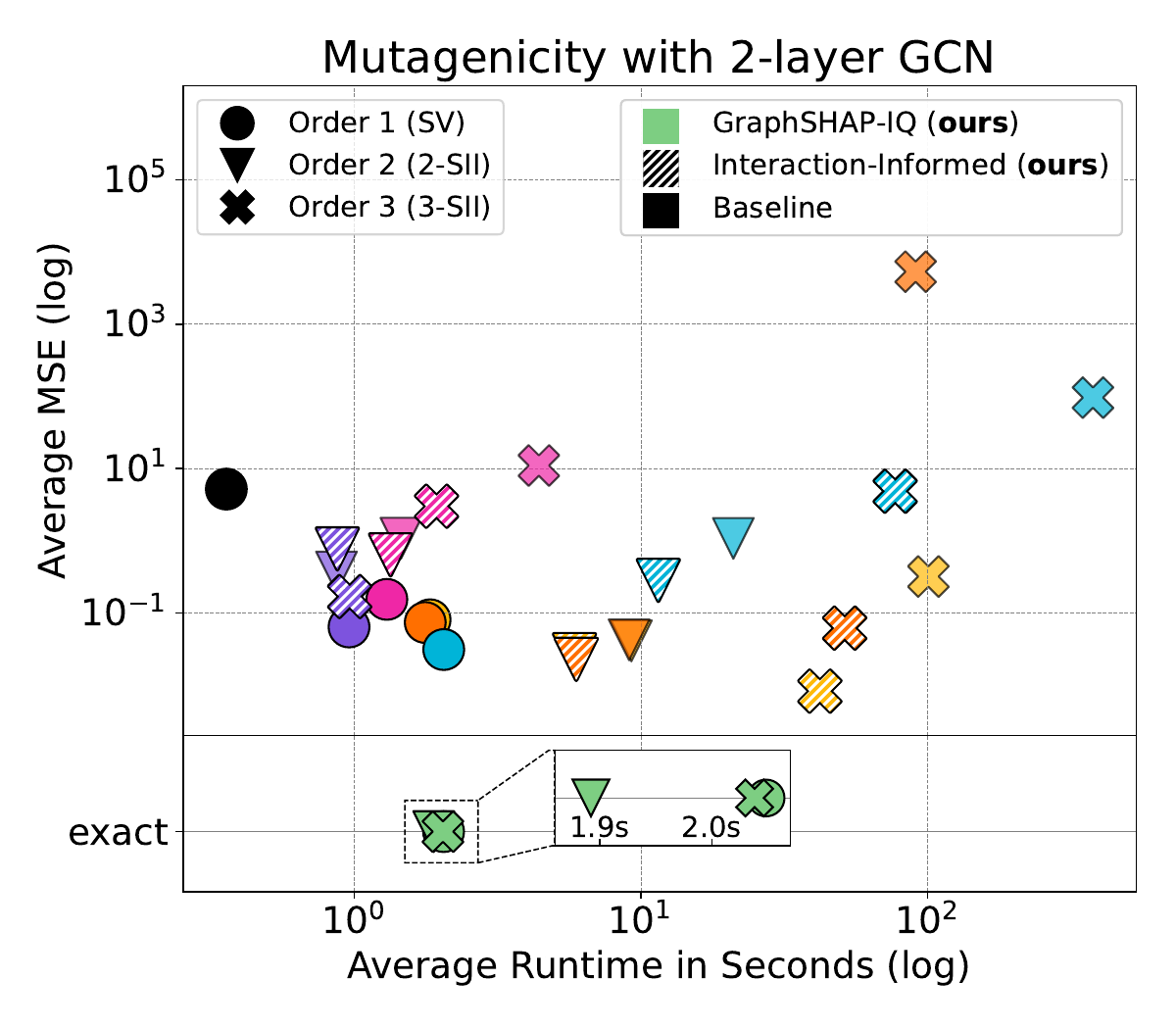}
    \end{minipage}
    \end{minipage}
    \hfill
    \begin{minipage}[c]{0.3\textwidth}
    \includegraphics[width=\textwidth]{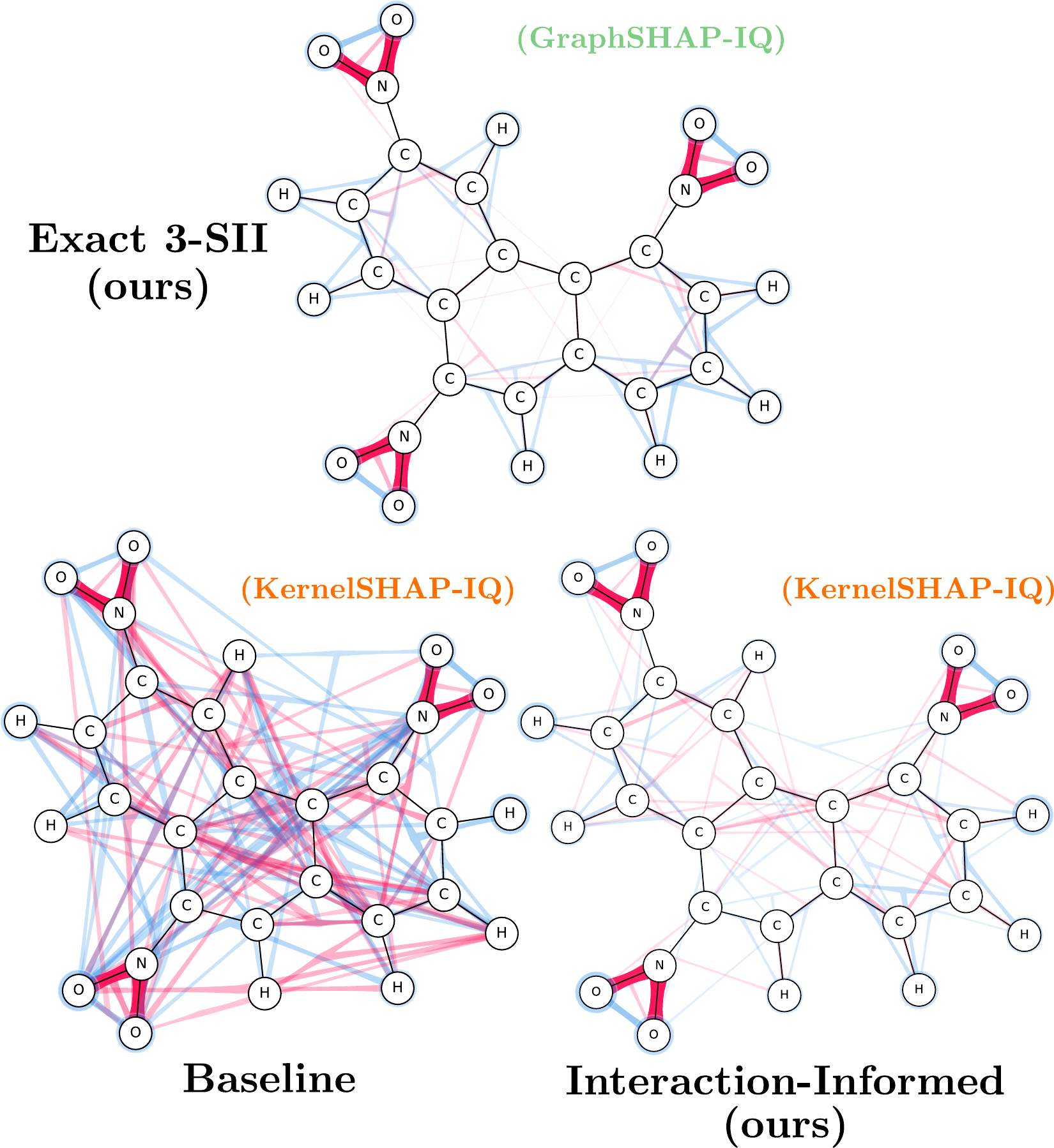}
    \end{minipage}
    \caption{Approximation of \glspl*{SI} with GraphSHAP-IQ (\textcolor{graphshapiq}{green}) and model-agnostic baselines for \emph{MTG} (left). 
    At budgets, where GraphSHAP-IQ reaches exact \glspl*{SI}, the baselines achieve varying estimation qualities and computational costs (middle) which leads to different estimated explanations, especially without interaction-informed baselines (right).
    }
    \label{fig_approximation}
\end{figure}

\subsection{Real-World Applications of Shapley Interactions and the SI-Graph}\label{sec_exp_application}

\begin{figure}[t]
    \centering
    \begin{minipage}[b]{0.27\textwidth}
    \centering
        \includegraphics[width=\textwidth]{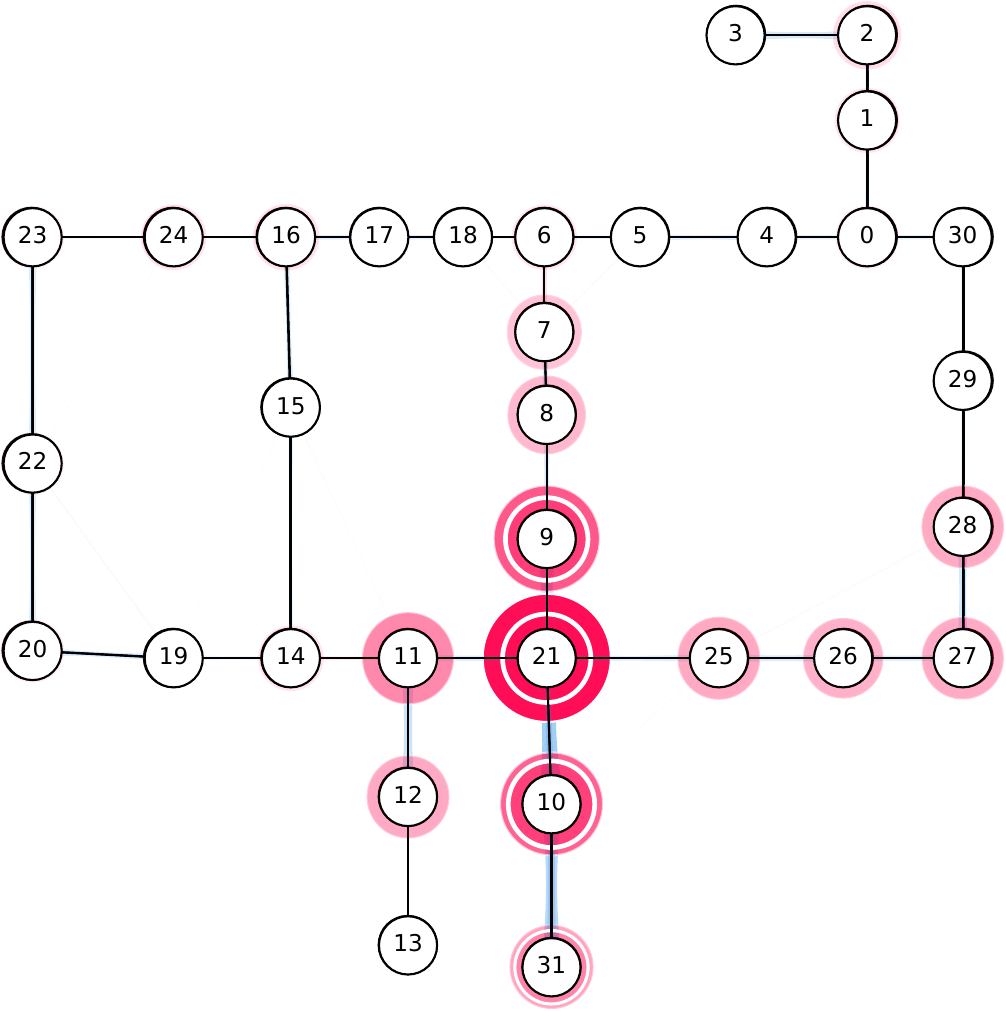}
        \\
         \textbf{(a) WDN Explanations}
    \end{minipage}
    \hfill
    \begin{minipage}[b]{0.2\textwidth}
        \centering
        \includegraphics[width=\textwidth]{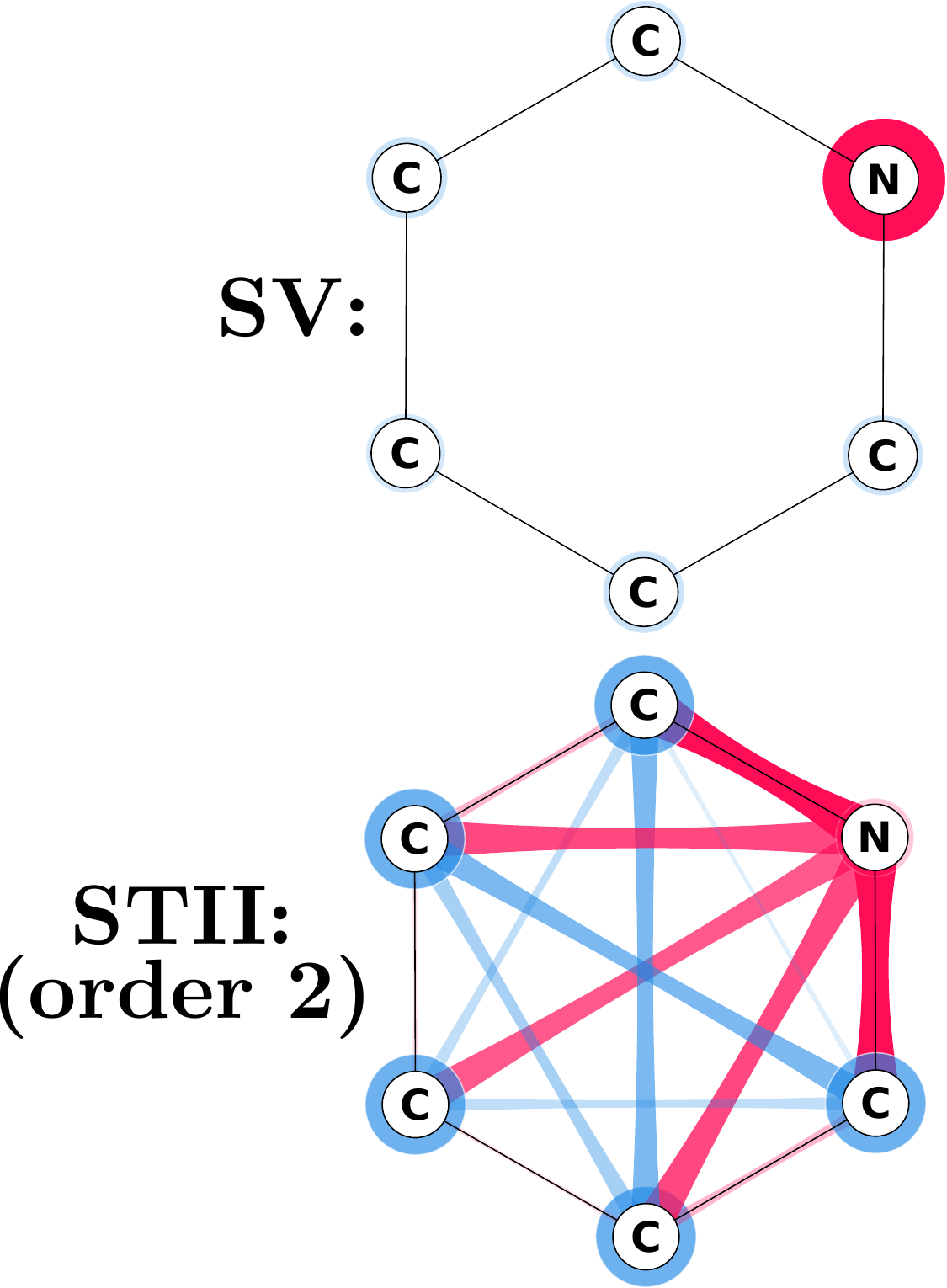}
    \textbf{(b) Pyridine Molecule}
    \end{minipage}
    \hfill
    \begin{minipage}[b]{0.37\textwidth}
        \centering
        \includegraphics[width=\textwidth]{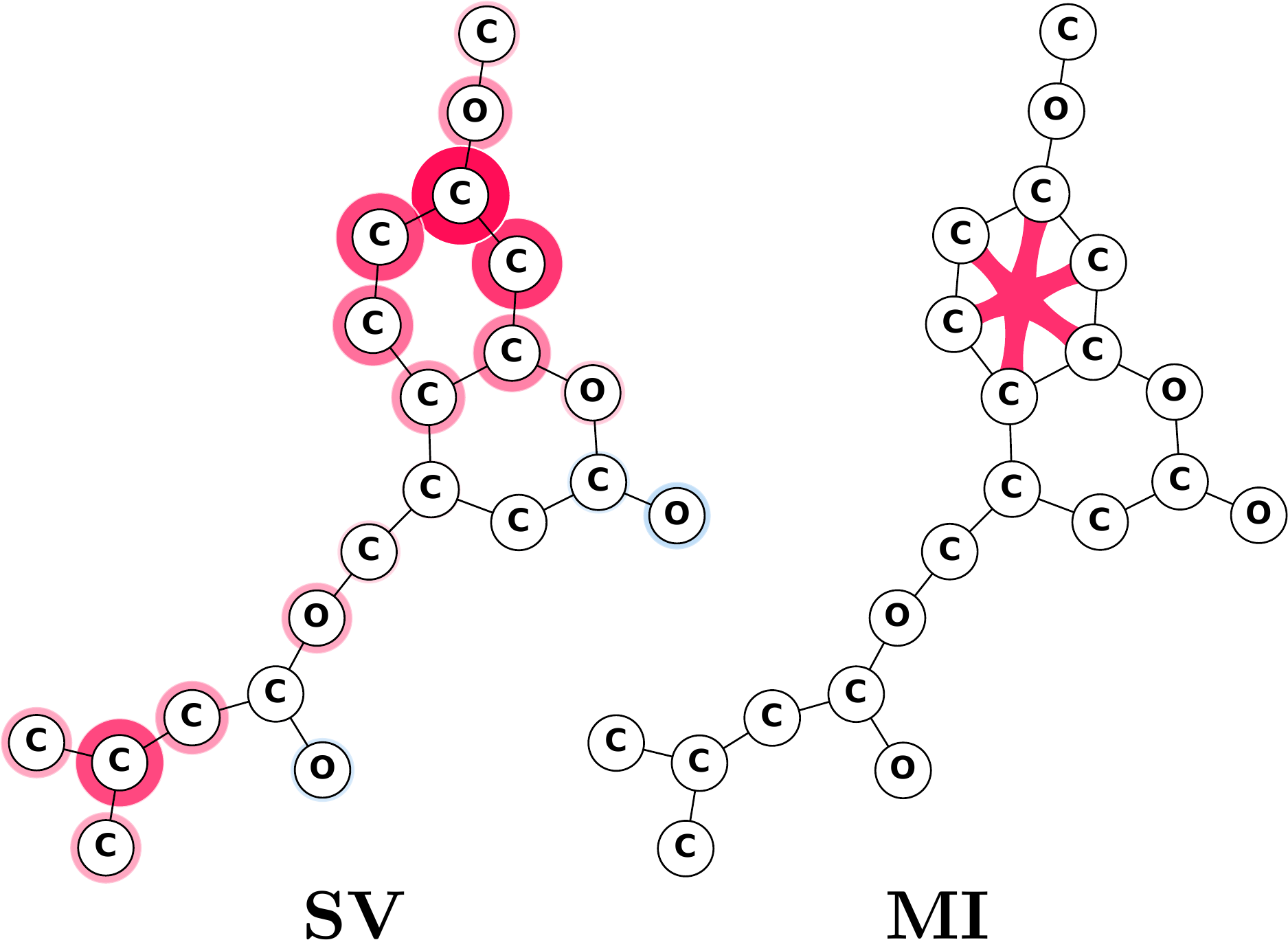}
    \textbf{(c) Benzene Molecule}
    \end{minipage}
    \caption{Exact \glspl*{SI} values for three example graph structures. \glspl*{SV} illustrate the trajectory of chlorination levels in a WDN (a). STII (order 2) values showcase that a Pyridine molecule is not classified as benzene (b), and the largest positive \gls*{MI} for a benzene molecule is the benzene ring (c).}
    \label{fig_xai_example}
\end{figure}

We now apply GraphSHAP-IQ in real-world applications and include further results in \cref{appx_sec_additional_experimental_results}.
\textbf{Monitoring water quality} in \glspl*{WDN} requires insights into a dynamic system governed by local partial differential equations.
Here, we investigate the spread of chlorine as a graph-level regression of a \gls*{WDN}, where a \gls*{GNN} predicts the fraction of nodes chlorinated after some time.
Based on the Hanoi WDS \citep{Vrachamis2018WaterDistributionNetworks_Hanoi}, we create a temporal \gls*{WAQ} dataset containing $1\,000$ graphs consisting of 30 time steps.
We train and explain a simple \gls*{GNN}, which processes node and edge features like chlorination level at each node and water flow between nodes.
\cref{fig_xai_example,appx_fig_wdn} show that $2$-\glspl*{SII} spread over the WDS aligned with the water flow. 
Therein, mostly first-order interactions influence the time-varying chlorination levels.
\\
\textbf{Benzene rings in molecules} are structures consisting of six carbon (C) atoms connected in a ring with alternating single and double bonds. We expect a well-trained \gls*{GNN} to identify benzene rings to incorporate higher-order \glspl*{MI} (order $\geq6$).
\cref{fig_xai_example} shows two molecules and their \gls*{SI}-Graphs computed by GraphSHAP-IQ.
The Pyridine molecule in \cref{fig_xai_example} (b) is correctly predicted to be non-benzene as the hexagonal configuration features a nitrogen (N) instead of a carbon, which is confirmed by the \glspl*{SV} highlighting the nitrogen.
\glspl*{STII} of order 2 reveal that the \gls*{MI} of nitrogen is zero and interactions with neighboring carbons are non-zero, presumably due to higher-order \glspl*{MI}, since \gls*{STII} distributes all higher-order \glspl*{MI} to the pairwise \glspl*{STII}.
In addition, \glspl*{STII} among the five carbon atoms impede the prediction towards the benzene class.
Interestingly, opposite carbons coincide with the highest negative interaction. 
The \glspl*{MI} for a benzene molecule with 21 atoms in \cref{fig_xai_example} (c) reveal that the largest positive \gls*{MI} coincides with the 6-way \gls*{MI} of the benzene ring.
\\
\textbf{Mutagenicity of molecules} is influenced by compounds like nitrogen dioxide (NO$_2$) \citep{Kazius_McGuire_Bursi_2005}.
\cref{fig_intro_illustration} shows \glspl*{SI} for a \gls*{MTG} molecule, which \gls*{GNN} identifies as mutagenic.
$2$-\glspl*{SII} and \glspl*{MI} both show that not the nitrogen atom but the interactions of the NO$_2$ bonds contributed the most.

\section{Comparison of Linear and Deep Readouts}

\begin{figure}
    \centering
    \includegraphics[width=\linewidth]{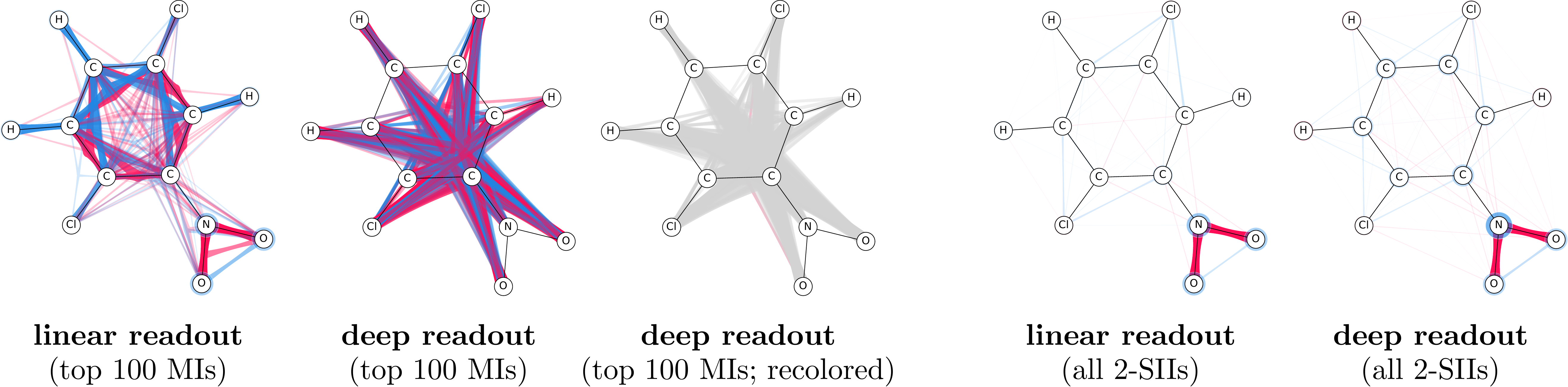}
    \caption{\glspl*{MI} and $2$-\gls*{SII} scores for a 2-layer \gls*{GCN} trained on \gls*{MTG} with a linear readout and a deep readout. Deep readout produces interactions outside the receptive field which are recolored in grey.}
    \label{fig_dr_results}
\end{figure}

\cref{assumption_GNN} imposes the use of a linear readout, which is a limitation of our method, though it remains commonly used in practice \citep{You2020DesignSF}.
In this section, we compare \glspl*{SI} for \glspl*{GNN} with both linear and non-linear readouts.
We train a 2-layer \gls*{GCN} architecture with both linear and non-linear (2-layer perceptron) readouts on \gls*{MTG}, where both models achieve comparable performance.
Figure~\ref{fig_dr_results} shows that non-linear readouts produce substantially different \glspl*{MI} that are \emph{not restricted} to the receptive fields.
Lower-order explanations are similar, indicating the correct reasoning of both models (NO$_2$ group signalling mutagenicity).
We conclude that linear readout restricts the interactions of the \gls*{GNN} to the graph structure and its receptive fields.
In contrast, non-linear readouts enable interactions that extend beyond the receptive fields of the \gls*{GNN}.
Many \gls*{SV}-based \gls*{XAI} methods for \glspl*{GNN} \citep{DBLP:conf/icml/YuanYWLJ21,DBLP:conf/nips/ZhangLSS22,DBLP:conf/nips/YeHWL23,DBLP:conf/icml/BuiNNY24} implicitly rely on the assumption that interactions outside the receptive fields are negligible, which should be evaluated carefully for non-linear readouts.

\section{Limitations and Future Work}
We presented GraphSHAP-IQ, an efficient method to compute \glspl*{SI} that applies to all popular message passing techniques in conjunction with a linear global pooling and output layer.
\textbf{Assumption~\ref{assumption_GNN}} is a common choice for \glspl*{GNN} \citep{Errica2019AFC, Xu2017GNNs_GIN, GNNBook2022} and does not necessarily yield lower performance \citep{Mesquita2020RethinkingPI,You2020DesignSF, Grattarola2021UnderstandingPI}, which is confirmed by our experiments. 
However, exploring non-linear choices that preserve trivial \glspl*{MI} is important for future research.
\textbf{Masking node features with a fixed baseline}, known as \gls*{BSHAP}, preserves the topology of the graph structure and is a well-established approach \citep{Sundararajan.2020}.
Nevertheless, alternatives such as induced subgraphs, edge-removal, or learnable masks, could emphasize other properties of the \gls*{GNN}.
Lastly, \textbf{approximation of \glspl*{SI}} with GraphSHAP-IQ and interaction-informed baselines substantially improved the estimation, where novel methods tailored to Proposition~\ref{proposition_MöbiusTransform_GraphLevel} are promising future work.
Our results may further be applied to other models with spatially restricted features, such as convolutional neural networks.

\section{Conclusion}
We introduced the \gls*{GNN}-induced graph game, a cooperative game for \glspl*{GNN} on graph prediction tasks that outputs the model's prediction given a set of nodes.
The remaining nodes are masked using a baseline for node features, corresponding to the well-established \gls*{BSHAP} \citep{Sundararajan.2020}.
We showed that under linearity assumptions on global pooling and output layers, the complexity of computing exact \glspl*{SV}, \glspl*{SI}, and \glspl*{MI} on any \gls*{GNN} is determined solely by the receptive fields.
Based on our theoretical results, we presented GraphSHAP-IQ and interaction-informed variants of existing baselines, to efficiently compute any-order \glspl*{SI} for \glspl*{GNN}.
We show that GraphSHAP-IQ and interaction-informed baselines substantially reduces the complexity of \glspl*{SI} on multiple real-world benchmark datasets and propose to visualize \glspl*{SI} as the \gls*{SI}-Graph.
By computing the \gls*{SI}-Graph, we discover trajectories of chlorine in \glspl*{WDN} and important molecule substructures, such as benzene rings or NO$_2$ groups.
\clearpage 

\section*{Ethics Statement}

This paper presents work aiming to advance the field of \gls*{ML} and specifically the field of \gls*{XAI}.
There are many potential societal consequences of our work.
Our research holds significant potential for positive societal impact, particularly in areas like the natural sciences (e.g., chemistry and biology) and network analytics.
Our work can positively impact \gls*{ML} adoption and potentially reveal biases or unwanted behavior in \gls*{ML} systems.

However, we recognize that the increased explainability provided by \gls*{XAI} also carries ethical risks. 
There is the potential for ``explainability-based white-washing'', where organizations, firms, or institutions might misuse \gls*{XAI} to justify questionable actions or outcomes.
With responsible use, XAI can amplify the positive impacts of \gls*{ML}, ensuring its benefits are realized while minimizing harm.
 
\section*{Reproducibility Statement}
The python code for GraphSHAP-IQ is available at \url{https://github.com/FFmgll/GraphSHAP-IQ}, and can be used on any \emph{graph game}, a class specifically tailored to the \texttt{shapiq} package \citep{muschalik2024shapiq}.
We include formal proofs of all claims made in the paper in \cref{appx_sec_proofs}.
We further describe the experimental setup and details regarding reproducibility in \cref{appx_exp}.
Our experimental results, setups and plots can be reproduced by running the corresponding scripts.
The datasets and their sources are described in \cref{tab_dataset_licenses}.

\subsubsection*{Acknowledgments}
We gratefully thank the anonymous reviewers for their valuable feedback for improving this work!
We thank André Artelt for the EPyT-Flow toolbox support.
Fabian Fumagalli and Maximilian Muschalik gratefully acknowledge funding by the Deutsche Forschungsgemeinschaft
(DFG, German Research Foundation): TRR 318/1 2021 – 438445824. Janine Strotherm gratefully acknowledges funding from the \gls*{ERC} under the \gls*{ERC} Synergy Grant Water-Futures (Grant agreement No. 951424).
Luca Hermes and Paolo Frazzetto gratefully acknowledge funding by ``SAIL: SustAInable Lifecycle of Intelligent Socio-Technical Systems,'' funded by the Ministry of Culture and Science of the State of North Rhine-Westphalia under grant NW21-059A. 
Additionally, Paolo Frazzetto gratefully acknowledges funding by Amajor SB S.p.A. Alessandro Sperduti gratefully acknowledges the support of the PNRR project FAIR - Future AI Research (PE00000013), Concession Decree No. 1555 of October 11, 2022, CUP C63C22000770006.

\bibliography{references.bib}
\bibliographystyle{iclr2025_conference}

\clearpage
\appendix
\onecolumn

\section*{Organisation of the Supplement Material}

\startcontents[sections]
\printcontents[sections]{l}{1}{\setcounter{tocdepth}{3}}
\clearpage

\clearpage

\section{Notation}\label{appx_sec_notation}
We use lower-case letters to represent the cardinalities of the subsets, e.g. $s := \vert S \vert$.
A summary of notations is given in \cref{appx_tab_notations}.

\begin{table}[h!]
\caption{Notation Table}\label{appx_tab_notations}
\vspace{0.5em}
\resizebox{\textwidth}{!}{
\begin{tabular}{@{}ll@{}}
\toprule
\textbf{Notation}  & \textbf{Description}  
\\ \midrule
\multicolumn{2}{c}{\textbf{Data Notations}}
\\ \midrule
$g = (V,E,\mathbf{X})$ & Graph instance to explain, with nodes $V$, edges $E$, and node features $\mathbf{X}$ 
\\
$V := \{v_1,\dots,v_n\}$ & Set of nodes in $g$
\\
$E \subset V \times V$ & Set of edges in $g$
\\
$d_0$ & Number of node features
\\
$\mathbf{x_i} \in \mathbb{R}^{d_0}$ & Node feature vector of node $v_i$
\\
$\mathbf{X} \in \mathbb{R}^{n \times d_0}$ & Node feature matrix for all nodes in $g$
\\
$\mathbf{b} \in \mathbb{R}^{d_0}$ & Baseline node features used for masking 
\\
$N := \{i: v_i \in V\}$ & Set of graph node indices 
\\
$n := \vert N \vert$ & Number of nodes in the graph instance 
\\
$i \in N$ & Index of node $v_i \in V$ in $g$
\\
$\mathcal N(v) \subseteq N$ & Node indices of the neighborhood of node $v$ 
\\ \midrule
\multicolumn{2}{c}{\textbf{\gls*{GNN} Notations}}
\\ \midrule
$f_g(\mathbf{X})$ & \gls*{GNN} prediction for graph $g$ and node features $\mathbf{X}$
\\
$f_{g,\hat y}(\mathbf{X})$ & \gls*{GNN} prediction of class $\hat y$ for graph $g$ and node features $\mathbf{X}$
\\
$\ell$ & Number of graph convolutional layers
\\
$k=1,\dots,\ell$ & Intermediate graph convolutional layers
\\
$d_k$ & Number of features in node embedding at layer $k$
\\
$\mathbf{h}_i^{(k)} \in \mathbb{R}^{d_{k}}$ & Node embedding vector for node $v_i$ at layer $k$ 
\\
$\mathbf{H}^{(k)} := [\mathbf{h}_1^{(k)},\dots,\mathbf{h}_n^{(k)}]$ & Node embedding matrix of the graph instance at layer $k$ 
\\
$\Psi$ & (Linear) permutation-invariant global pooling function 
\\
$\sigma$ & (Linear) readout layer of the GNN 
\\
$\mathcal N_i^{(\ell)} \subseteq N$ & Node indices of the $\ell$-hop neighborhood of node $v_i$ 
\\
$n_{\max}^{(\ell)} := \max_{i\in N} \vert \mathcal N_i^{(\ell)} \vert$ & Size of the largest $\ell$-hop neighborhood of the graph instance 
\\
$d_{\max}$ & Maximum node degree of the graph instance 
\\     
\midrule 
\multicolumn{2}{c}{\textbf{Masking Notations}}
\\ \midrule
$T \subseteq N$ & Set of node indices, typically used for masked predictions, where $N \setminus T$ are masked
\\
$\mathbf{x}_i^{(T)} :=
\begin{cases}\mathbf{x}_i &\text{ if } i \in T, \\ \mathbf{b} &\text{ if } i \not \in T,\end{cases}$ 
& Masked node features of nodes $v_i$, where nods in $N \setminus T$ are masked with baseline $\mathbf{b}$
\\
$\mathbf{X}^{(T)}=[\mathbf{x}_1^{(T)},\dots,\mathbf{x}_n^{(T)}]$ with $T \subseteq N$ & Masked node feature matrix of $g$, where nodes in $N \setminus T$ are masked with $\mathbf{b}$ 
\\
$f_{g}(\mathbf{X}^{(T)})$ & Masked prediction of the \gls*{GNN} using masked node features $\mathbf{X}^{(T)}$
\\
$f_i(\mathbf{X}^{(T)})$ with $i \in N$ & Masked node embedding $f_i$ for node $v_i$ evaluated at masked node features $\mathbf{X}^{(T)}$
\\
\midrule 
\multicolumn{2}{c}{\textbf{Game Theory Notations}}
\\ \midrule
$\mathcal P(N)$ & Power set of $N$, i.e. all possible sets of node indices used to define the game
\\
$\mathcal P_k(N)$ & Power set of $N$ up to sets of size $k$, i.e. the sets for which the final \gls*{SI} explanation is computed
\\
$\nu_g(T) := f_{g,\hat y}(\mathbf{X}^{(T)})$ & Graph game evaluated at $T\subseteq N$, i.e. GNN prediction $f_g$ of class $\hat y$ with masked node features $\mathbf{X}^{(T)}$ 
\\
$\nu_i(T) := f_i(\mathbf{X}^{(T)})$ for $i \in N$ & Graph node game evaluated at $T\subseteq N$, i.e. GNN embedding $f_i$ of node $i$ with masked node features $\mathbf{X}^{(T)}$
\\
$S \subseteq N$ & Set of node indices, used to refer to an interaction, i.e. the sets for which attributions are computed
\\
$m_g(S)$ & MI for a set $S \subseteq N$ of the graph game $\nu_g$ 
\\
$m_i(S)$ & MI for a set $S \subseteq N$ of the graph node game $\nu_i$ 
\\
$\mathcal I := \bigcup_{i\in N} \mathcal P(\mathcal N_i^{(\ell)})$ & Set of non-trivial MIs, i.e. $m_g(S) = 0$, if $S \notin \mathcal I$ 
\\
$\Phi_k(S)$ & Final \gls*{SI} explanation of order $k$ evaluated at a set $S$ of size at most $k$
\end{tabular}
}
\end{table}

\clearpage
\section{Proofs}\label{appx_sec_proofs}
\subsection{Proof of Theorem~3.3}
\begin{proof}
    By definition of $\nu_i$, we need to show that $f_i(\Xm^{(T)}) =f_{i}(\Xm^{(T \cap \Nbh_i^{(\ell)})})$ holds. 
    We prove by induction over $\ell$.
    Denote $f_i^{(\ell)}$ the corresponding node embedding function for the $\ell$-layer \gls*{GNN}
    For a \gls*{GNN} with $\ell=1$, it is immediately clear from \cref{align_messagePassing} that
    \begin{align*}
    f^{(1)}_i(\Xm^{(T)})
    = 
    \rho^{(1)} (
    \xm_i^{(T)}, 
    \psi (
    \{\{ \varphi^{(1)}(\xm^{(T)}_j) \mid u_j \in \Nbh_i^{(1)} \}\}
    )
    ) 
    = f_i^{(1)}(\Xm^{(T \cap \Nbh_i^{(1)})}).
\end{align*}

Next, assume $f_{i}^{(\ell-1)}(\Xm^{(T)}) = f_{i}^{(\ell-1)}\left(\Xm^{(T \cap \Nbh_i^{(\ell-1)})}\right)$ holds for any \gls*{GNN}, $i \in N$ and $T \subseteq N$.
Then for a \gls*{GNN} with $\ell$ layers, we have
    \begin{align*}
    f_i^{(\ell)}(\Xm^{(T)})
    &= 
    \rho^{(\ell)} (
    f_{i}^{(\ell-1)}(\Xm^{(T)}), 
    \psi (
    \{\{ \varphi^{(\ell)}(f_{j}^{(\ell-1)}(\Xm^{(T)})) \mid v_j \in \Nbh_i^{(1)} \}\}
    )
    ) 
    \\
    &= \rho^{(\ell)} (
    f_{i}^{(\ell-1)}(\Xm^{(T \cap \Nbh_i^{(\ell-1)})}), 
    \psi (
    \{\{ \varphi^{(\ell)}(f_{j}^{(\ell-1)}(\Xm^{(T \cap \Nbh_j^{(\ell-1)})})) \mid v_j \in \Nbh_i^{(1)} \}\}
    )
    ) 
    \\
    &= \rho^{(\ell)} (
    f_{i}^{(\ell-1)}(\Xm^{(T \cap \Nbh_i^{(\ell)})}), 
    \psi (
    \{\{ \varphi^{(\ell)}(f_{j}^{(\ell)}(\Xm^{(T \cap \Nbh_i^{(\ell)})})) \mid v_j \in \Nbh_i^{(1)} \}\}
    )
    ) 
    \\
    &= f_{i}^{(\ell)}(\Xm^{(T \cap \Nbh_i^{(\ell)})})
\end{align*}
where we have used that $\Nbh_i^{(\ell-1)} \subseteq \Nbh_i^{(\ell)}$ and $\Nbh_j^{(\ell-1)} \subseteq \Nbh_i^{(\ell)}$ for $j \in \Nbh_i^{(1)}$ together with the invariance (induction hypothesis) of $f_{i}^{(\ell-1)}$ and $f_{j}^{(\ell-1)}$.
\end{proof}

\subsection{Proof of Lemma~3.5}
\begin{proof}
    Let $i\in N$ and $S \not \subseteq \Nbh_i^{(\ell)}$.
    Our goal is to show that $m_i(S) = 0$.
    For every subset $T \subseteq N$, we can define disjoint sets
    \begin{align*}
        T^+_i := T \cap \Nbh_i^{(\ell)} \text{ and } T_i^- := T \cap (N\setminus \Nbh_i^{(\ell)}) , \text{ such that } T= T_i^+ \sqcup T_i^-,
    \end{align*}
    with $T_i^+ \subseteq \Nbh_i^{(\ell)}$ and $T_i^- \subseteq N \setminus \Nbh_i^{(\ell)}$.
    For $S = S_i^+ \sqcup S_i^-$ the assumption $S \not \subseteq \Nbh_i^{(\ell)}$ implies that $S_i^-$ is not empty, i.e. $ s_i^- := \vert S_i^- \vert > 0$.
    Furthermore, due to the node game invariance, cf. \cref{theorem_nodegame_invariance}, we have $\nu_i(T) = \nu_i(T^+_i)$ for every $T \subseteq N$.
    In the following, lower-case letters represent the corresponding cardinalities of the subsets, e.g. $t_i^+ := \vert T_i^+ \vert$.
    For the \gls*{MI} of interest $S$, the sum, by definition of \glspl*{MI}, ranges over $\Pow(S)$. Thus, since $S_i^+, S_i^- \in \Pow(S)$, we have for every $T_i^+ \subseteq S_i^+$ and every $T_i^- \subseteq S_i^-$ that the union $T_i^+ \sqcup T_i^- \in \Pow(S)$.
    On the other hand, every $T \in \Pow(S)$ can be uniquely decomposed into $T = T_i^+ \sqcup T_i^-$.
    Hence, instead of summing over all $T \in \Pow(S)$, we may also sum over all subsets $T = T_i^+ \cup T_i^-$ with $T_i^+ \subseteq S_i^+$ and $T_i^- \subseteq S_i^-$.
    The \gls*{MI} for $S$  is then
    \begin{align*}
            m_i(S) 
            &= 
            \sum_{T \subseteq S}(-1)^{s - t} ~ \nu_i(T)
            \\
            &=
            \sum_{T_i^+ \subseteq S_i^+}\sum_{T_i^- \subseteq S_i^-}(-1)^{s - (t^+_i + t^-_i)} ~ \nu_i\left(T_i^+ \sqcup T_i^-\right)
            \\
            &=
            \sum_{T_i^+ \subseteq S_i^+}\sum_{T_i^- \subseteq S_i^-}(-1)^{s - (t^+_i + t^-_i)} ~ \nu_i\left(T_i^+ \right)
            \\
            &=
            \sum_{T_i^+ \subseteq S_i^+}(-1)^{s - t^+_i}  \nu_i\left(T_i^+ \right) \sum_{T_i^- \subseteq S_i^-} (-1)^{t^-_i}
            \\
            &=
            \sum_{T_i^+ \subseteq S_i^+}(-1)^{s - t^+_i}  \nu_i\left(T_i^+ \right) \sum_{t_i^-=0}^{s_i^-} \binom{s_i^-}{t_i^-} (-1)^{t^-_i}
            \\
            &= \sum_{T_i^+ \subseteq S_i^+}(-1)^{s - t^+_i}  \nu_i\left(T_i^+ \right) (1-1)^{s_i^-} = 0,
    \end{align*}
    where we used the node game invariance in the third equation, the binomial theorem $(a+b)^n = \sum_{k=0}^n \binom{n}{k} a^{n-k} b^{k}$ in the second last equation, and $s_i^- > 0$ in the last equation.
\end{proof}

\subsection{Proof of Proposition~3.6}

\begin{proof}
    The proof will be based on two important properties of the \glspl*{MI}.
    \begin{lemma}\label{lemma_constant_mi}
        The \gls*{MI} of a constant game $\nu \equiv c \in \mathbb{R}$ is
        \begin{align*}
            m_c(S) = \begin{cases}
                c, \text{ if } S=\emptyset,
                \\
                0, \text{ otherwise.}
            \end{cases}
        \end{align*}
    \end{lemma}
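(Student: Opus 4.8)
The plan is to substitute the constant value directly into the defining formula of the \gls*{MI} and evaluate the resulting alternating sum by the binomial theorem. This lemma is purely a combinatorial fact about the Möbius transform: it uses neither the \gls*{GNN} structure nor \cref{assumption_GNN} nor the node-game invariance of \cref{theorem_nodegame_invariance}, so the argument is self-contained at the level of cooperative games.

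First I would recall the definition of the \gls*{MI} from \cref{align_möbius_recovery}, namely $m_c(S) = \sum_{T \subseteq S} (-1)^{\vert S \vert - \vert T \vert} \nu(T)$. Since $\nu \equiv c$ is constant, $\nu(T) = c$ for every $T \subseteq S$, so the factor $c$ pulls out of the sum and leaves $m_c(S) = c \sum_{T \subseteq S} (-1)^{\vert S \vert - \vert T \vert}$.

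Next I would rewrite the sum over subsets as a sum over cardinalities. Grouping the $\binom{\vert S \vert}{t}$ subsets $T \subseteq S$ of each fixed size $t$ gives $\sum_{T \subseteq S} (-1)^{\vert S \vert - \vert T \vert} = \sum_{t=0}^{\vert S \vert} \binom{\vert S \vert}{t} (-1)^{\vert S \vert - t}$. Applying the binomial theorem with the two base points $-1$ and $1$ identifies this with $(-1 + 1)^{\vert S \vert} = 0^{\vert S \vert}$, exactly as the binomial-theorem step is used later in the proof of \cref{lemma_MöbiusTransform_NodeLevel}.

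Finally I would split on whether $S$ is empty. For $S = \emptyset$ the exponent vanishes and the convention $0^{0} = 1$ yields $m_c(\emptyset) = c$; for $S \neq \emptyset$ the exponent is strictly positive, so $0^{\vert S \vert} = 0$ and hence $m_c(S) = 0$. This recovers the two cases in the statement. There is no genuine obstacle here—the computation is elementary—and the only point requiring mild care is the edge case $S = \emptyset$, where the empty-sum/empty-product conventions must be tracked so that the factor $0^{0} = 1$ is handled correctly.
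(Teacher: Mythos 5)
Your proof is correct and follows essentially the same route as the paper's: both substitute $\nu \equiv c$ into the Möbius formula, group subsets by cardinality, and apply the binomial theorem to see the alternating sum $(1-1)^{\vert S \vert}$ vanish for $S \neq \emptyset$. The only cosmetic difference is that the paper handles $S = \emptyset$ up front via $m(\emptyset) = \nu(\emptyset) = c$ rather than through the $0^{0}=1$ convention, which is an equivalent bookkeeping choice.
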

    \begin{proof}
        The \gls*{MI} for a constant game is computed for $S=\emptyset$ as $m(\emptyset) = \nu(\emptyset) = c$ and for $s := \vert S \vert > 0$ as
        \begin{align*}
            m(S) &= \sum_{T\subseteq S} (-1)^{s-t} \nu(T) = c \sum_{T\subseteq S}(-1)^{s-t} = c \cdot (-1)^s \sum_{t=0}^s \binom{s}{t} (-1)^{t} = c \cdot (-1)^s (1-1)^s = 0,
        \end{align*}
    where we have used the binomial theorem $(a+b)^n = \sum_{k=0}^n \binom{n}{k} a^{n-k} b^k$.
    \end{proof}
    The second property is the linearity of the \gls*{MI} in terms of a linear combination of games.

    \begin{lemma}[Linearity \citep{Fujimoto.2006}] \label{lemma_linearity}
        For a linear combination of two games $\nu := c \cdot \nu_1 + \nu_2$ for a constant $c \in \mathbb{R}$, the \gls*{MI} of $\nu$ is given as
        \begin{align*}
            m_\nu(S) = c \cdot m_{\nu_1}(S) + m_{\nu_2}(S).
        \end{align*}
        for all $S \subseteq N$.
    \end{lemma}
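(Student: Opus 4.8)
The plan is to argue directly from the defining formula of the Möbius interaction in \cref{align_möbius_recovery}, namely $m_\nu(S) = \sum_{T \subseteq S} (-1)^{\vert S\vert - \vert T\vert} \nu(T)$. The entire statement reduces to the observation that this formula expresses $m_\nu(S)$ as a fixed linear combination of the game values $\{\nu(T) : T \subseteq S\}$, with coefficients $(-1)^{\vert S\vert - \vert T\vert}$ that depend only on the set $S$ and the summation index $T$, and \emph{not} on the game $\nu$ itself. Linearity of the Möbius transform is therefore inherited from the linearity of the underlying finite sum.

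Concretely, I would fix an arbitrary $S \subseteq N$, write $s := \vert S\vert$ and $t := \vert T\vert$, and substitute the definition $\nu(T) = c \cdot \nu_1(T) + \nu_2(T)$ into the formula:
\begin{align*}
    m_\nu(S)
    &= \sum_{T \subseteq S} (-1)^{s - t}\, \nu(T)
    = \sum_{T \subseteq S} (-1)^{s - t}\, \bigl( c \cdot \nu_1(T) + \nu_2(T) \bigr).
\end{align*}
Splitting the sum over the two summands and factoring the constant $c$ out of the first (both of which are valid since the index set $\Pow(S)$ is finite and the coefficients are game-independent) yields
\begin{align*}
    m_\nu(S)
    &= c \sum_{T \subseteq S} (-1)^{s - t}\, \nu_1(T)
     + \sum_{T \subseteq S} (-1)^{s - t}\, \nu_2(T)
    = c \cdot m_{\nu_1}(S) + m_{\nu_2}(S),
\end{align*}
where the last step re-identifies each sum with the Möbius interaction of the respective game via \cref{align_möbius_recovery}. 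Since $S$ was arbitrary, this holds for all $S \subseteq N$, completing the argument.

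There is essentially no hard step here: the result is a formal consequence of the linearity of finite summation, and the only point requiring minor care is that the sign coefficients $(-1)^{s-t}$ are determined purely by the cardinalities of $S$ and $T$ and so remain untouched when decomposing $\nu$. I would keep the write-up brief and emphasize that the statement is really just the remark that the Möbius transform is a linear operator on the vector space of games, consistent with the basis interpretation noted earlier in the Background section.
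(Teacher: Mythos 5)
Your proposal is correct and takes essentially the same route as the paper, which likewise verifies the claim directly by substituting $\nu(T) = c \cdot \nu_1(T) + \nu_2(T)$ into the defining sum $m_\nu(S) = \sum_{T\subseteq S}(-1)^{s-t}\nu(T)$ and splitting it by linearity of finite summation (the paper merely adds a pointer to Theorem~4.9 and Lemma~4.1 of \citet{Fujimoto.2006} as an alternative source). Your write-up is in fact slightly more careful than the paper's one-line computation, which even contains a small typo ($m_{\nu_1}(T)$ where $m_{\nu_1}(S)$ is meant).
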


    \begin{proof}
        This result follows from Theorem 4.9 and Lemma 4.1 in \citet{Fujimoto.2006}.
        However, it may also be verified directly
        \begin{align*}
            m_\nu(S) = \sum_{T\subseteq S} (-1)^{s-t} \nu(T) = \sum_{T \subseteq S}(-1)^{s-t} (c \cdot \nu_1(T) + \nu_2(T)) = c \cdot m_{\nu_1}(T) + m_{\nu_2}(T).
        \end{align*}
    \end{proof}

    Next, let $S \not \in \bigcup_{i \in N} \Pow(\Nbh_i^{(\ell)})$.
    Our goal is to show that $m_g(S) = 0$.
    By Lemma~\ref{lemma_MöbiusTransform_NodeLevel}, we have that $m_i(S) = \mathbf{0} \in \mathbb{R}^{d_{\ell}}$ for all $i \in N$ and node games $\nu_i: \Pow(N) \rightarrow \mathbb{R}^{d_\ell}$.
    We can thus define a new game $\nu_\Psi: \Pow(N) \rightarrow \mathbb{R}^{d_\ell}$ as $\nu_\Psi(T) := \Psi(\{\{\nu_i(T) \mid i \in N\}\})$ for $T \subseteq N$.
    Due to Assumption~\ref{assumption_GNN}, $\Psi$ is linear (a linear combination of inputs), and by Lemma~\ref{lemma_constant_mi}, constant shifts do not affect the \glspl*{MI}.
    We therefore assume that $\Psi(\{\{\mathbf{0}\},\dots \{\mathbf{0}\}\}) = \mathbf{0}$ for any number of zero vectors $\mathbf{0}$.
    By the linearity of the \gls*{MI} (Lemma~\ref{lemma_linearity}), we have then
    \begin{align*}
        m_\Psi(S) = \Psi(\{\{m_i(S) \mid i \in N\}\}) = \mathbf{0}.
    \end{align*}
    Lastly, $\nu_g$ represents the logits of the predicted class in the GNN, i.e. $\nu_g(T) = \sigma(\nu_\Psi(T))$.
    The output layer $\sigma$ transforms the games defined as the components of the (multi-dimensional) output of $\nu_{\Psi}$ to an vector of size $d_{\text{out}}$, where $\sigma_{\hat y}$ returns the component used in the graph game $\nu_g$.
    Again, by Lemma~\ref{lemma_constant_mi}, constant shifts do not affect the \glspl*{MI}, and we let $\sigma_{\hat y}(\mathbf{0}) = 0$, to obtain
    \begin{align*}
        m_g(S) = \sigma_{\hat y}(m_\Psi(S)) = \sigma_{\hat y}(\mathbf{0}) = 0,
    \end{align*}
    and likewise for graph regression, which concludes the proof.
\end{proof}

\subsection{Proof of Theorem~3.7}
By Proposition~\ref{proposition_MöbiusTransform_GraphLevel}, the set of non-trivial \glspl*{MI} is given by $\mathcal I$.
To compute exact \glspl*{MI} it is therefore necessary to compute all \glspl*{MI} contained in $\mathcal I$.
Clearly, to compute the \gls*{MI} $m_g(S)$ for any $S \subseteq N$, it is by definition of the \gls*{MI} necessary to evaluate $\nu_g(S)$.
Hence, the complexity of computing exact \glspl*{MI} cannot be lower than $\vert \mathcal I \vert$.
Given now all graph game evaluations $\nu_g(T)$ with $T \in \mathcal I$, we proceed to show that no additional evaluation is required.
In fact, to compute the \gls*{MI} $m_g(S)$ for $S \in \mathcal I$, we require all game evaluations $\nu(T)$ with $T\subseteq S$.
By definition of $\mathcal I$ (Proposition~\ref{proposition_MöbiusTransform_GraphLevel}), there exists a node index $i \in N$, such that $S \in \Pow(\Nbh_i^{(\ell)})$.
Since $S \subseteq \Nbh_i^{(\ell)}$ it follows immediately that all $T\subseteq S$ satisfy $T \in \Pow(\Nbh_i^{(\ell)})$, and hence $T \in \mathcal I$, which we have already computed.
This finishes the proof that exact computation requires $\vert \mathcal I \vert$ graph game evaluations and hence \gls*{GNN} model calls.
Additionally, the number of elements in $\mathcal I$ is trivially bounded by the number of subsets in $N$, i.e. $2^n$ and further 
\begin{align*}
    \vert \mathcal I \vert = \left \vert \bigcup_{i \in N} \Pow(\Nbh_i^{(\ell)}) \right \vert \leq \sum_{i \in N} \vert \Pow(\Nbh_i^{(\ell)}) \vert = \sum_{i\in N} 2^{\vert \Nbh_i^{(\ell)}\vert} \leq n \cdot 2^{n_{\max}^{(\ell)}},
\end{align*}
where each $\ell$-hop neighborhood was bounded by $n_{\max}^{(\ell)} := \max_{i \in N} \vert \Nbh^{(\ell)}_i \vert$.
Lastly, we bound the number of nodes in the $\ell$-hop neighborhood by bounding the number of nodes in each hop with the maximum degree $d_{\max}$ as
\begin{align*}
    n_{\max}^{(\ell)} \leq 1 + d_{\max} + d_{\max}^2 + \dots + d_{\max}^{\ell} = \frac{d_{\max}^{\ell+1}-1}{d_{\max} - 1},
\end{align*}
where we have used the formula for geometric progression.
With this bound we obtain the final bound
\begin{align*}
    \vert \mathcal I \vert \leq n \cdot 2^{n_{\max}^{(\ell)}} \leq n \cdot 2^{\frac{d_{\max}^{\ell+1}-1}{d_{\max}-1}},
\end{align*}
which concludes the proof.

\clearpage
\section{Detailed Comparison of GraphSHAP-IQ and Related Work}\label{appx_sec_rw_and_l_shapley}

The following contains a detailed summary of the related and relevant work.
We further discuss distinctions of GraphSHAP-IQ with specific methods from related work in \cref{appx_sec_ext_rw}.
Moreover, we establish a connection between GraphSHAP-IQ and L-Shapley in \cref{appx_sec_lshapley}

The \gls*{SV} \citep{Shapley.1953} was applied in \gls*{XAI} for local \citep{DBLP:conf/nips/LundbergL17,DBLP:journals/jmlr/CovertLL21} and global \citep{Casalicchio.2019,Covert_Lundberg_Lee_2020} model interpretability, or data valuation \citep{Ghorbani.2019}.
The Myerson value \citep{DBLP:journals/mor/Myerson77} is a variant of the \gls*{SV} for games restricted to graph components.
Extensions of the \gls*{SV} to interactions for \gls*{ML} were proposed  by \cite{Lundberg.2020,Sundararajan.2020,Tsai.2022,Bord.2023}
Due to the exponential complexity, model-specific methods for tree-based models for \glspl*{SV} \citep{Lundberg.2020,Yu.2022}, any-order \glspl*{SI} \citep{DBLP:conf/aaai/ZernBK23,DBLP:conf/aaai/MuschalikFHH24} and \glspl*{MI} \citep{Hiabu.2023} were proposed.
Model-agnostic approximation methods cover the \gls*{SV} \citep{DBLP:conf/nips/LundbergL17,DBLP:conf/iclr/ChenSWJ19,DBLP:conf/aaai/KolpaczkiBMH24}, \glspl*{SI} \citep{Fumagalli.2023,Kolpaczki.2024,fumagalli2024kernelshapiq}, and \glspl*{MI} \citep{DBLP:journals/corr/abs-2402-02631}.
\\
Instance-wise explanations on \glspl*{GNN} were proposed via perturbations \citep{DBLP:conf/nips/YingBYZL19,PGExplainer, DBLP:conf/icml/YuanYWLJ21,schlichtkrull2021interpreting}, gradients \citep{DBLP:conf/cvpr/PopeKRMH19,DBLP:conf/nips/Sanchez-Lengeling20,DBLP:journals/pami/SchnakeELNSMM22,DBLP:conf/icml/XiongSGMMN23} or surrogate models \citep{DBLP:conf/nips/VuT20,DBLP:journals/tkde/HuangYTSC23}.
\gls*{GNN} explanations are given in terms of nodes \citep{DBLP:conf/cvpr/PopeKRMH19,DBLP:journals/tkde/HuangYTSC23} and subgraphs \citep{DBLP:conf/nips/YingBYZL19,PGExplainer, DBLP:conf/icml/YuanYWLJ21}.
Other use paths \citep{DBLP:journals/pami/SchnakeELNSMM22, DBLP:journals/tkde/HuangYTSC23} and edges \citep{schlichtkrull2021interpreting}.
For perturbation-based attributions, maskings for nodes \citep{DBLP:conf/nips/YingBYZL19,DBLP:conf/icml/YuanYWLJ21} or edges \citep{PGExplainer,schlichtkrull2021interpreting} were introduced.
The \gls*{SV} was applied in perturbation-based methods to assess the quality of subgraphs \citep{DBLP:conf/icml/YuanYWLJ21,DBLP:conf/nips/ZhangLSS22,DBLP:conf/nips/YeHWL23}, approximate \glspl*{SV} \citep{DBLP:conf/pkdd/DuvalM21,DBLP:conf/www/AkkasA24} or for pre-defined motifs \citep{DBLP:journals/corr/abs-2202-08815}.
Recently, pairwise \glspl*{SI} were approximated to discover subgraph explanations \citep{DBLP:conf/icml/BuiNNY24}.
\\
In contrast to existing work, we compute exact \glspl*{SV} on node level.
Moreover, we exploit graph and \gls*{GNN} structure to  compute \glspl*{SI} that uncover complex interactions,
formally prove that for linear readouts interaction structures of node embeddings are preserved for graph predictions.

\subsection{Detailed Differences between GraphSHAP-IQ and Related Methods}\label{appx_sec_ext_rw}

In this section, we compare GraphSHAP-IQ with TreeSHAP \citep{Lundberg.2020}, TreeSHAP-IQ \citep{DBLP:conf/aaai/MuschalikFHH24}, GraphSVX \citep{DBLP:conf/pkdd/DuvalM21}, SubgraphX \citep{DBLP:conf/icml/YuanYWLJ21}, GStarX \citep{DBLP:conf/nips/ZhangLSS22} SAME \citep{DBLP:conf/nips/YeHWL23}, and recent work by \citet{DBLP:conf/icml/BuiNNY24}.

\paragraph{TreeSHAP \citep{Lundberg.2020}}
TreeSHAP is a model-specific computation method to explain predictions of decision trees and ensembles of decision trees. 
TreeSHAP defines a cooperative game based on perturbations via the conditional distribution learned by the tree (path-dependent TreeSHAP) or via interventions using a background dataset (interventional TreeSHAP). 
Given these game definitions, TreeSHAP efficiently computes exact \glspl*{SV} in polynomial time by exploiting the tree structure.
Recently, extensions of TreeSHAP for interventional perturbations \citep{DBLP:conf/aaai/ZernBK23}, and with TreeSHAP-IQ for path-dependent perturbations \citep{DBLP:conf/aaai/MuschalikFHH24} were proposed to efficiently compute exact any-order \glspl*{SI} in polynomial time.
Similar to GraphSHAP-IQ, TreeSHAP and TreeSHAP-IQ are model-specific variants for efficient and exact computation of \glspl*{SV} and \glspl*{SI}.
However, TreeSHAP and TreeSHAP-IQ are only applicable to tree-based models, whereas GraphSHAP-IQ is only applicable to \glspl*{GNN} and other model classes with spatially-restricted features.
While both methods are model-specific variants to efficiently compute \glspl*{SI}, these methods are applied on fundamentally different model classes and exhibit completely different computation schemes.

\paragraph{Myerson-Taylor Interactions \citep{DBLP:conf/icml/BuiNNY24}}
This work highlights the opportunities and significance of \glspl*{SI} for \gls*{GNN} interpretability. \citet{DBLP:conf/icml/BuiNNY24} propose an optimal partition of the graph instance as an explanation. 
This partition is found by approximating second-order \gls*{STII} via Monte Carlo on connected components of the graph, without being structure-aware of \glspl*{GNN}.
Note that in our benchmark datasets, there are no isolated components in the graphs, and therefore this does not yield any reduction in complexity.
In contrast, GraphSHAP-IQ is able to compute exact \gls*{STII} (and Myerson-\gls*{STII}) for any order.

\paragraph{GraphSVX \citep{DBLP:conf/pkdd/DuvalM21}.}
GraphSVX proposes to compute Shapley values without interactions on \glspl*{GNN}. This method does not use any \gls*{GNN}-specific structural assumption for graph classification, since GraphSVX considers SVs (not interactions). For graph classification, GraphSVX is an application of KernelSHAP to \glspl*{GNN}. However, for node classification, a result for SVs based on the dummy axiom was established, which also follows from Lemma~\ref{lemma_MöbiusTransform_NodeLevel}. Note that this result for \glspl*{SV} is not applicable for graph prediction, since the dummy axiom does not hold on the graph level, cf. Section 5.5., global/graph classification in \citep{DBLP:conf/pkdd/DuvalM21}. For our main result (\cref{proposition_MöbiusTransform_GraphLevel}), considering the purified interactions (\glspl*{MI}) instead of \glspl*{SV} is crucial.

\paragraph{SubgraphX \citep{DBLP:conf/icml/YuanYWLJ21}.}
SubgraphX identifies isolated subgraphs as explanation. This is done by proposing the \gls*{SV} as a heuristical scoring function for subgraph exploration.
Given a subgraph candidate, a reduced game is defined, where the whole subgraph represents a single player. Based on the computed \gls*{SV}, the quality of the subgraph is determined.
In contrast, GraphSHAP-IQ does not require to group nodes and does not identify isolated components using a scoring function. Instead, GraphSHAP-IQ computes exact \glspl*{SV} on node level, and \glspl*{SI} on all possible subgraphs up to order $k$, which yields an additive decomposition using all these components, which is faithful to the graph game.

\paragraph{GStarX \citep{DBLP:conf/nips/ZhangLSS22}.}
GStarX identifies isolated subgraphs as an explanation. It is related to SubgraphX \citep{DBLP:conf/icml/YuanYWLJ21} and proposes an alternative scoring function to the \glspl*{SV}. In contrast, GraphSHAP-IQ does not identify isolated components using a scoring function.
Instead, GraphSHAP-IQ decomposes the \gls*{GNN} into all possible components (subgraphs) up to order $k$. 
Hence, GraphSHAP-IQ explains the \gls*{GNN} prediction across all maskings faithfully according to the \glspl*{SI}.

\paragraph{SAME \citep{DBLP:conf/nips/YeHWL23}.}
SAME proposes a hierarchical MCTS algorithm as a heuristic way to find explanations as subgraphs. SAME considers $k$-hop neighborhoods for computation of the SV, which however is not accurate for graph prediction, similar to GraphSVX \citep{DBLP:conf/pkdd/DuvalM21}.

\subsection{Linking GraphSHAP-IQ and L-Shapley}\label{appx_sec_lshapley}
In this section, we present a link of GraphSHAP-IQ to L-Shapley \citep{DBLP:conf/iclr/ChenSWJ19} and prove a novel result, which states that L-Shapley with sufficiently large parameter computes exact \glspl*{SV} on games that admit the invariance property (\cref{theorem_nodegame_invariance}), e.g. the GNN-induced node game.
L-Shapley is a model-agnostic method to approximate \glspl*{SV} that utilizes the underlying graph structure of features.
L-Shapley proposes to compute the marginal contributions $\Delta_i(T\setminus i)$ of subsets $T$ that contain $i \in N$ in its $\lambda$-hop neighborhood
\begin{align}\label{align_lshapley}
    \hat\phi_\lambda^{\text{L-SV}}(i) := \frac{1}{\vert\Nbh_i^{(\lambda)}\vert}\sum_{T \subseteq \Nbh_i^{(\lambda)}: i \in T} \binom{\vert \Nbh_i^{(\lambda)} \vert - 1}{t - 1}^{-1} \Delta_i( T \setminus i).
\end{align}

The weights thereby match the weights of the \gls*{SV} restricted to its $k$-hop neighborhood.
While L-Shapley is restricted to the \gls*{SV}, it strongly differs conceptionally from GraphSHAP-IQ in multiple ways.
First, GraphSHAP-IQ allows to compute interactions up to order $\lambda$ in the $\ell$-hop neighborhood, whereas L-Shapley (implicitly) computes interactions up to order $\lambda$ in the $\lambda$-hop neighborhood.
Second, L-Shapley considers only a single neighborhood of the player $i\in N$, whereas GraphSHAP-IQ includes all interactions that contain player $i$ in any $\ell$-hop neighborhoods.
Consequently, GraphSHAP-IQ covers all interactions up to order $\lambda$ of the GNN-induced graph game, whereas L-Shapley only covers interactions up to order $\lambda$ that are fully contained in the $\lambda$-hop neighborhood of the node game of player $i$.
Third, L-Shapley is a model-agnostic method, which is based on a heuristical view of the game structure, whereas GraphSHAP-IQ exploits the \gls*{GNN} structure and additivity of all node games summarized in the graph game.
Consequently, GraphSHAP-IQ computes exact \gls*{SV} with $\lambda=\ell$, whereas L-Shapley computes exact \glspl*{SV} with $\lambda=n$, i.e. only with all $2^n$ model calls.
Lastly, GraphSHAP-IQ maintains efficiency and computes the \glspl*{MI}, which allow to construct any-order \glspl*{SI} yielding an accuracy-interpretability trade-off based on practitioner's needs.

\begin{theorem}\label{theorem_lshapley}
    For a graph $g$ and a single GNN-induced node game $\nu_i$ for $i\in N$ associated with the node embedding $\mathbf{h}^{(\ell)}_{v_i}$, GraphSHAP-IQ applied on the node game with $\lambda=\vert \Nbh_i^{(\ell)}\vert$ and explanation order $k = 1$ returns the L-Shapley computation with $\lambda=\ell$.
    That is, in this specific case the L-Shapley computation is equal to the GraphSHAP-IQ computation and yields the exact \glspl*{SV}.
\end{theorem}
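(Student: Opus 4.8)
The plan is to chain three equalities: (i) GraphSHAP-IQ run on $\nu_i$ returns the \emph{exact} Shapley value of node $i$; (ii) this exact Shapley value collapses onto the $\ell$-hop neighborhood because the invariance turns every node outside $\Nbh_i^{(\ell)}$ into a null player; and (iii) the resulting reduced-game Shapley value is, term by term, the L-Shapley expression of \cref{align_lshapley} at $\lambda=\ell$. Since both the Shapley value and L-Shapley are linear functionals of the game values and $\nu_i$ is vector-valued, the argument runs component-wise in $\mathbb{R}^{d_\ell}$, so I treat $\nu_i$ as scalar without loss of generality.

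For (i), with $\lambda=\vert\Nbh_i^{(\ell)}\vert$ the procedure computes every \gls*{MI} of $\nu_i$ up to the maximal order $\vert\Nbh_i^{(\ell)}\vert$. By \cref{lemma_MöbiusTransform_NodeLevel} all non-trivial $m_i(S)$ are supported on $\Pow(\Nbh_i^{(\ell)})$, so nothing non-zero is discarded; and since $\mathcal{I}=\Pow(\Nbh_i^{(\ell)})$ is closed under taking subsets, the exactness argument underlying \cref{theorem_complexity} guarantees these coefficients are exact. The order-$1$ Möbius-to-Shapley conversion then returns the exact $\phi^{\mathrm{SV}}(i)$ of the node game.

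For (ii), by \cref{theorem_nodegame_invariance} we have $\nu_i(T)=\nu_i(T\cap\Nbh_i^{(\ell)})$, hence $\nu_i(T\cup j)=\nu_i(T)$ for every $j\notin\Nbh_i^{(\ell)}$; thus each such $j$ is a null player and, equivalently, the marginal contribution $\Delta_i(T)$ depends only on $A:=T\cap(\Nbh_i^{(\ell)}\setminus i)$. Writing $M:=\Nbh_i^{(\ell)}$, $m:=\vert M\vert$, $r:=n-m$, I split each $T\subseteq N\setminus i$ as $T=A\sqcup B$ with $B\subseteq N\setminus M$, group the exact Shapley sum by $A$, and factor out $\Delta_i(A)$. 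This reduces the claim to the combinatorial identity
\begin{equation*}
\sum_{b=0}^{r}\binom{r}{b}\frac{1}{n\binom{n-1}{a+b}}=\frac{1}{m\binom{m-1}{a}},
\end{equation*}
with $a:=\vert A\vert$. This identity is the crux of the proof, and I expect it to be the main obstacle; everything else is bookkeeping. I would prove it via the Beta-integral representation $\frac{1}{n\binom{n-1}{c}}=\int_0^1 x^{c}(1-x)^{n-1-c}\,\mathrm{d}x$ with $c=a+b$, after which the inner sum over $b$ collapses through the binomial theorem $\sum_{b=0}^{r}\binom{r}{b}x^{b}(1-x)^{r-b}=1$, leaving exactly $\int_0^1 x^{a}(1-x)^{m-1-a}\,\mathrm{d}x=\frac{1}{m\binom{m-1}{a}}$.

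For (iii), the right-hand side is precisely the Shapley value of $i$ in the reduced game on the player set $M$, namely $\sum_{A\subseteq M\setminus i}\frac{1}{m\binom{m-1}{\vert A\vert}}\Delta_i(A)$, which under the substitution $T=A\cup i$ is verbatim \cref{align_lshapley} at $\lambda=\ell$. The one subtlety to watch is that the full-game Shapley weights must recombine into the reduced-game weights uniformly for every neighborhood size; this is exactly what the identity certifies, and it closes the equality, showing both computations yield the same exact \gls*{SV}.
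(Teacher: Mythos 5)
Your proposal is correct and mirrors the paper's own proof in structure: both parts argue first that GraphSHAP-IQ with $\lambda=\vert\Nbh_i^{(\ell)}\vert$ recovers all non-trivial Möbius coefficients of the node game (\cref{lemma_MöbiusTransform_NodeLevel} together with the subset-closure of $\Pow(\Nbh_i^{(\ell)})$, as in \cref{appx_cor_graphshapiq}) and hence the exact \gls*{SV}, and then collapse the full Shapley sum onto the $\ell$-hop neighborhood by splitting each $T$ into its parts inside and outside $\Nbh_i^{(\ell)}$ via the invariance of \cref{theorem_nodegame_invariance}, grouping by the inner part, and recombining the weights to land verbatim on \cref{align_lshapley}. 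The single divergence is how the weight-recombination identity is certified: the paper imports it as Lemma~1 of \citet{DBLP:conf/iclr/ChenSWJ19} (\cref{lemma_comb_identity}), whereas you derive it self-containedly from the Beta-integral representation $\frac{1}{n\binom{n-1}{c}}=\int_0^1 x^{c}(1-x)^{n-1-c}\,\mathrm{d}x$ and the binomial theorem; your derivation is valid --- the inner sum over $b$ collapses to $1$ and leaves $\int_0^1 x^{a}(1-x)^{m-1-a}\,\mathrm{d}x=\frac{1}{m\binom{m-1}{a}}$, which is exactly the cited identity after relabeling --- and it makes the weight recombination transparent rather than outsourced. Your explicit remark that the $\mathbb{R}^{d_\ell}$-valued node game can be treated component-wise is a small point the paper leaves implicit; it is harmless since both the \gls*{SV} and the L-Shapley expression are linear in the game values.
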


\begin{proof}
    Let $\nu_i$ be a single node game of node $v_i \in V$.
    If $\lambda=\vert \Nbh_i^{(\ell)}\vert$, then by Corollary~\ref{appx_cor_graphshapiq} GraphSHAP-IQ computes exact \glspl*{MI} contained in $\Nbh_i^{(\ell)}$, which by Lemma~\ref{lemma_MöbiusTransform_NodeLevel} are all non-trivial interactions of the node game up to order $\lambda=\ell$.
    Consequently, GraphSHAP-IQ computes exact \glspl*{SV} of the node game.
    On the other hand, we will show that the L-Shapley computation in \cref{align_lshapley} also yields these exact \glspl*{SV} in this case.
    We first observe that due to the node game invariance in \cref{theorem_nodegame_invariance}, we have for 
        \begin{align*}
T^+_i := T \cap \Nbh_i^{(\ell)} \subseteq \Nbh_i^{(\ell)} \text{ and } T_i^- := T \cap (N\setminus \Nbh_i^{(\ell)}) \subseteq N \setminus \Nbh_i^{(\ell)},
    \end{align*}
    with $T= T_i^+ \sqcup T_i^-$ that 
    \begin{align*}
        \Delta_i(T\setminus i) = \nu(T \setminus i) - \nu(T) = \nu(T_i^+ \setminus i) - \nu(T_i^+) = \Delta_i(T_i^+ \setminus i).
    \end{align*}
    Before we compute the \gls*{SV} we need to state the following identity.

    \begin{lemma}[Lemma 1 in \citet{DBLP:conf/iclr/ChenSWJ19}]\label{lemma_comb_identity}
    For any integer $n$ and pair of non-negative integers $s \geq t$, we have
    \begin{align*}
        \sum_{j=0}^n \frac{1}{\binom{n+s}{j+t}}\binom{n}{j} = \frac{s+1+n}{(s+1)\binom{s}{t}}.
    \end{align*}
    \end{lemma}
    \begin{proof}
        The proof is given in the proof in Lemma 1 in Appendix B by \citet{DBLP:conf/iclr/ChenSWJ19}.
    \end{proof}
    
    Hence, we can compute the \gls*{SV} as
    \begin{align*}
        \phi^{\text{SV}}(i) &= \sum_{T\subseteq N \setminus i} \frac{1}{n \cdot \binom{n-1}{t}} \Delta_i(T) 
        \\
        &= \sum_{T\subseteq N: i \in T} \frac{1}{n \cdot \binom{n-1}{t-1}} \Delta_i(T\setminus i) 
        \\
        &= \sum_{T\subseteq N: i \in T} \frac{1}{n \cdot \binom{n-1}{t-1}} \Delta_i(T_i^+\setminus i) 
        \\
        &= \sum_{T_i^+\subseteq \Nbh_i^{(\ell)}: i \in T_i^+}  \Delta_i(T_i^+\setminus i) \sum_{T_i^- \subseteq N \setminus  \Nbh_i^{(\ell)}} \frac{1}{n \cdot \binom{n-1}{t_i^+ + t_i^- -1}}
        \\
        &= \sum_{T_i^+\subseteq \Nbh_i^{(\ell)}: i \in T_i^+}  \Delta_i(T_i^+\setminus i) \sum_{t_i^-=0}^{n-\vert \Nbh_i^{(\ell)} \vert} \binom{n - \vert \Nbh_i^{(\ell)}\vert}{t_i^-} \frac{1}{n \cdot \binom{n-\vert \Nbh_i^{(\ell)}\vert + (\vert \Nbh_i^{(\ell)} \vert -1)}{t_i^- + (t_i^+-1)}}
        \\
        &=  \sum_{T_i^+\subseteq \Nbh_i^{(\ell)}: i \in T_i^+}  \Delta_i(T_i^+\setminus i) \frac{ \vert \Nbh_i^{(\ell)}\vert - 1 + 1 + n- \vert \Nbh_i^{(\ell)}\vert }{n \cdot (\vert \Nbh_i^{(\ell)}\vert -1 +1) \binom{\vert \Nbh_i^{(\ell)}\vert -1}{t_i^+-1)}}
        \\
        &=  \sum_{T_i^+\subseteq \Nbh_i^{(\ell)}: i \in T_i^+}  \Delta_i(T_i^+\setminus i) \frac{1}{\vert \Nbh_i^{(\ell)}\vert \cdot \binom{\vert \Nbh_i^{(\ell)}\vert-1}{t_i^+-1)}},
    \end{align*}
    where we have used Lemma~\ref{lemma_comb_identity} in the second last equation.
    This result of the \glspl*{SV} clearly coincides with \cref{align_lshapley}, which concludes the proof.
\end{proof}

\Cref{theorem_lshapley} shows that L-Shapley computes exact \glspl*{SV}, if the game corresponds to a node game in a GNN.
However, L-Shapley does not compute exact \glspl*{SV} for the graph game, as interactions involved in the computation of the \gls*{SV} may also appear in other neighborhoods.
In fact, L-Shapley applied on the graph game only converges for $k=n$, which corresponds to $2^n$ model evaluations.
In our experiments, we also show empirically that L-Shapley performs poorly on GNN-induced graph games.

\clearpage
\section{Additional Algorithmic Details for GraphSHAP-IQ}\label{appx_sec_graphshapiq_alg}

In this section, we provide further details of the GraphSHAP-IQ algorithm, including the full algorithm that is capapable of approximation.

\subsection{Exact Computation}

\cref{alg_exact_graphshapiq} outlines the exact computation for GraphSHAP-IQ and we provide further details and pseudocode in this section.
\cref{appx_alg_mi} describes the computation of the \glspl*{MI} from line \ref{alg_exact_line_moebius} in \cref{alg_exact_graphshapiq}.
Given the \glspl*{MI}, GraphSHAP-IQ outputs the converted \glspl*{SI}, according to the conversion formulas discussed in \cref{appx_sec_conversion}.
\cref{appx_alg_mi2si} describes the pseudocode of the conversion method called in line \ref{alg_exact_line_conversion} in \cref{alg_exact_graphshapiq}.
Here, the method $\textsc{\texttt{getConversionWeight}}$ outputs the distribution weight for each specific index given an \gls*{MI} $\tilde S$, a \gls*{SI} $S$ and the index, according to the conversion formulas discussed in \cref{appx_sec_conversion}.

\begin{algorithm}
    \caption{Möbius Interaction (MI)}
    \label{appx_alg_mi}
    \begin{algorithmic}[1]
    \REQUIRE Game values $\nu_g$ and \gls*{MI} of interest $S \subseteq N$.
    \STATE $m_g[S] \gets 0$
    \FOR{$T \in \Pow(S)$}
        \STATE $m_g[S] \gets m_g[S] + (-1)^{\vert S \vert - \vert T \vert} \nu_g[T]$ 
    \ENDFOR
    \STATE \textbf{return} \gls*{MI} $m_g$.
    \end{algorithmic}
\end{algorithm}

\begin{algorithm}
    \caption{\gls*{MI}to\gls*{SI}}
    \label{appx_alg_mi2si}
    \begin{algorithmic}[1]
    \REQUIRE (Approximated) \glspl*{MI} $\hat m_g$, \gls*{SI} order $k$ and \gls*{SI} index
    \FOR{$\tilde S \in \textsc{\texttt{Index}}(\hat m_g)$}
        \FOR{$S \in \Pow(\tilde S)$ with $\vert S \vert \leq k$}
        \STATE $\Phi_k[S] \gets 0$, if not initialized yet.
        \STATE $w_S \gets \textsc{\texttt{getConversionWeight}}(\tilde S, S, \text{index})$
        \STATE $\Phi_k[S] \gets \Phi_k[S] + w_S \cdot \hat m_g[\tilde S]$
        \ENDFOR
    \ENDFOR
    \STATE \textbf{return} \glspl*{SI} $\Phi_k$.
    \end{algorithmic}
\end{algorithm}

\subsection{Approximation with GraphSHAP-IQ}

For graphs with high connectivity and GNNs with many convolutional layers, computing exact \glspl*{SI} via \cref{theorem_complexity} may still be infeasible.
We thus propose GraphSHAP-IQ, a flexible approach to compute either exact \glspl*{SI} or an approximation.
Outlined in \Cref{alg_graphshapiq}, GraphSHAP-IQ depends on a single parameter $\lambda=1,\dots,n$ that controls the size of computed \glspl*{MI}. 
Given $\lambda$, GraphSHAP-IQ deploys a deterministic approximation method that computes exhaustively all non-trivial \glspl*{MI} of the GNN-induced graph game $\nu_g$ up to order $\lambda$.
In \cref{appx_sec_lshapley}, we discuss a link between \glspl*{SV} and L-Shapley \citep{DBLP:conf/iclr/ChenSWJ19}.
\\
\textbf{Exact \glspl*{SI}:} If $\lambda \geq n_{\max}^{(\ell)}$, i.e. $\lambda$ is larger or equal to the largest neighborhood size, then the set of \emph{remaining neighborhoods} ${\mathcal R}_\lambda$ (line \ref{alg_line_remaining_set}) is empty. 
GraphSHAP-IQ then computes all non-trivial \glspl*{MI} and returns exact \glspl*{SI}.
In fact, according to Proposition~\ref{proposition_MöbiusTransform_GraphLevel}, ${\mathcal I}_\lambda$ (line \ref{alg_line_interaction_set}) contains all non-trivial \glspl*{MI} and GraphSHAP-IQ evaluates the graph game $\nu_g$ on all of these subsets (line \ref{alg_line_model_calls}).
The \glspl*{MI} $\hat m_g(S)$ are then computed for all $S \in {\mathcal I}_\lambda = {\mathcal I}$ (line \ref{alg_line_moebius}), and converted to \glspl*{SI} (line \ref{alg_line_conversion}), cf. \cref{appx_sec_conversion}.
\\
\textbf{Approximation:} If $\lambda<n_{\max}^{(\ell)}$, then ${\mathcal I}_\lambda$ (line \ref{alg_line_interaction_set}) does not contain all non-trivial \glspl*{MI} and the set of remaining neighborhoods ${\mathcal R}_\lambda$ (line \ref{alg_line_remaining_set}) is not empty. 
Consequently, the recovery property of the \glspl*{MI}, cf. \cref{align_möbius_recovery}, does not hold for $T \in {\mathcal R}_\lambda$, and in particular not for the model prediction $\nu_g(N) = f(\Xm)$.
Thus, for neighborhoods $T \in {\mathcal R}_\lambda$ GraphSHAP-IQ computes the current recovery value using the \glspl*{MI} (line \ref{alg_line_nuhat}) and assigns the gap to $\hat m_g(T)$ (line \ref{alg_line_recovery}).
Note that a previous sorting (line \ref{alg_line_sort}) is required to ensure that previously assigned neighborhood \glspl*{MI} are included.
Ultimately, the largest neighborhood $S^{*} \in \mathcal R_\lambda$ (line \ref{alg_line_maxMI}) receives the gap $\tau$ (line \ref{alg_line_tau}) of the prediction $\nu_g(N)=f(\Xm)$ and the sum of all \glspl*{MI} to maintain efficiency (line \ref{alg_line_efficiency}).
The rationale of maintaining the recovery property is that while GraphSHAP-IQ introduces a bias on the approximated \glspl*{MI}, it equally distributes missing interaction mass onto lower-order \glspl*{SI}.
GraphSHAP-IQ further satisfies the following corollary.

\begin{corollary}\label{appx_cor_graphshapiq}
    GraphSHAP-IQ computes exact \glspl*{SI}, if $\lambda \geq n_{\max}^{(\ell)}-1$.
\end{corollary}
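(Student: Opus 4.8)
The plan is to leverage the recovery property of the \glspl*{MI} (\cref{align_möbius_recovery}) together with the sparsity result of Proposition~\ref{proposition_MöbiusTransform_GraphLevel}. First I would recall that the only non-trivial \glspl*{MI} of the graph game lie in $\mathcal I = \bigcup_{i \in N} \Pow(\Nbh_i^{(\ell)})$, and that every such \gls*{MI} has order at most $n_{\max}^{(\ell)}$. Running GraphSHAP-IQ with $\lambda = n_{\max}^{(\ell)} - 1$ computes $\hat m_g(S) = m_g(S)$ exactly for every $S \in \mathcal I$ with $\vert S \vert \leq \lambda$ via the Möbius formula, since all subsets $T \subseteq S$ needed for that formula again lie in $\mathcal I$ with $\vert T\vert \leq \lambda$ and are therefore evaluated. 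Hence the only \glspl*{MI} left to determine are those of order exactly $n_{\max}^{(\ell)}$; because any such $S$ satisfies $S \subseteq \Nbh_i^{(\ell)}$ with $\vert S \vert = \vert \Nbh_i^{(\ell)}\vert$, it must coincide with a maximal-size neighborhood $\Nbh_i^{(\ell)}$, and these are precisely the elements of the remaining set $\mathcal R_\lambda$.

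The key step is to show that the recovery stage reconstructs each of these top-order \glspl*{MI} without error. Fix a remaining neighborhood $T \in \mathcal R_\lambda$. Every proper subset $S \subsetneq T$ has $\vert S \vert \leq n_{\max}^{(\ell)} - 1 = \lambda$ and lies in $\mathcal I$, so $\hat m_g(S) = m_g(S)$ is already exact. The recovery property gives $m_g(T) = \nu_g(T) - \sum_{S \subsetneq T} m_g(S)$, which is exactly what GraphSHAP-IQ assigns to $\hat m_g(T)$ after evaluating $\nu_g(T)$, whence $\hat m_g(T) = m_g(T)$. For the single largest neighborhood $S^{*}$ that is instead closed by the efficiency step, I would invoke $\nu_g(N) = \sum_{S \in \mathcal I} m_g(S)$ (which is \cref{align_möbius_recovery} with $T = N$ combined with Proposition~\ref{proposition_MöbiusTransform_GraphLevel}): since every \gls*{MI} other than $m_g(S^{*})$ has already been shown exact, the efficiency gap $\tau = \nu_g(N) - \sum_{S \neq S^{*}} \hat m_g(S)$ equals $m_g(S^{*})$ exactly. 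Thus all \glspl*{MI} are recovered exactly, and the \glspl*{SI} obtained from them through the conversion formulas are exact as well.

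The hard part is verifying that no error propagates between the recovered \glspl*{MI}, and this is precisely where the choice $\lambda = n_{\max}^{(\ell)} - 1$ is essential. All remaining neighborhoods share the maximal size $n_{\max}^{(\ell)}$, so they are mutually incomparable — none is a subset of another — and each recovery therefore depends only on exactly-computed lower-order \glspl*{MI} rather than on another recovered value. I would make this rigorous by noting that the sorting step only needs to guarantee the presence of all strictly smaller \glspl*{MI}, which is automatic at this $\lambda$. Were $\lambda$ any smaller, a maximal neighborhood could contain a proper subset in $\mathcal I$ of order exceeding $\lambda$ that is itself neither directly computed nor a remaining neighborhood, breaking the telescoping and hence the exactness; I would emphasize that $\lambda \geq n_{\max}^{(\ell)} - 1$ is exactly the threshold at which the recovery stays error-free.
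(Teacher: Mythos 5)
Your proof is correct and takes essentially the same route as the paper's: by Proposition~\ref{proposition_MöbiusTransform_GraphLevel} only the \glspl*{MI} of the maximal-size neighborhoods in ${\mathcal R}_\lambda$ remain, each is recovered exactly via \cref{align_möbius_recovery} from its proper subsets (all of which lie in ${\mathcal I}_\lambda$ and were computed exactly), and the efficiency gap vanishes because $\sum_{T \in {\mathcal I}} m_g(T) = \nu_g(N)$. The one slight deviation is your reading of the efficiency step: in Algorithm~\ref{alg_graphshapiq} the recovery loop (lines~\ref{alg_line_nuhat}--\ref{alg_line_recovery}) already assigns $\hat m_g[S^{*}]$ exactly, so $\tau = 0$ and line~\ref{alg_line_efficiency} is a no-op, rather than $\tau$ itself supplying $m_g(S^{*})$ as you describe --- a mischaracterization that does not affect your conclusion.
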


\begin{proof}
    By Proposition~\ref{proposition_MöbiusTransform_GraphLevel} it is clear that GraphSHAP-IQ computes exact \glspl*{MI} and consequently exact \glspl*{SI} for $\lambda \geq \max_{i \in N}(\vert \Nbh_i^{(\ell)} \vert)$, since GraphSHAP-IQ computes all \glspl*{MI} in $\mathcal I$ up to order $\lambda$.
For $\lambda \geq \max_{i \in N}(\vert \Nbh_i^{(\ell)} \vert)-1$, GraphSHAP-IQ computes all \glspl*{MI} up to order $\max_{i \in N}(\vert \Nbh_i^{(\ell)} \vert)-1$
However, for all neighborhoods of size $\max_{i \in N}(\vert \Nbh_i^{(\ell)} \vert)$, there is only a single \gls*{MI} remaining, which is then collected in $\mathcal R_\lambda$.
W.l.o.g let $R$ be the set to the corresponding missing \gls*{MI}.
Due to the recovery property of the \glspl*{MI}, this last missing interaction is exactly the gap computed in line 7, since
\begin{align*}
    m_g(R) = \sum_{T \subseteq R}(-1)^{r-t} \nu(T) = \nu(R) + \sum_{T \subset R}(-1)^{r-t} \nu(T).
\end{align*}
Since this holds for all missing interactions in $\mathcal R_\lambda$ and further $\mathcal I= \mathcal I_\lambda \cup \mathcal R_\lambda$, it concludes the proof as also the efficiency gap (line~\ref{alg_line_efficiency}) is zero due to
\begin{align*}
    \sum_{T \in \mathcal I_\lambda \cup \mathcal R_\lambda} m_g(T) = \sum_{T \in \mathcal I} m_g(T) = \sum_{T \subseteq N} m_g(T) = \nu_g(N) = f_g(\Xm).
\end{align*}

\end{proof}

\begin{algorithm}[H]
    \caption{GraphSHAP-IQ}
    \label{alg_graphshapiq}
    \begin{algorithmic}[1]
    \REQUIRE Graph $g=(V,E,\Xm)$, $\ell$-Layer GNN $f_g$, \gls*{MI} order $\lambda$, \gls*{SI} order $k$.
    \STATE ${\mathcal I}_\lambda \gets [T]_{T \in {\mathcal I},\vert T \vert \leq \lambda}$ \label{alg_line_interaction_set}
    \STATE ${\mathcal R}_\lambda \gets [\Nbh_i^{(\ell)}]_{\Nbh_i^{(\ell)} \notin \mathcal I_\lambda}$  \label{alg_line_remaining_set}
    \STATE $\nu_g \gets [f_g(\Xm^{(T)})]_{T \in {\mathcal I}_\lambda \cup {\mathcal R}_\lambda}$\label{alg_line_model_calls}
    \STATE $\hat m_g \gets [{\textsc{\texttt{MI}}}(\nu_g, S)]_{S \in {\mathcal I}_\lambda}$ \label{alg_line_moebius}
    \STATE ${\mathcal R}_\lambda \gets {\textsc{\texttt{sort}}}({\mathcal R}_\lambda)$ \label{alg_line_sort}
    \FOR{$S \in {\mathcal R}_\lambda$}
        \STATE $\hat \nu_g[S] \gets \sum_{T\subset S}\hat m_g[T]$ \label{alg_line_nuhat}
        \STATE $\hat m_g[S] \gets \nu_g[S] -\hat\nu_g[S]$\label{alg_line_recovery}  
    \ENDFOR
    \STATE $S^* \gets {\textsc{\texttt{selectMax}}}(R_\lambda)$ \label{alg_line_maxMI}
    \STATE $\tau \gets f_g(\Xm) -\sum_T\hat m_g[T]$  \label{alg_line_tau}
    \STATE $\hat m_g[S^{*}] \gets 
    \hat m_g[S^{*}] - \tau$  \label{alg_line_efficiency}
    \STATE $\hat\Phi_k \gets {\textsc{\texttt{MItoSI}}}(\hat m_g,k)$ \label{alg_line_conversion}
    \STATE \textbf{return} \glspl*{MI} $\hat m_g$, \glspl*{SI} $\hat\Phi_k$
    \end{algorithmic}
\end{algorithm}  

\subsection{Interaction-Informed Baseline Methods}\label{appx_sec_interaction_informed}
For a given graph instance and a \gls*{GNN}, \cref{theorem_complexity} states that \glspl*{MI} are sparse, i.e. all \glspl*{MI} of subsets that are not in $\mathcal I$ are zero.
Unfortunately, model-agnostic baseline methods, such as Permutation Sampling \citep{Tsai.2022}, SHAP-IQ \citep{Fumagalli.2023}, SVARM-IQ \citep{Kolpaczki.2024}, KernelSHAP-IQ \citep{fumagalli2024kernelshapiq} or Inconsistent KernelSHAP-IQ \citep{Pelegrina.2023,fumagalli2024kernelshapiq}, use Monte Carlo sampling on the game evaluations, i.e. the masked predictions from the graph game $\nu_g$.
Therefore, results on \glspl*{MI} cannot be directly applied to improve the baseline methods.
However, all variants of \glspl*{SI} can be represented by \glspl*{MI}, and it has been shown that the \gls*{SI} of a subset $S \subseteq N$ is given as the weighted sum of \glspl*{MI} exclusively of subsets $T\subseteq N$ that contain $S$, i.e. $S \subseteq T$ \citep{Bord.2023}.
Due to the structure of non-trivial \glspl*{MI} $\mathcal I$ from \cref{proposition_MöbiusTransform_GraphLevel}, the following holds: For two sets $S,T \subseteq N$ with $S \subseteq T$, if $T \in \mathcal I$, then also $S \in \mathcal I$.
Conversely, if $S \notin \mathcal I$, then also $T \notin S$.
In other words, for the \gls*{SI} of a subset $S \subseteq N$ with $S\notin \mathcal I$, its \gls*{MI} is zero, as well as all \glspl*{MI} of subsets $T$ that contain $S$.
Hence, the \gls*{SI} for a subset $S \notin \mathcal I$ is zero as well, since it can be computed as a weighted sum exclusively of \glspl*{MI} of subsets that contain $S$.
In conclusion, \glspl*{SI} of subsets that are not contained in $\mathcal I$ can be discarded from approximation and set to zero.
For baseline methods that estimate each \gls*{SI} separately using Monte Carlo, such as Permutation Sampling, SHAP-IQ and SVARM-IQ, we exclude these \glspl*{SI} from approximation and set their values to zero.
For regression-based methods, such as KernelSHAP-IQ and Inconsistent KernelSHAP-IQ, these interactions are excluded from the regression problem, and manually set to zero.

Notably, since all individual nodes are contained in $\mathcal I$, i.e. all nodes affect the graph prediction, the approximation of the \gls*{SV} is unaffected by this adjustment.
Consequently, we do not adjust L-Shapley \citep{DBLP:conf/iclr/ChenSWJ19}, as it is only applicable for \glspl*{SV}.

\clearpage
\section{Further Background on Shapley Interactions}

In this section we provide further background on \glspl*{SI}.
We first provide more related work in \cref{appx_sec_si_rw}.
In \cref{appx_sec_otherinteractions}, we introduce \glspl*{SI} that satisfy the efficiency axiom.
In \cref{appx_sec_conversion}, we discuss conversion formulas of \glspl*{MI} to various \glspl*{SI}.

\subsection{Related Work on Shapley Interactions}\label{appx_sec_si_rw}

The \gls*{SV} \citep{Shapley.1953} is a concept from game theory \citep{Grabisch.2016} and applied in \gls*{XAI} for local \citep{DBLP:conf/nips/LundbergL17,DBLP:journals/jmlr/CovertLL21,Chen2023Overview_ExplainabilityWithShapley} and global \citep{Casalicchio.2019,Covert_Lundberg_Lee_2020} model interpretability, or data valuation \citep{Ghorbani.2019}.
The Myerson value \citep{DBLP:journals/mor/Myerson77} is a variant of the \gls*{SV} for games restricted on components of graphs.
Extensions of the \gls*{SV} to interactions were proposed in game theory \citep{Grabisch.1999,marichal1999chaining} and \gls*{ML} \citep{Lundberg.2020,Sundararajan.2020,Tsai.2022,Bord.2023}, where \gls*{ML}-based extensions preserve efficiency, e.g. the sum of interactions equals the model's prediction.
The \glspl*{MI} \citep{rota1964foundations,harsanyi1963simplified,Fujimoto.2006} define a fundamental concept in game theory, and the edge case of \gls*{ML}-based \glspl*{SI}.
Due to the exponential complexity of \glspl*{SI}, efficient methods for tree-based models for \glspl*{SV} \citep{Lundberg.2020,Yu.2022}, any-order \glspl*{SI} \citep{DBLP:conf/aaai/ZernBK23,DBLP:conf/aaai/MuschalikFHH24} and \glspl*{MI} \citep{Hiabu.2023} were proposed.
In model-agnostic settings, approximation methods have been proposed \citep{Tsai.2022,Fumagalli.2023,Kolpaczki.2024,fumagalli2024kernelshapiq} as extensions of \gls*{SV}-based methods \citep{Castro.2009,Covert.2021,Kolpaczki.2024,DBLP:conf/nips/LundbergL17,Pelegrina.2023}, and by exploiting graph-structured inputs \citep{DBLP:conf/iclr/ChenSWJ19}.
Recently, \glspl*{MI} were computed in synthetic settings \citep{DBLP:journals/corr/abs-2402-02631}.

\subsection{Efficient Shapley Interaction Indices}\label{appx_sec_otherinteractions}
In the following, we describe extensions of the \gls*{SV} to \glspl*{SI}, which yield efficient interaction indices.
That is, given an explanation order $k$ the \glspl*{SI} $\Phi_k$ are defined on all interactions up to order $k$ and yield an additive decomposition of the model's prediction, i.e
\begin{equation*}
    \nu(N) = \sum_{S \subseteq N, \vert S \vert \leq k} \Phi_k(S).
\end{equation*}

\paragraph{$k$-Shapley Values ($k$-SII)}\label{appx_sec_ksii}

The \glspl*{k-SII} values of order $1$ are defined as the \glspl*{SV} $\phi^{\text{SV}}$.
Furthermore, the pairwise \glspl*{k-SII} are given as
    \begin{align*}     
        &\Phi_2(ij) :=  \phi^{\text{SII}}(ij) 
        &&\text{ and } 
        &&\Phi_2(i) := 
            \phi^{\text{SV}}(i) - \frac 1 2 \sum_{j \in N \setminus i} \phi^{\text{SII}}(ij). 
    \end{align*}
The general case is described by \citet{Bord.2023} and involves the Bernoulli numbers.
For any explanation order $k=1,\dots,n$, the \glspl*{k-SII} are recursively defined as
 \begin{align*} 
 \Phi_{k}(S) :=
        \begin{cases}
            \phi^{\text{SII}}(S) &\text{if } \vert S \vert = k
            \\
            \Phi_{k-1}(S) + B_{k-\vert S \vert} \sum_{\tilde S \subseteq N \setminus S}^{\vert S \vert + \tilde S = k} \phi^{\text{SII}}(S \cup \tilde S) &\text{if } \vert S \vert < k
        \end{cases}
    \end{align*}
    with $1\leq \vert S \vert \leq k \leq n$, $\Phi_0 \equiv 0$ and Bernoulli numbers $B_n$ \citep{Bord.2023}.
    For $k=1$, \gls*{k-SII} is the \gls*{SV}, and for $k=n$ the \gls*{MI} \citep{Bord.2023}.
    An explicit definition of \gls*{k-SII} is given in Appendix A.1 in \citet{Bord.2023} as
    \begin{align*}
        \Phi_k(S) = \sum_{r=0}^{k-\vert S \vert} \sum_{R \subseteq N, \vert R \vert = r} B_k \phi^{\text{SII}}(S \cup R).
    \end{align*}

Besides \gls*{SII} and \gls*{k-SII}, there exist other interaction indices that extend the \gls*{SV} to higher orders and yield the \glspl*{MI} for $k=n$.
These essentially differ by their distribution of \gls*{MI} to lower-order interactions.
Similar to \gls*{k-SII}, these interaction indices can be described by the \glspl*{MI}.
Hence, they can be computed once the \glspl*{MI} are known.

\paragraph{Shapley Taylor Interaction Index (STII)}
The \gls*{STII} \citep{Sundararajan.2020} is an interaction index that gives stronger emphasis to the highest order of computed interactions, which capture all higher-order \glspl*{MI}.
It is defined as 
\begin{align*}
&\Phi^{\text{STII}}_k(S) := a(S) \text{ for } \vert S \vert < k &&\text{ and }
    &&\Phi^{\text{STII}}_k(S) :=
        \frac{k}{n} \sum_{T \subseteq N \setminus S} \frac{1}{\binom{n-1}{\vert T \vert}} \Delta_S(T) \text{ for } \vert S \vert = k.
\end{align*}

The \gls*{STII} for lower order, i.e. $\vert S \vert < k$, is simply the \gls*{MI}.
For the maximum order, i.e. $\vert S \vert = k$, the \gls*{STII} is computed as an average over discrete derivatives.
Therefore, the \gls*{STII} distributes the higher-order \gls*{MI} with $\vert S \vert > k$ solely on the top-order interactions of the \gls*{STII}.
It was shown \citep{Sundararajan.2020} that the \gls*{STII} is the only interaction index preserving the Shapley axioms that additionally satisfies the \emph{interaction distribution axiom} besides the classical axioms.

\paragraph{Faithful Shapley Interaction Index (FSII)}
The \gls*{FSII} \citep{Tsai.2022} is an interaction index that yields stronger emphasis on the faithfulness of the interaction index.
It is defined as the best approximation of interactions up to order $k$ in terms of a weighted least square objective
\begin{align*}
    \Phi_k^{\text{FSII}} := \argmin_{\Phi_k \subseteq \mathbb{R}^{n_k}} \sum_{T \subseteq N,~0<\vert T \vert < n} \mu(t) \left( \nu(T) - \sum_{S \subseteq T, \vert T \vert \leq k} \Phi_k(S) \right)^2,
    \\
    \text{such that } \nu(N) = \sum_{S \subseteq N,~\vert S \vert \leq k} \Phi_k(S) \text{ and } \nu(\emptyset) = \Phi_k(\emptyset),
\end{align*}
where $\mu(t) \propto \frac{n-1}{\binom{n}{t}\cdot t \cdot (n-t)}$ and $n_k := \vert \{ S\subseteq N \mid \vert S \vert \leq k\}\vert$ is the number of interactions up to size $k$.
It was shown that the \gls*{FSII} is the only interaction index preserving the Shapley axioms that additionally satisfies the \emph{faithfulness property}, i.e. is represented as the solution to a single weighted least square problem \citep{Tsai.2022}.

\subsection{Conversion Formulas for Möbius Interactions to Shapley Interactions}\label{appx_sec_conversion}
The \glspl*{MI} are a basis of the vector space of cooperative games and thus all \glspl*{SI} can be expressed in terms of the \glspl*{MI}.
The \gls*{SV} as well as the \gls*{SII} can be directly computed by the conversion formulas
\begin{align*}
     &\phi^{\text{SV}}(i) =\sum_{\tilde S \subseteq N: i \in \tilde S} \frac{m(\tilde S)}{\vert \tilde S \vert } &&\text{ and } &&\phi^{\text{SII}}(S) =\sum_{ \tilde S \subseteq N: \tilde S \supseteq S} \frac{m(\tilde S)}{\vert \tilde S \vert - \vert  S \vert +1}.
\end{align*}

The conversion formula for the \gls*{MI} to \gls*{k-SII} is given in Theorem 6 in \citet{Bord.2023}.
The conversion for \gls*{STII} is given in the proof of Theorem 8 in Appendix H \citep{Bord.2023}.
The conversion formula from to the \gls*{FSII} is given in Theorem 19 in \citet{Tsai.2022}.

\clearpage
\section{Experimental Setup and Reproducibility}\label{appx_exp}
All the experiments have been conducted using the PyTorch Geometric library \citep{PyG}, running on a computing machine equipped with Intel(R) Xeon(R) CPU, one Nvidia RTX A5000 and 60GB of RAM. Overall, the total compute time of this project, including preliminary experiments, training of \glspl*{GNN}, computation of baselines and \glspl*{SI}, required no more than 100 hours, which could be reduced by simple parallelization. 

\subsection{Datasets}
All considered methods were empirically validated on eight common real-world chemical datasets for graph classification and one real-world water distribution network for graph regression. We avoid synthetic datasets commonly used in the graph \gls*{XAI} community due to their limitations \citep{GraphFramEx}. 
The licenses for the datasets are summarized in \cref{tab_dataset_licenses}.

\begin{table}[htbp]
\centering
\caption{Dataset License Overview}
\label{tab_dataset_licenses}
\resizebox{\columnwidth}{!}{%
\begin{tabular}{@{}cccc@{}}
\toprule
\textbf{Dataset} & \textbf{Source} & \textbf{License} \\ \midrule
\gls*{BNZ} & \citep{DBLP:conf/nips/Sanchez-Lengeling20} & \href{https://creativecommons.org/publicdomain/zero/1.0/}{CC0 1.0 Universal} \\
\gls*{FLC} & \citep{DBLP:conf/nips/Sanchez-Lengeling20} & \href{https://creativecommons.org/publicdomain/zero/1.0/}{CC0 1.0 Universal} \\
\gls*{MTG} & \citep{Kazius_McGuire_Bursi_2005} & \href{https://creativecommons.org/publicdomain/zero/1.0/}{CC0 1.0 Universal} \\
\gls*{ALC} & \citep{DBLP:conf/nips/Sanchez-Lengeling20} & \href{https://creativecommons.org/publicdomain/zero/1.0/}{CC0 1.0 Universal} \\
\gls*{PRT} & \citep{DBLP:conf/ismb/BorgwardtOSVSK05,Morris+2020} & ``Free to use'', cf. Section 2 of \citet{Morris+2020} \\
\gls*{ENZ} & \citep{DBLP:conf/ismb/BorgwardtOSVSK05,Morris+2020}  & ``Free to use'', cf. Section 2 of \citet{Morris+2020} \\
\gls*{CX2} & \citep{DBLP:journals/jcisd/SutherlandOW03,Morris+2020} & ``Free to use'', cf. Section 2 of \citet{Morris+2020} \\
\gls*{BZR} & \citep{DBLP:journals/jcisd/SutherlandOW03,Morris+2020} & ``Free to use'', cf. Section 2 of \citet{Morris+2020} \\
\gls*{WAQ} & \citep{Vrachamis2018WaterDistributionNetworks_Hanoi} & \href{https://eupl.eu/1.2/en}{EUROPEAN UNION PUBLIC LICENCE v. 1.2 EUPL} \\ \bottomrule
\end{tabular}
}
\end{table}

\subsubsection{Chemical Datasets}
This data reflects biological and chemical problems,
where particular substructures can predict the properties of molecules. The \textbf{Benzene (BNZ)} \citep{DBLP:conf/nips/Sanchez-Lengeling20} dataset consists of 12 000 moleculars, with each graph labeled as containing or not a benzene ring. In this context, the underlying explanations for the predictions are the specific atoms (nodes) that make up the benzene rings. The \textbf{Fluoride Carbonyl (FLC)} \citep{DBLP:conf/nips/Sanchez-Lengeling20} dataset comprises 8 671 molecular graphs, each labeled positive or negative. A positive sample denotes a molecule with a fluoride ion (F-) and a carbonyl functional group (C = O). The underlying explanations for the labels are the specific combinations of fluoride atoms and carbonyl functional groups present within a given molecule. Similarly, the \textbf{Alkane Carbonyl (ALC)} \citep{DBLP:conf/nips/Sanchez-Lengeling20} dataset consists of 1 125 molecular graphs, labeled positive if it features an unbranched alkane chain and a functional carbonyl group (C = O), which serves as an underlying explanation. The \textbf{Mutagenicity (MTG)} \citep{Kazius_McGuire_Bursi_2005} dataset comprises 1 768 graph molecules, categorized into two classes based on their mutagenic properties, specifically their effect on the Gram-negative bacterium S. Typhimurium. Following \citet{agarwal2023evaluating}, the original dataset of 4 337 graphs was reduced to 1 768, retaining only those molecules whose labels directly correlate with the presence or absence of specific toxicophores (motifs in the graphs), including NH2, NO2, aliphatic halide, nitroso, and azo-type groups, as defined by \citep{Kazius_McGuire_Bursi_2005}. The latter four datasets have been retrieved from the implementation of \citep{agarwal2023evaluating}. \textbf{PROTEINS (PRT)} \citep{DBLP:conf/ismb/BorgwardtOSVSK05} and \textbf{ENZYMES (ENZ)} \citep{DBLP:conf/ismb/BorgwardtOSVSK05} involve
graphs whose nodes represent secondary structure elements and edges indicate neighborhood in the amino-acid sequence, entailing 1 113 and 600 graphs respectively. \textbf{BZR (BZR)} and \textbf{COX2 (CX2)} \citep{DBLP:journals/jcisd/SutherlandOW03} are correlation networks of biological activities of compounds with their structural attributes, of 405 and 467 graphs.
These latter datasets are fetched from the benchmark collection of \citep{Morris+2020}. They all have binary labels except for ENZ, which has 6 classes. Finally, we did not exploit the additional continuous node features available for ENZ.
\subsubsection{Water Distribution Network}
We apply GraphSHAP-IQ to a dynamical system governed by local partial differential equations (PDEs). Specifically, the spread of chlorine in water distribution systems (WDS), where a GNN predicts the fraction of nodes chlorinated after some time, framing this task as a graph-level regression.
We chose this task because chemicals in WDS spread predominantly with water flow (advection). Therefore, the task should be mainly explained by the flow pattern of the network, which yields an intuitive expectation of the explanations. 

We generate a temporal dataset \textbf{WaterQuality (WAQ)} using the EPyT-Flow toolbox \citep{github:epytflow}. The dataset consists of 1000 temporal graphs with 30 time steps each. Spatially, the graphs represent the Hanoi WDS \citep{Vrachamis2018WaterDistributionNetworks_Hanoi}, a popular 32-node benchmark WDS. 

\subsection{GNN Architectures} \label{appx_sec_models}
Our approach universally applies to any message passing \gls*{GNN} and can be employed on graph datasets without requiring ground truth explanations. Since different GNN architectures exhibit unique learning patterns and performance characteristics, their explanations will also vary. To demonstrate the effectiveness of our approach, we followed the remarks and setting of~\citet{GraphFramEx}, by comparing the explanations of three popular GNN models: \gls*{GCN} \citep{Kipf2017GNNs_GCN}, \gls*{GIN} \citep{Xu2017GNNs_GIN}, and \gls*{GAT} \citep{Velickovic2017GNNs_GAT}. 
The overall model architecture is as follows: the graph input is unprocessed, we stack the GNN layers for $l\in\{1,2,3\}$, concatenate all the embeddings \citep{You2020DesignSF}, apply global sum-pooling \citep{Xu2017GNNs_GIN}, and finally, a dense linear layer returns the output predictions.
Furthermore, each GNN layer is followed by a sequence of modules: \texttt{batch\_normalization}, \texttt{dropout}, and \texttt{LeakyReLU} activation function \citep{You2020DesignSF}. We validated the models for different amounts of hidden units $h\in\{8,16,32,64,128\}$, and we defined the multilayer perceptron of \gls*{GIN} with 32 hidden units, while all the remaining design choices for the GNN layers follow their default settings (e.g., aggregation function, number of attention heads, ...).
For a selection of datasets, we also compared performances for deeper \glspl*{GNN}, cf. \cref{tab_performance_deep_gnns}.
The performances of the deeper models compared to more shallow \glspl*{GNN} are mixed. 
All architectures lead to quite similar performances.
Both shallow (less than 4 \gls*{GNN} layers) and deeper models (more than 3 \gls*{GNN} layers) lead to best performing architectures.

Regarding WAQ, since we now have important edge features (water flows), we cannot use vanilla GCN, GIN, or GAT as before. Instead, we designed a simple GNN that can process edge features, using 3 message-passing layers with mean-aggregation, \texttt{ReLU} activations, global average-pooling as the readout layer, and finally, a linear layer provides the regression outputs.

\begin{table}
\centering
\caption{Model accuracy for shallow and deep \glspl*{GNN} (1-6 layers).}
\label{tab_performance_deep_gnns}
\resizebox{0.99\columnwidth}{!}{\begin{tabular}{lcccc|cccccc|cccccc|cccccc}
\toprule
\multicolumn{5}{c|}{\textit{Dataset Description}} & 
\multicolumn{12}{c}{\textit{Model Accuracy by Layer (\%)}} \\
\multirow{2}{*}{\textbf{Dataset}} & \multirow{2}{*}{\textbf{Graphs}} & \multirow{2}{*}{\textbf{$d_{\text{out}}$}} & \multirow{2}{*}{\textbf{\begin{tabular}[c]{@{}c@{}}Nodes\\ (avg)\end{tabular}}} & \multirow{2}{*}{\textbf{\begin{tabular}[c]{@{}c@{}}Density\\ (avg)\end{tabular}}} & \multicolumn{6}{c}{\textbf{GCN}} & \multicolumn{6}{c}{\textbf{GAT}} & \multicolumn{6}{c}{\textbf{GIN}} \\
 &  &  &  &  & \textbf{1} & \textbf{2} & \textbf{3} & \textbf{4} & \textbf{5} & \textbf{6} & \textbf{1} & \textbf{2} & \textbf{3} & \textbf{4} & \textbf{5} & \textbf{6} & \textbf{1} & \textbf{2} & \textbf{3}  & \textbf{4} & \textbf{5} & \textbf{6} \\ \midrule
\begin{tabular}[c]{@{}l@{}}\gls{BNZ} \\ 
\citep{DBLP:conf/nips/Sanchez-Lengeling20}\end{tabular} 
    & 12000 & 2 & 20.6 & 22.8          
    & 84.2 & 88.6 & 90.4 & \textbf{90.6} & \textbf{90.6} & \textbf{90.6}           
    & 83.1 & 85.1 & 85.7 & 84.5 & \textbf{87.8} & 86.3          
    & 84.9 & 90.5 & 90.8 & 92.1 & \textbf{{92.3}} & \textbf{{92.3}} \\          
\rowcolor{tablegray}\begin{tabular}[c]{@{}l@{}}\gls{PRT} \\ 
\citep{DBLP:conf/ismb/BorgwardtOSVSK05}\end{tabular} 
    & 1113 & 2 & 39.1 & 42.4           
    & 75.2 & 71.1 & \textbf{74.0} & 70.3 & 70.3 & 69.4           
    & 75.3 & 60.5 & \textbf{{79.8}} & 69.4 & 68.5 & 68.5           
    & \textbf{79.3} & 74.9 & 67.7 & 72.1 & 73.0 & 73.9 \\          
\begin{tabular}[c]{@{}l@{}}\gls{ENZ} \\ 
\citep{DBLP:conf/ismb/BorgwardtOSVSK05}\end{tabular} 
    & 600 & 6 & 32.6 & 32.0            
    & 34.2 & 37.5 & 35.8 & \textbf{43.3} & 40.0 & 31.7           
    &32.5 & 35.0 & 35.8 & 23.3 & 43.3 & \textbf{45.0}            
    & 36.7 & 35.0 & \textbf{39.2} & 25.0 & 30.0 & 30.0 \\          
\bottomrule
\end{tabular}}
\end{table}

\subsection{Training}
The adopted loss is the \emph{(binary) Cross-Entropy} for classification and the $L_1$ for the regression.
The chemical datasets have been split in train/validation/test sets with a ratio of $80/10/10$ in a stratified fashion, ensuring that the splits have a balanced amount of classes. The models are randomly initialized and trained for $500$ epochs with \texttt{early\_stopping} with patience of $100$ on the validation set and a batch size of $64$.
The models have been trained with \emph{Adam} optimizer with a starting learning rate of $0.01$, using a halving scheduler with the patience of $50$ epochs up to $1e^{-5}$, and \texttt{weight\_decay} factor of $5e^{-4}$.

The test accuracies for the best-validated model configuration reported in~\cref{tab_setup}, are comparable to existing benchmarks \citep{Errica2019AFC,  You2020DesignSF}, confirming that our approach is valid for complex and powerful GNN architectures, as long as Assumption~\ref{assumption_GNN} holds.
Moreover, deeper \glspl*{GNN} do not necessarily yield better performances, cf. \cref{tab_performance_deep_gnns}.

\clearpage
\section{Additional Experimental Results}\label{appx_sec_additional_experimental_results}
This section includes additional empirical results and further details on the experimental setup.
\cref{appx_sec_complexity} contains additional complexity results for benchmark datasets.
\cref{appx_sec_wdn} shows a timeline for the \gls*{WDN} example use case.
\cref{appx_sec_approximation} shows the approximation quality of GraphSHAP-IQ compared to a collection of state-of-the-art baseline methods.
\cref{appx_sec_explanation_graphs} showcases example \glspl*{SI} on molecule structures from different datasets.

\subsection{Complexity Analysis}\label{appx_sec_complexity}
In this experiment we provide further results on \cref{sec_exp_complexity}.
We evaluate the complexity on \glspl*{GNN} with varying number of convolutional layers on multiple benchmark datasets, as described in \cref{tab_setup}.
The complexity is measured by number of evaluations of the GNN-induced graph game, i.e. number of model calls of the \gls*{GNN}, which is the limiting factor of \glspl*{SI} \citep{Fumagalli.2023,Kolpaczki.2024,DBLP:conf/aaai/MuschalikFHH24}.
Due to \cref{theorem_complexity}, the complexity of GraphSHAP-IQ does only depend on the message passing ranges and is in particular independent of the message passing technique.
We therefore omit the explicit architecture in this experiment and focus only on the number of convolutional layers $\ell$.
For every graph in the benchmark datasets, we evaluate the complexity of GraphSHAP-IQ, where the model-agnostic method requires $2^n$, where $n$ is the number of nodes in the graph.
If the size of the largest neighborhood exceeds $22$, i.e. GraphSHAP-IQ requires at least $2^{23} \approx 8.3\times 10^6$ model calls, then the complexity is approximated by the upper bound presented in \cref{theorem_complexity}.
\cref{appx_fig_complexity_1}, \cref{appx_fig_complexity_2}, and \cref{appx_fig_complexity_3} display the complexity (in $\log10)$ by the size of the graphs for all instances (upper) and the median, Q1, and Q3 per graph size ($n$) for all benchmark datasets.
The model-agnostic computation is represented by the dashed line.
The results show that the ground-truth computation of \glspl*{SI} is substantially reduced by GraphSHAP-IQ for many instances in the benchmark datasets.

\begin{figure}[htbp]
    \centering
    \begin{minipage}[c]{0.3\textwidth}
    \centering
        \includegraphics[width=\textwidth]{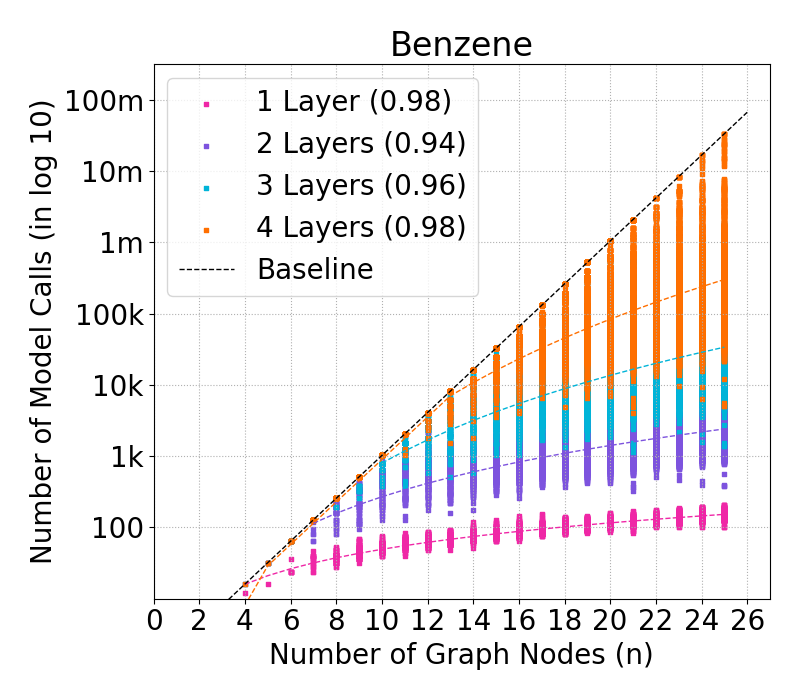}
        \includegraphics[width=\textwidth]{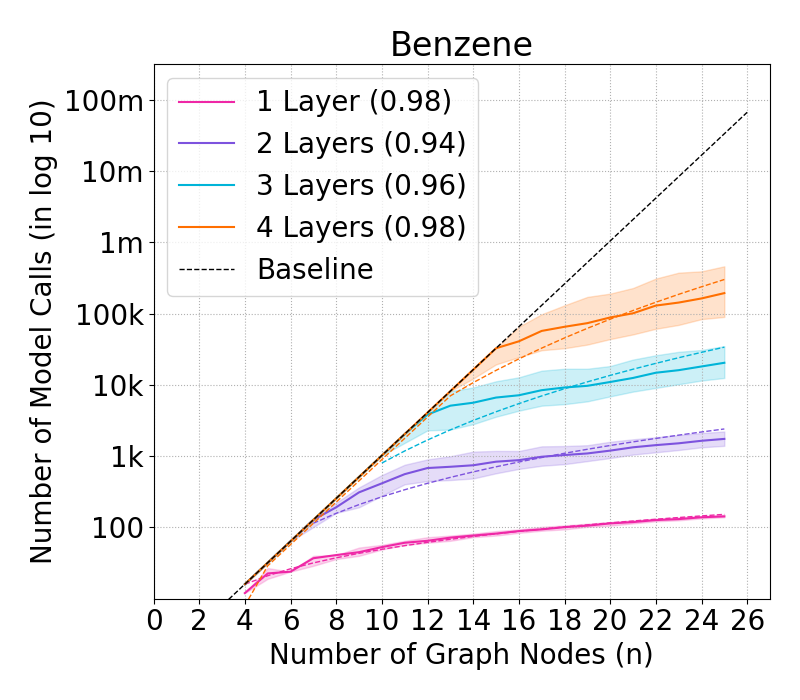}
    \end{minipage}
    \hfill
    \begin{minipage}[c]{0.3\textwidth}
    \centering
        \includegraphics[width=\textwidth]{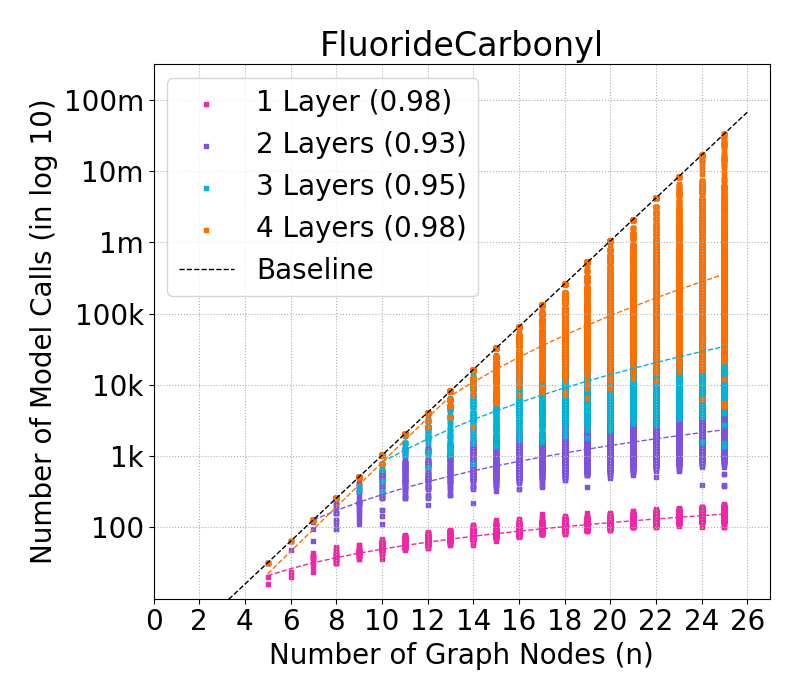}
        \includegraphics[width=\textwidth]{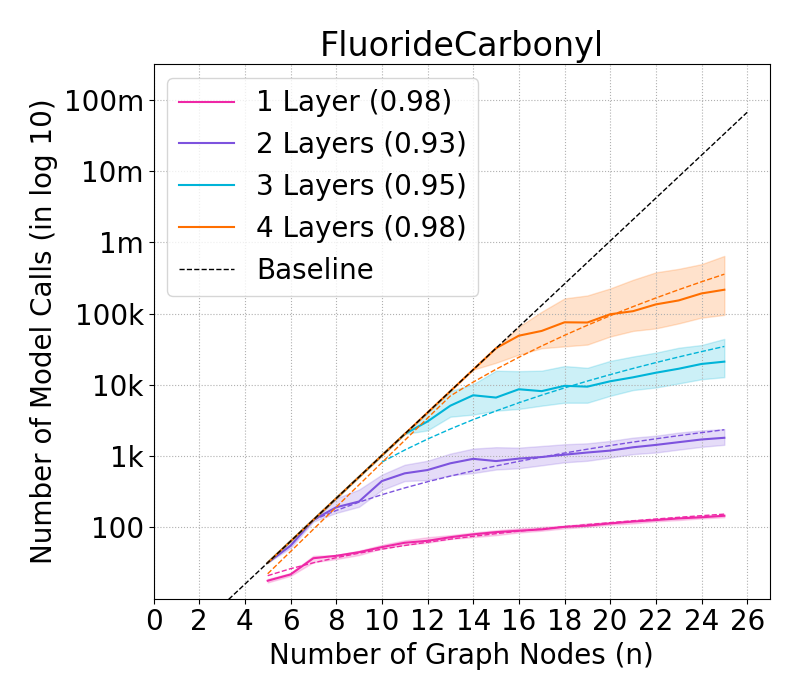}
    \end{minipage}
    \hfill
    \begin{minipage}[c]{0.3\textwidth}
    \centering
        \includegraphics[width=\textwidth]{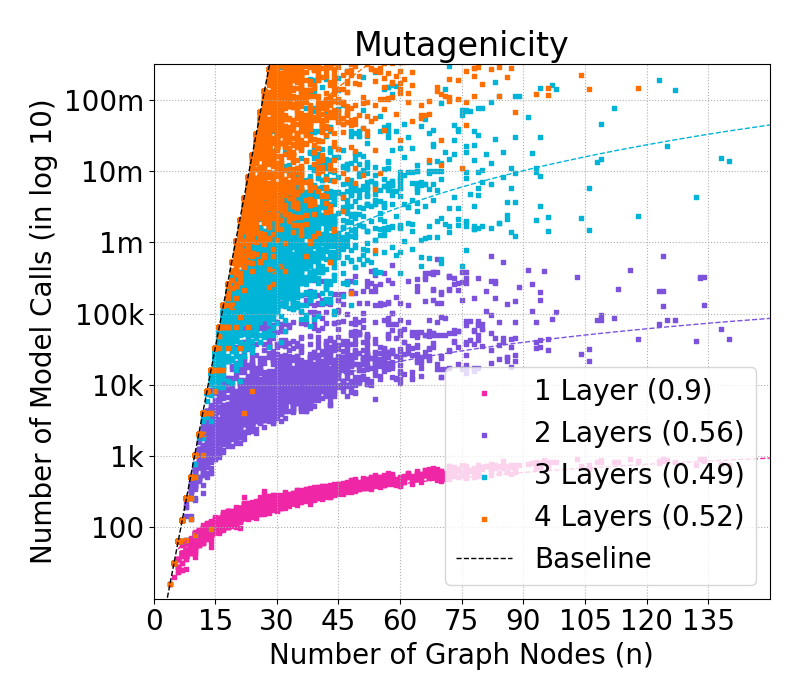}
        \includegraphics[width=\textwidth]{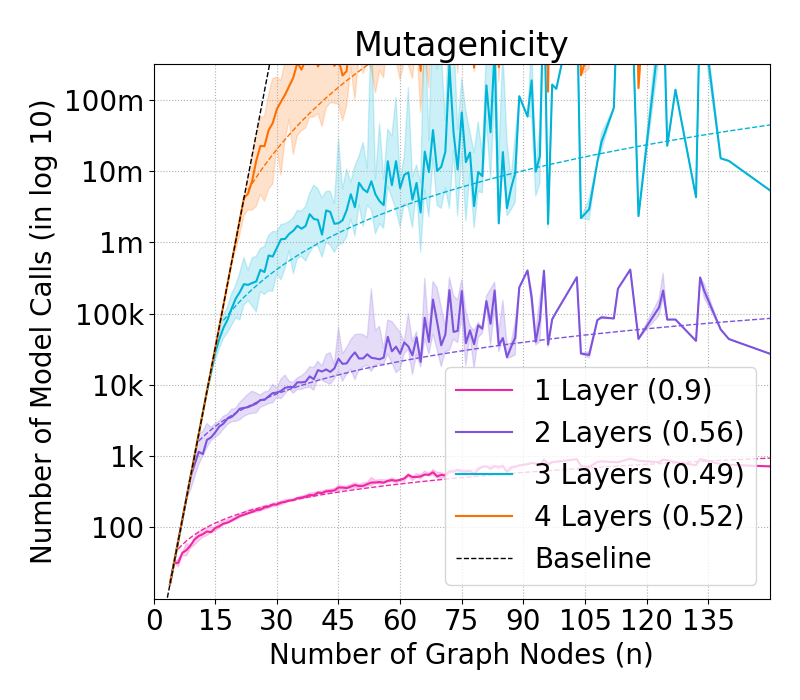}
    \end{minipage}
    \caption{Complexity of GraphSHAP-IQ against model-agnostic baseline (dashed) in model calls (in $\log10$) (y-axis) by number of nodes (x-axis) for all instances (upper) and median, Q1, Q3 (lower) for \gls*{BNZ} (left), \gls*{FLC} (middle), and \gls*{MTG} (right).}
    \label{appx_fig_complexity_1}
\end{figure}

\begin{figure}
    \centering
    \begin{minipage}[c]{0.3\textwidth}
    \centering
        \includegraphics[width=\textwidth]{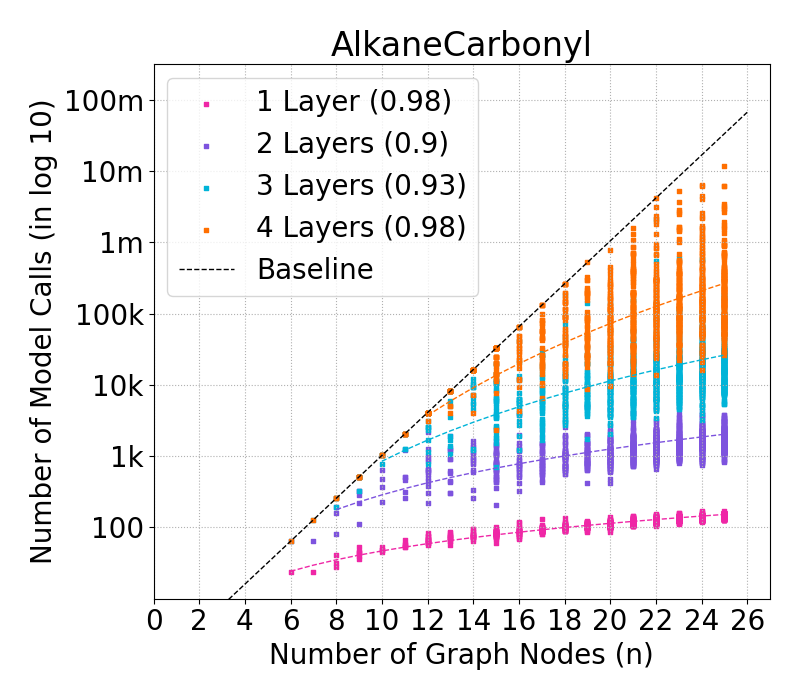}
        \includegraphics[width=\textwidth]{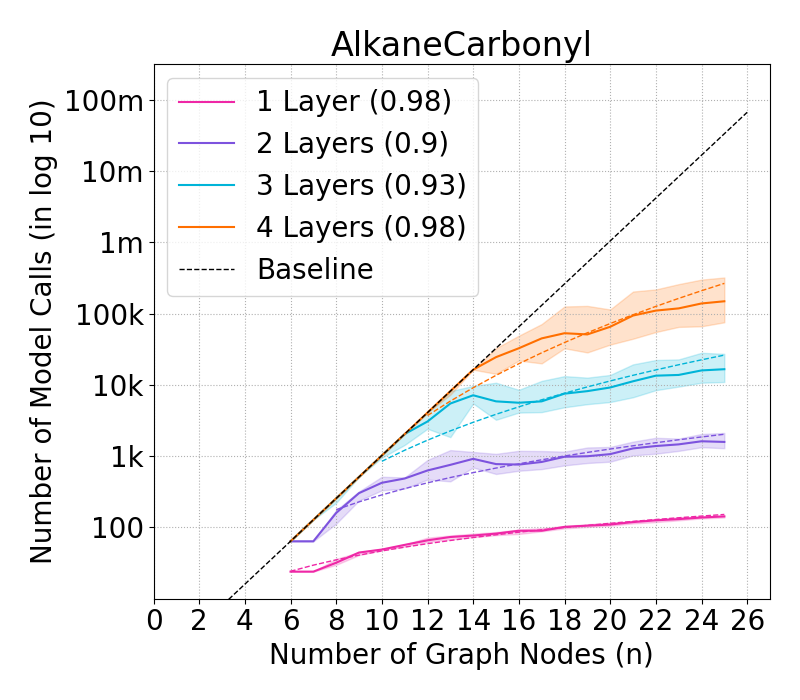}
    \end{minipage}
    \hfill
    \begin{minipage}[c]{0.3\textwidth}
    \centering
        \includegraphics[width=\textwidth]{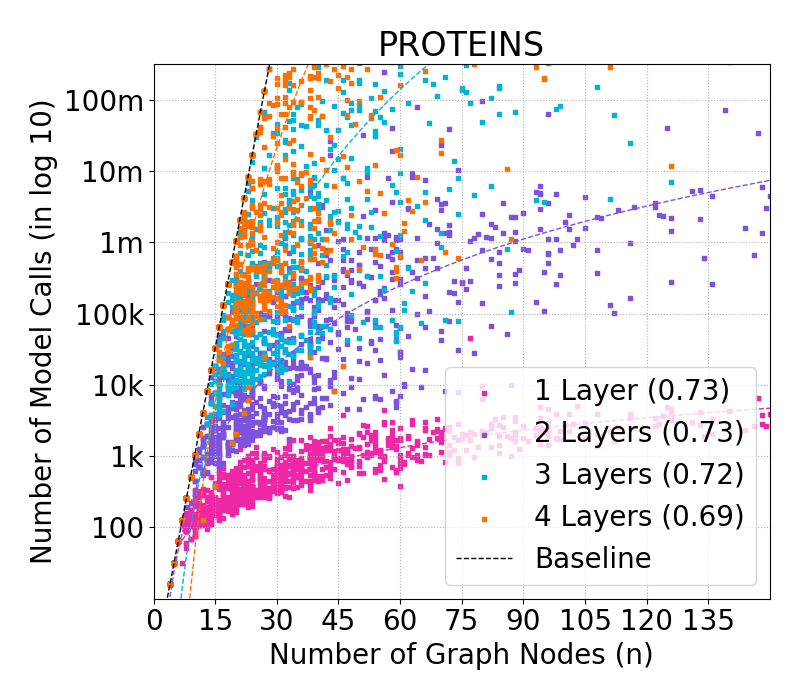}
        \includegraphics[width=\textwidth]{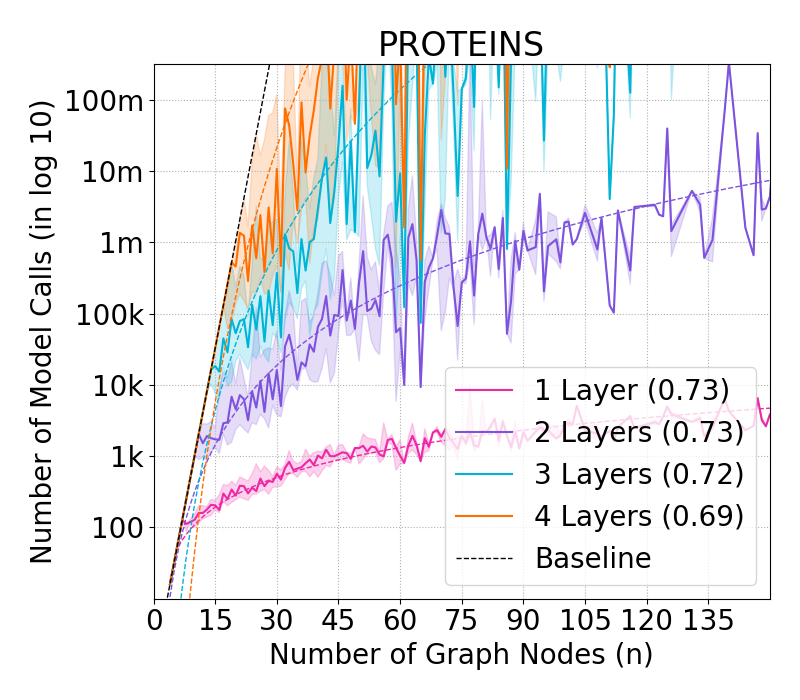}
    \end{minipage}
    \hfill
    \begin{minipage}[c]{0.3\textwidth}
    \centering
        \includegraphics[width=\textwidth]{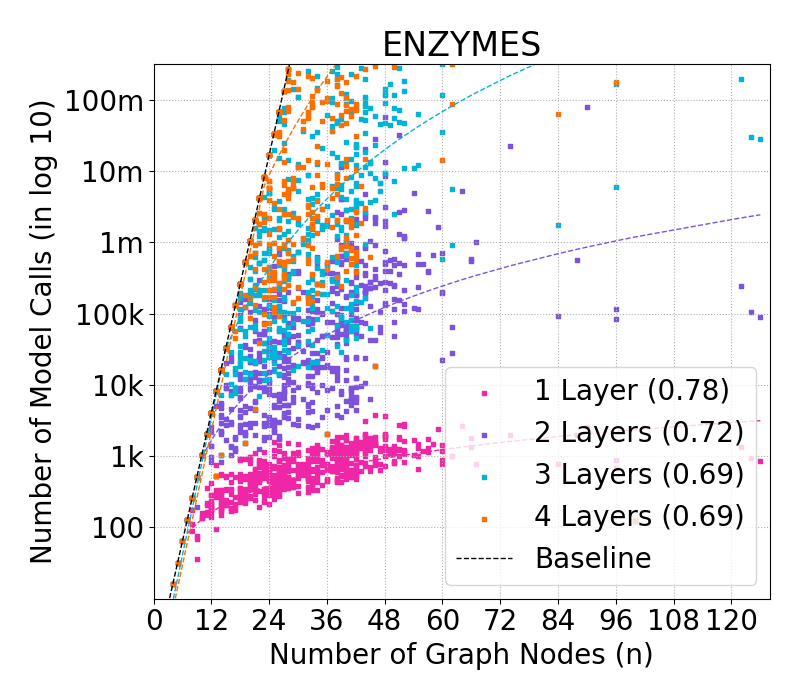}
        \includegraphics[width=\textwidth]{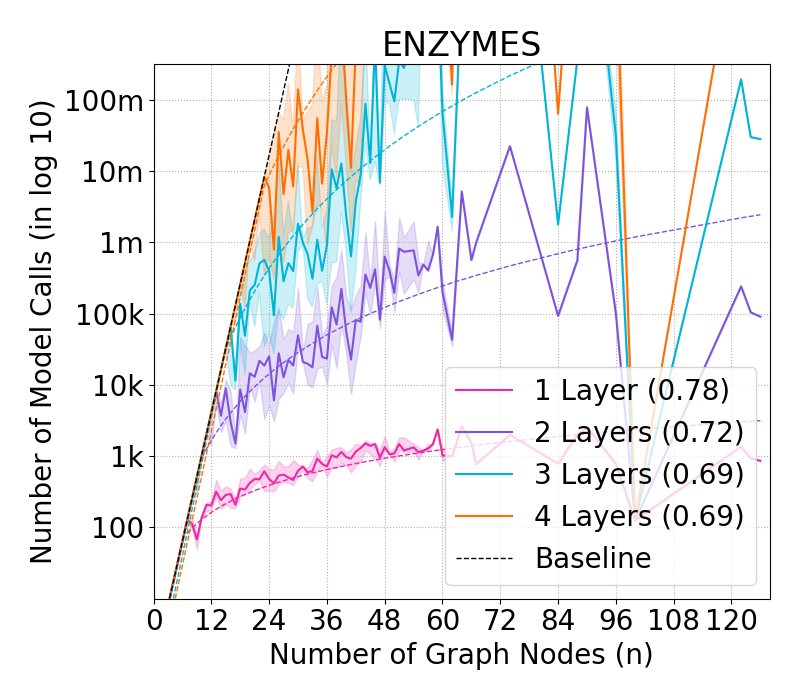}
    \end{minipage}
    \caption{Complexity of GraphSHAP-IQ against model-agnostic baseline (dashed) in model calls (in $\log10$) (y-axis) by number of nodes (x-axis) for all instances (upper) and median, Q1, Q3 (lower) for \gls*{ALC} (left), \gls*{PRT} (middle), and \gls*{ENZ} (right).}
    \label{appx_fig_complexity_2}
\end{figure}

\begin{figure}
\centering
    \begin{minipage}[c]{0.3\textwidth}
    \centering
        \includegraphics[width=\textwidth]{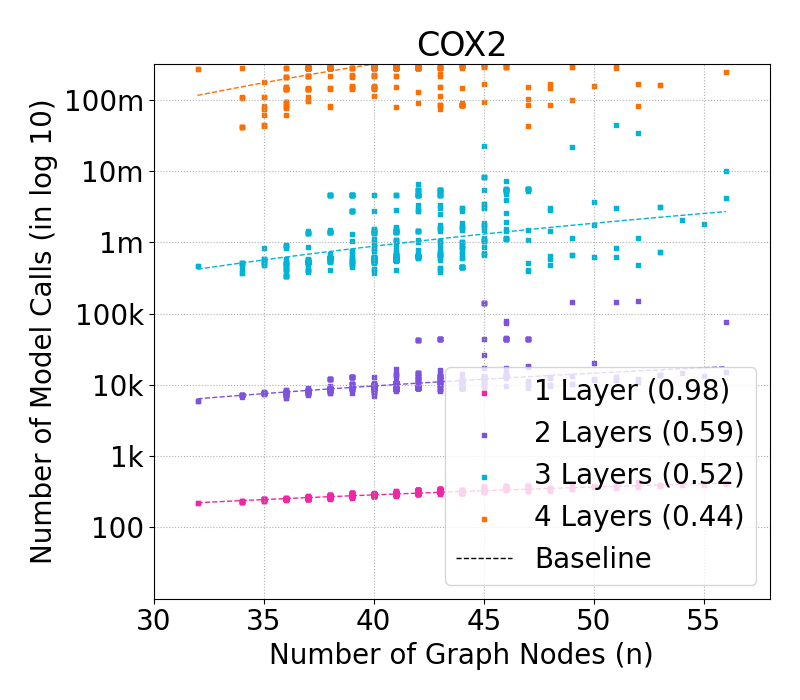}
        \includegraphics[width=\textwidth]{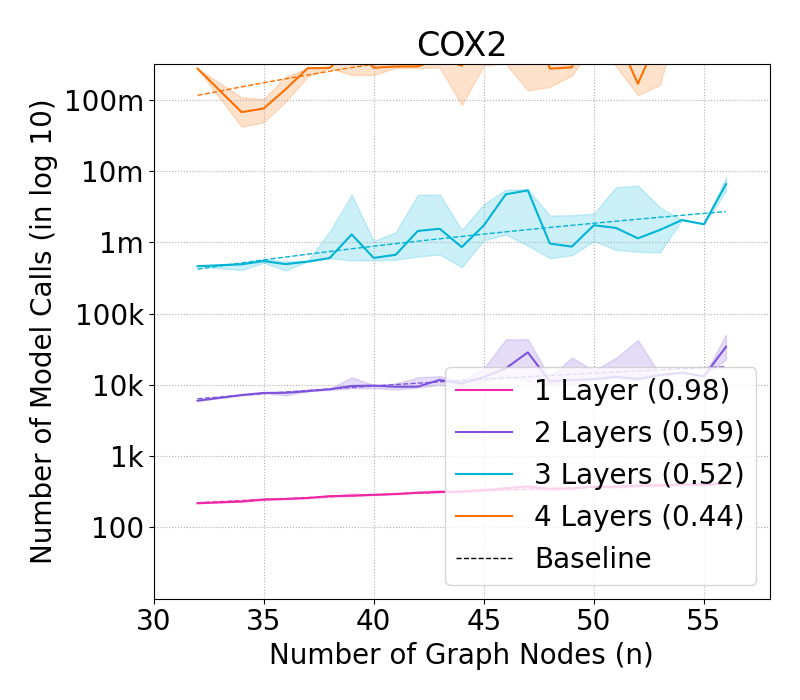}
    \end{minipage}
    \hspace{2em}
    \begin{minipage}[c]{0.3\textwidth}
    \centering
        \includegraphics[width=\textwidth]{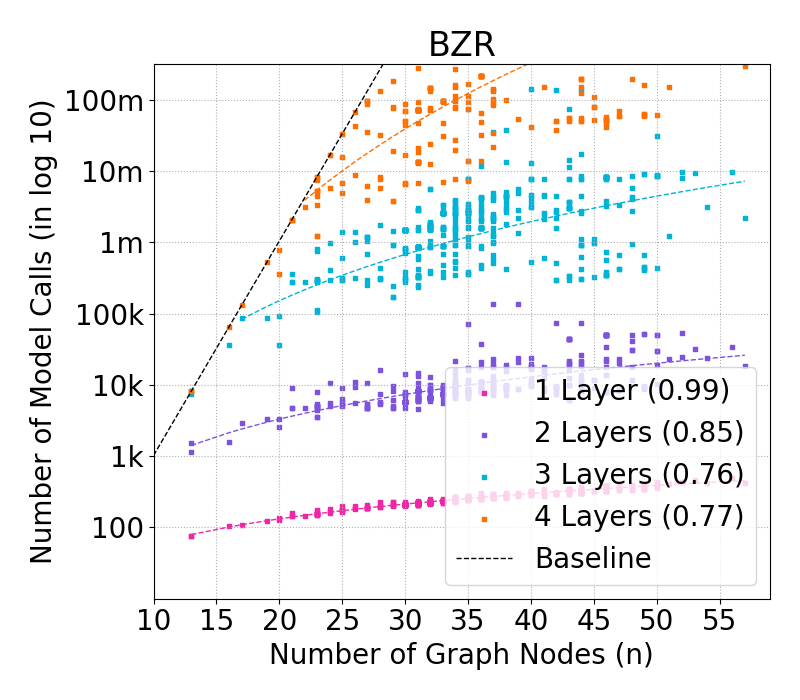}
        \includegraphics[width=\textwidth]{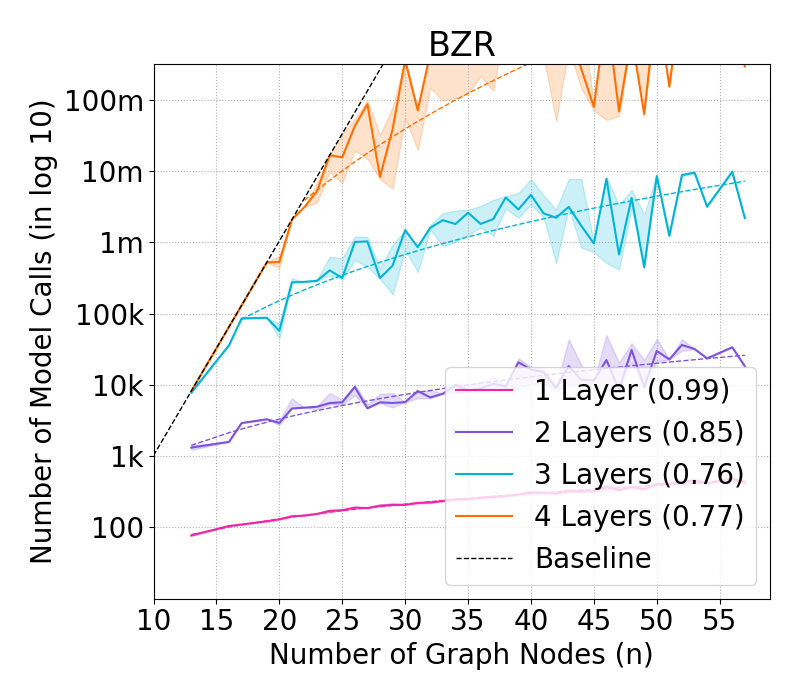}
    \end{minipage}
    \caption{Complexity of GraphSHAP-IQ against model-agnostic baseline (dashed) in model calls (in $\log10$) (y-axis) by number of nodes (x-axis) for all instances (upper) and median, Q1, Q3 (lower) for \gls*{CX2} (left) and \gls*{BZR} (right).}
    \label{appx_fig_complexity_3}
\end{figure}

\clearpage
\subsection{Runtime Analysis}\label{appx_sec_runtime_analysis}
In this section, we conduct a small runtime analysis to confirm that the main driver of computational complexity for GraphSHAP-IQ is indeed the number of model calls.
This is standard in the approximation literature of \glspl*{SI} \citep{Tsai.2022} and confirmed for all baseline methods \citep{fumagalli2024kernelshapiq}.
We select $100$ graph instances from \gls*{MTG} with $20$ to $40$ nodes, where GraphSHAP-IQ requires less than $10000$ model calls for exact computation for a 2-Layer \gls*{GCN}.
We then compute exact \glspl*{SI} for order 1 (\gls*{SV}), 2 (2-\gls*{SII}), and 3 (3-\gls*{SII}) with GraphSHAP-IQ.
For each graph instance, we run all baseline methods using the same budget that GraphSHAP-IQ required for exact computation.
\cref{appx_fig_runtime} shows the number of model calls and runtime (upper row), as well as number of graph nodes and runtime (lower row) for order 1 (left), 2 (middle), and 3 (right).
As expected, GraphSHAP-IQ's runtime scales linearly with number of model calls, and is basically unaffected by the size of the graph.
Moreover, given the same number of model calls, GraphSHAP-IQ's runtime is similar to efficient baselines (SHAP-IQ and Permutation Sampling).
Notably, increasing the explanation order barely affects the runtime of GraphSHAP-IQ, whereas it substantially increases the runtime of competitive baselines (KernelSHAP-IQ, SVARM-IQ, SHAP-IQ).

\begin{figure}[h]
\centering
    \begin{minipage}[c]{0.3\textwidth}
    \centering
    \includegraphics[width=\textwidth]{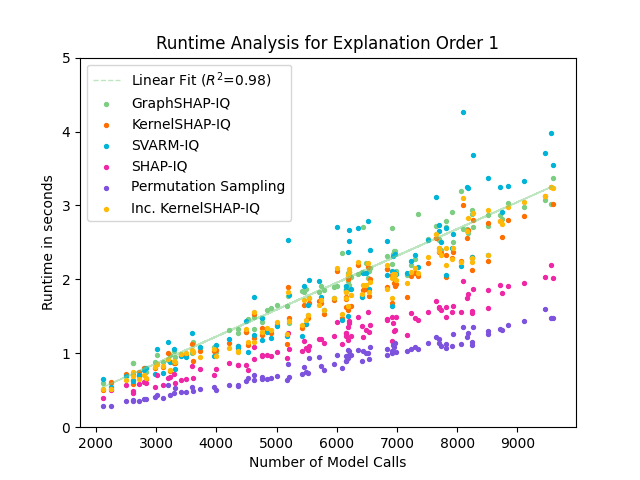}
    \includegraphics[width=\textwidth]{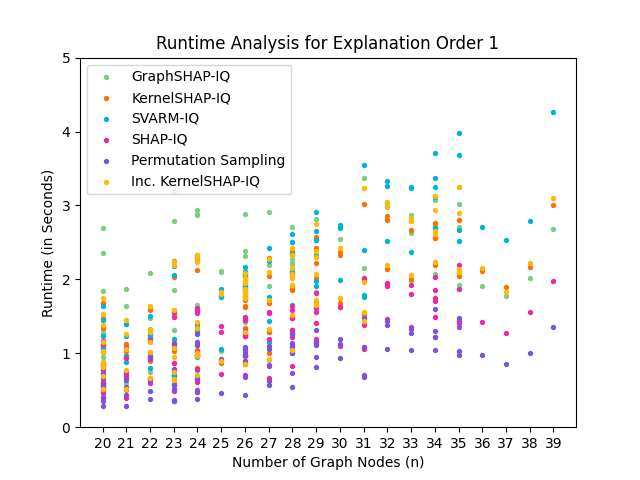}
    \end{minipage}
    \begin{minipage}[c]{0.3\textwidth}
    \centering
    \includegraphics[width=\textwidth]{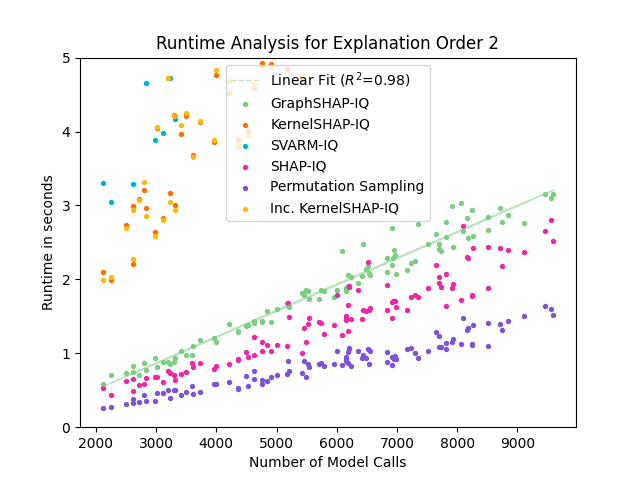}
    \includegraphics[width=\textwidth]{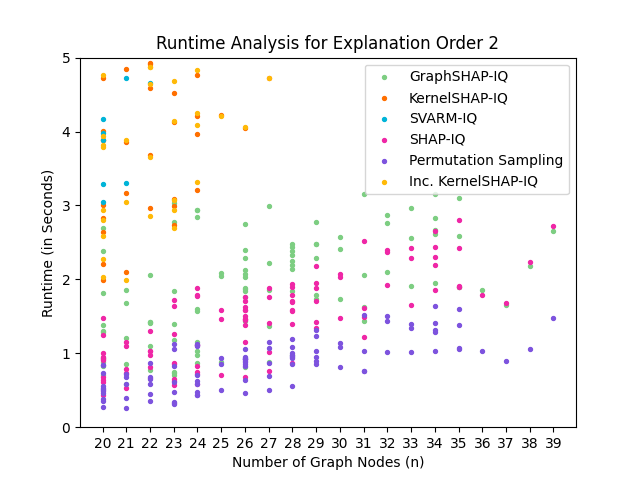}
    \end{minipage}
    \begin{minipage}[c]{0.3\textwidth}
    \centering
    \includegraphics[width=\textwidth]{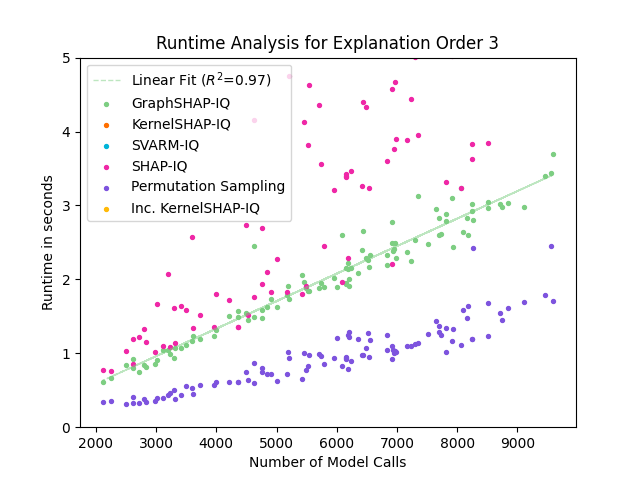}
    \includegraphics[width=\textwidth]{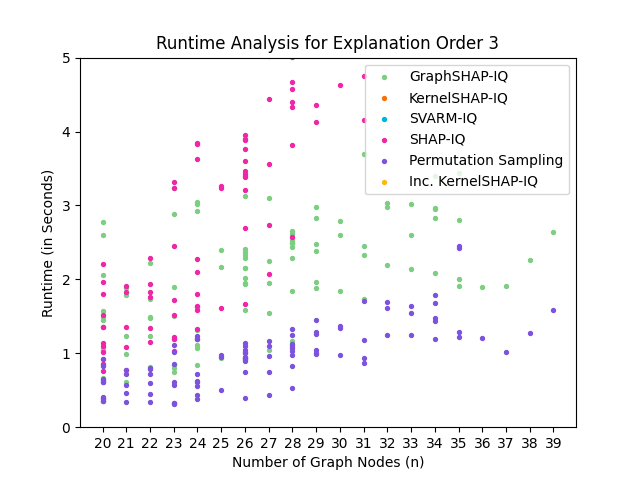}
    \end{minipage}
    \caption{Runtime comparison of GraphSHAP-IQ and baseline methods  for \glspl*{SV} (left), 2-\gls*{SII} (middle), and 3-\gls*{SII} (right). All methods were given the number of model calls, which were required for exact computation with GraphSHAP-IQ.
    The runtime (y-axis) against number of model calls (x-axis) plot (upper row) shows that the runtime of GraphSHAP-IQ scales linearly ($R^2>0.97)$ with increasing number of model calls. In contrast, the runtime (y-axis) against number of graph nodes (x-axis) plot (lower row) shows that the runtime is independent from the size of the graph. Lastly, increasing the explanation order substantially increases the runtime of most baseline methods, but barely affects GraphSHAP-IQ.}\label{appx_fig_runtime}
\end{figure}

\clearpage
\subsection{Explaining Water Quality in Water Distribution Networks (WDNs)}\label{appx_sec_wdn}
We investigate the validity of our approach in explaining a critical real-world scenario: adding chlorine to \gls*{WDS} is a common disinfection practice used to inactivate microorganisms that can cause waterborne diseases. Chlorine is an effective disinfectant due to its ability to oxidize and denature microbial cells. However, it can be toxic to aquatic organisms, particularly at high concentrations, and can impart an unpleasant taste or odor to the water, affecting consumer acceptance. In practice, chlorine is added to a water \emph{reservoir} and flows unevenly through the network. Therefore, it is essential to model these flows and concentrations, which is not trivial since it is a dynamical system governed by local PDEs. We frame this problem as graph regression task, where the goal is to predict the relative chlorine concentration for each node at each time step. In \cref{appx_fig_wdn}, we report the Hanoi \gls*{WDN} \citep{Vrachamis2018WaterDistributionNetworks_Hanoi} and simulate a chlorine injection at node reservoir $31$ for $t=0$. We observe how the $2$-\gls*{SII} highlight the progressive importance of the nodes --- from the reservoir throughout the network. 

\begin{figure}[htbp]
    \centering
    \begin{minipage}[c]{0.24\textwidth}
    \centering
        \includegraphics[width=\textwidth]{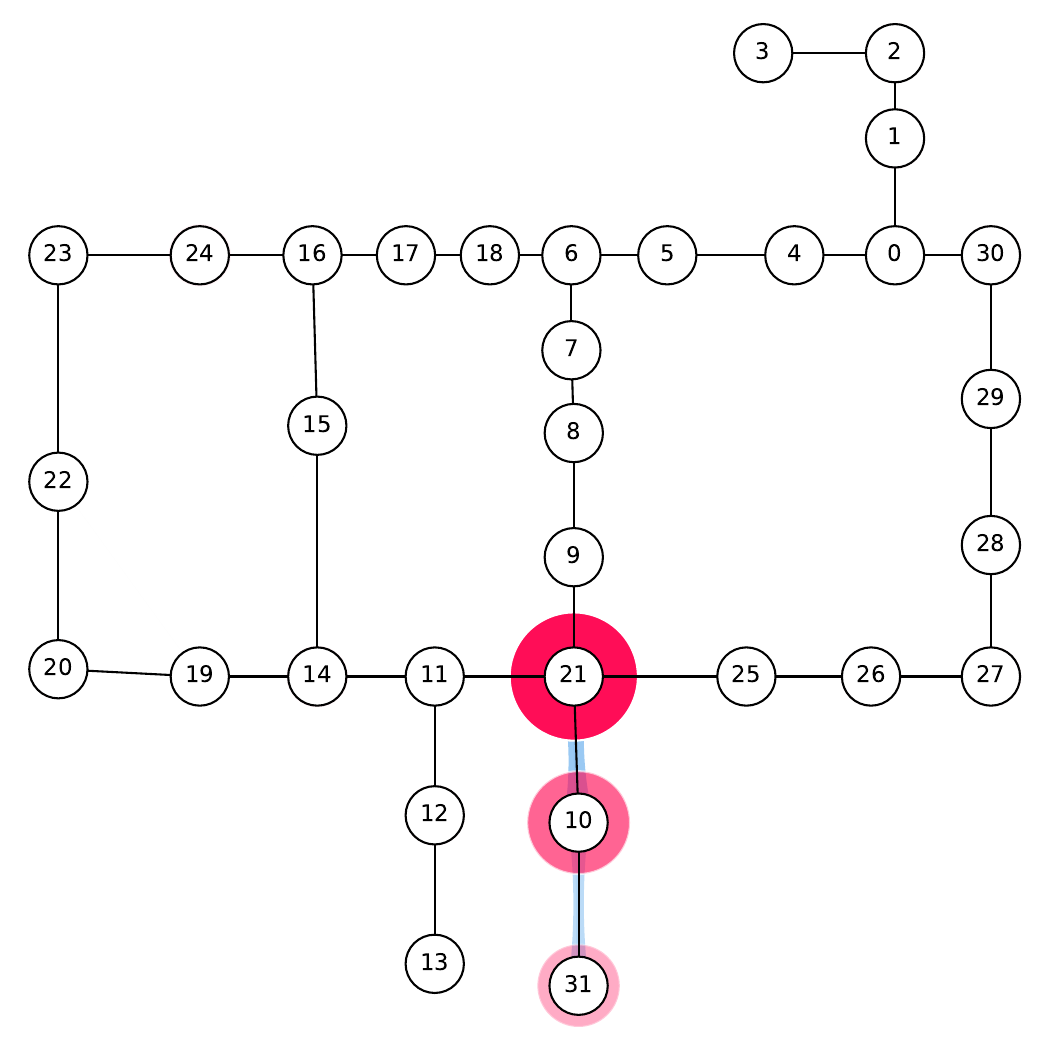} 
        \\
        \textbf{Time Step 1}
    \end{minipage}
    \begin{minipage}[c]{0.24\textwidth}
    \centering
        \includegraphics[width=\textwidth]{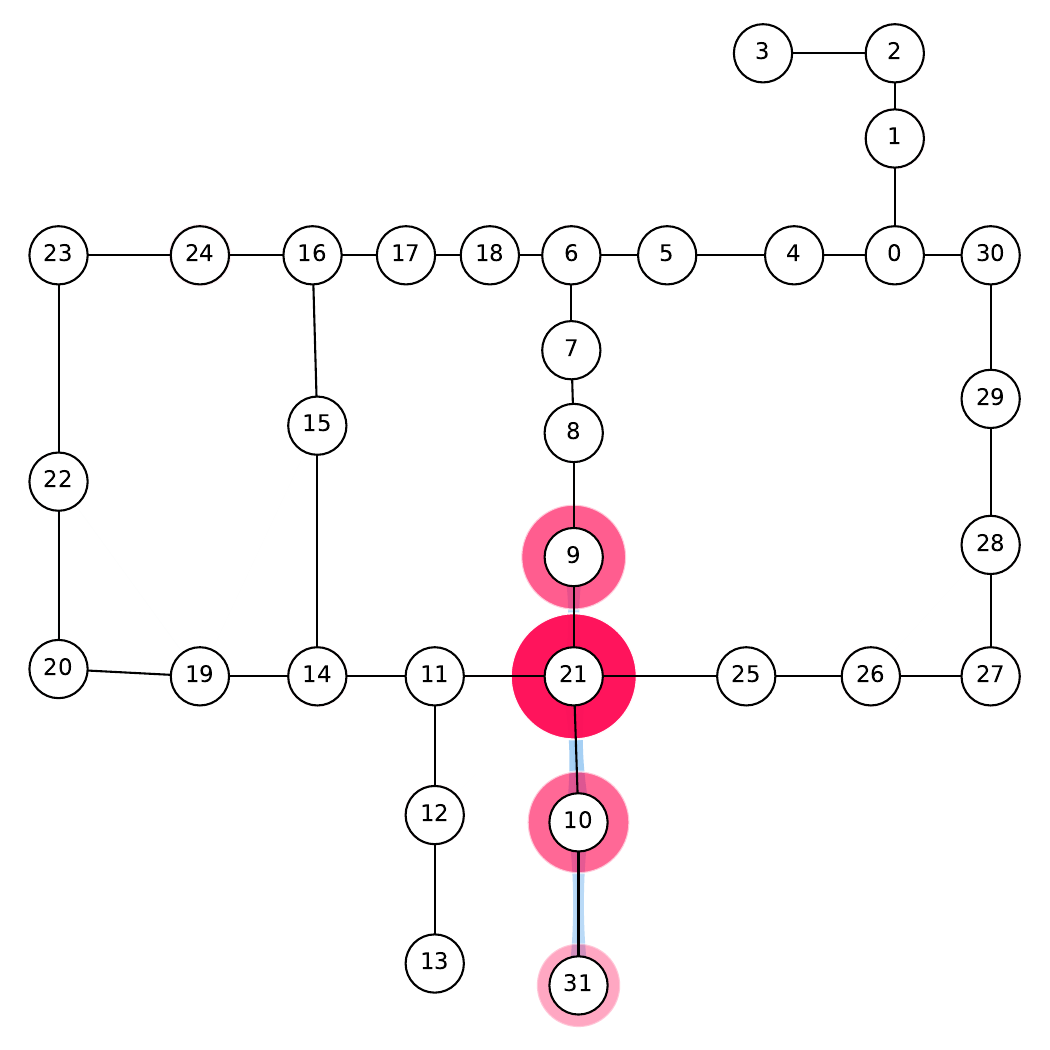}   
        \\
        \textbf{Time Step 2}
    \end{minipage}
    \begin{minipage}[c]{0.24\textwidth}
    \centering
        \includegraphics[width=\textwidth]{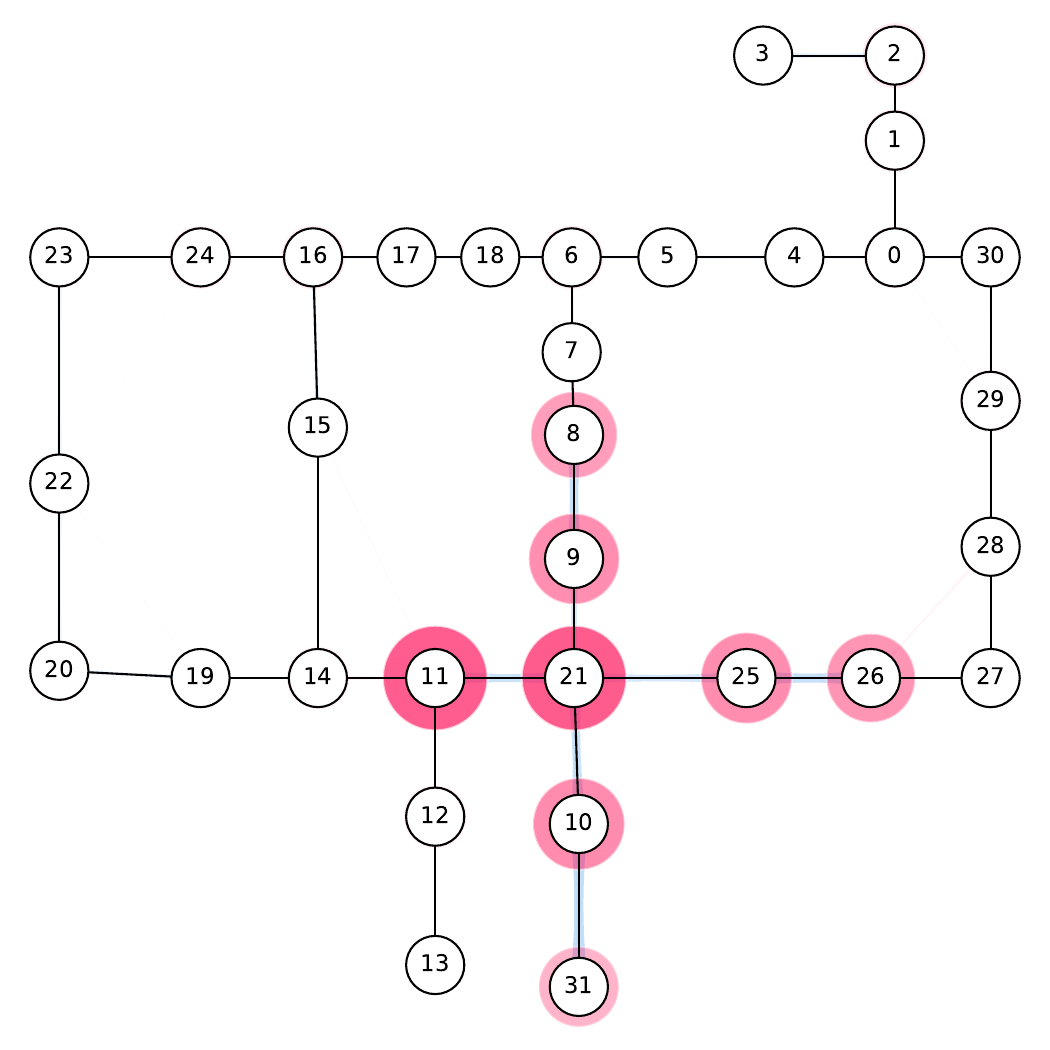}      
        \\
        \textbf{Time Step 5}
    \end{minipage}
    \begin{minipage}[c]{0.24\textwidth}
    \centering
        \includegraphics[width=\textwidth]{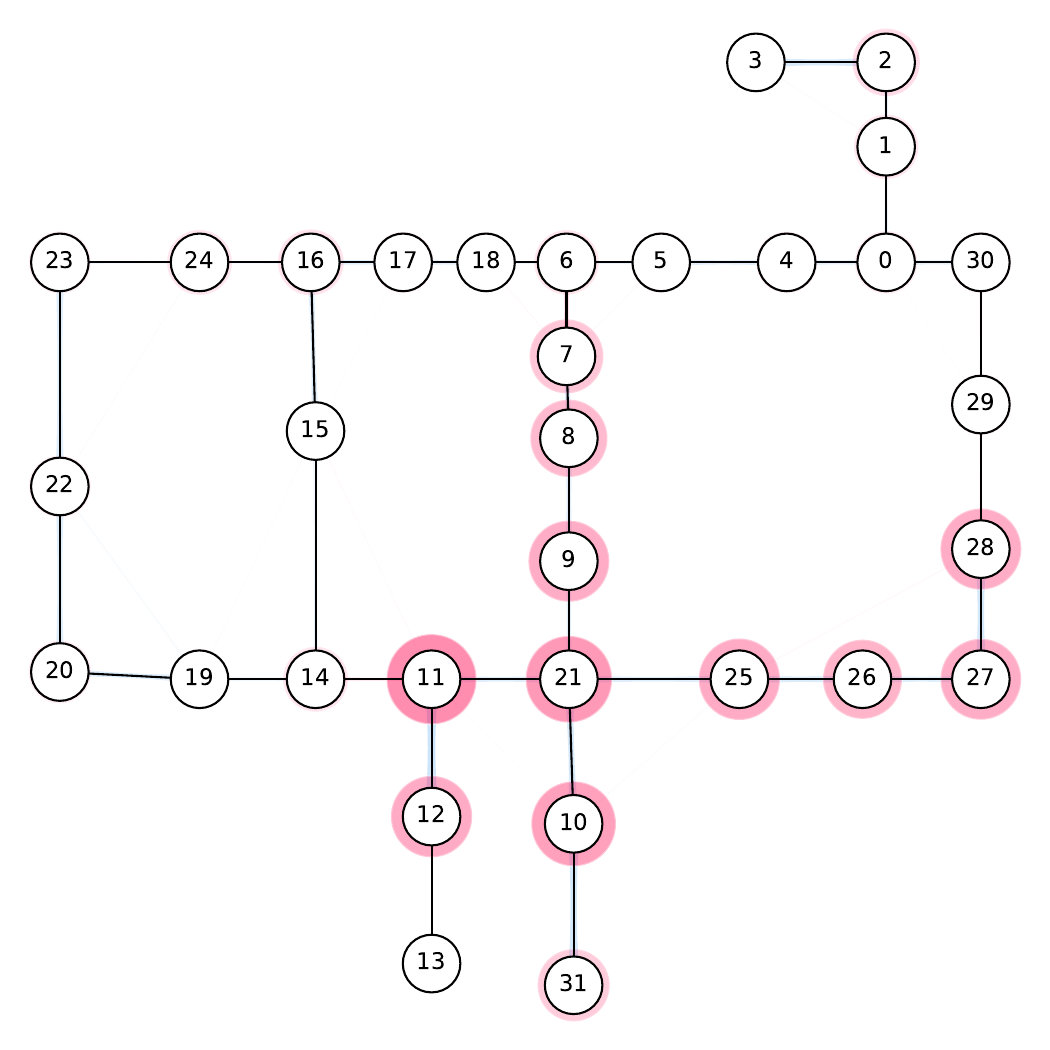}  
        \\
        \textbf{Time Step 10}
    \end{minipage}
    \caption{Spread of chlorination through a \gls*{WDN} over time as explained by $2$-\gls*{SII}.}
    \label{appx_fig_wdn}
\end{figure}

\subsection{Approximation Quality of GraphSHAP-IQ}\label{appx_sec_approximation}

Additionally to the experiments conducted in \cref{sec_approximation}, this section contains further evaluations on three datasets and a detailed description of the \gls*{SV} baselines (\cref{appx_sec_sv_baseline}) and the 2-\gls*{SII} baselines (\cref{appx_sec_2_sii_baseline}).
\cref{appx_fig_approx_mtg,appx_fig_approx_prt,appx_fig_approx_bzr} show the approximation quality of different \gls*{GNN} architectures for the \gls*{MTG}, \gls*{PRT}, and \gls*{BZR} benchmark datasets.
\cref{appx_sec_approx_description} outlines the approximation experiments in detail.

\subsubsection{Setup Description}\label{appx_sec_approx_description}
To compare GraphSHAP-IQ's approximation result with existing model-agnostic baselines, we compute \glspl*{SI} for three benchmark datasets (\gls*{MTG}, \gls*{PRT}, and \gls*{BZR}).
For each dataset, we randomly select 10 graphs containing $30 \leq n \leq 40$ nodes each. 
Based on \cref{sec_exp_complexity}, we further limit the selection to graphs which can be exactly computed via GraphSHAP-IQ with at most $10\,000$ (for \gls*{MTG} and \gls*{PRT}) and $2^{15} = 32\,768$ (for \gls*{BZR}) model evaluations.
First, we compute \glspl*{SI} with GraphSHAP-IQ for each \gls*{MI} interaction order $\lambda$ starting with $\lambda = 1$ until $\lambda \geq \max_{i \in N}(\vert \Nbh_i^{(\ell)} \vert)-1$ (c.f. Corollary~\ref{appx_cor_graphshapiq}).
Note that for different graphs even for the same datasets the maximum $\lambda$ value may differ.
For each $\lambda$, we observe the required model evaluations by GraphSHAP-IQ.
Second, we compute \glspl*{SI} with the baseline methods provided the same approximation budget as required by GraphSHAP-IQ for each $\lambda$.
Therein, we directly compare how the baselines (which may be run with any arbitrary approximation budget) compare with GraphSHAP-IQ. 
For each estimation budget (number of model evaluation), we estimate the \glspl*{SI} via two independent iterations for each baseline and average the evaluation metrics (each dot in \cref{appx_fig_approx_mtg,appx_fig_approx_prt,appx_fig_approx_bzr,appx_fig_approx_sv} are averaged evaluation metrics over two runs).
As evaluation metric we choose the mean-squared-error (MSE, lower is better).
For each graph, we compute the MSE between the ground truth \glspl*{SI}, as computed by exact GraphSHAP-IQ, and the estimated values.

\subsubsection{Description of SV Baselines}\label{appx_sec_sv_baseline}
As \gls*{SV} baselines we apply current state-of-the-art sampling based methods that operate on different representations of the \gls*{SV} and/or aggregation technique.
In general, all of the following baseline methods (with L-Shapley as the only exception) are sampling coalitions of players, evaluate the model on these coalitions, observe the output of the model, and finally aggregate the observed outputs into the \gls*{SV}.
All \gls*{SV} baselines are implemented according to the \textit{shapiq} \citep{Fumagalli.2023} open-source software package in Python.

\textbf{KernelSHAP} as proposed in \citep{DBLP:conf/nips/LundbergL17} is a prominent \gls*{SV} approximation method and is applied akin to its original conception in addition to the sampling tricks discussed in \citep{Covert.2021,Fumagalli.2023}.

\textbf{k-Additive KernelSHAP} \citep{Pelegrina.2023} is an extension to KernelSHAP, which also utilizes higher-order \glspl*{SI} in the computation procedure of the \gls*{SV}. Therein, it shows better approximation qualities than KernelSHAP. For our experiments, we set the higher-order interactions to $2$.

\textbf{Unbiased KernelSHAP} \citep{Covert.2021} extends on KernelSHAP and offers a provably unbiased alternative. It was recently shown that Unbiased KernelSHAP is linked to the \gls*{k-SII} SHAP-IQ estimator \citep{Fumagalli.2023}.

\textbf{Permutation Sampling} for \gls*{SV} \citep{Castro.2009} is a standard estimation method that iterates over random permuations of the player set to determine the coalitions to be used for the model evaluations. Because a permutation always needs to be traversed in its entirety, the number of model calls may be lower than GraphSHAP-IQ's or the rest of the baselines.

\textbf{SVARM} \citep{DBLP:conf/aaai/KolpaczkiBMH24} is another sampling-based baseline operating on a stratified representation of the \glspl*{SV}. Akin to KernelSHAP it can be queried on an arbitrary number of coalitions.

\textbf{L-Shapley} \citep{DBLP:conf/iclr/ChenSWJ19} is a deterministic method for computing the Shapley values for structured data. Unlike the rest of the \gls*{SV} approximation methods, L-Shapley cannot be evaluated on an arbitrary set of coalitions or number of model evaluations. L-Shapley functions similarly to GraphSHAP-IQ in that it deterministically evaluates coalitions based on neighborhoods in the graphs. To circumvent this, we let L-Shapley akin to GraphSHAP-IQ. After L-Shapley exceeds GraphSHAP-IQ's number of model evaluation we stop the iteration and use the last estimates (the estimates where L-Shapley always exceeds GraphSHAP-IQ in terms of model evaluations).

\subsubsection{Description of 2-SII Baselines}\label{appx_sec_2_sii_baseline}

Similar to \cref{appx_sec_sv_baseline}, we apply current state-of-the-art sampling based methods for 2-\gls*{SII} and in general \gls*{k-SII}.
All of the following baseline methods are sampling-based.
We implement the experiments based on the \textit{shapiq} \citep{Fumagalli.2023} open-source Python software package.

\textbf{KernelSHAP-IQ} \citep{fumagalli2024kernelshapiq} directly extends the \gls*{SV} KernelSHAP approximation paradigm to \gls*{SII} and, hence, to \gls*{k-SII}. KernelSHAP-IQ leads to high quality estimates.

\textbf{Inconsistent KernelSHAP-IQ} \citep{fumagalli2024kernelshapiq} is a different version of KernelSHAP-IQ, which is linked to the k-Additive KernelSHAP \citep{Pelegrina.2023}. While often leading to better estimations in lower number of model evaluations, Inconsistent KernelSHAP-IQ does not converge to the ground-truth values like KernelSHAP-IQ.

\textbf{SHAP-IQ} \citep{Fumagalli.2023} is a sampling-based mean estimator for computing \glspl*{SII} among other interaction indices like \gls*{STII}. It is theoretically linked to Unbiased KernelSHAP \citep{Fumagalli.2023} and, thus, transfers its \gls*{SV} procedure to \gls*{SII}.

\textbf{Permutation Sampling} for \gls*{SII}\citep{Tsai.2022} directly transfers the permutation sampling procedure from its \gls*{SV} counterpart \citep{Castro.2009} to estimate \gls*{SII}. Similar to the \gls*{SV} algorithm, the \gls*{SII} variant requires a full pass through a permutation which may lead to less model evaluations with this baseline.

\textbf{SVARM-IQ} \citep{Kolpaczki.2024} extends SVARM's \citep{DBLP:conf/aaai/KolpaczkiBMH24} stratified representation from the \gls*{SV} to the \gls*{SII}. Often SVARM-IQ outperforms state-of-the-art approximation methods.

\subsection{Explanation Graphs for Molecule Structures}\label{appx_sec_explanation_graphs}
This section contains further exemplary \gls*{SI}-graphs for molecule structures of the \gls*{MTG} and \gls*{BNZ} datasets.
\cref{appx_fig_mutag_graphs} shows two molecules from the \gls*{MTG} dataset, where one of those molecules is the same as in \cref{fig_intro_illustration}.
Further, \cref{appx_fig_benzene_graphs} shows additional \gls*{SI}-Graphs for molecules of the \gls*{BNZ} dataset. Lastly, \cref{appx_fig_molecuels_gnn_comps} shows how the \gls*{SI}-Graphs differ for different \gls*{GNN} architectures.

\begin{figure}
    \centering
    \includegraphics[width=0.8\textwidth]{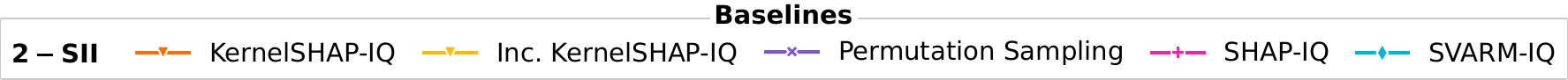}
    \begin{minipage}[c]{0.325\textwidth}
    \centering
        \includegraphics[width=\textwidth]{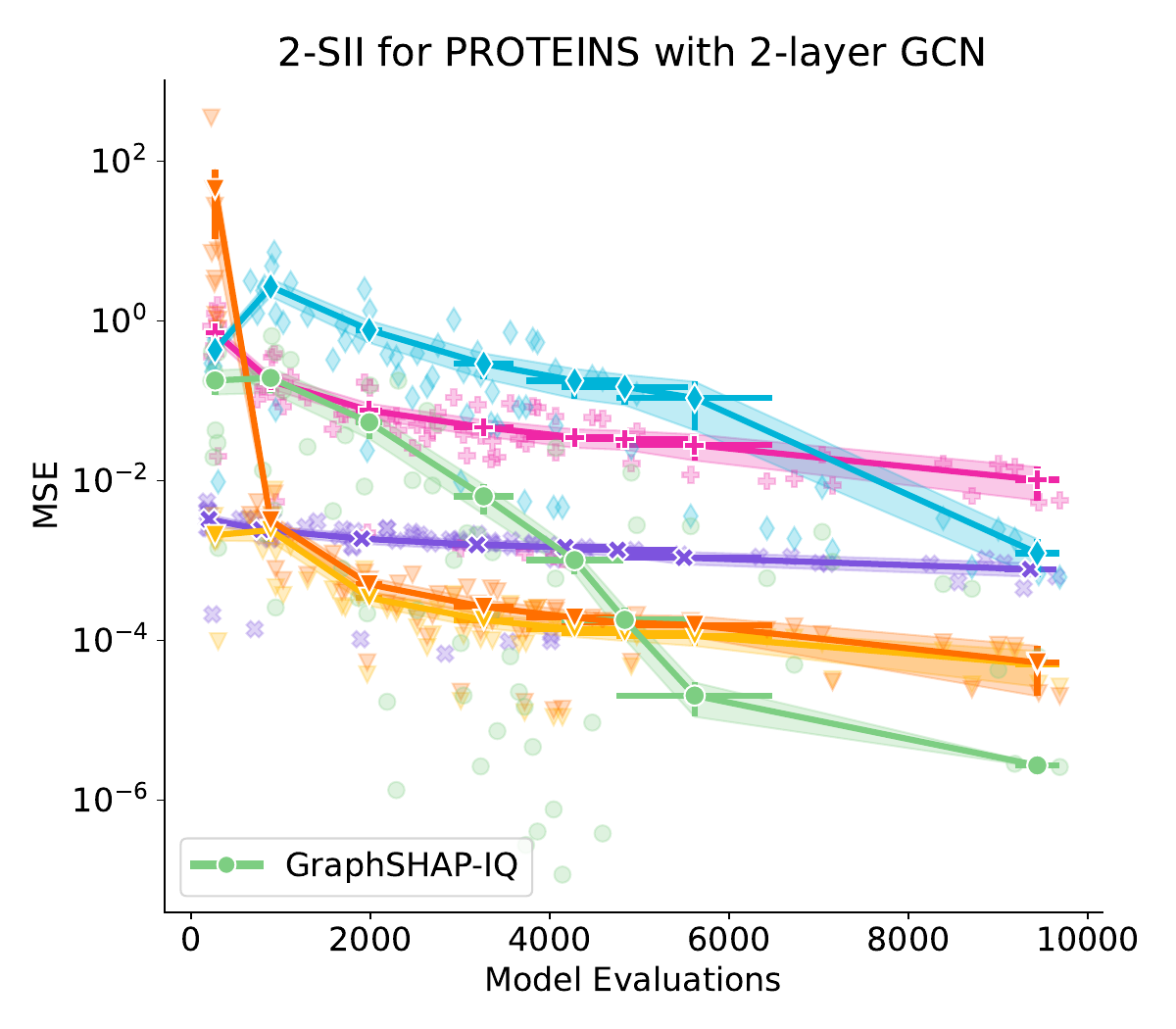}
        \includegraphics[width=\textwidth]{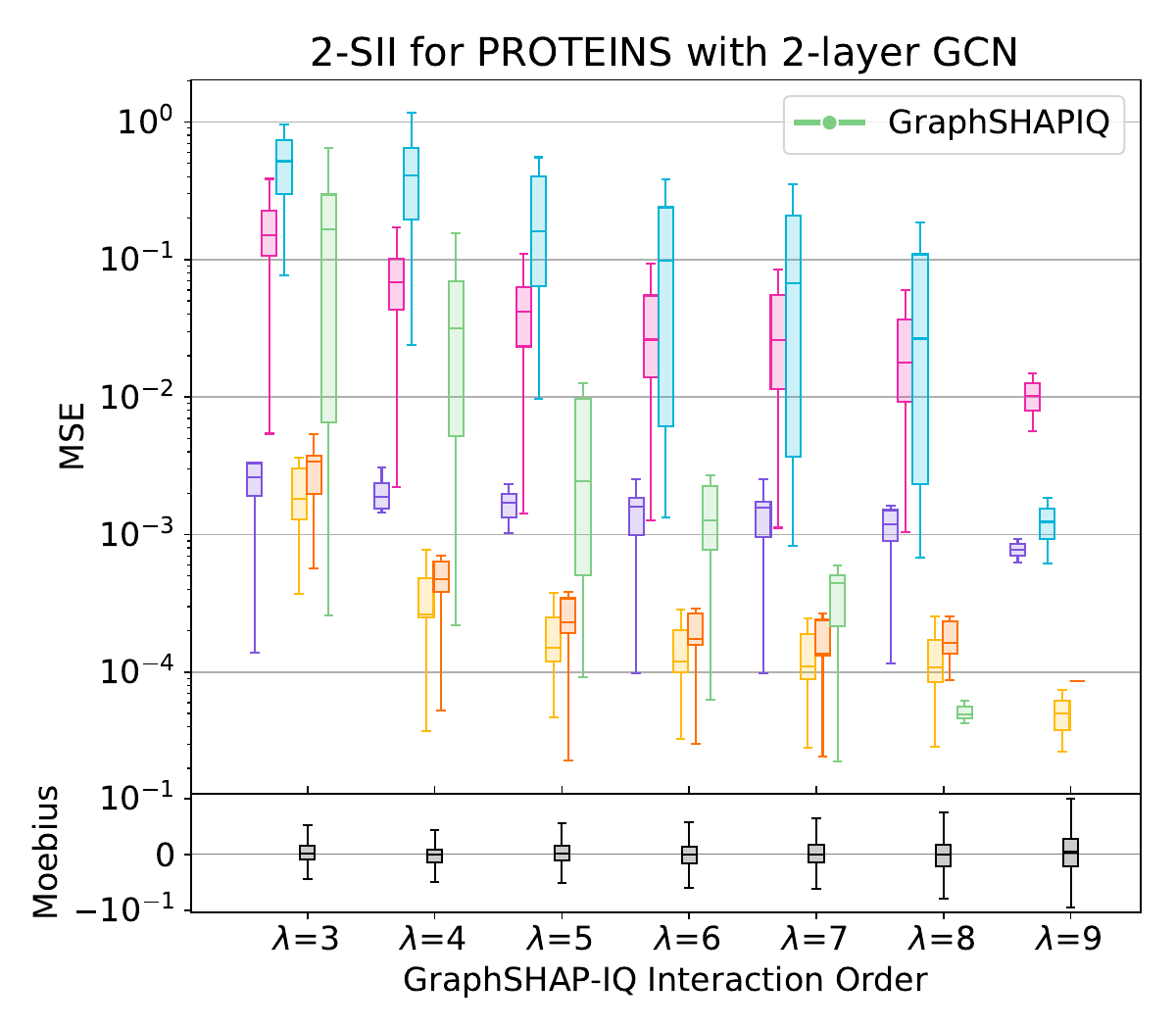}
    \end{minipage}
    \hfill
    \begin{minipage}[c]{0.325\textwidth}
    \centering
        \includegraphics[width=\textwidth]{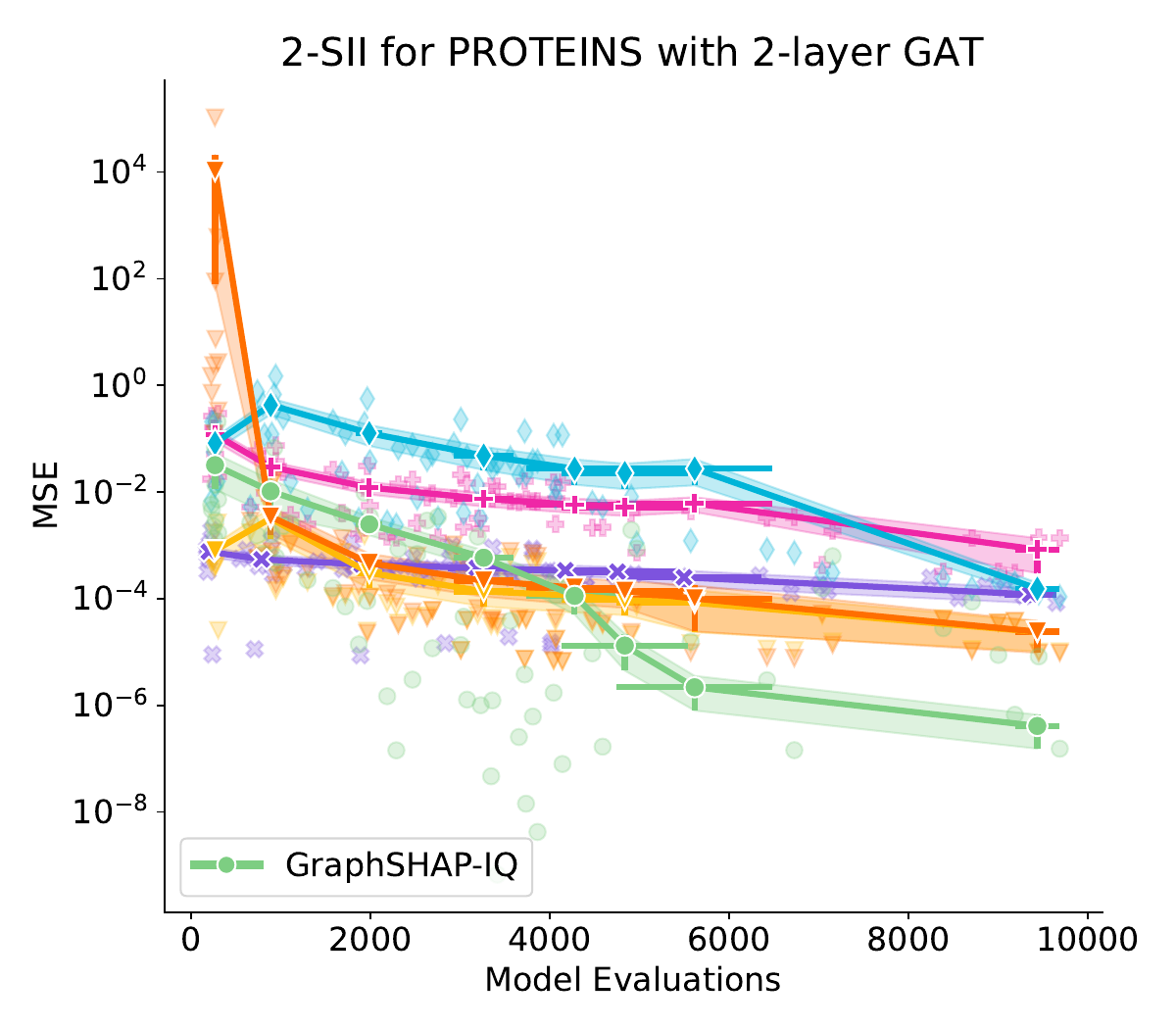}
        \includegraphics[width=\textwidth]{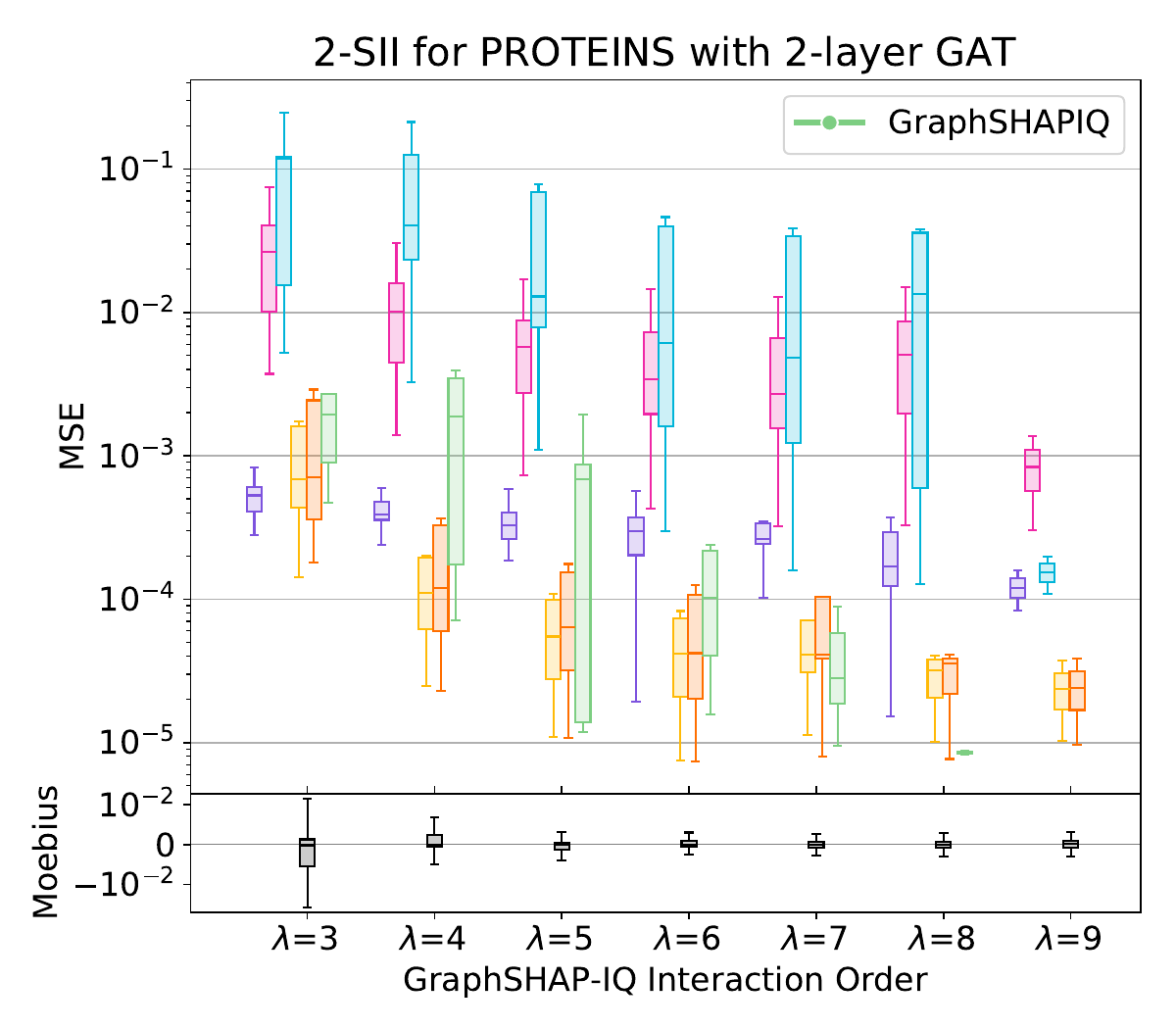}
    \end{minipage}
    \hfill
    \begin{minipage}[c]{0.325\textwidth}
    \centering
        \includegraphics[width=\textwidth]{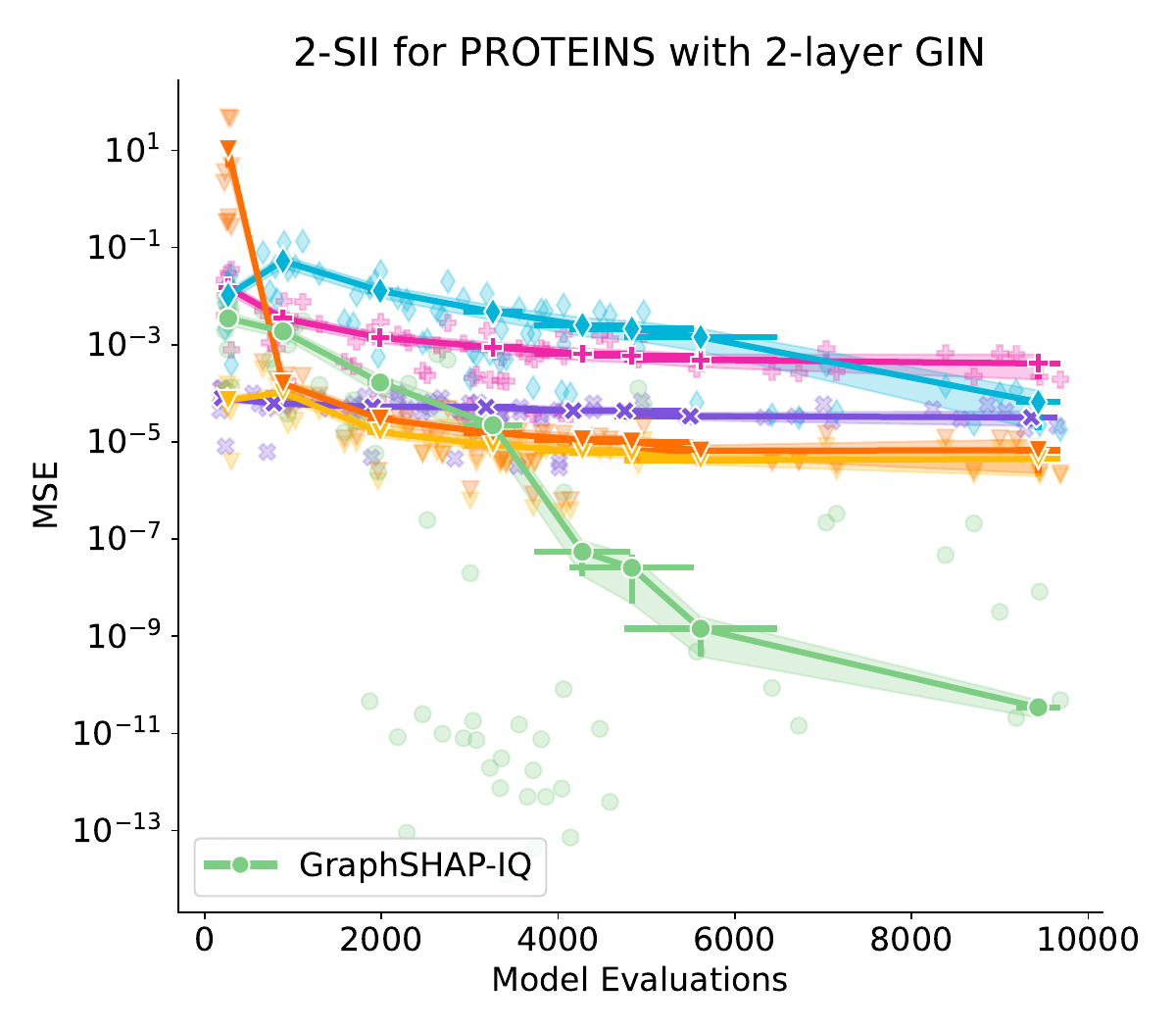}
        \includegraphics[width=\textwidth]{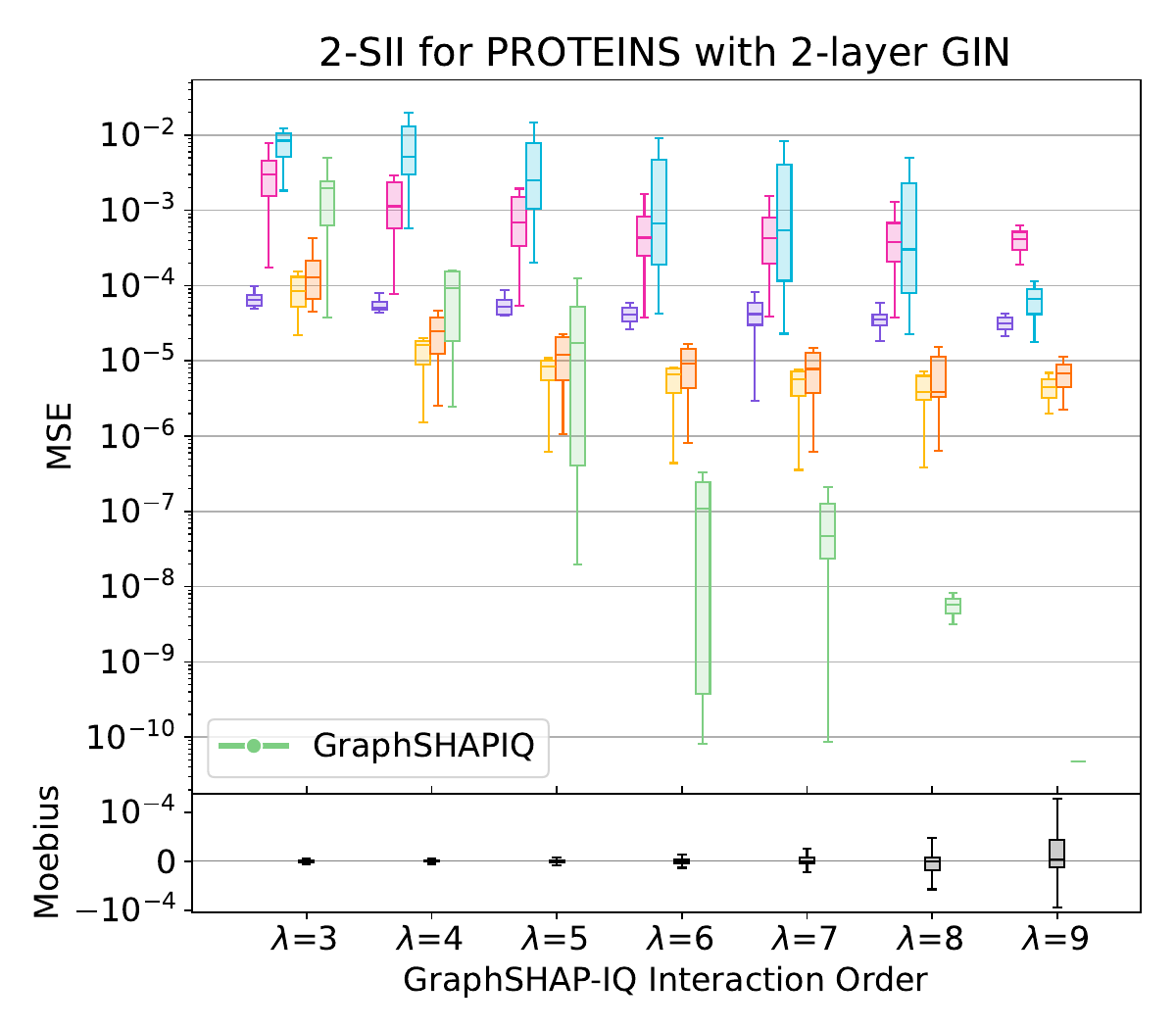}
    \end{minipage}
    \caption{Comparison of GraphSHAP-IQ's approximation quality with model-agnostic baselines on 10 graphs with $30 \leq n \leq 40$ nodes from the \gls*{PRT} dataset for a 2-layers \gls*{GCN} (left), \gls*{GAT} (middle) and \gls*{GIN} (right). The top row presents the MSE for each estimation (dots) and averaged over $\lambda$ (line) with the standard error of the mean (confidence band). The bottom row shows the same information including the \glspl*{MI} as box plots for each $\lambda$.}
    \label{appx_fig_approx_prt}
\end{figure}

\begin{figure}
    \centering
    \includegraphics[width=0.8\textwidth]{figures/experiments/legend_sii.pdf}
    \begin{minipage}[c]{0.325\textwidth}
    \centering
        \includegraphics[width=\textwidth]{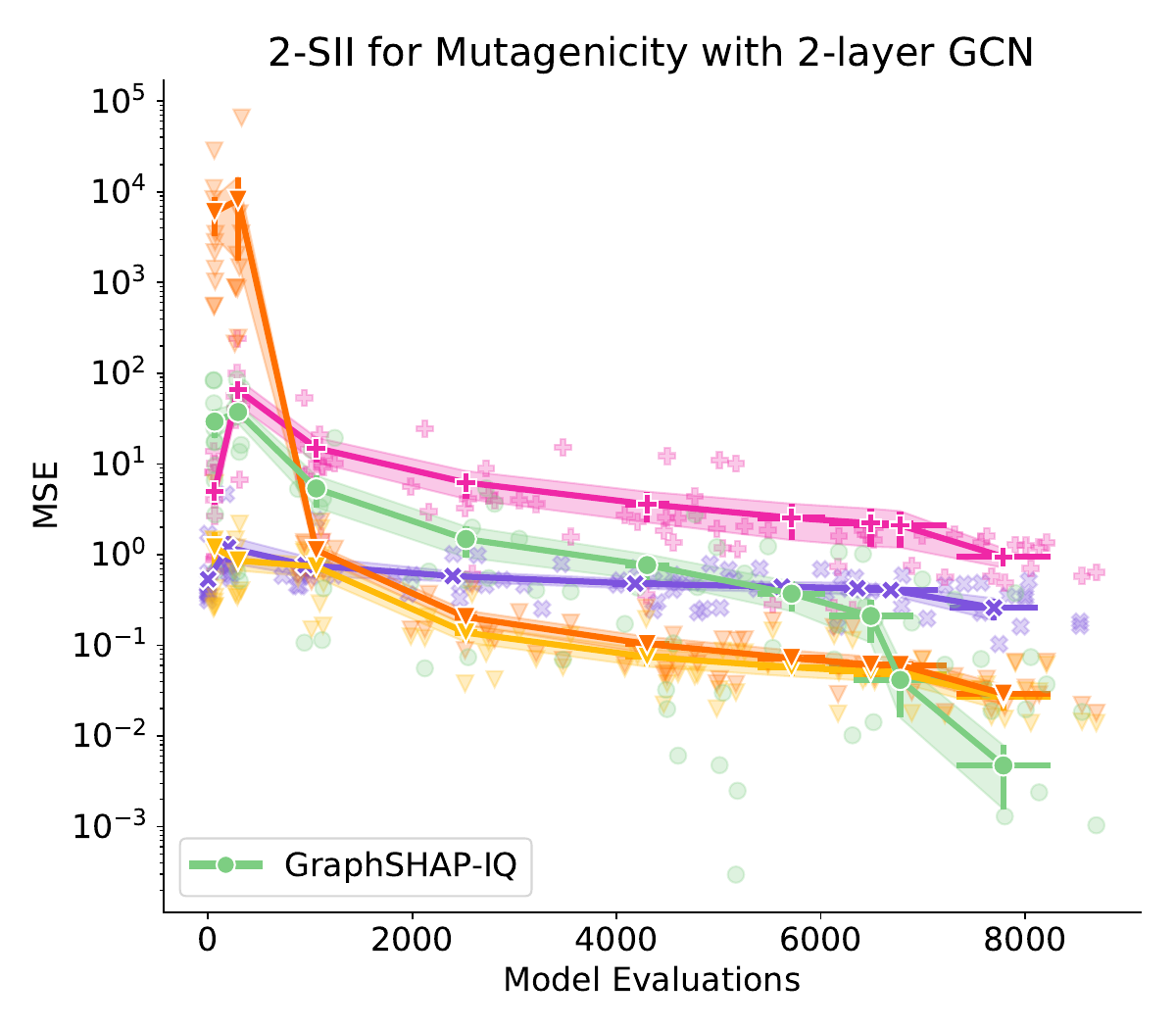}
        \includegraphics[width=\textwidth]{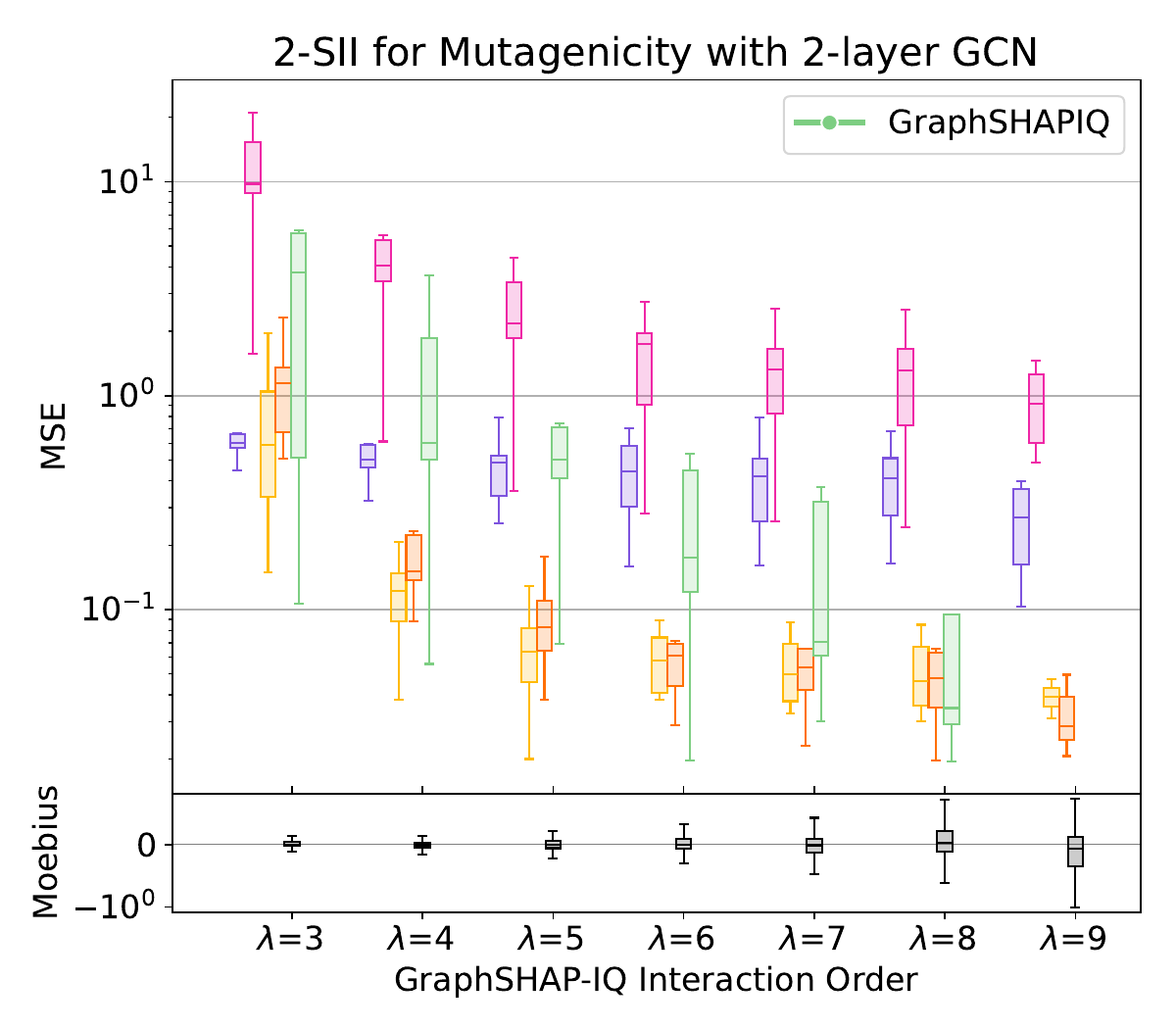}
    \end{minipage}
    \hfill
    \begin{minipage}[c]{0.325\textwidth}
    \centering
        \includegraphics[width=\textwidth]{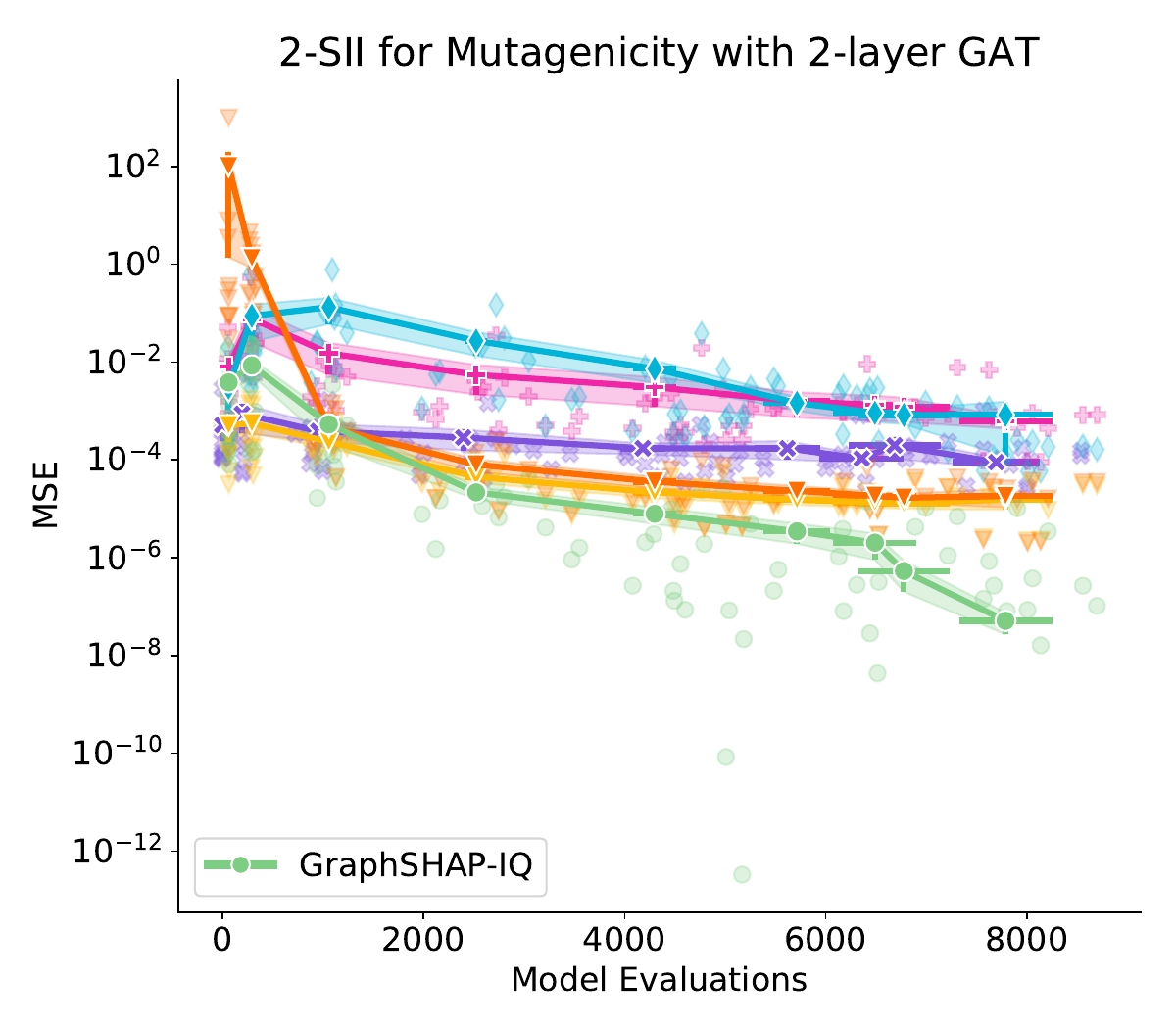}
        \includegraphics[width=\textwidth]{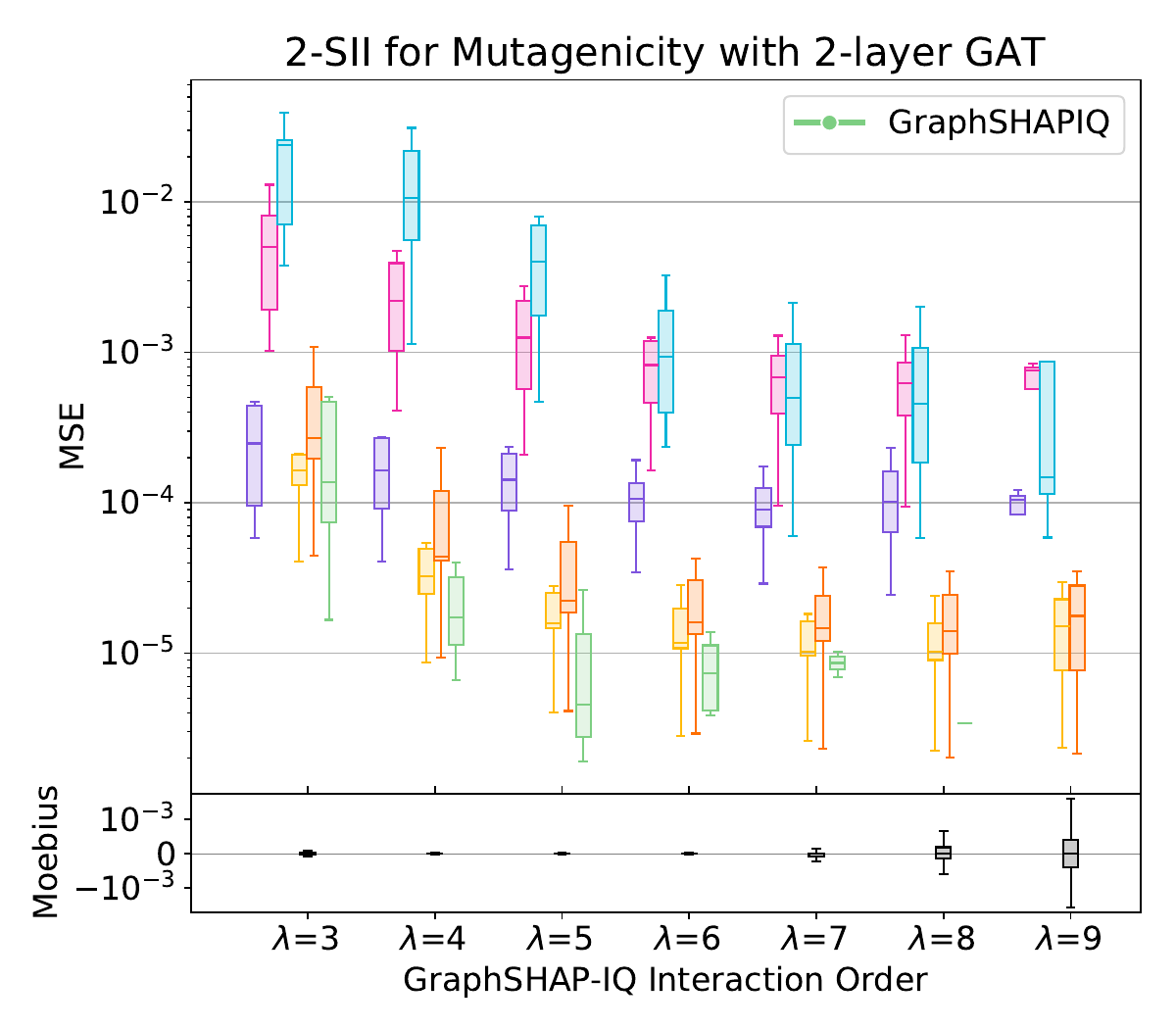}
    \end{minipage}
    \hfill
    \begin{minipage}[c]{0.325\textwidth}
    \centering
        \includegraphics[width=\textwidth]{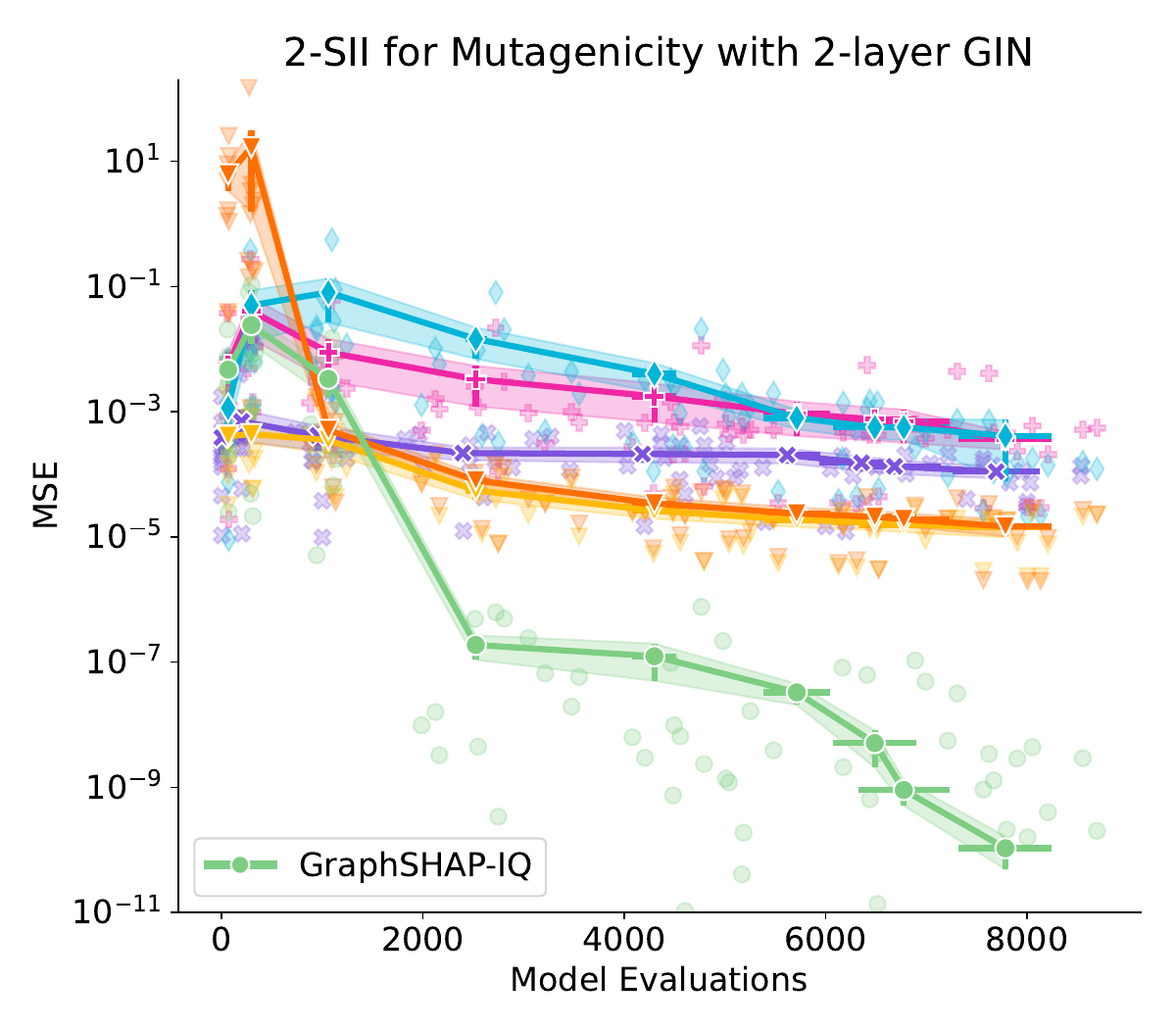}
        \includegraphics[width=\textwidth]{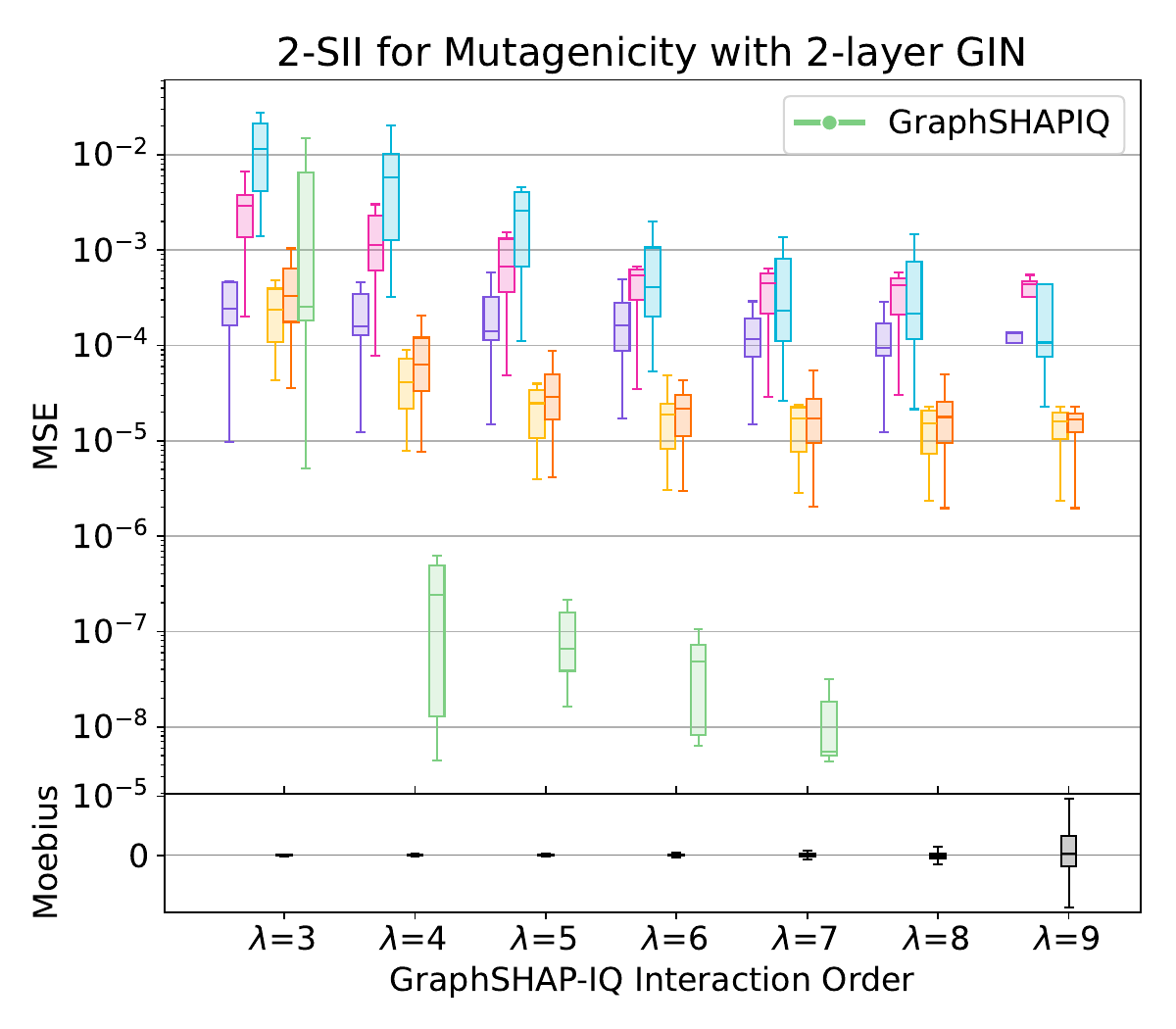}
    \end{minipage}
    \caption{Comparison of GraphSHAP-IQ's approximation quality with model-agnostic baselines on 10 graphs with $30 \leq n \leq 40$ nodes from the \gls*{MTG} dataset for a 2-layers \gls*{GCN} (left), \gls*{GAT} (middle) and \gls*{GIN} (right). The top row presents the MSE for each estimation (dots) and averaged over $\lambda$ (line) with the standard error of the mean (confidence band). The bottom row shows the same information including the \glspl*{MI} as box plots for each $\lambda$.}
    \label{appx_fig_approx_mtg}
\end{figure}

\begin{figure}
    \centering
    \includegraphics[width=0.8\textwidth]{figures/experiments/legend_sii.pdf}
    \begin{minipage}[c]{0.325\textwidth}
    \centering
        \includegraphics[width=\textwidth]{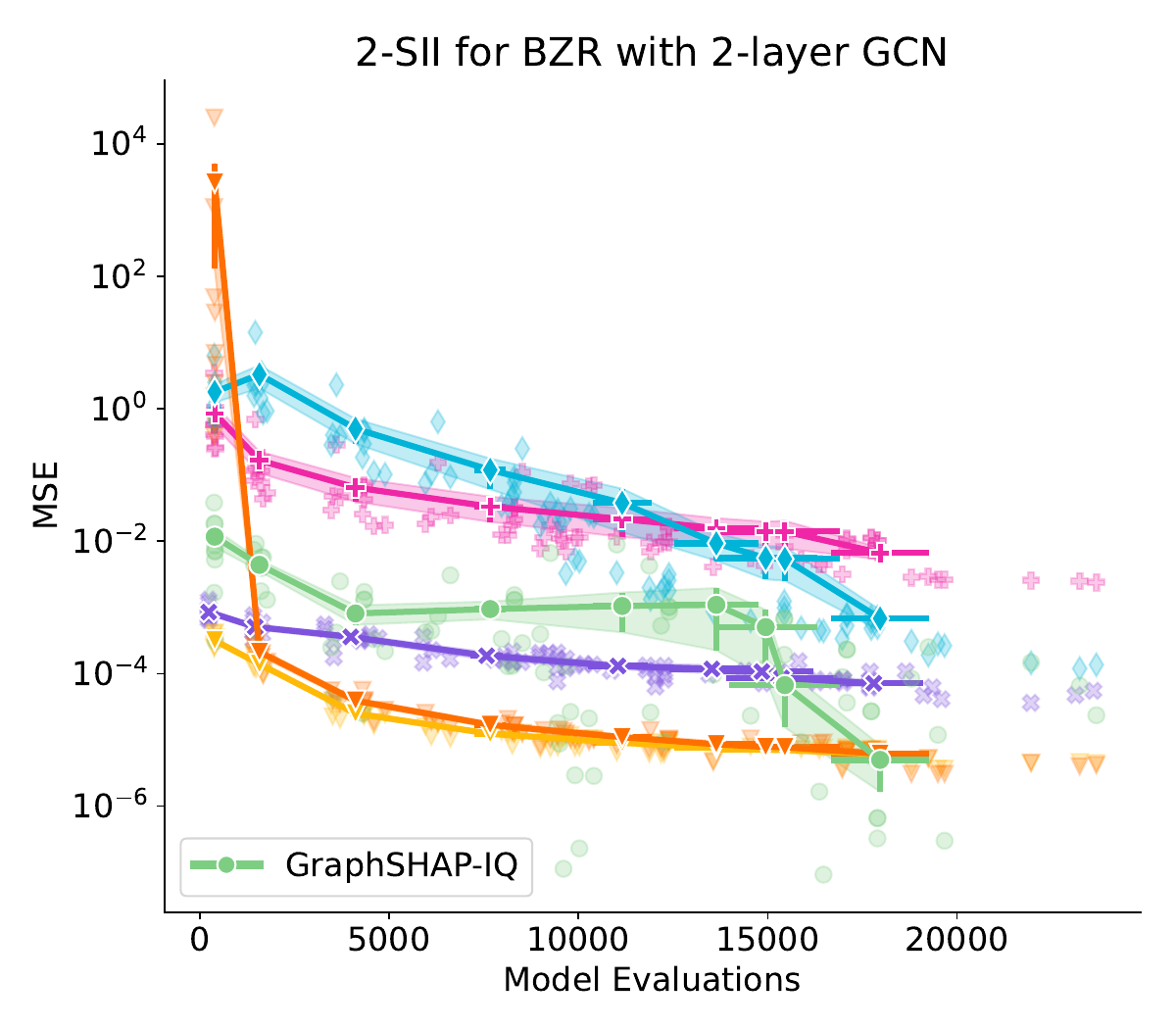}
        \includegraphics[width=\textwidth]{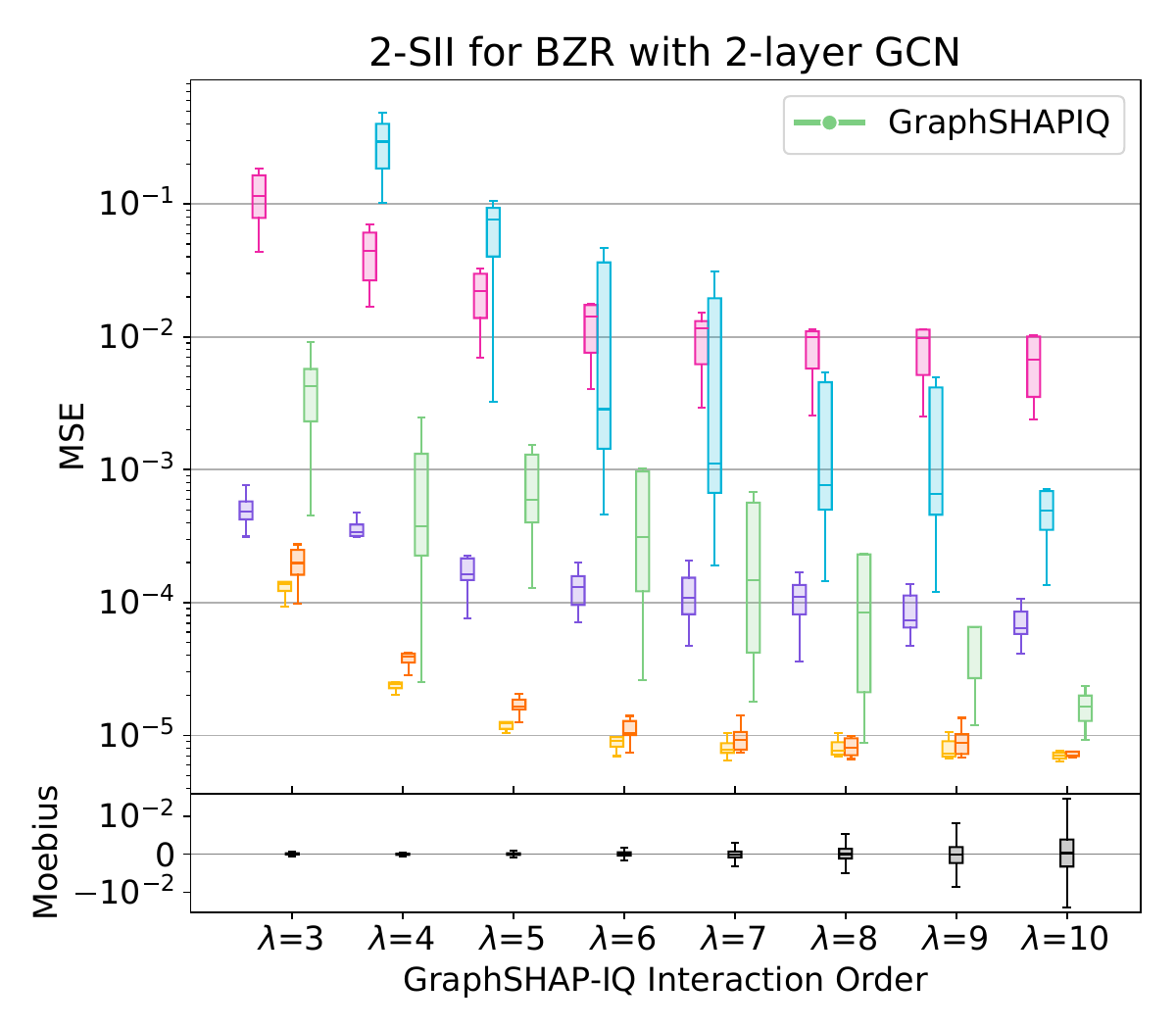}
    \end{minipage}
    \hfill
    \begin{minipage}[c]{0.325\textwidth}
    \centering
        \includegraphics[width=\textwidth]{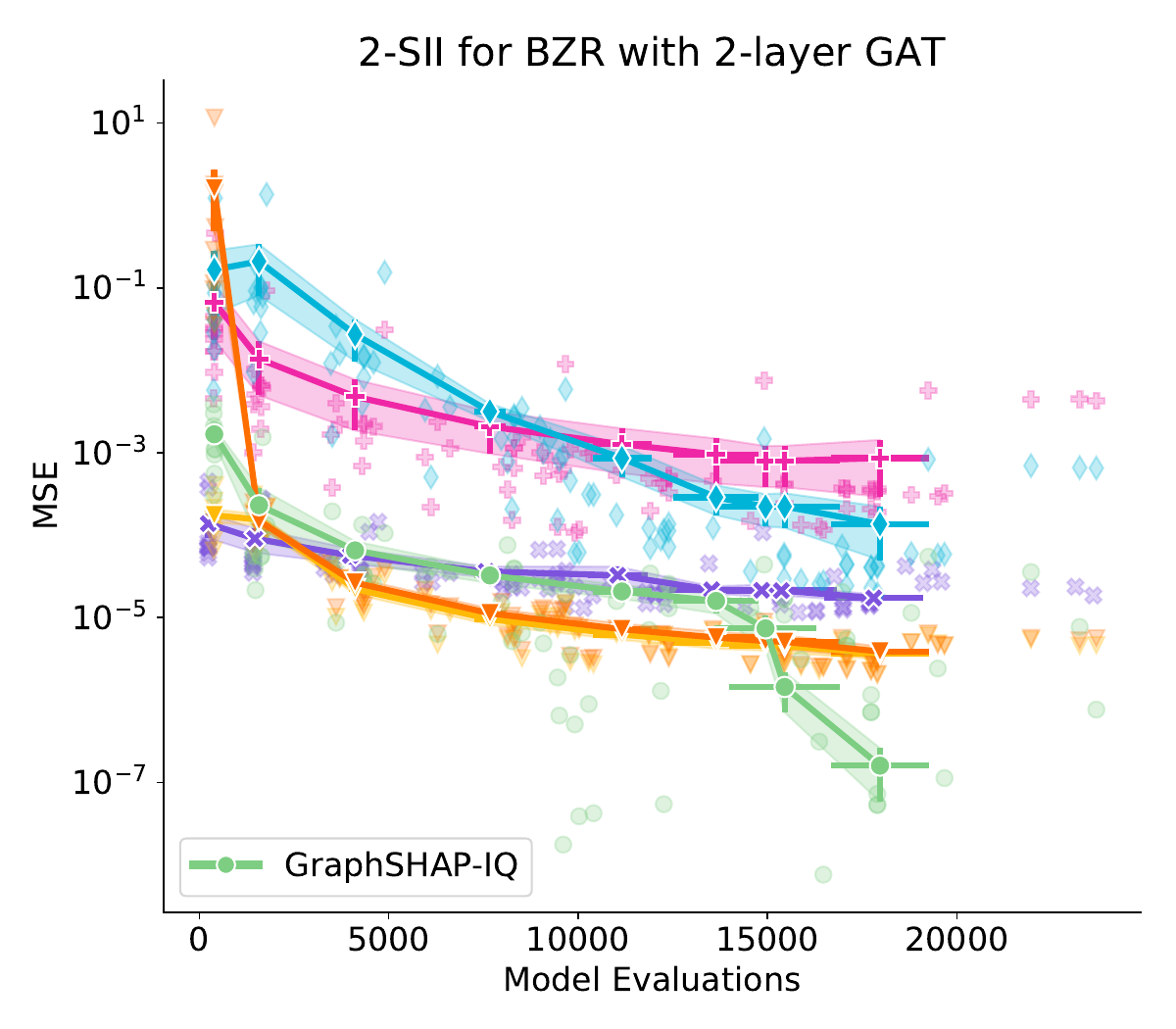}
        \includegraphics[width=\textwidth]{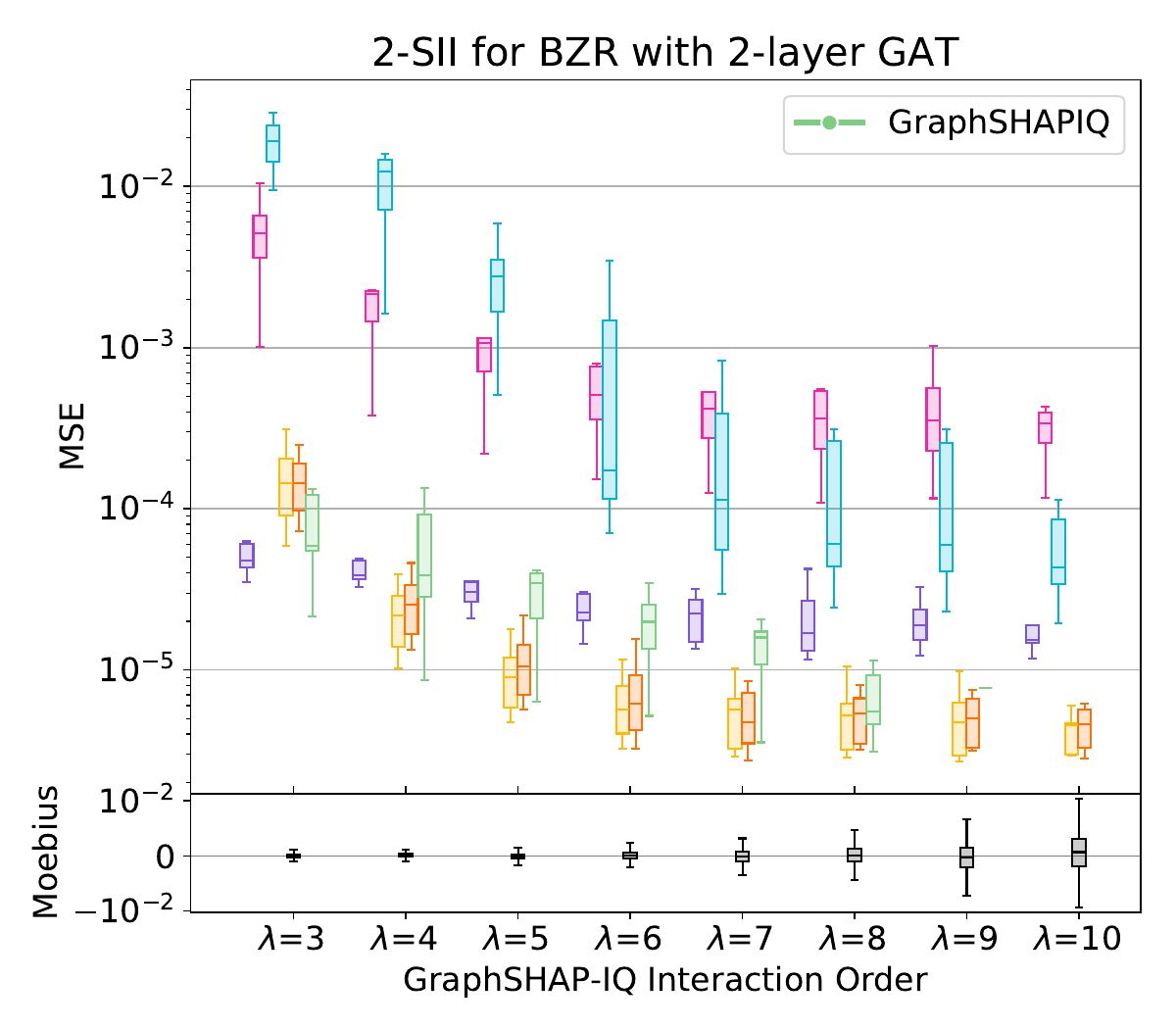}
    \end{minipage}
    \hfill
    \begin{minipage}[c]{0.325\textwidth}
    \centering
        \includegraphics[width=\textwidth]{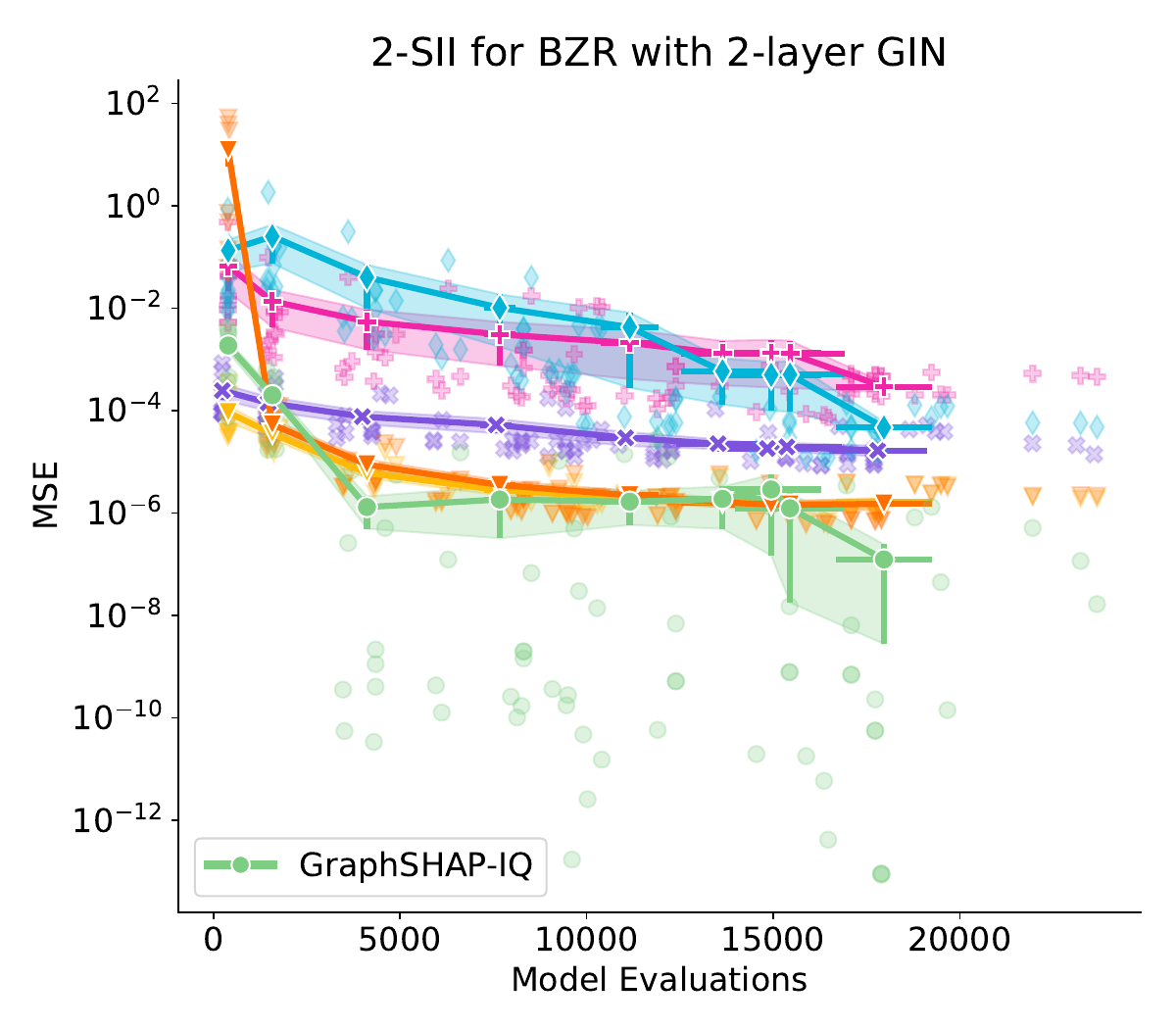}
        \includegraphics[width=\textwidth]{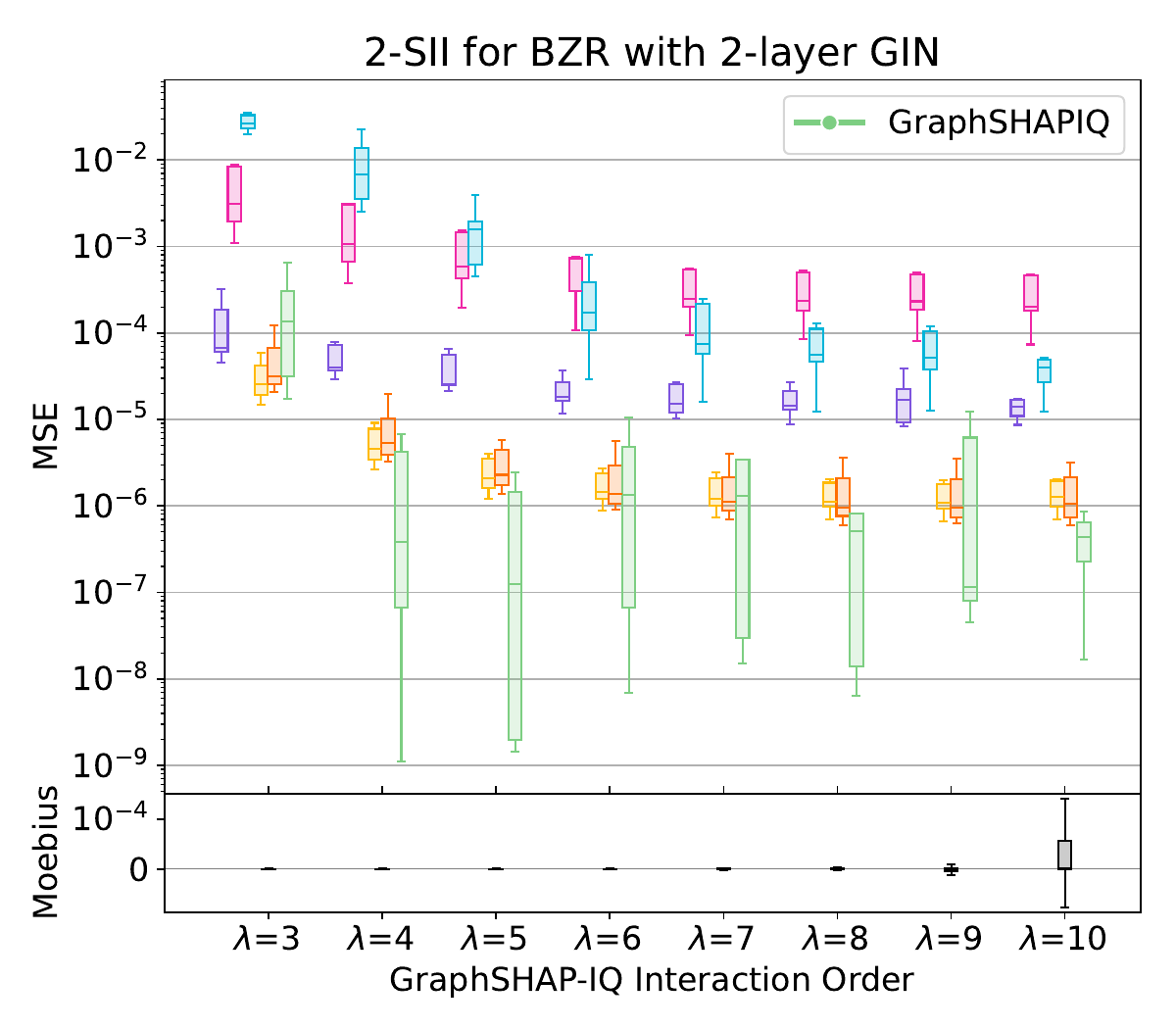}
    \end{minipage}
    \caption{Comparison of GraphSHAP-IQ's approximation quality with model-agnostic baselines on 10 graphs with $30 \leq n \leq 40$ nodes from the \gls*{BZR} dataset for a 2-layers \gls*{GCN} (left), \gls*{GAT} (middle) and \gls*{GIN} (right). The top row presents the MSE for each estimation (dots) and averaged over $\lambda$ (line) with the standard error of the mean (confidence band). The bottom row shows the same information including the \glspl*{MI} as box plots for each $\lambda$.}
    \label{appx_fig_approx_bzr}
\end{figure}

\begin{figure}
    \centering
    \includegraphics[width=0.8\textwidth]{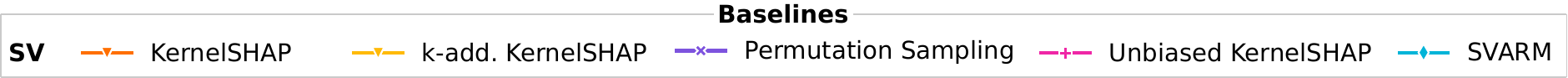}
    \begin{minipage}[c]{0.325\textwidth}
    \centering
        \includegraphics[width=\textwidth]{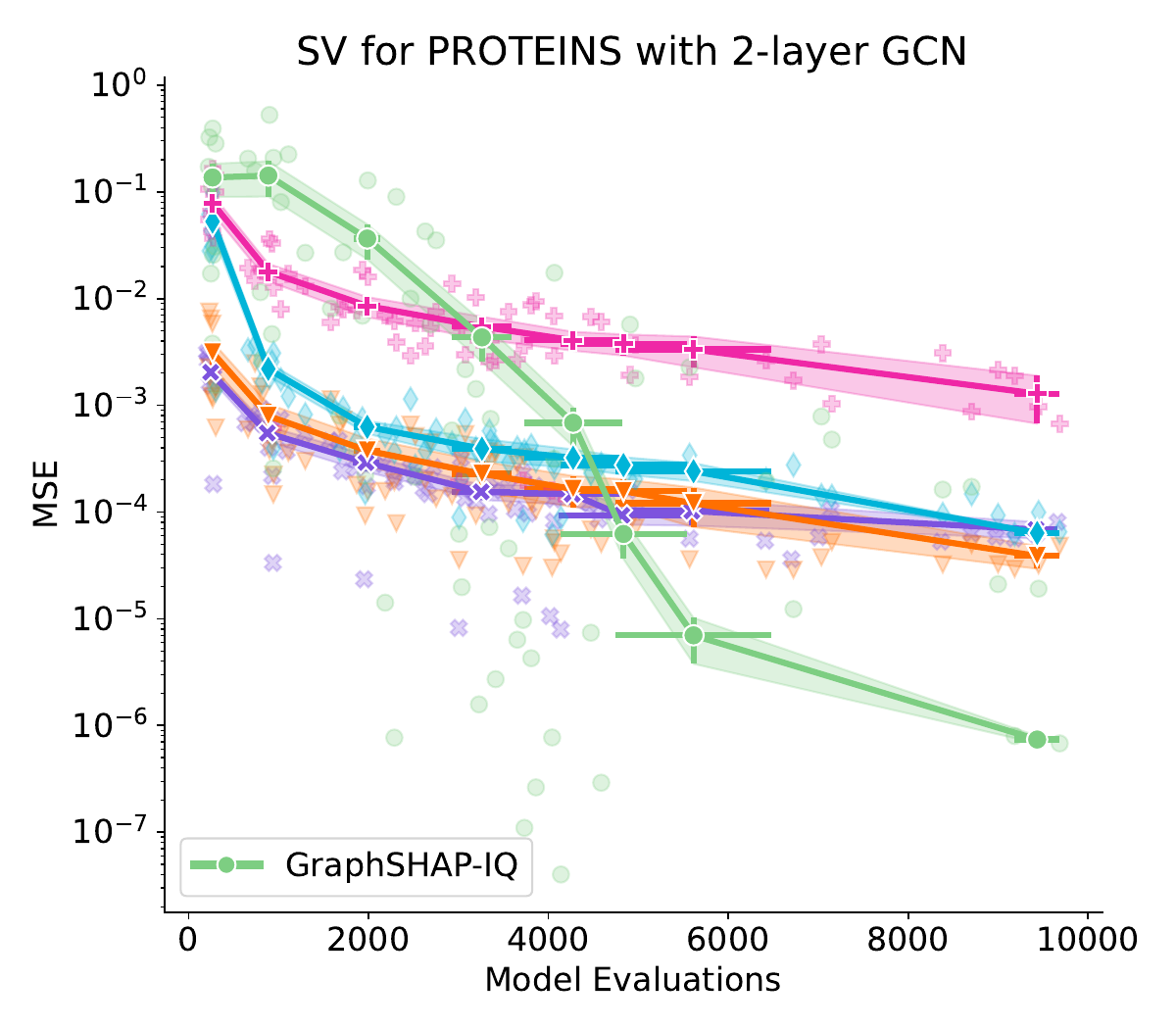}
        \includegraphics[width=\textwidth]{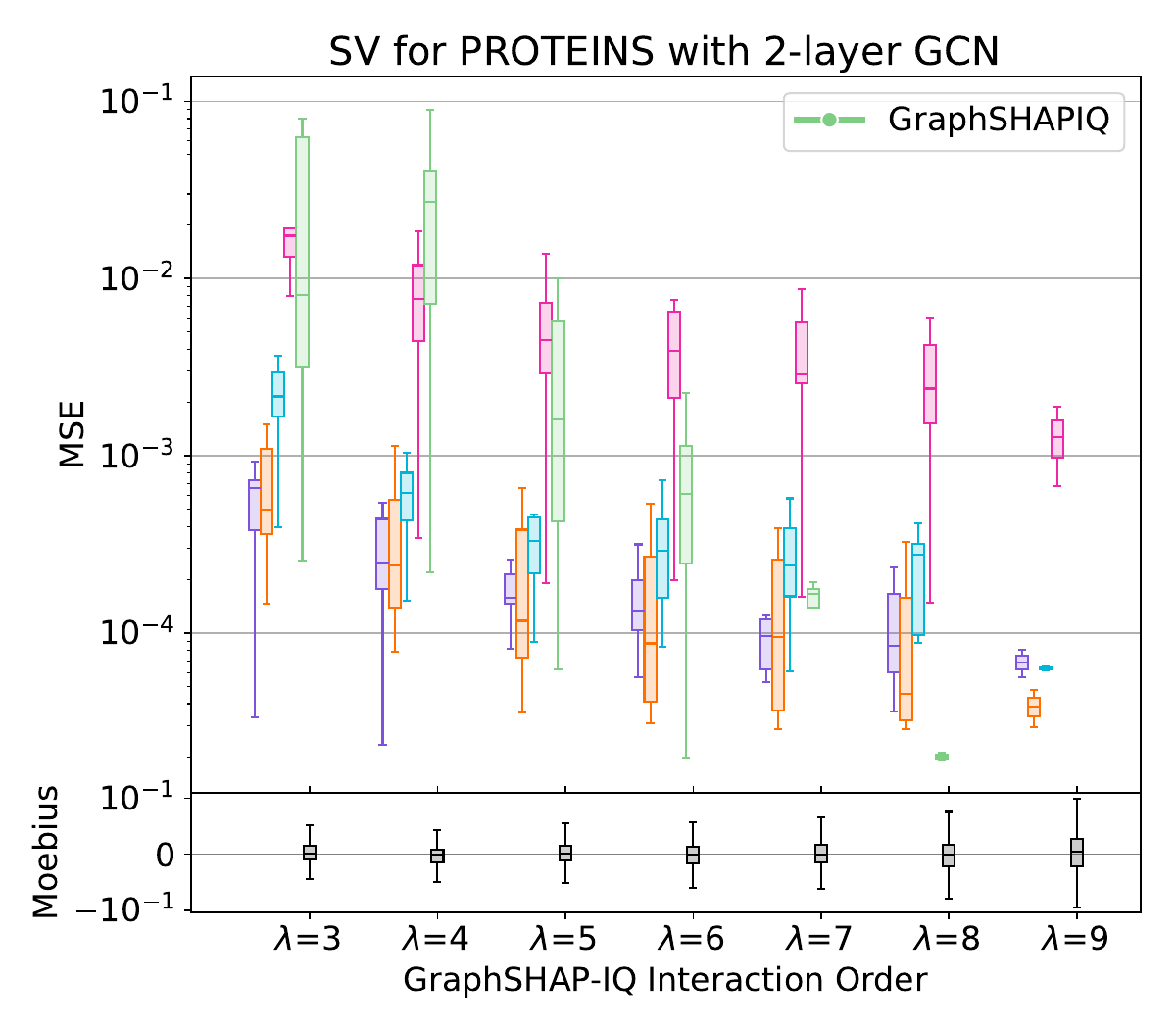}
    \end{minipage}
    \hfill
    \begin{minipage}[c]{0.325\textwidth}
    \centering
        \includegraphics[width=\textwidth]{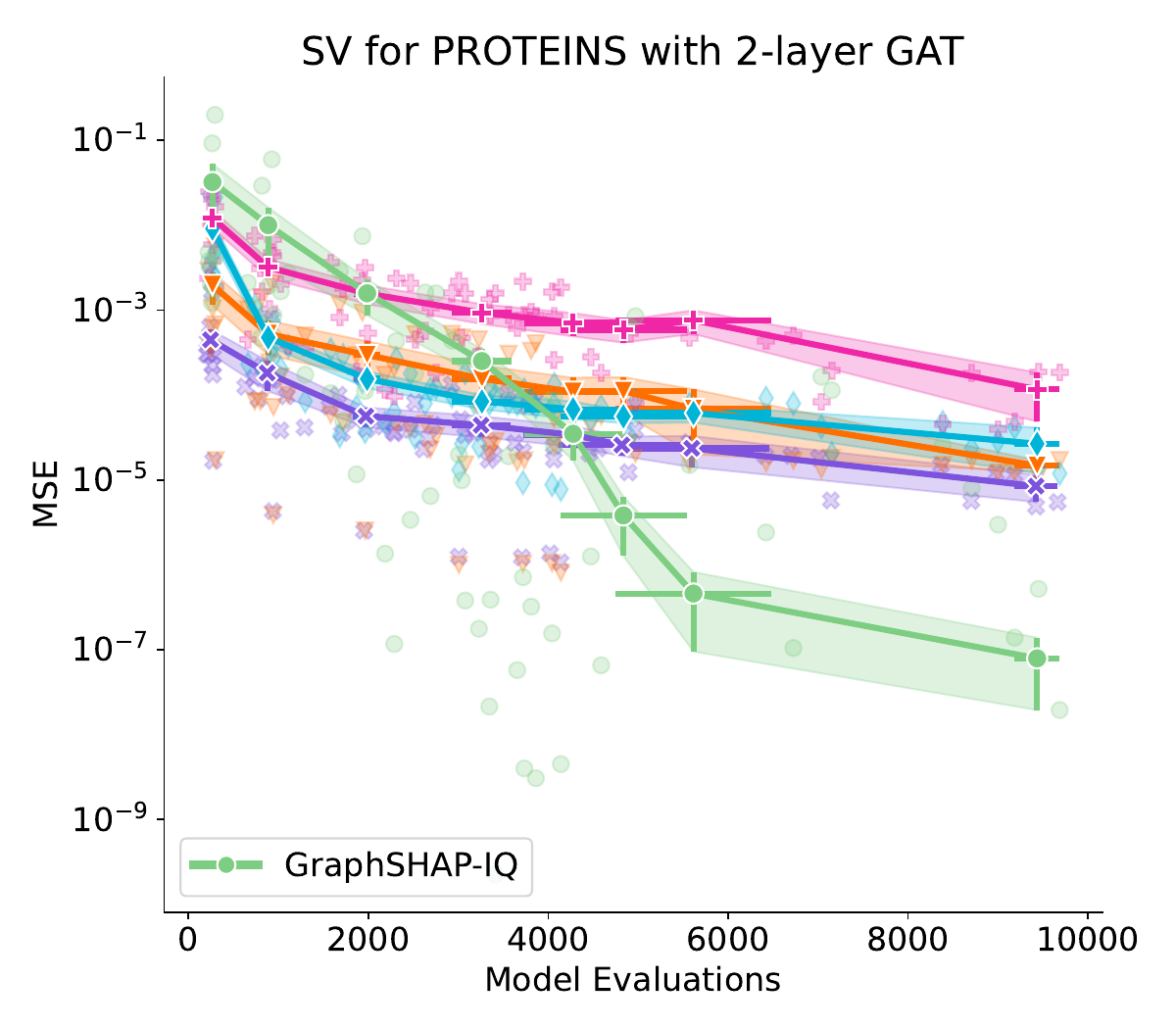}
        \includegraphics[width=\textwidth]{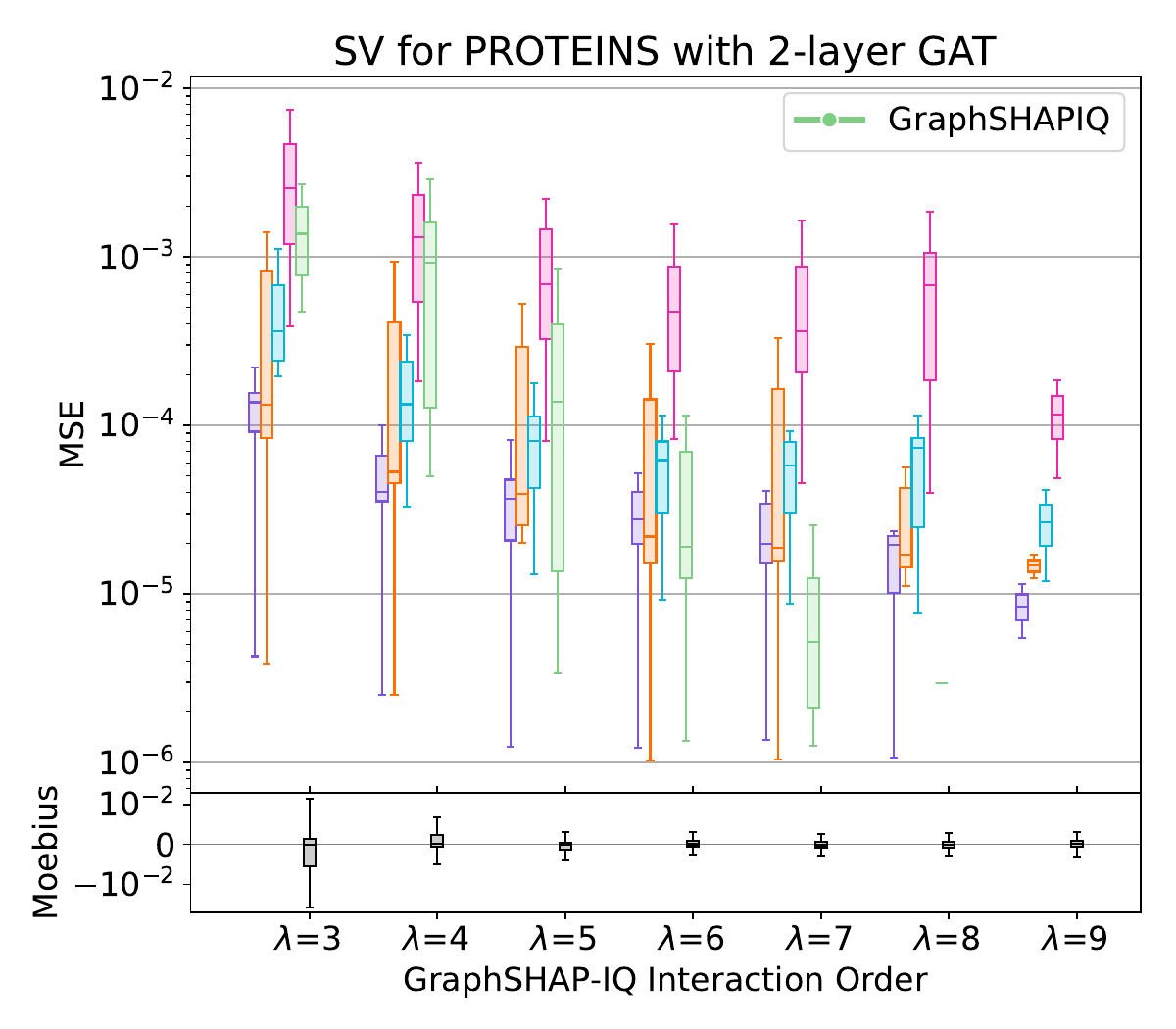}
    \end{minipage}
    \hfill
    \begin{minipage}[c]{0.325\textwidth}
    \centering
        \includegraphics[width=\textwidth]{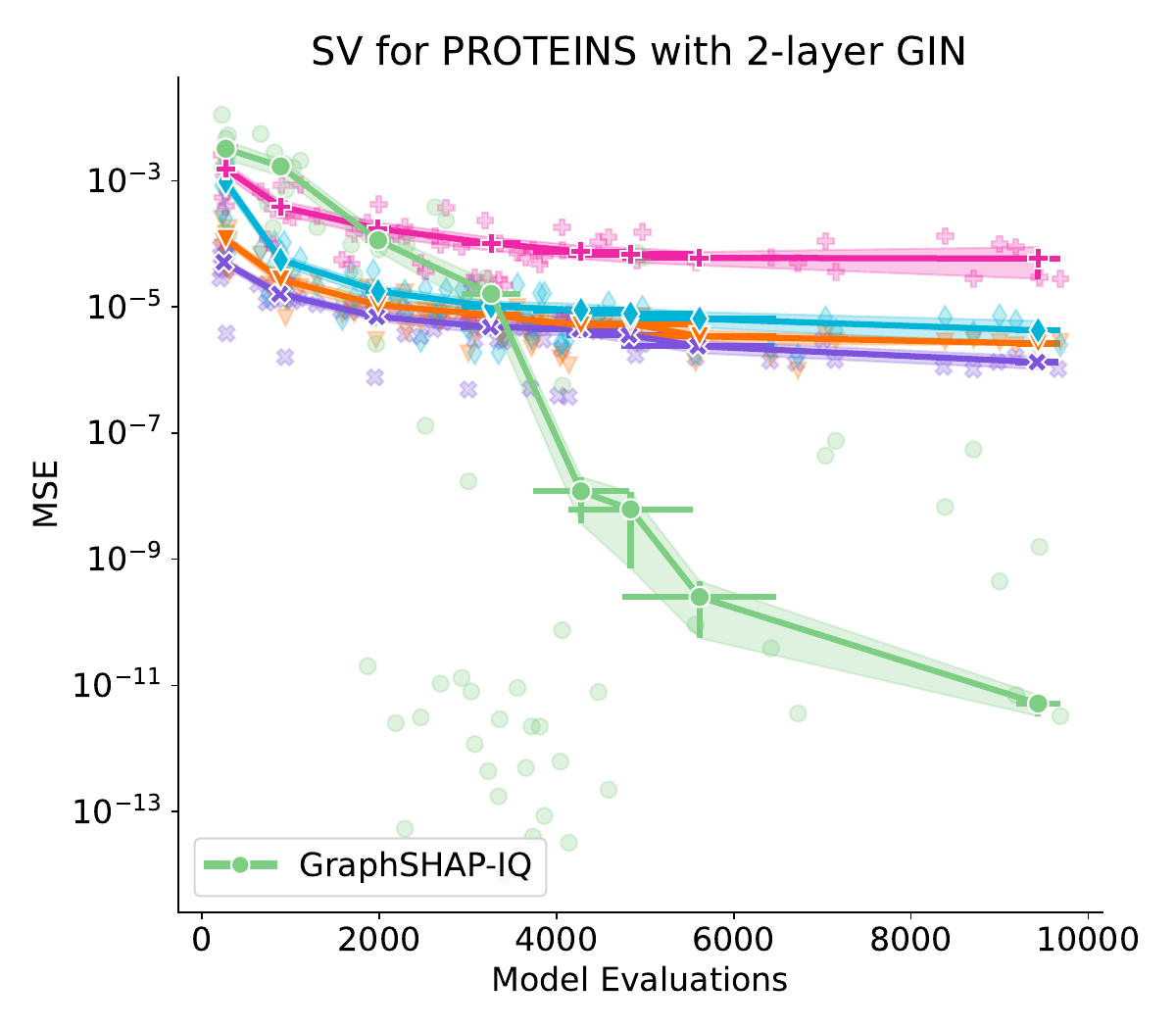}
        \includegraphics[width=\textwidth]{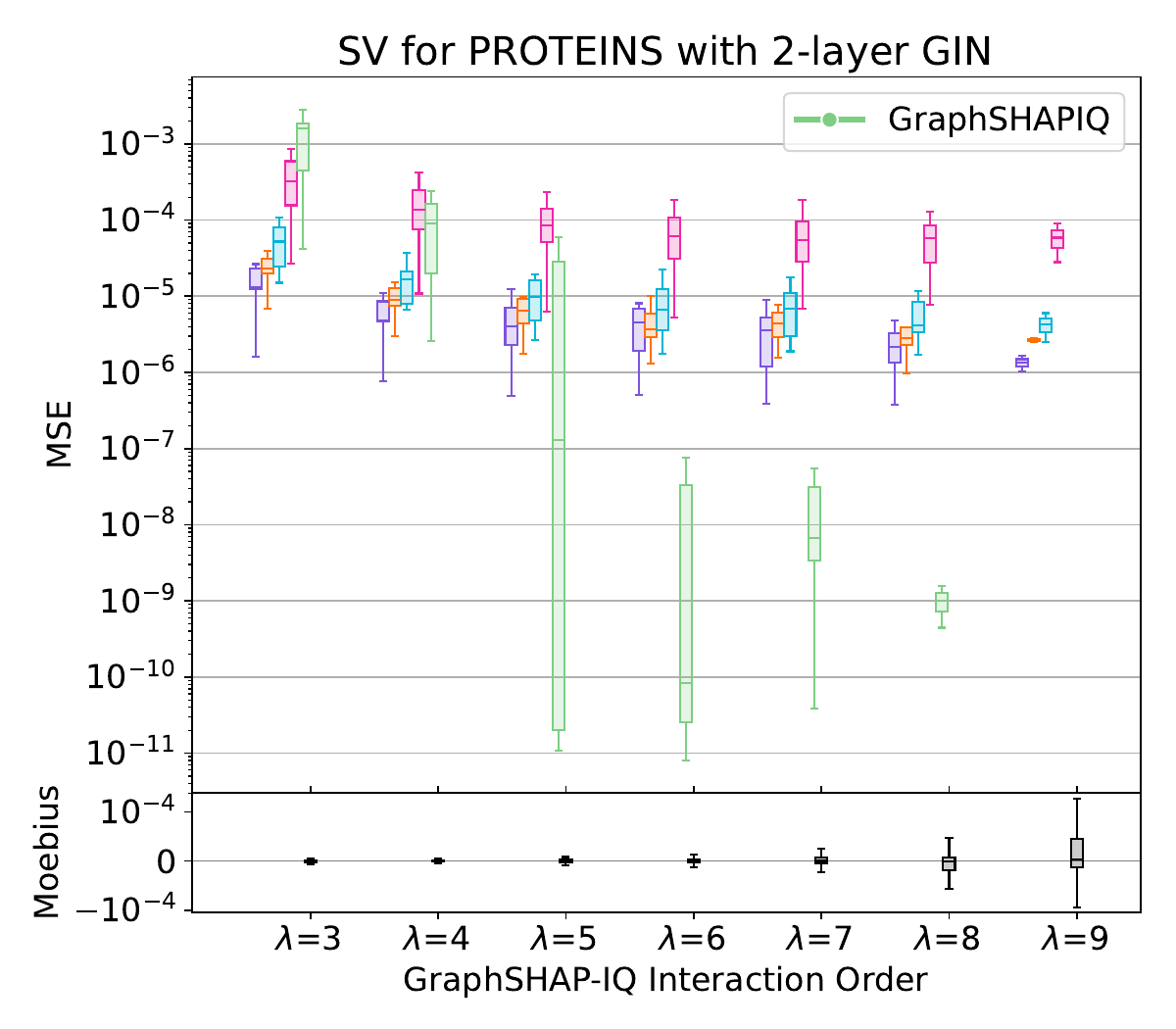}
    \end{minipage}
    \caption{Comparison of GraphSHAP-IQ's approximation quality with model-agnostic baselines on 10 graphs with $30 \leq n \leq 40$ nodes from the \gls*{PRT} dataset for a 2-layers \gls*{GCN} (left), \gls*{GAT} (middle) and \gls*{GIN} (right). The top row presents the MSE for each estimation (dots) and averaged over $\lambda$ (line) with the standard error of the mean (confidence band). The bottom row shows the same information including the \glspl*{MI} as box plots for each $\lambda$.}
    \label{appx_fig_approx_sv}
\end{figure}

\begin{figure}
    \centering
    \begin{minipage}[c]{0.2\textwidth}
    \textbf{SV-Graph:}
    \end{minipage}
    \hfill
    \begin{minipage}[c]{0.38\textwidth}
    \centering
        \includegraphics[width=\textwidth]{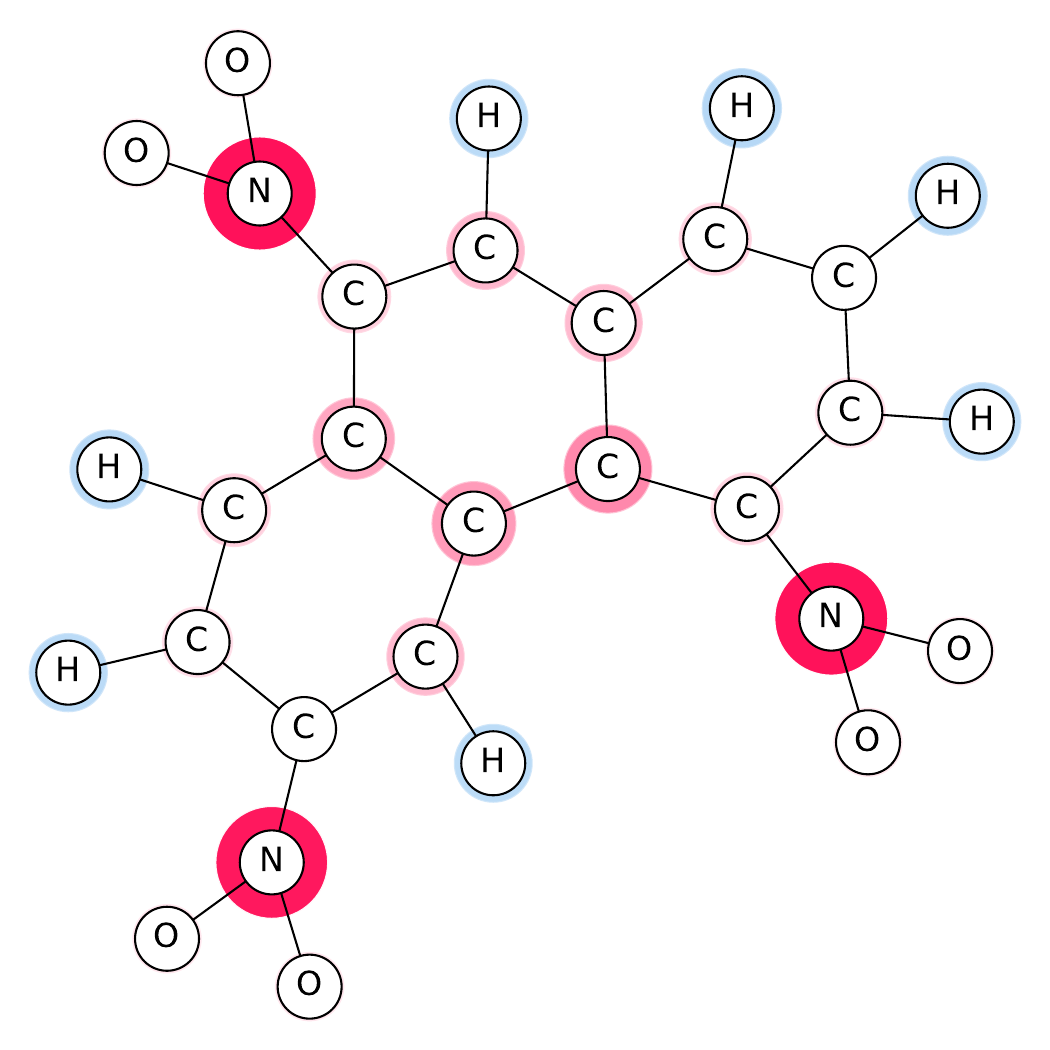}        
    \end{minipage}
    \hfill
    \begin{minipage}[c]{0.38\textwidth}
    \centering
        \includegraphics[width=\textwidth]{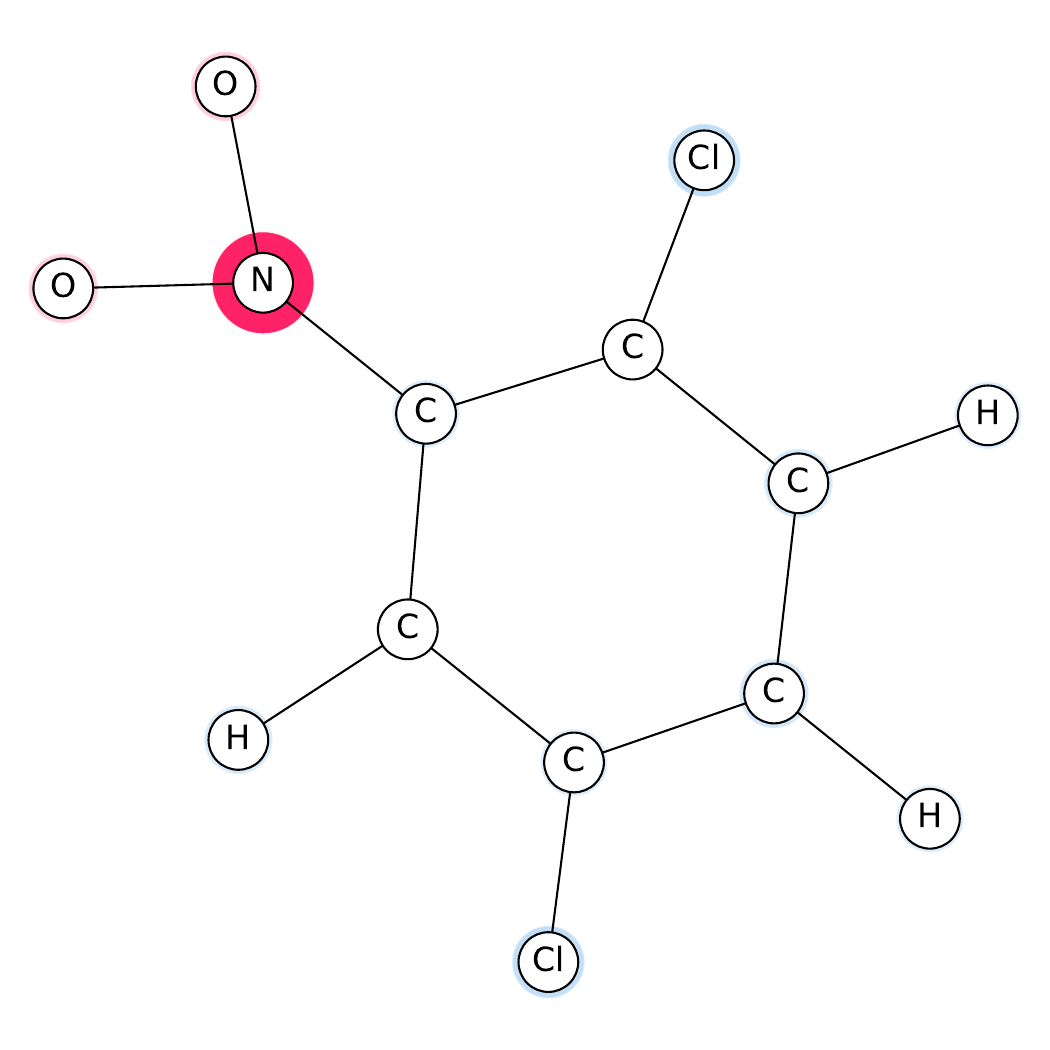}
    \end{minipage}
    \\
    \begin{minipage}[c]{0.2\textwidth}
    \textbf{$2$-SII-Graph:}
    \end{minipage}
    \hfill
    \begin{minipage}[c]{0.38\textwidth}
    \centering
        \includegraphics[width=\textwidth]{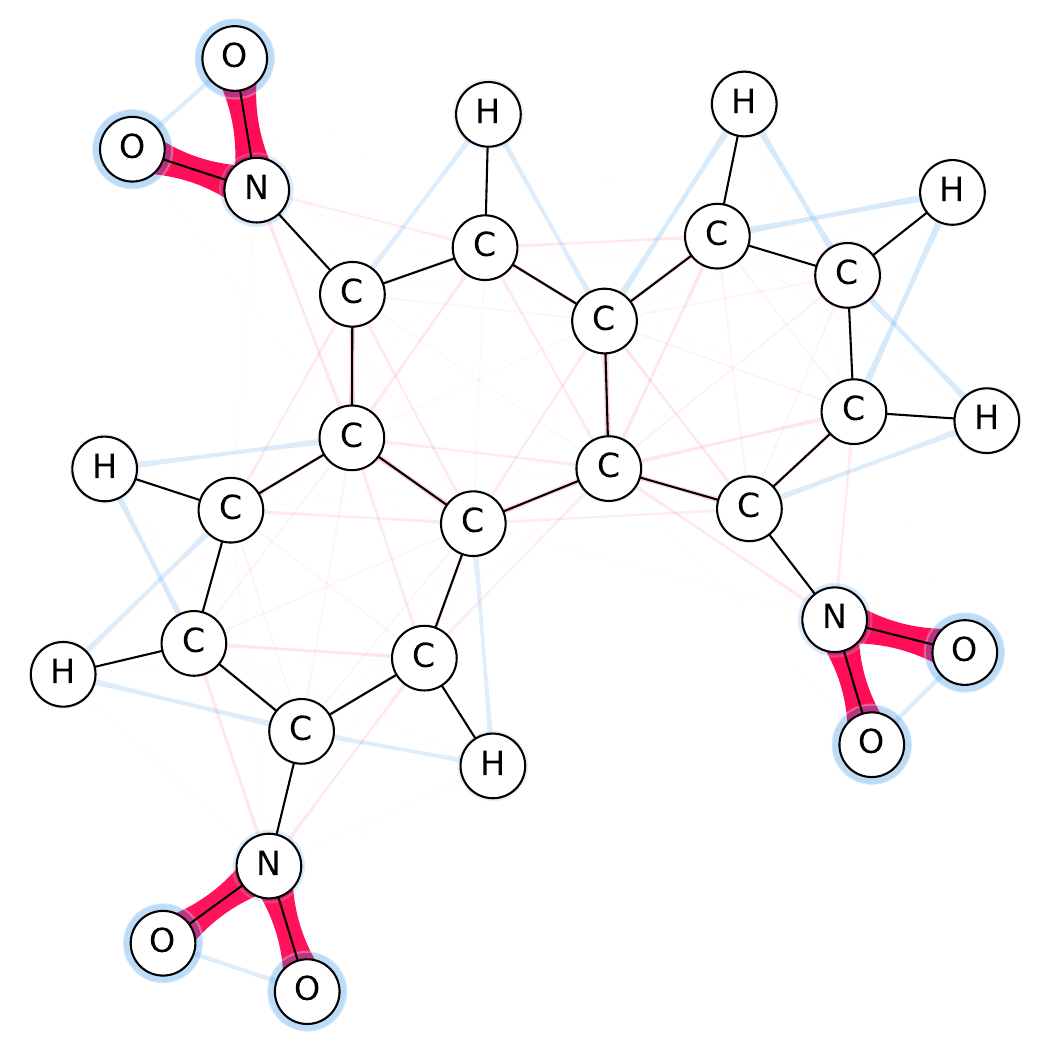}        
    \end{minipage}
    \hfill
    \begin{minipage}[c]{0.38\textwidth}
    \centering
        \includegraphics[width=\textwidth]{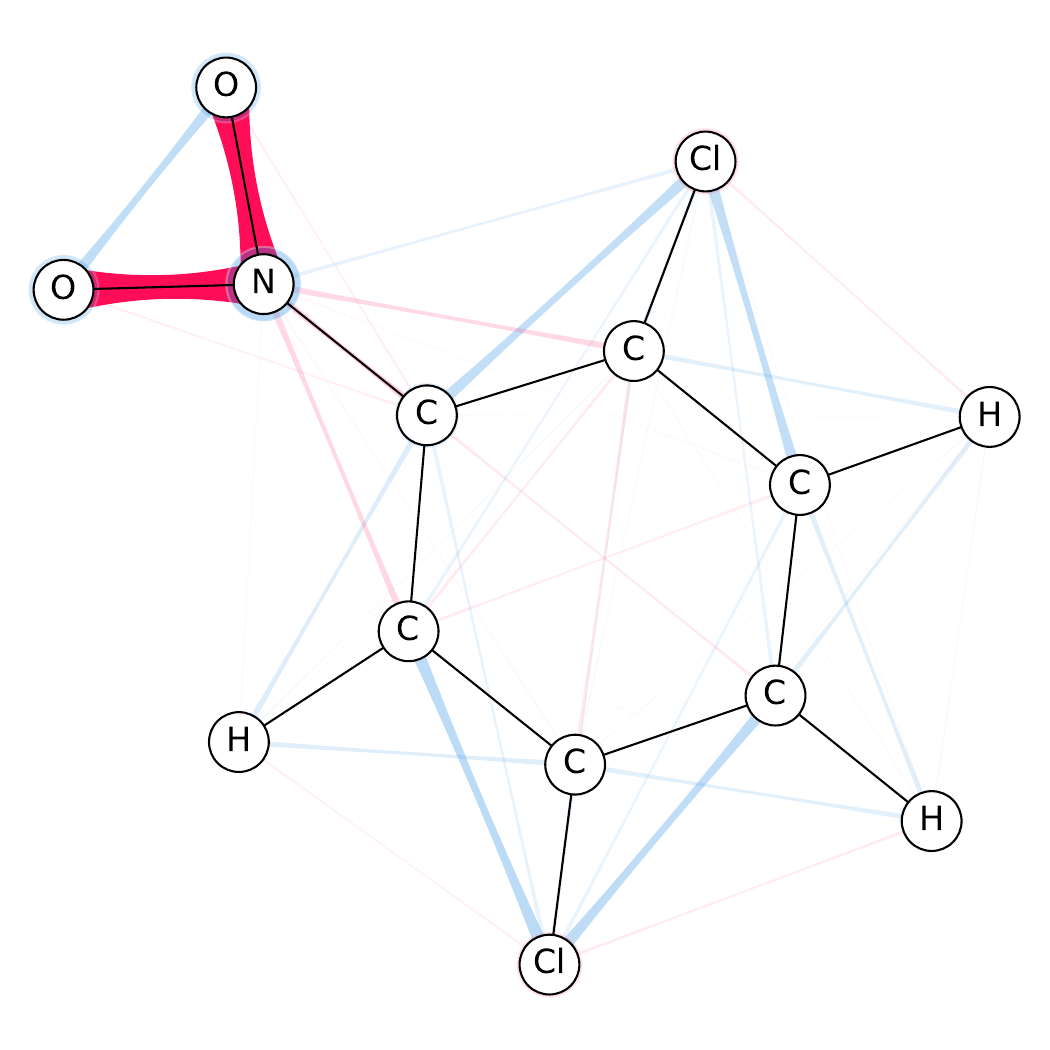}
    \end{minipage}
    \\
    \begin{minipage}[c]{0.2\textwidth}
    \textbf{3-SII-Graph:}
    \end{minipage}
    \hfill
    \begin{minipage}[c]{0.38\textwidth}
    \centering
        \includegraphics[width=\textwidth]{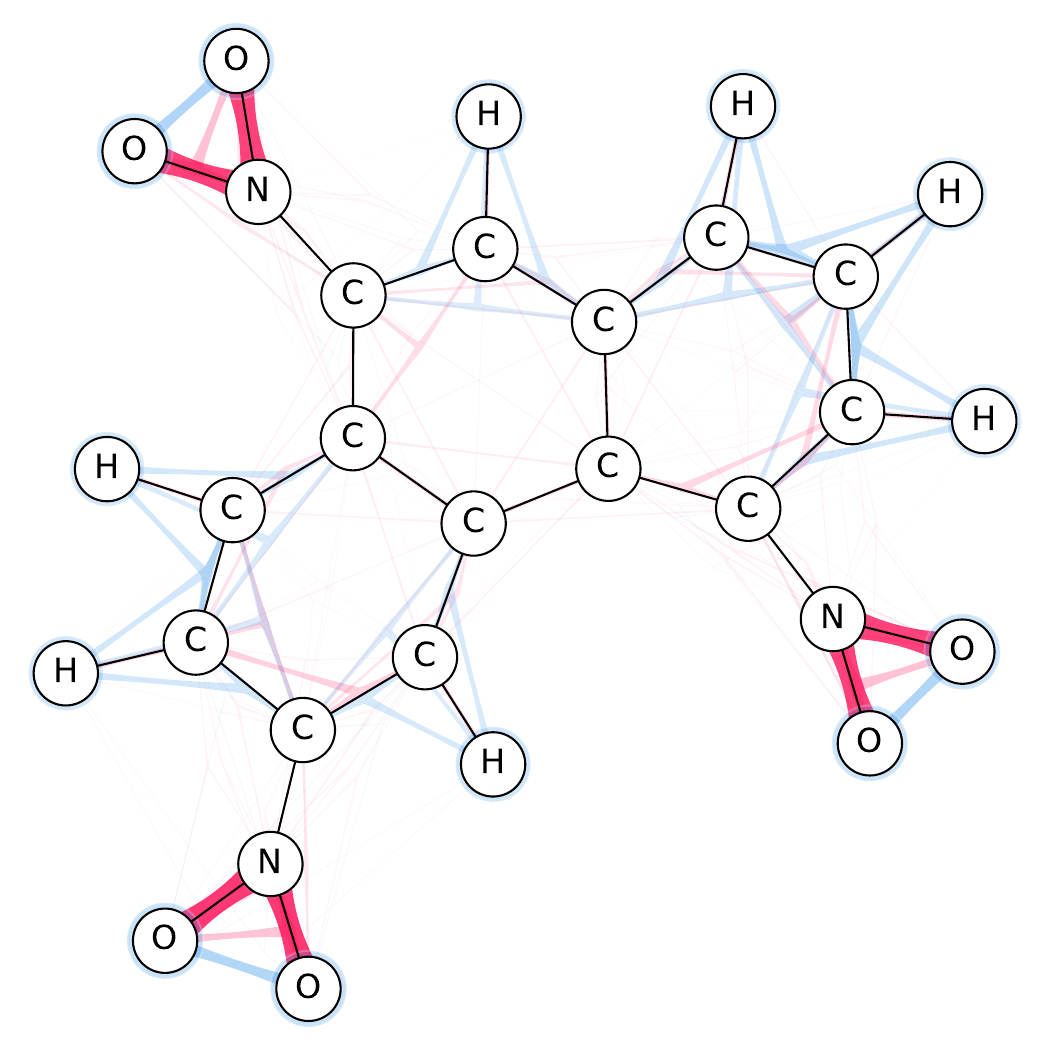}        
    \end{minipage}
    \hfill
    \begin{minipage}[c]{0.38\textwidth}
    \centering
        \includegraphics[width=\textwidth]{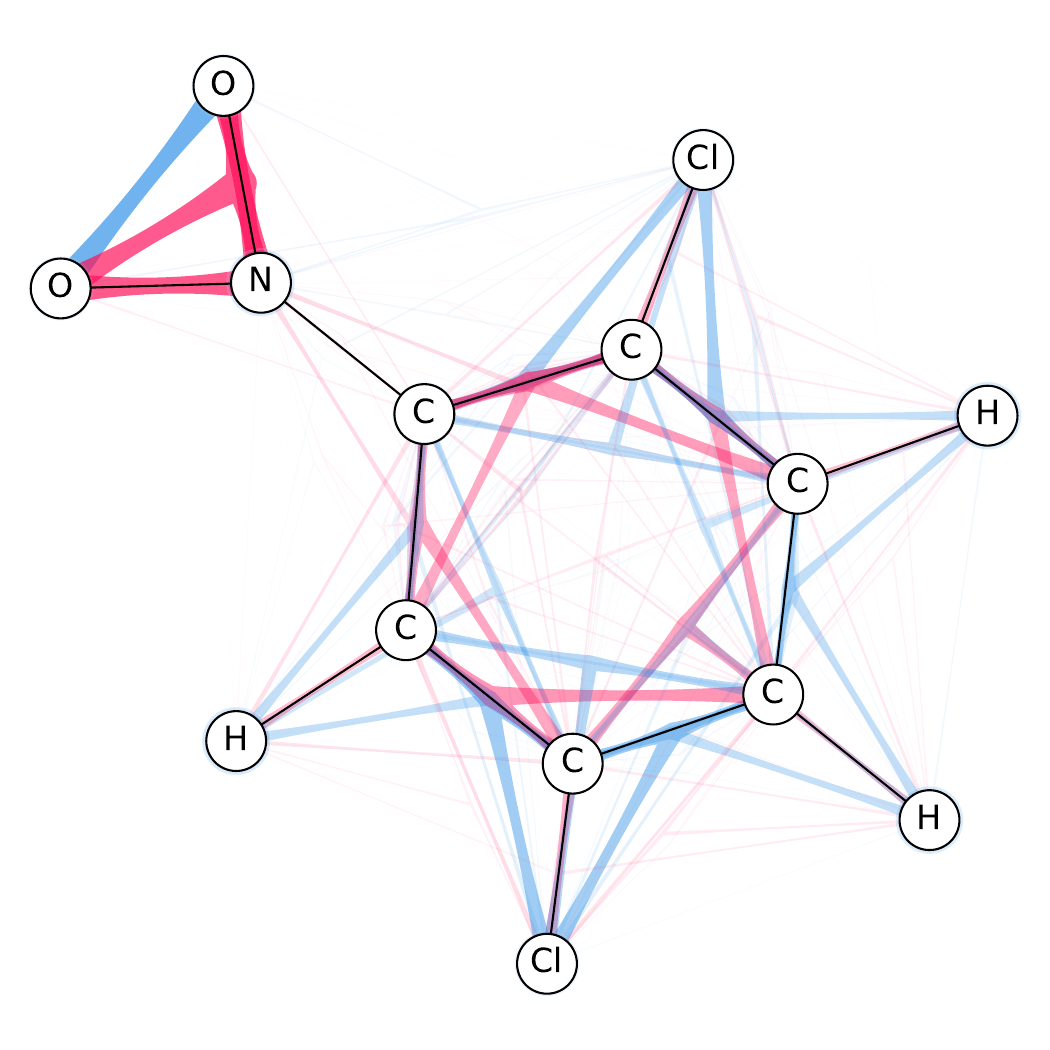}
    \end{minipage}
    \\
    \begin{minipage}[c]{0.2\textwidth}
    \textbf{MI-Graph:}
    \end{minipage}
    \hfill
    \begin{minipage}[c]{0.38\textwidth}
    \centering
        \includegraphics[width=\textwidth]{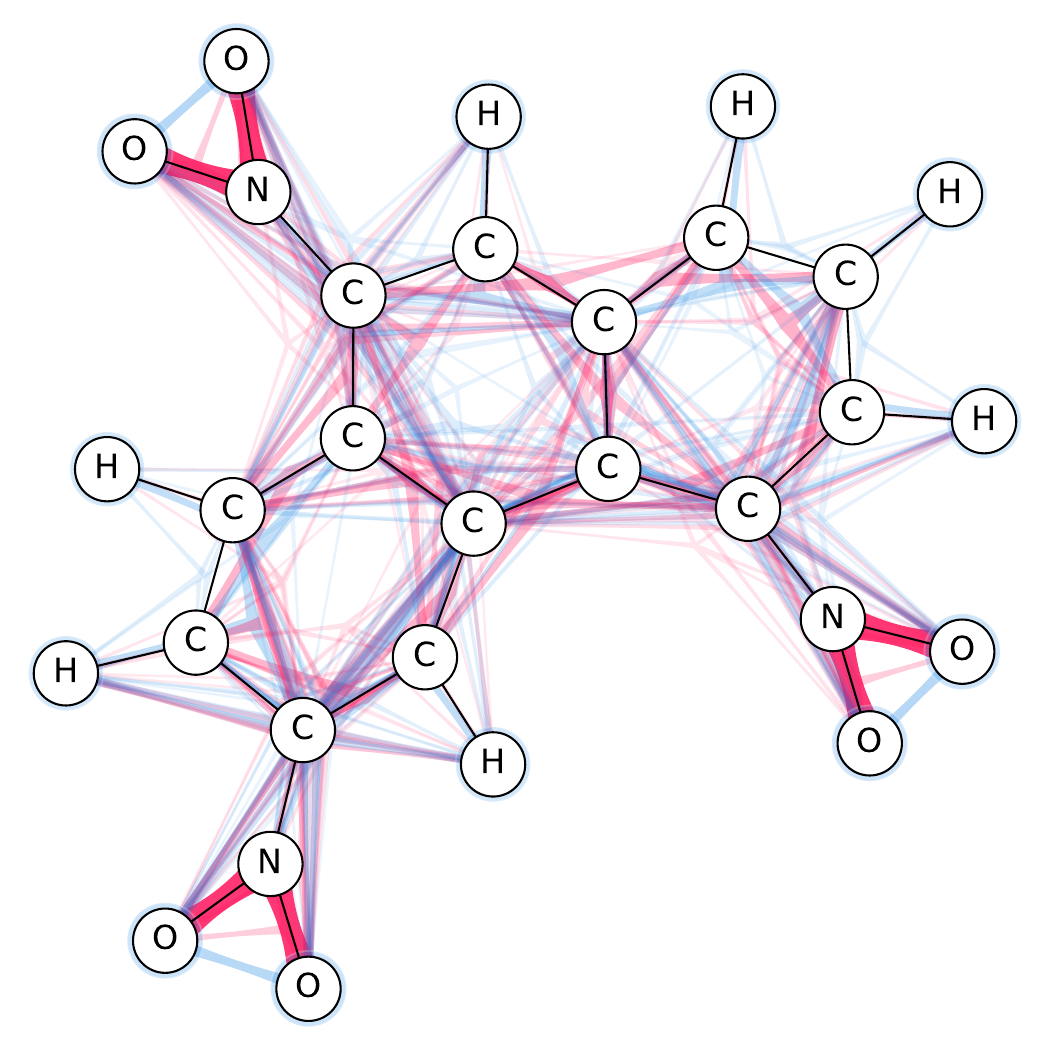}        
    \end{minipage}
    \hfill
    \begin{minipage}[c]{0.38\textwidth}
    \centering
        \includegraphics[width=\textwidth]{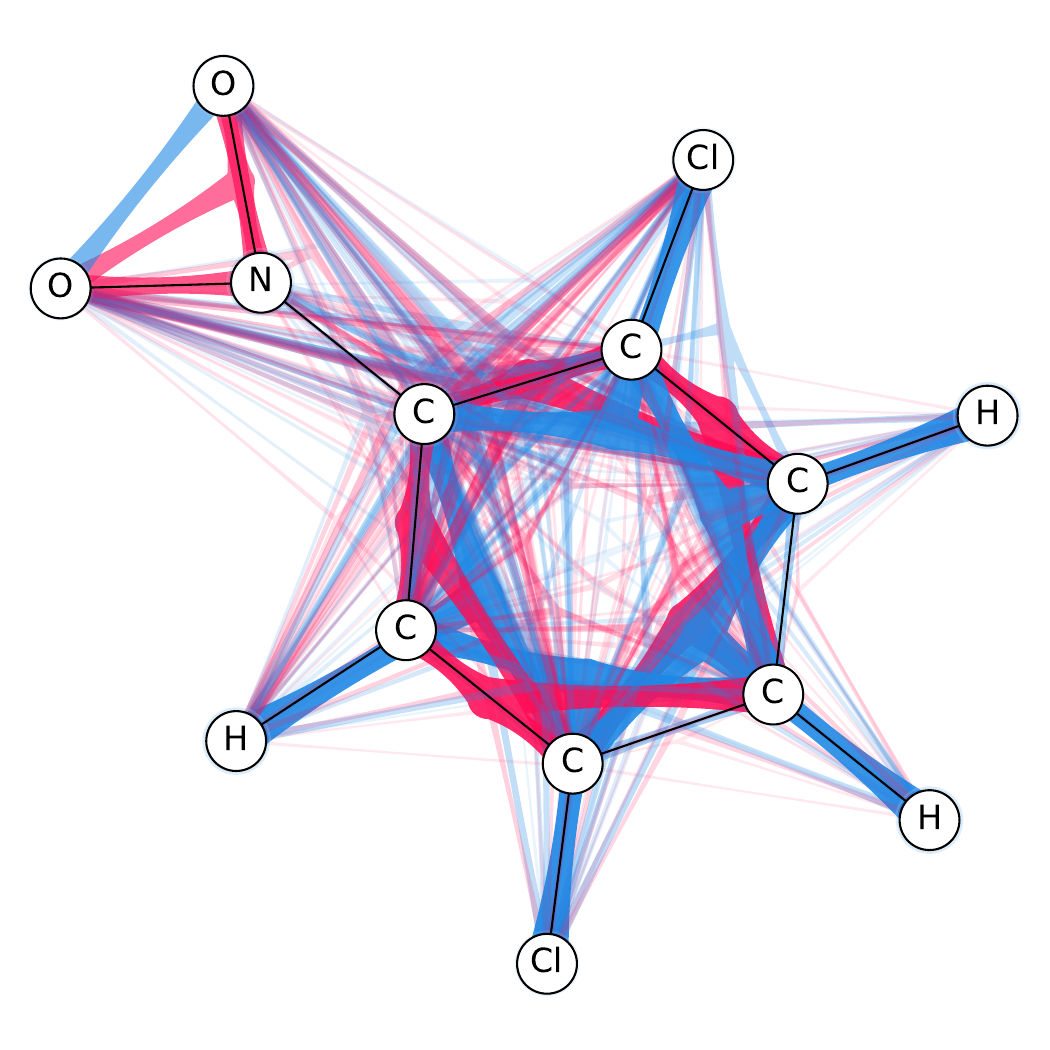}
    \end{minipage}
    \caption{Additional \gls*{SI}-Graphs for molecule structures of the \emph{MTG} dataset; molecule \emph{71} with $30$ atoms (left) and molecule \emph{189} with $14$ atoms (right). The model is a 2-layer GCN.}
    \label{appx_fig_mutag_graphs}
\end{figure}

\begin{figure}
    \centering
    \begin{minipage}[c]{0.2\textwidth}
    \textbf{SV-Graph:}
    \end{minipage}
    \hfill
    \begin{minipage}[c]{0.38\textwidth}
    \centering
        \includegraphics[width=\textwidth]{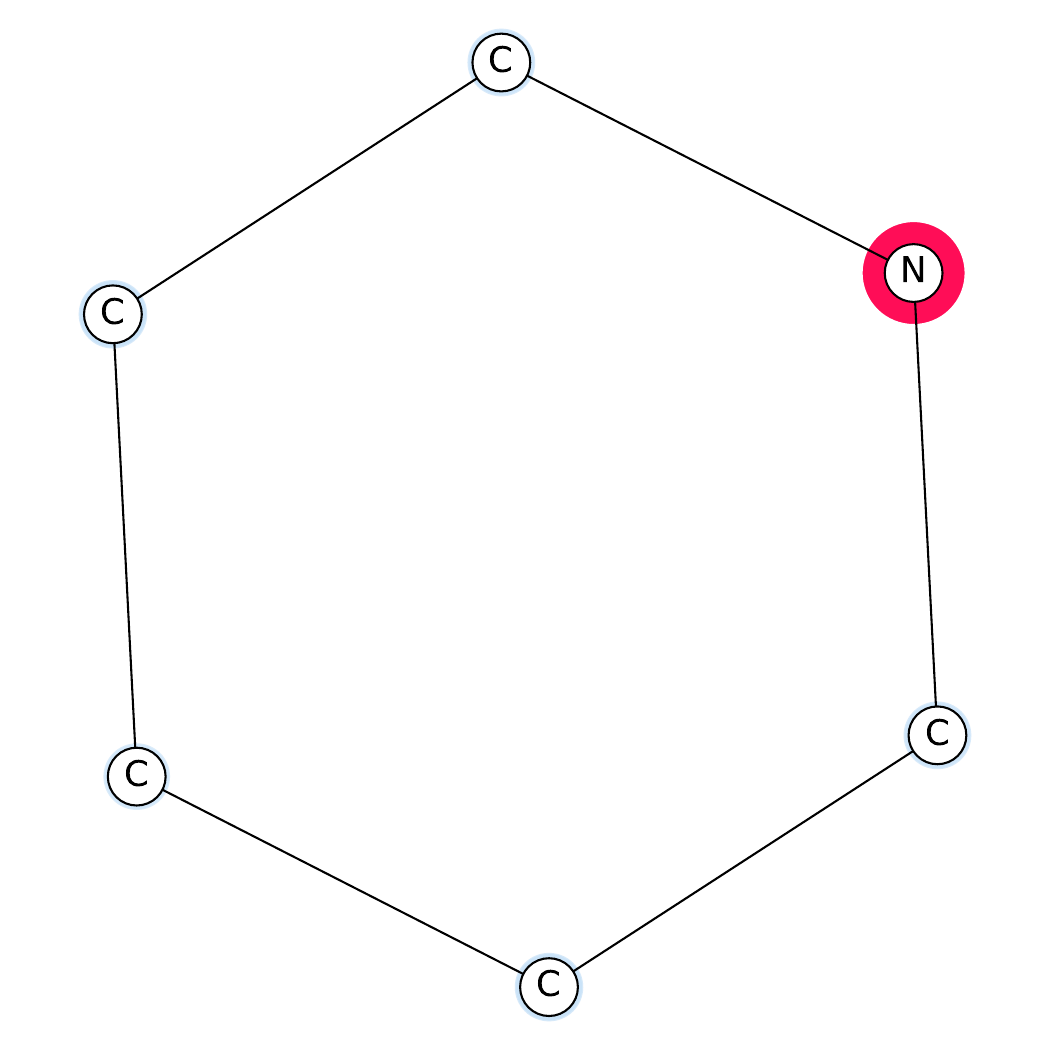}        
    \end{minipage}
    \hfill
    \begin{minipage}[c]{0.38\textwidth}
    \centering
        \includegraphics[width=\textwidth]{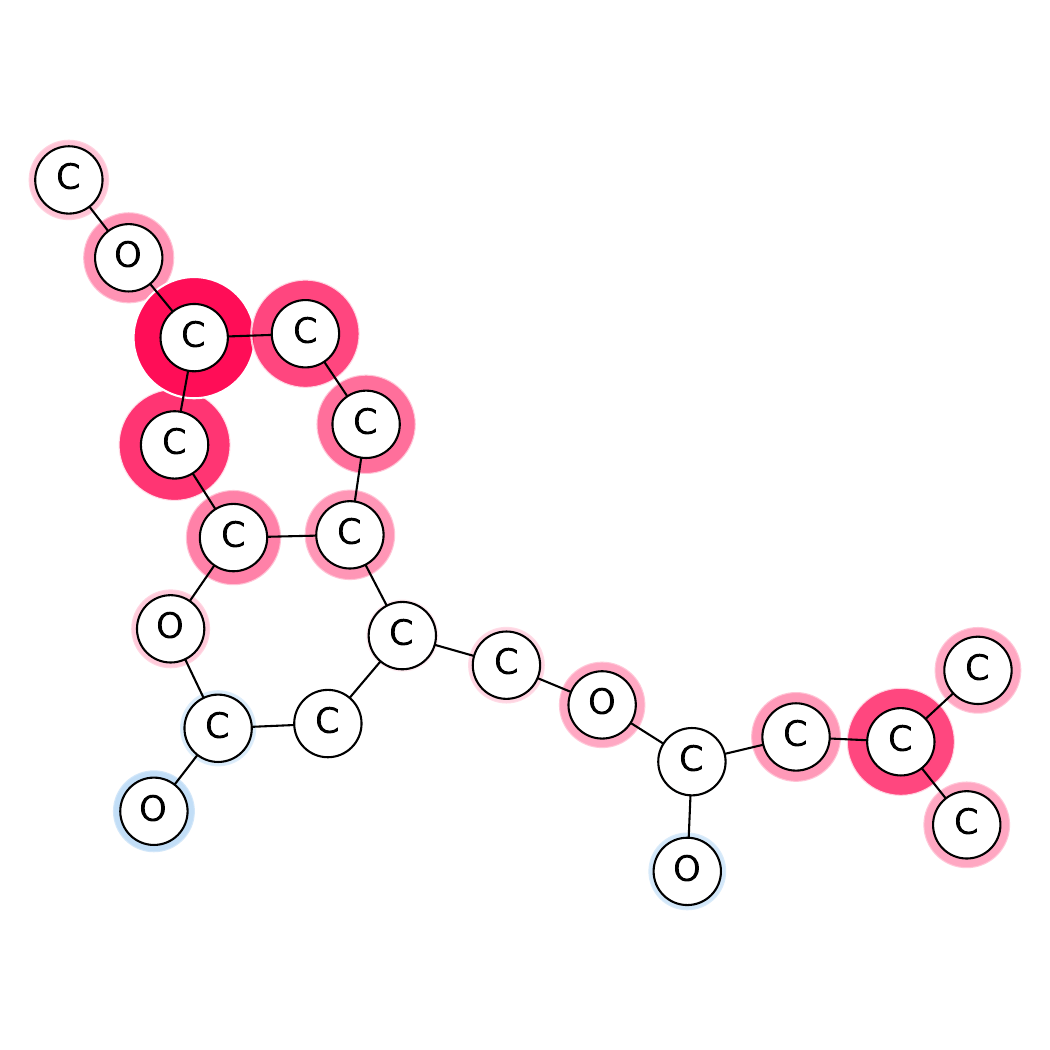}
    \end{minipage}
    \\
    \begin{minipage}[c]{0.2\textwidth}
    \raggedright
    \textbf{STII-Graph:\\(order 2)}
    \end{minipage}
    \hfill
    \begin{minipage}[c]{0.38\textwidth}
    \centering
        \includegraphics[width=\textwidth]{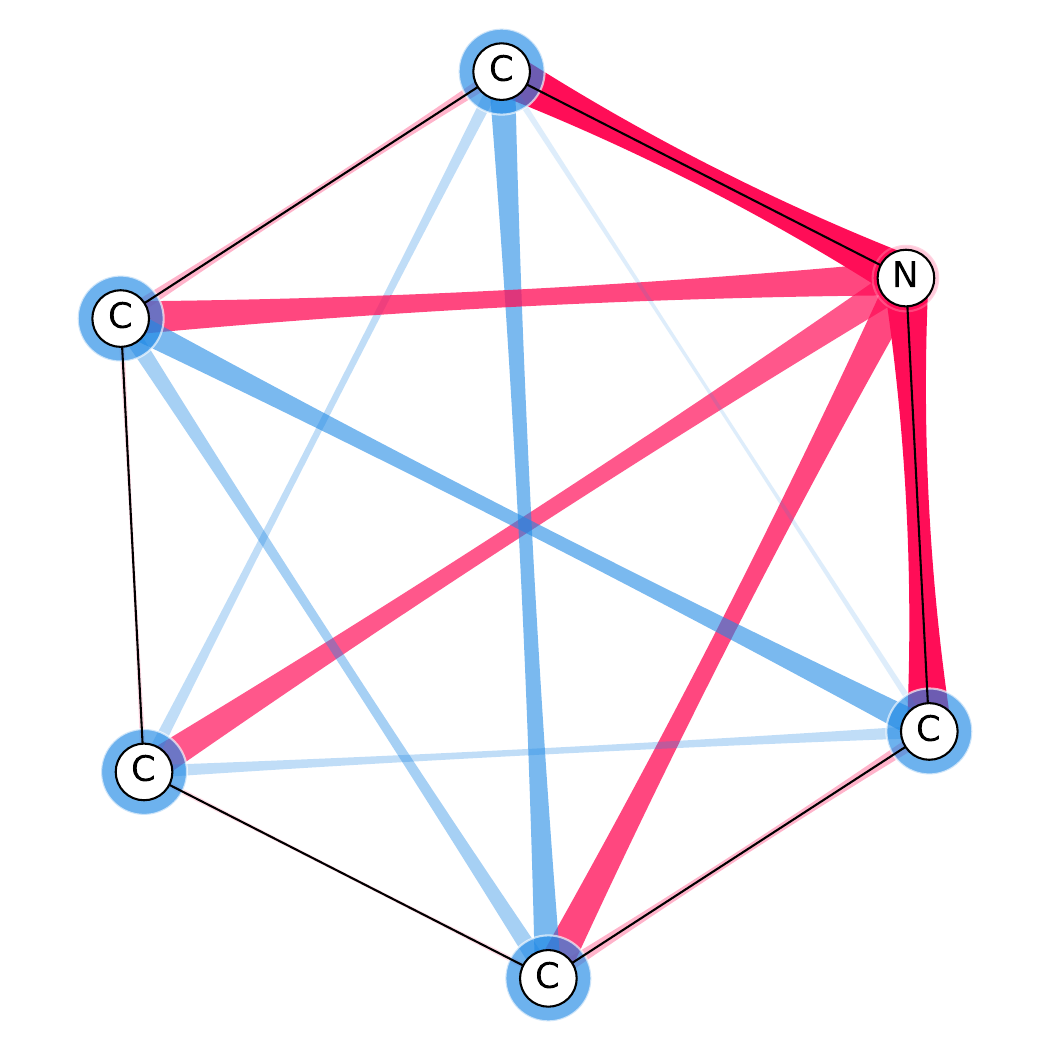}        
    \end{minipage}
    \hfill
    \begin{minipage}[c]{0.38\textwidth}
    \centering
        \includegraphics[width=\textwidth]{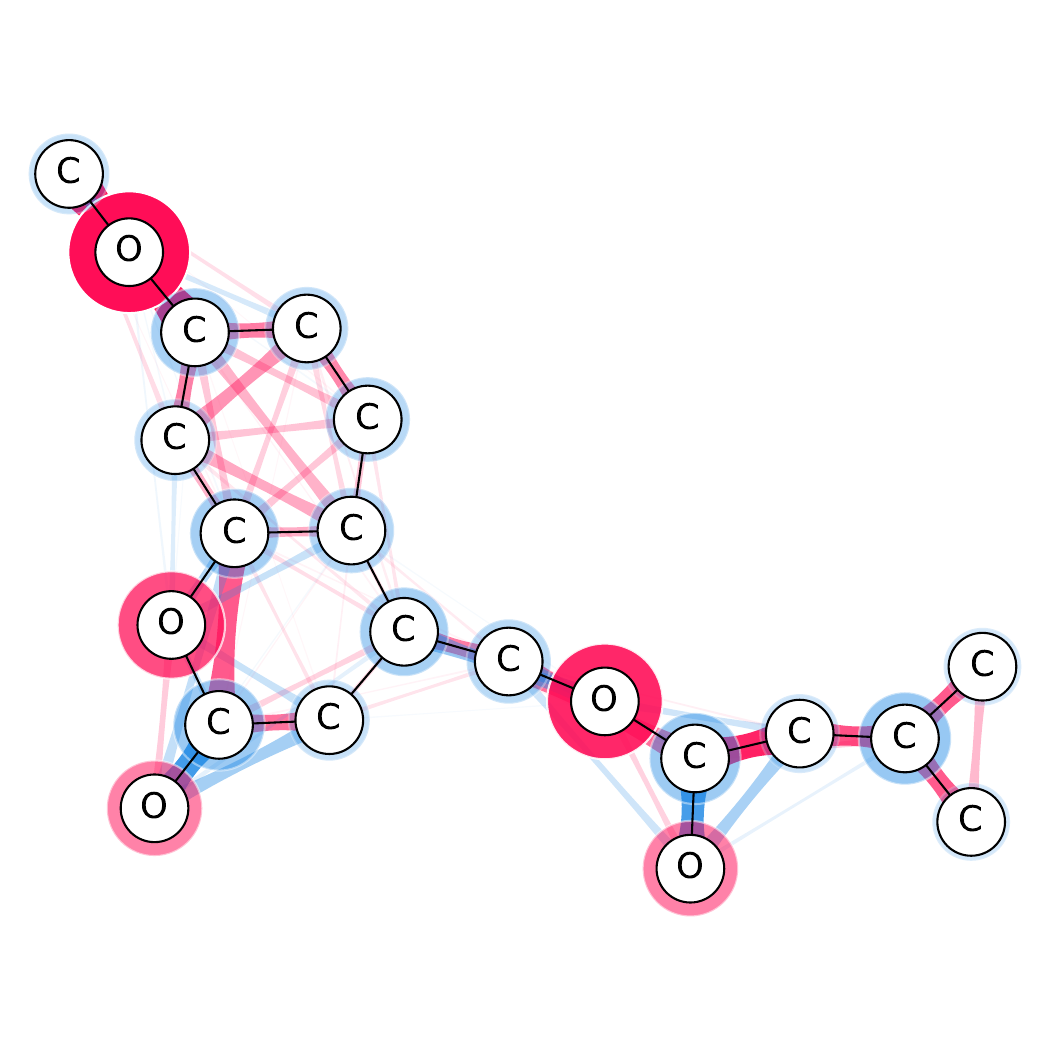}
    \end{minipage}
    \\
    \begin{minipage}[c]{0.2\textwidth}
    \textbf{$2$-SII-Graph:}
    \end{minipage}
    \hfill
    \begin{minipage}[c]{0.38\textwidth}
    \centering
        \includegraphics[width=\textwidth]{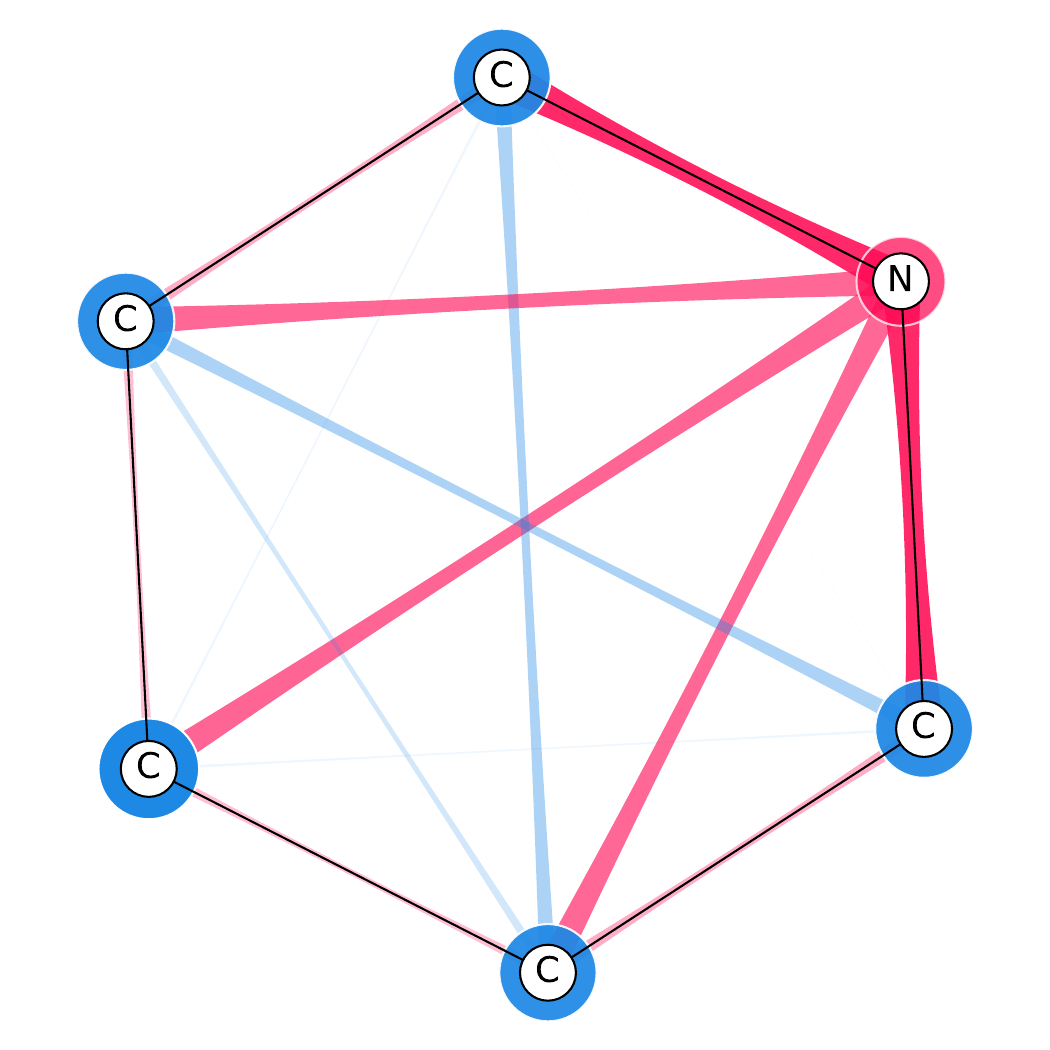}        
    \end{minipage}
    \hfill
    \begin{minipage}[c]{0.38\textwidth}
    \centering
        \includegraphics[width=\textwidth]{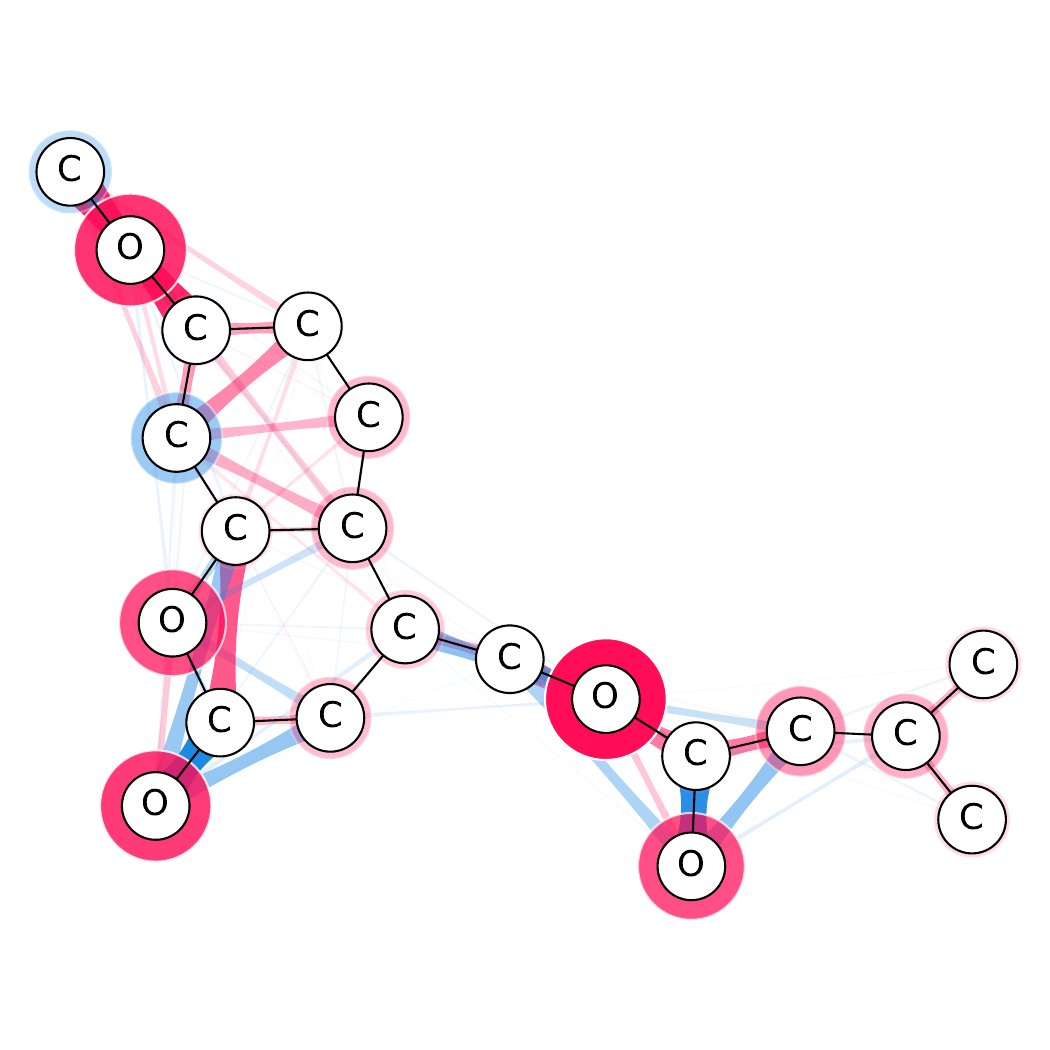}
    \end{minipage}
    \\
    \begin{minipage}[c]{0.2\textwidth}
    \textbf{$6$-SII-Graph:}
    \end{minipage}
    \hfill
    \begin{minipage}[c]{0.38\textwidth}
    \centering
        \includegraphics[width=\textwidth]{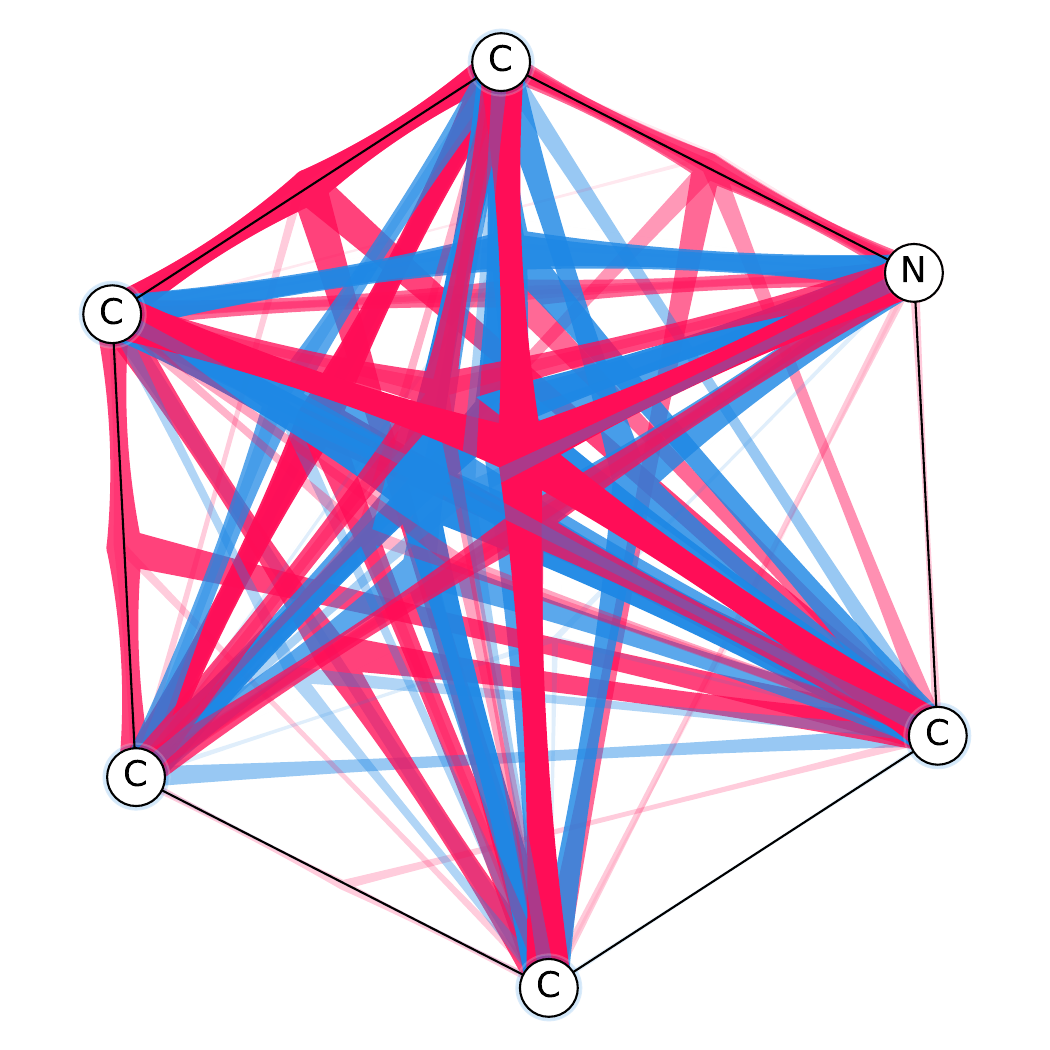}        
    \end{minipage}
    \hfill
    \begin{minipage}[c]{0.38\textwidth}
    \centering
        \includegraphics[width=\textwidth]{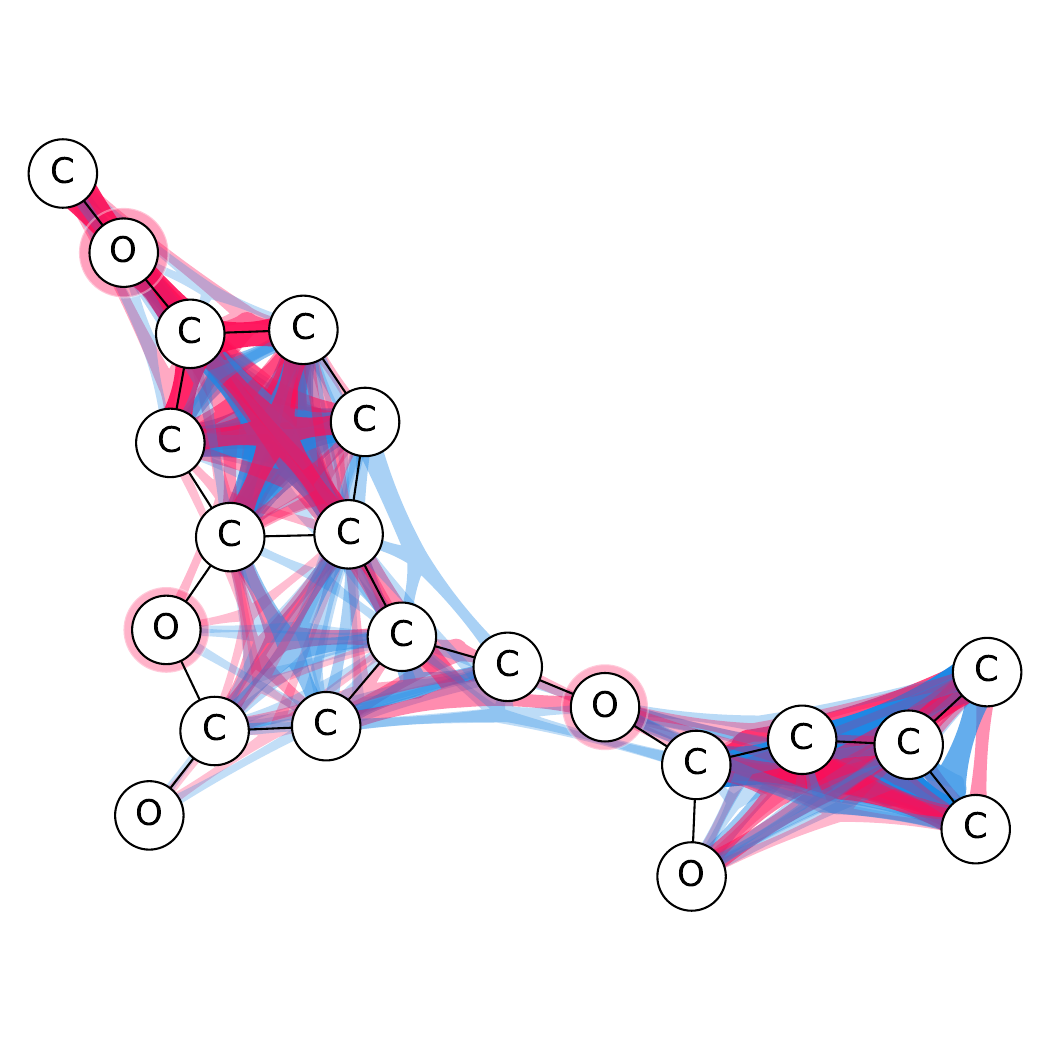}
    \end{minipage}
    \caption{Additional \gls*{SI}-Graphs for molecule structures of the \gls*{BNZ} dataset; molecule \emph{Pyridine} with $6$ atoms (left) and molecule \emph{57} with $21$ atoms (right). The model is a 3-layer GAT.}
    \label{appx_fig_benzene_graphs}
\end{figure}

\begin{figure}
    \centering
    \hspace{0.1\textwidth}
    \hfill
    \begin{minipage}[c]{0.28\textwidth}
    \centering
    \textbf{3-Layer GCN}        
    \end{minipage}
    \hfill
    \begin{minipage}[c]{0.28\textwidth}
    \centering
    \textbf{3-Layer GIN}
    \end{minipage}
    \hfill
    \begin{minipage}[c]{0.28\textwidth}
    \centering
    \textbf{3-Layer GAT}
    \end{minipage}
    \\[1em]
    \begin{minipage}[c]{0.1\textwidth}
    \centering
    \textbf{$2$-SII:}      
    \end{minipage}
    \hfill
    \begin{minipage}[c]{0.28\textwidth}
    \centering
        \includegraphics[width=\textwidth]{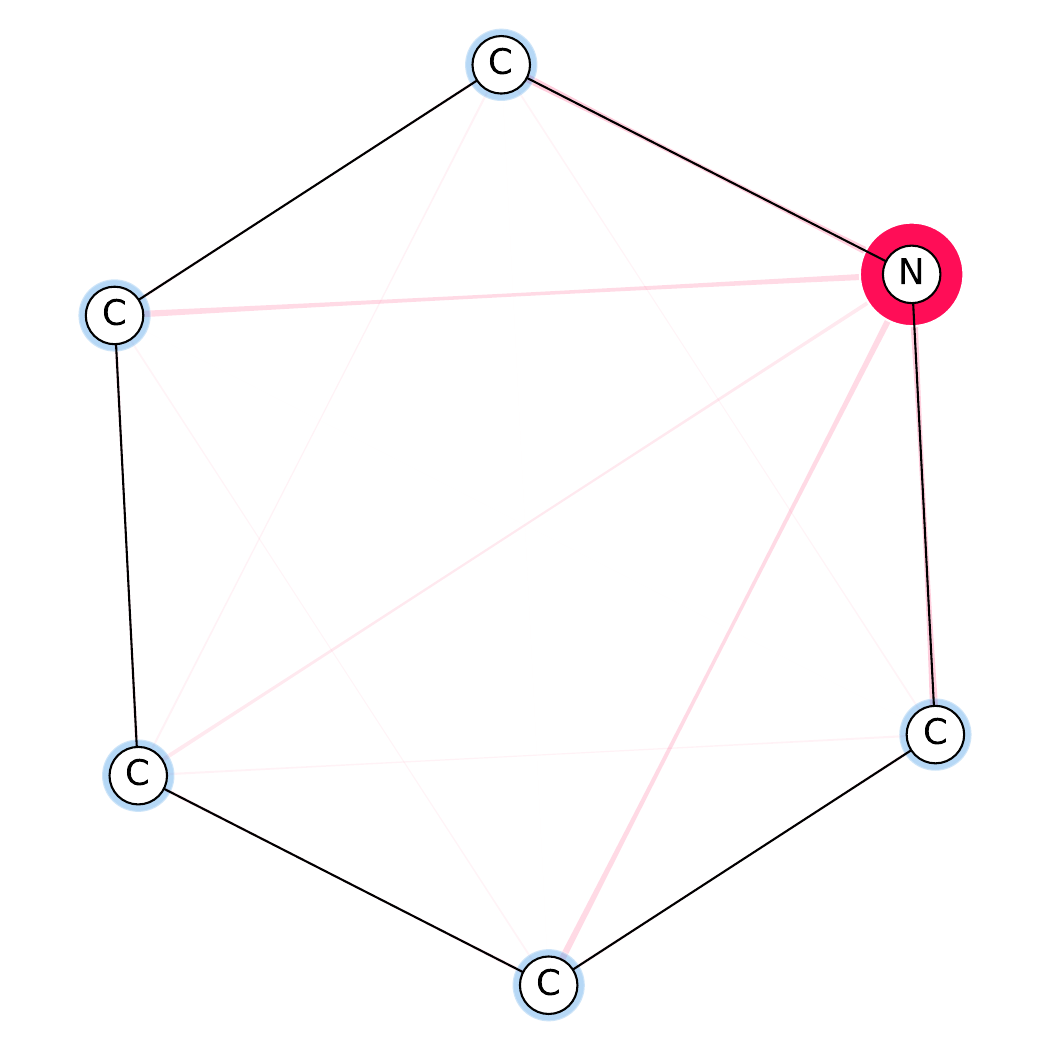}        
    \end{minipage}
    \hfill
    \begin{minipage}[c]{0.28\textwidth}
    \centering
        \includegraphics[width=\textwidth]{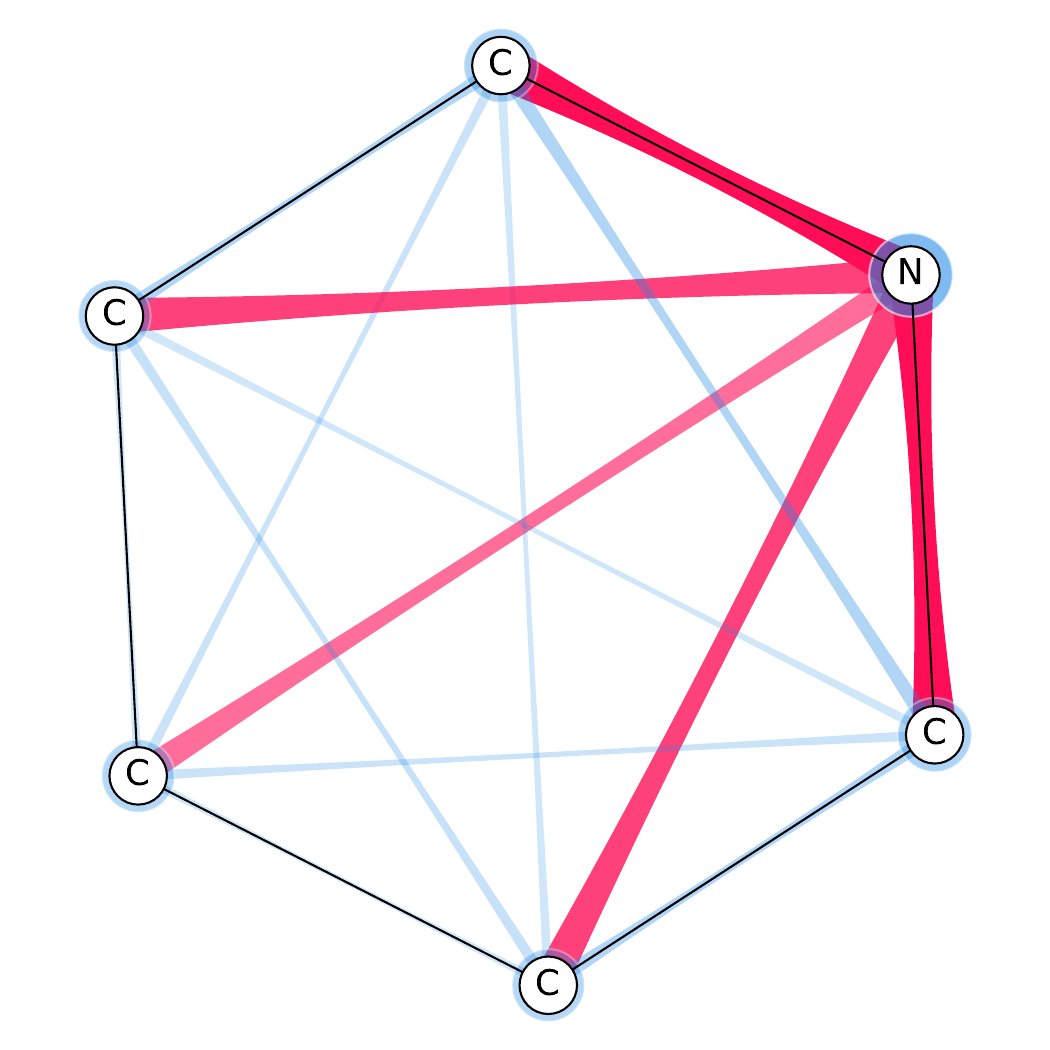}
    \end{minipage}
    \hfill
    \begin{minipage}[c]{0.28\textwidth}
    \centering
        \includegraphics[width=\textwidth]{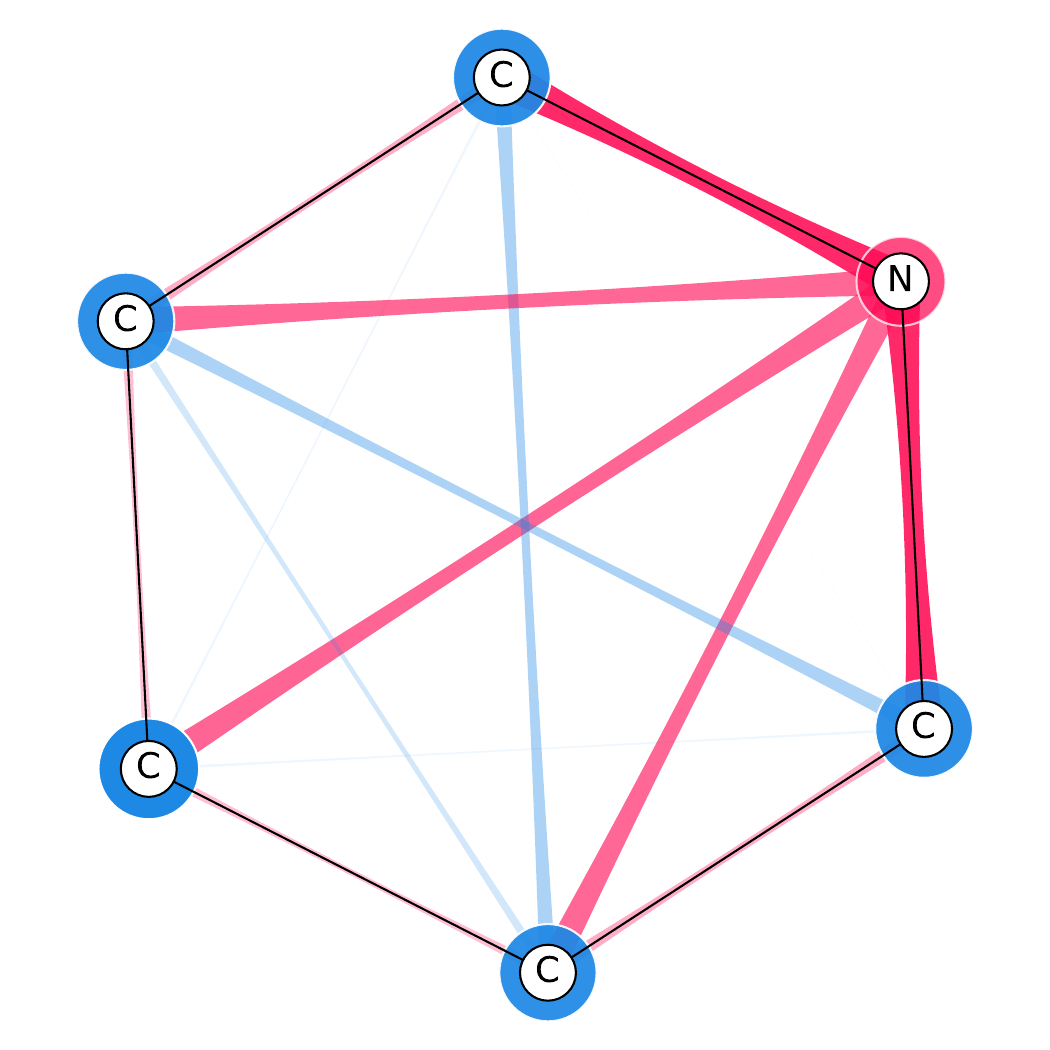}
    \end{minipage}
    \\[1em]
    \begin{minipage}[c]{0.1\textwidth}
    \centering
    \textbf{MI:}      
    \end{minipage}
    \hfill
    \begin{minipage}[c]{0.28\textwidth}
    \centering
        \includegraphics[width=\textwidth]{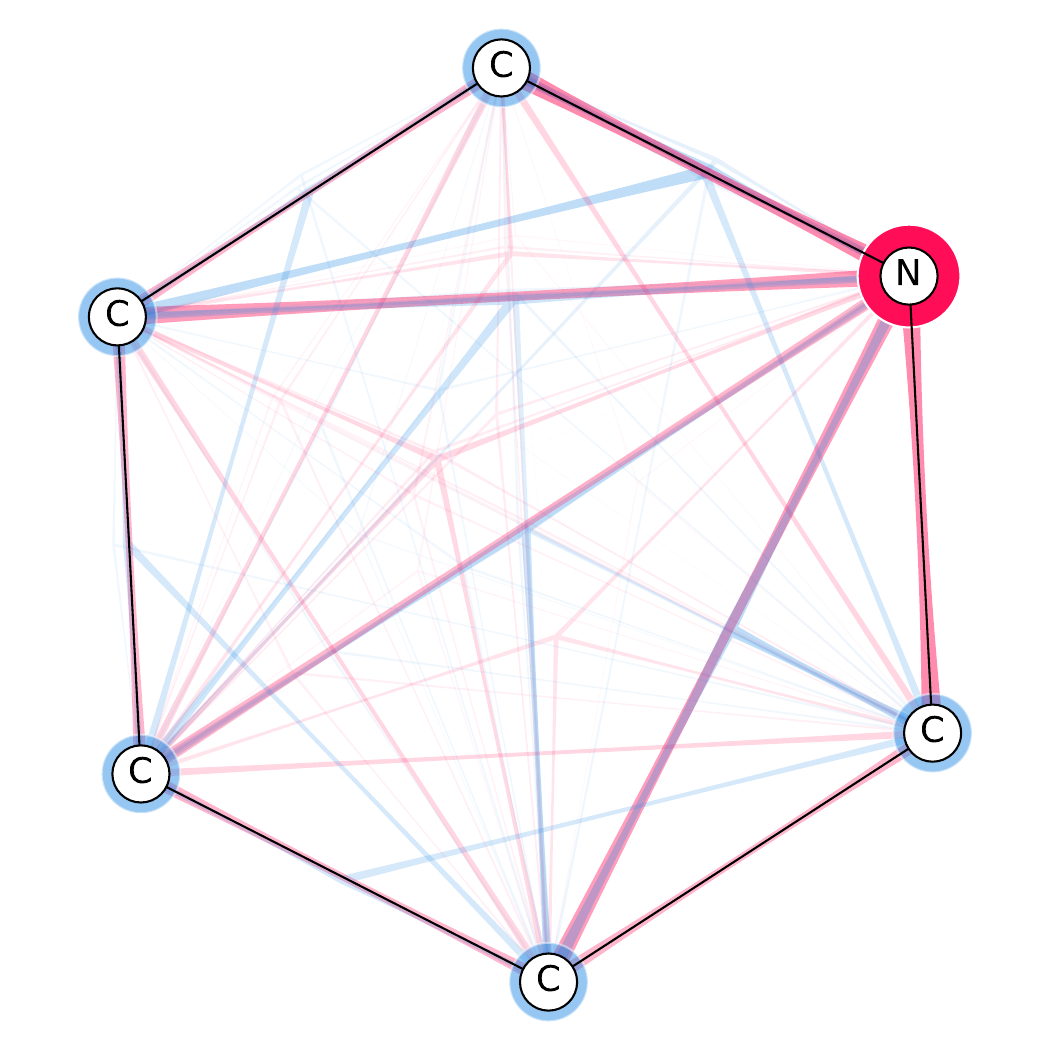}        
    \end{minipage}
    \hfill
    \begin{minipage}[c]{0.28\textwidth}
    \centering
        \includegraphics[width=\textwidth]{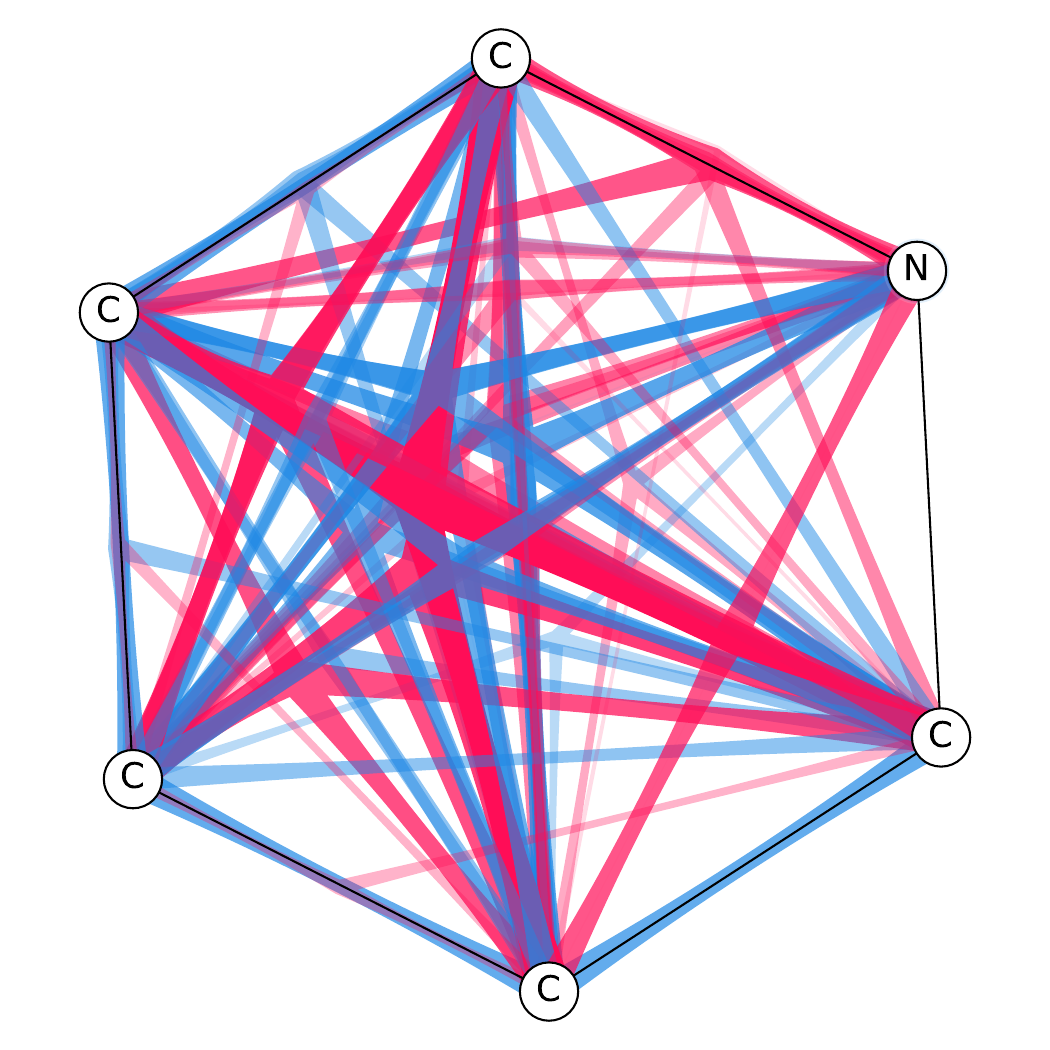}
    \end{minipage}
    \hfill
    \begin{minipage}[c]{0.28\textwidth}
    \centering
        \includegraphics[width=\textwidth]{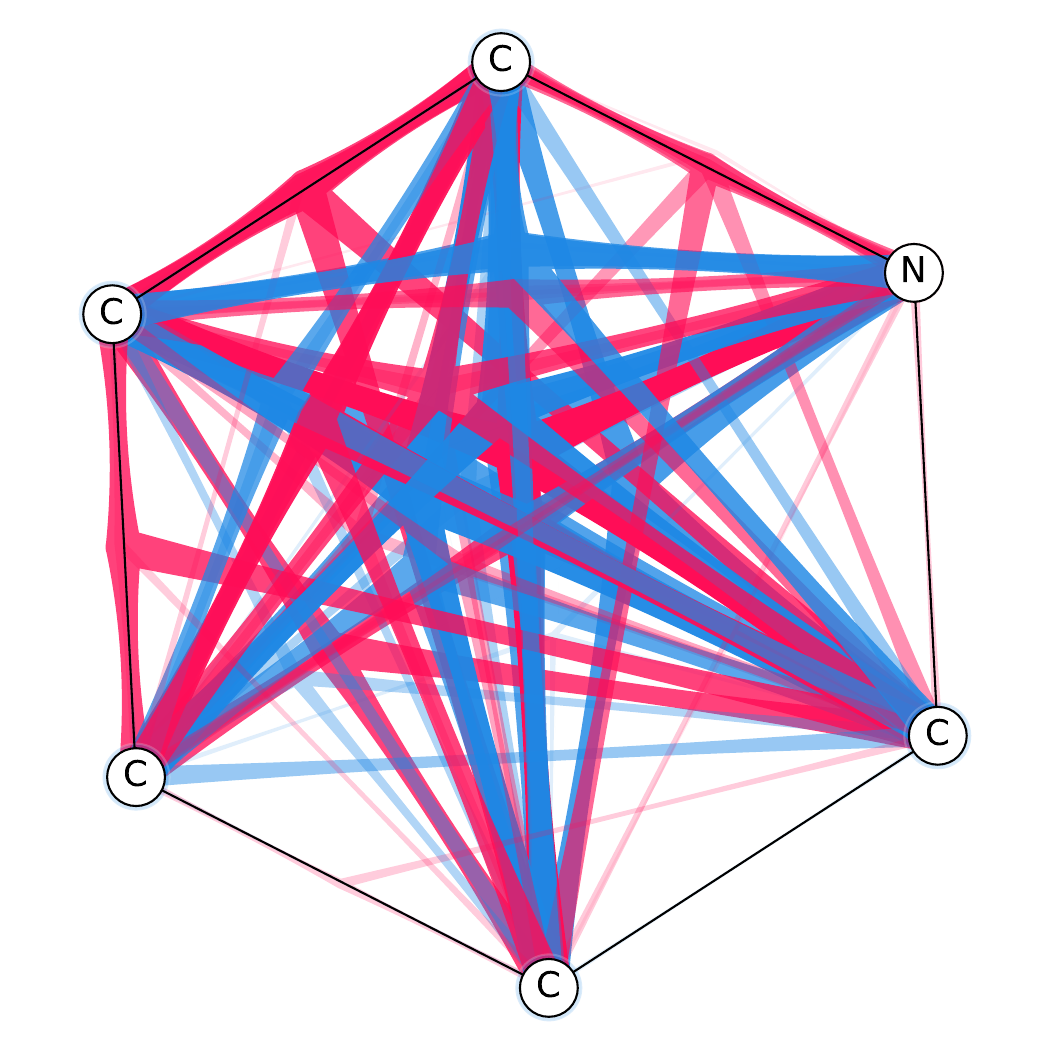}
    \end{minipage}
    \caption{Comparison of the \gls*{SI}-Graphs for the pyridine molecule for the three model architectures fitted on \gls*{BNZ}. All models accurately predict the molecule to be non-benzene.}
    \label{appx_fig_molecuels_gnn_comps}
\end{figure}

\end{document}